\newcommand{\loose}{\looseness=-1}
\newcommand{\arxiv}[1]{\iftoggle{arxiv}{#1}{}}
\newcommand{\colt}[1]{\iftoggle{colt}{#1}{}}
\global\togglefalse{colt}
\global\toggletrue{arxiv}
\newcommand{\multiline}[1]{\parbox[t]{\dimexpr\linewidth-\algorithmicindent}{#1}}
\newcommand{\neutralize}[1]{\expandafter\let\csname c@#1\endcsname\count@}
\declaretheorem[name=Theorem,parent=section]{theorem}
\declaretheorem[name=Lemma,parent=section]{lemma}
\declaretheorem[name=Assumption, parent=section]{assumption}
\declaretheorem[name=Condition, parent=section]{condition}
\declaretheorem[qed=$\triangleleft$,name=Remark,parent=section]{remark}
\declaretheorem[name=Proposition, parent=section]{proposition}
\declaretheorem[name=Corollary, parent=section]{corollary}
  \renewenvironment{proof}[1][Proof]%
  {%
   \par\noindent{\bfseries\upshape {#1.}\ }%
  }%
  {\qed\newline}
\theoremstyle{definition}  %
\theoremstyle{plain}
\newtheorem{definition}{Definition}[section]
\xpatchcmd{\proof}{\itshape}{\normalfont\proofnameformat}{}{}
\newcommand{\proofnameformat}{\bfseries}
\newcommand{\pref}[1]{\cref{#1}}
\newcommand{\pfref}[1]{Proof of \pref{#1}}
\newcommand{\savehyperref}[2]{\texorpdfstring{\hyperref[#1]{#2}}{#2}}
\Crefname{assumption}{Assumption}{Assumptions}
    \let\Cref\crtCref
    \let\cref\crtcref
\newcommand{\creftitle}[1]{\crtcref{#1}}
\DeclareDocumentCommand{\XDeclarePairedDelimiter}{mm}
 {
  \__egreg_delimiter_clear_keys: %
  \keys_set:nn { egreg/delimiters } { #2 }
  \use:x %
   {
    \exp_not:n {\NewDocumentCommand{#1}{sO{}m} }
     {
      \exp_not:n { \IfBooleanTF{##1} }
       {
        \exp_not:N \egreg_paired_delimiter_expand:nnnn
         { \exp_not:V \l_egreg_delimiter_left_tl }
         { \exp_not:V \l_egreg_delimiter_right_tl }
         { \exp_not:n { ##3 } }
         { \exp_not:V \l_egreg_delimiter_subscript_tl }
       }
       {
        \exp_not:N \egreg_paired_delimiter_fixed:nnnnn 
         { \exp_not:n { ##2 } }
         { \exp_not:V \l_egreg_delimiter_left_tl }
         { \exp_not:V \l_egreg_delimiter_right_tl }
         { \exp_not:n { ##3 } }
         { \exp_not:V \l_egreg_delimiter_subscript_tl }
       }
     }
   }
 }
\XDeclarePairedDelimiter{\supnorm}{
  left=\lVert,
  right=\rVert,
  subscript=\infty
  }
\DeclarePairedDelimiter{\abs}{\lvert}{\rvert} %
\DeclarePairedDelimiter{\brk}{[}{]}
\DeclarePairedDelimiter{\crl}{\{}{\}}
\DeclarePairedDelimiter{\prn}{(}{)}
\DeclarePairedDelimiter{\nrm}{\|}{\|}
\DeclarePairedDelimiter{\ceil}{\lceil}{\rceil}
\DeclareMathOperator{\En}{\mathbb{E}}
\DeclareMathOperator*{\argmin}{arg\,min} %
\DeclareMathOperator*{\argmax}{arg\,max}
\newcommand{\wt}[1]{\widetilde{#1}}
\newcommand{\wh}[1]{\widehat{#1}}
\newcommand{\wb}[1]{\widebar{#1}}
\def\ddefloop#1{\ifx\ddefloop#1\else\ddef{#1}\expandafter\ddefloop\fi}
\def\ddef#1{\expandafter\def\csname bb#1\endcsname{\ensuremath{\mathbb{#1}}}}
\def\ddefloop#1{\ifx\ddefloop#1\else\ddef{#1}\expandafter\ddefloop\fi}
\def\ddef#1{\expandafter\def\csname b#1\endcsname{\ensuremath{\mathbf{#1}}}}
\def\ddef#1{\expandafter\def\csname sf#1\endcsname{\ensuremath{\mathsf{#1}}}}
\def\ddef#1{\expandafter\def\csname c#1\endcsname{\ensuremath{\mathcal{#1}}}}
\def\ddef#1{\expandafter\def\csname h#1\endcsname{\ensuremath{\widehat{#1}}}}
\def\ddef#1{\expandafter\def\csname hc#1\endcsname{\ensuremath{\widehat{\mathcal{#1}}}}}
\def\ddef#1{\expandafter\def\csname t#1\endcsname{\ensuremath{\widetilde{#1}}}}
\def\ddef#1{\expandafter\def\csname tc#1\endcsname{\ensuremath{\widetilde{\mathcal{#1}}}}}
\def\ddefloop#1{\ifx\ddefloop#1\else\ddef{#1}\expandafter\ddefloop\fi}
\def\ddef#1{\expandafter\def\csname scr#1\endcsname{\ensuremath{\mathscr{#1}}}}
\newcommand{\ind}{\mathbbm{1}}    %
\newcommand{\veps}{\varepsilon}
\newcommand{\ldef}{\vcentcolon=}
\newcommand{\Ccov}{C_\cov}
\newcommand{\Time}{\mathsf{Time}}
\newcommand{\desclength}{\mathsf{len}}
\newcommand{\halving}{halving\xspace}
\newcommand{\mainalg}{Version Space Averaging\xspace}
\newcommand{\sMstar}{\sss{\Mstar}}
\newcommand{\Closs}{C_{\divSymbol}}
\newcommand{\framework}{OEOE\xspace}
\DeclareFontFamily{U}{mathx}{\hyphenchar\font45}
\DeclareFontShape{U}{mathx}{m}{n}{
      <5> <6> <7> <8> <9> <10>
      <10.95> <12> <14.4> <17.28> <20.74> <24.88>
      mathx10
      }{}
\DeclareSymbolFont{mathx}{U}{mathx}{m}{n}
\DeclareMathAccent{\widecheck}{0}{mathx}{"71}
\DeclareMathAccent{\wideparen}{0}{mathx}{"75}
\def\cs#1{\texttt{\char`\\#1}}
\newcommand{\Dsq}[2]{\divSymbol_{\mathsf{sq}}\prn*{#1,#2}}
\newcommand{\DsqX}[3]{\divSymbol_{\mathsf{sq}}\prn[#1]{#2,#3}}
\newcommand{\Dbin}[2]{\divSymbol_{0/1}\prn*{#1,#2}}
\newcommand{\Dbinshort}{\divSymbol_{0/1}}
\newcommand{\DbinX}[3]{\divSymbol_{0/1}\prn[#1]{#2,#3}}
\newcommand{\Dsqshort}{\divSymbol_{\mathsf{sq}}}
\newcommand{\Dkl}[2]{\divSymbol_{\mathsf{KL}}\prn*{#1\,\|\,#2}}
\renewcommand{\DsqX}[3]{\divSymbol_{\mathsf{sq}}\prn[#1]{#2,#3}}
\newcommand{\DhelsX}[3]{\divSymbol^{2}_{\mathsf{H}}\prn[#1]{#2,#3}}
\newcommand{\Dhels}[2]{\divSymbol^{2}_{\mathsf{H}}\prn*{#1,#2}}
\newcommand{\Dhelshort}{\divSymbol_{\mathsf{H}}}
\newcommand{\Dsbeshort}{D_{\mathsf{sbe}}}
\newcommand{\Dgen}{\div}
\newcommand{\compgen}[1][\divSymbol]{\comp^{#1}}
\newcommand{\compsbe}[1][D]{\comp^{\Dsbeshort}}
\newcommand{\RegDM}{\Reg_{\mathsf{DM}}}
\renewcommand{\c}{\mathrm{c}}
\newcommand{\AlgOff}{\mathrm{\mathbf{Alg}}_{\mathsf{Off}}}
\renewcommand{\emptyset}{\varnothing}
\newcommand{\filt}{\mathscr{F}}
\newcommand{\Framework}{Decision Making with Structured Observations\xspace}
\newcommand{\FrameworkShort}{DMSO\xspace}
\newcommand{\learner}{learner\xspace}
\newcommand{\RewardSpace}{\cR}
\newcommand{\RSpace}{\RewardSpace}
\newcommand{\comp}[1][\gamma]{\mathsf{dec}_{#1}}
\newcommand{\etdoff}{\textsf{E\protect\scalebox{1.04}{2}D.Off}\xspace}
\newcommand{\etd}{\textsf{E\protect\scalebox{1.04}{2}D}\xspace}
\newcommand{\M}[1]{^{{\scriptscriptstyle M}}}  %
\newcommand{\sM}{\sss{M}}
\newcommand{\sups}[1]{^{{\scriptscriptstyle#1}}}
\newcommand{\subs}[1]{_{{\scriptscriptstyle#1}}}
\newcommand{\sss}[1]{{\scriptscriptstyle#1}}
\newcommand{\pim}[1][M]{\pi_{\sss{#1}}}
\newcommand{\gm}{g\sups{M}}
\newcommand{\pimstar}{\pi\subs{\Mstar}}
\newcommand{\fstar}{f^{\star}}
\newcommand{\Mbar}{\wb{M}}
\newcommand{\Rm}[1][M]{R\sups{#1}}
\newcommand{\Pm}[1][M]{P\sups{#1}}
\newcommand{\PiRNS}{\Pi_{\mathrm{RNS}}} %
\newcommand{\Reg}{\mathrm{\mathbf{Reg}}}
\newcommand{\EstHel}{\mathrm{\mathbf{Est}}_{\mathsf{H}}}
  \newcommand{\AlgEst}{\mathrm{\mathbf{Alg}}_{\mathsf{Est}}}
\newcommand{\Mhat}{\wh{M}}
\newcommand{\Mstar}{M^{\star}}
\newcommand{\algcommentlight}[1]{\textcolor{blue!70!black}{\transparent{0.5}\footnotesize{\texttt{\textbf{//\hspace{2pt}#1}}}}}
\newcommand{\algcommentlighttiny}[1]{\textcolor{blue!70!black}{\transparent{0.5}\scriptsize{\texttt{\textbf{//\hspace{2pt}#1}}}}}
\newcommand{\fhat}{\wh{f}}
\newcommand{\fbar}{\bar{f}}
\renewcommand{\ind}[1]{^{{\scriptscriptstyle #1}}}
\newcommand{\bigoh}{O}
\newcommand{\bigoht}{\wt{O}}
\newcommand{\bigom}{\Omega}
\newcommand{\indic}{\mathbbm{1}}
\newcommand{\poly}{\mathrm{poly}}
\newcommand{\Ber}{\mathrm{Ber}}
\newcommand{\dmid}{\;\|\;}
\newcommand{\unif}{\mathrm{Unif}}
\newcommand{\ftil}{\tilde{f}}
\def\multiset#1#2{\ensuremath{\left(\kern-.3em\left(\genfrac{}{}{0pt}{}{#1}{#2}\right)\kern-.3em\right)}}
\renewcommand{\emptyset}{\varnothing}
\newcommand{\Orc}{\AlgOff}
\newcommand{\AlgOn}{\mathrm{\mathbf{Alg}}_{\mathsf{On}}}
\newcommand{\Bloss}[2]{\Dbin{#1}{#2}}
\newcommand{\Major}{\mathrm{Majority}}
\newcommand{\Ldim}{\mathrm{Ldim}}
\newcommand{\istar}{i^{\star}}
\newcommand{\cov}{\mathsf{cov}}
\newcommand{\dtil}{\wt{d}}
\newcommand{\mustar}{\mu^\star}
\newcommand{\divSymbol}{\mathsf{D}}
\newcommand{\divhSymbol}{\divSymbol_h}
\newcommand{\divRLSymbol}{\divSymbol^{\mathsf{RL}}}
\newcommand{\loss}[2]{\divSymbol\prn*{#1,#2}}
\renewcommand{\div}[2]{\divSymbol\prn*{#1,#2}}
\newcommand{\divX}[3]{\divSymbol\prn[#1]{#2,#3}}
\newcommand{\divh}[2]{\divSymbol_h\prn*{#1\|#2}}
\newcommand{\divRL}[2]{\divSymbol^{\mathsf{RL}}\prn*{#1\|#2}}
\newcommand{\divRLHels}[2]{\divSymbol_{\mathsf{H}}^{\mathsf{RL}}\prn*{#1\|#2}}
\newcommand{\divCBSymbol}{\divSymbol_\mathsf{CB}}
\newcommand{\divCB}[2]{\divCBSymbol\prn*{#1,#2}}
\newcommand{\estmem}{oracle-efficient\xspace}
\newcommand{\memless}{memoryless\xspace}
\newcommand{\memlessOE}{memoryless oracle-efficient\xspace}
\newcommand{\AlgDOL}{\cA_{\mathsf{DOL}}}
\newcommand{\AlgCDE}{\cA_{\mathsf{CDE}}}
\newcommand{\AlgCDEwRO}{\cA_{\mathsf{CDEwRO}}}
\newcommand{\AlgCDEwRP}{\cA_{\mathsf{CDEwRP}}}
\newcommand{\AlgCDEwDRP}{\cA_{\mathsf{ODEwDRP}}}
\newcommand{\AlgOL}{\cA_{\mathsf{OL}}}
\newcommand{\RCDE}{R_{\mathsf{CDE}}}
\newcommand{\RDOL}{R_{\mathsf{DOL}}}
\newcommand{\RCDEwRO}{R_{\mathsf{CDEwRO}}}
\newcommand{\RCDEwRP}{R_{\mathsf{CDEwRP}}}
\newcommand{\RCDEwDRP}{R_{\mathsf{CDEwDRP}}}
\newcommand{\ROL}{R_{\mathsf{OL}}}
\newcommand{\EstOnD}[1][\mathsf{\divSymbol}]{\mathrm{\mathbf{Est}}_{#1}^{\mathsf{On}}}
\newcommand{\EstOffD}[1][\mathsf{\divSymbol}]{\mathrm{\mathbf{Est}}_{#1}^{\mathsf{Off}}}
\newcommand{\EstOffHels}{\EstOffD[\mathsf{H}]}
\newcommand{\EstOnHels}{\EstOnD[\mathsf{H}]}
\newcommand{\EstOnSq}{\EstOnD[\mathsf{sq}]}
\newcommand{\EstOffSq}{\EstOffD[\mathsf{sq}]}
\newcommand{\EstOnBin}{\EstOnD[0/1]}
\newcommand{\AlgRed}{\mathrm{\mathbf{Alg}}_{\mathsf{OEOE}}}
\newcommand{\act}{\pi}
\newcommand{\Act}{\Pi}
\newcommand{\ActSpace}{\Pi}
\newcommand{\mlike}{metric-like\xspace}
\newcommand{\Mlike}{Metric-like\xspace}
\newcommand{\Lrn}{F}
\newcommand{\Kernel}{\cK}
\newcommand{\FunSpace}{\cF}
\newcommand{\fun}{f}
\newcommand{\funName}{parameter\xspace}
\newcommand{\funNames}{parameters\xspace}
\newcommand{\FunNames}{Parameters\xspace}
\newcommand{\FunSpaceName}{parameter space\xspace}
\newcommand{\truefunName}{target parameter\xspace}
\newcommand{\ValSpace}{\mathcal{\cZ}}
\newcommand{\val}{z}
\newcommand{\valName}{value\xspace}
\newcommand{\valNames}{values\xspace}
\newcommand{\ObsSpace}{\cY}
\newcommand{\obs}{y}
\newcommand{\obsName}{outcome\xspace}
\newcommand{\obsNames}{outcomes\xspace}
\newcommand{\CovarSpace}{\cX}
\newcommand{\covar}{x}
\newcommand{\covarNames}{covariates\xspace}
\newcommand{\decn}{\pi}
\newcommand{\ModelSpace}{\cM}
\newcommand{\modl}{M}
\newcommand{\modlName}{model\xspace}
\newcommand{\offgan}{{\beta_{\mathsf{Off}}}}
\newcommand{\ongan}{{\beta_{\mathsf{On}}}}
\newcommand{\OObsSpace}{\cO}
\newcommand{\oobs}{o}
\newcommand{\Dabs}[2]{\divSymbol_{\mathsf{abs}}\prn*{#1,#2}}
\newcommand{\CovarSpacebar}{\widebar{\CovarSpace}}
\newcommand{\set}[1]{\left\{#1\right\}}
\renewcommand{\d}{\textnormal{d}}
\newcommand{\dfe}[1]{\dfe{#1}}
\newcommand{\fakepar}[1]{\paragraph{#1}}
\let\OldStatex\Statex
\renewcommand{\Statex}[1][3]{%
  \setlength\@tempdima{\algorithmicindent}%
  \OldStatex\hskip\dimexpr#1\@tempdima\relax}
  \algrenewcommand\algorithmicindent{0.5em}
\let\oldparagraph\paragraph
\renewcommand{\paragraph}[1]{\oldparagraph{#1.}}
\newcommand{\paragraphi}[1]{\par\noindent\emph{#1.}}
\newenvironment{protocol}[1][htb]{%
  \renewcommand{\ALG@name}{Protocol}%
  \begin{algorithm}[#1]%
  }{\end{algorithm}
}
\title{Online Estimation via Offline Estimation:\\An
  Information-Theoretic Framework}
\author{%
  Dylan J. Foster\\
{\small\texttt{dylanfoster@microsoft.com}}
\and
Yanjun Han\\
{\small\texttt{yanjunhan@nyu.edu}}
\and
Jian Qian\\
{\small\texttt{jianqian@mit.edu}}
\and
Alexander Rakhlin\\
{\small\texttt{rakhlin@mit.edu}}
}
\date{}
  \title[Online Estimation via Offline Estimation]{Online Estimation via Offline Estimation:\\An
  Information-Theoretic Framework}
\begin{document}

\maketitle

\begin{abstract}
The classical theory of statistical estimation %
aims to estimate a parameter of interest under data
generated from a fixed design (``offline estimation''), while the contemporary theory of online learning provides
algorithms for estimation under adaptively
  chosen covariates (``online estimation''). Motivated by connections
between %
estimation and
interactive decision making, we ask: \emph{is it possible to convert offline estimation algorithms into online estimation algorithms in a black-box fashion?}
We investigate this question from an information-theoretic
perspective by introducing a new
  framework, \emph{Oracle-Efficient Online Estimation} (\framework), where the learner can only interact with the
  data stream indirectly through a sequence of \emph{offline
    estimators} produced by a black-box algorithm operating on the stream. Our main results settle the statistical
  and computational complexity of online estimation in this
  framework.\loose
  \colt{\begin{enumerate}[leftmargin=3.5em,rightmargin=3em,topsep=2pt,itemsep=2pt]
    }
    \arxiv{\begin{enumerate}}
  \item \emph{Statistical complexity.} We show that
    information-theoretically, there exist algorithms that
    achieve near-optimal online estimation error via black-box
    offline estimation oracles, and give a nearly-tight
    characterization for minimax rates in the \framework framework.
  \item \emph{Computational complexity.} We show that the guarantees
    above cannot be achieved in a computationally efficient fashion in
    general, but give a refined characterization for the special case of conditional density
    estimation: computationally
    efficient online estimation via black-box offline estimation is possible
    whenever it is possible via
    unrestricted algorithms.\loose
  \end{enumerate}
    Finally, we apply our results
    to give offline oracle-efficient algorithms for interactive
    decision making.

\end{abstract}

\colt{
  \begin{keywords}%
    Online learning, interactive decision making, oracle-efficient, regression,
    classification, conditional density estimation
  \end{keywords}
}

\section{Introduction}
\label{sec:intro}

\dfedit{Consider a general framework for statistical estimation specified by a tuple $(\CovarSpace,\ObsSpace, \ValSpace,\Kernel,\FunSpace)$, which we will show encompasses classification, regression, and conditional density estimation.} The learner is given a \FunSpaceName $\FunSpace$ (typically a function class), where each \funName $\fun\in \FunSpace$ is a map from the space of \emph{covariates} $\cX$ to the space of \emph{values} $\ValSpace$. 
For an integer $T\geq 1$, the learner is given a dataset $(\covar\ind{1},y\ind{1}),\dots,(\covar\ind{T},y\ind{T})$, where $x\ind{1},\ldots,x\ind{T}$ are \emph{covariates} and $y\ind{1},\ldots,y\ind{T}$ are \emph{\obsNames}  generated via $y\ind{t}\sim\Kernel(\cdot\mid{}\fstar(\covar\ind{t}))$, where $\fstar\in \FunSpace$ is an unknown \truefunName that the learner wishes to estimate; here $\Kernel$ is a probability kernel that assigns to each \valName  $\val\in \ValSpace$ a distribution $\Kernel(\cdot\mid{}\val)$ on the space of \obsNames $\ObsSpace$.
We adopt the shorthand $\Kernel(\val) =  \Kernel(\cdot\mid{}\val)$ throughout.\loose

The classical theory of statistical estimation typically assumes that the \emph{covariates} $x\ind{1},\ldots,x\ind{T}$ are an arbitrary fixed design, and is concerned with estimating the target parameter $\fstar\in\cF$ well \emph{in-distribution}
\citep{Sara00,lehmann2006theory,tsybakov2008introduction}. Formally, for a \emph{loss function}  $\divSymbol:\ValSpace\times \ValSpace\to \bbR_{\geq 0}$ on the space of \valNames $\ValSpace$, the goal of the learner is to output an estimator $\fhat$ based on $(\covar\ind{1},y\ind{1}),\dots,(\covar\ind{T},y\ind{T})$ such that the in-distribution error\loose
\begin{align}
  \EstOffD(T)\ldef \sum\arxiv{\limits}\colt{\nolimits}_{t=1}^{T}\divX{\big}{\fhat(\covar\ind{t})}{\fstar(\covar\ind{t})}\label{eq:offline}
\end{align}
is small; we refer to this as an \emph{offline estimation} guarantee. Canonical examples include:
\colt{\begin{itemize}[leftmargin=*]}
  \arxiv{\begin{itemize}}
  \colt{\setlength{\parskip}{2pt}}
\item Classification (i.e., distribution-free PAC learning \citep{kearns1987learnability,valiant1984theory}),
  where $\cZ=\cY=\crl{0,1}$, $\Kernel(\fstar(x))=\indic_{\fstar(x)}$,\footnote{We use $\indic_{y}$ to indicate the delta distribution that places probability mass $1$ on $y$.} and 
  $\DbinX{\big}{\fhat(x)}{\fstar(x)}=\indic\crl{\fhat(x)\neq\fstar(x)}$ is the \colt{$0/1$-loss}\arxiv{indicator loss}.\arxiv{\footnote{We focus on noiseless binary classification. For noisy binary classification, one we can set $\ValSpace=[0,1]$, $\Kernel(\fstar(x)) = \Ber(\fstar(x))$, and take $\Dabs{\fhat(x)}{\fstar(x)} = \abs{\fhat(x) - \fstar(x) }$ to be the absolute loss.}}\loose
\item Regression with a well-specified model \citep{tsybakov2008introduction,wainwright2019high}, where $\cZ=\cY=\bbR$, $\Kernel(\fstar(x))=\cN(\fstar(x),\sigma^2)$, and $\DsqX{\big}{\fhat(x)}{\fstar(x)}=(\fhat(x)-\fstar(x))^2$ is the square loss.
\item Conditional density estimation
  \citep{bilodeau2023minimax}, where $\cY$ is an arbitrary alphabet, $\cZ=\Delta(\cY)$, $\Kernel(\fstar(x))=\fstar(x)$, and $\Dhels{\cdot}{\cdot}$ is squared Hellinger distance; see \cref{sec:examples} for details.

\end{itemize}

In parallel to statistical estimation, the contemporary theory of online learning \citep{cesa2006prediction,StatNotes2012} provides estimation error algorithms that support \emph{adaptively chosen} sequences of covariates, a meaningful form of \emph{out-of-distribution} guarantee. Here, the examples $(x\ind{t},y\ind{t})$ arrive one at a time. For each step $t\in\brk{T}$, the learner produces an estimator $\fhat\ind{t}:\cX\to\cZ$ based on the data $(x\ind{1},y\ind{1}),\ldots,(x\ind{t-1},y\ind{t-1})$ observed so far. The covariate $x\ind{t}$ is then chosen in an arbitrary fashion, and the \obsName is generated via $\obs\ind{t}\sim \Kernel(\fstar(\covar\ind{t}))$ and revealed to the learner. The quality of the estimators produced by the learner is measured via\footnote{For technical reasons, it is also common to consider randomized estimators where $\fhat\ind{t}\sim\mu\ind{t}$, and measure error by $  \EstOnD(T)\ldef \sum_{t=1}^{T}\En_{\fhat\ind{t}\sim\mu\ind{t}}\brk[\big]{\divX{\big}{\fhat\ind{t}(\covar\ind{t})}{\fstar(\covar\ind{t})}}$.\loose}
\begin{align}
  \EstOnD(T)\ldef \sum\arxiv{\limits}\colt{\nolimits}_{t=1}^{T}\divX{\big}{\fhat\ind{t}(\covar\ind{t})}{\fstar(\covar\ind{t})}.\label{eq:online}
\end{align}
We refer to this as an \emph{online estimation} guarantee; classical examples include online classification in the mistake-bound model \citep{littlestone1988learning}, online regression \citep{rakhlin2014nonparametric}, and online conditional density estimation \citep{bilodeau2020tight}. Online estimation provides a non-trivial out-of-distribution guarantee, as it requires (on average) that the learner achieves non-trivial estimation performance on covariates $x\ind{t}$ that can be arbitrarily far from the previous examples $x\ind{1},\ldots,x\ind{t-1}$.
This property has many applications in algorithm design, notably in the context of \emph{interactive decision making}, where it has recently found extensive use for problems including contextual bandits \citep{foster2020beyond,simchi2020bypassing,foster2021efficient}, reinforcement learning  \citep{foster2021statistical,foster2023tight}, and imitation learning \citep{ross2011reduction,ross2014reinforcement}.\loose

In this paper, we investigate the relative power of online and offline estimation through a new information-theoretic perspective.
It is well known that any algorithm for online estimation can be used \emph{as-is} to solve offline estimation through \emph{online-to-batch conversion}, a standard technique in learning theory and statistics \citep{aizerman1964theoretical,barron1987bayes,catoni1997mixture,tsybakov2003optimal,juditsky2008learning,audibert2009fast}. The converse is less apparent, as online estimation requires non-trivial algorithm design techniques that go well beyond classical estimators like least-squares or maximum likelihood \colt{\citep{cesa2006prediction}}\arxiv{\citep{kivinen1997exponentiated,cesa1997analysis, vovk1998competitive,  vovk2001competitive,azoury2001relative,auer2002adaptive,vovk2006metric,hazan2007online,gerchinovitz2013sparsity,rakhlin2014online,hazan2015online,rakhlin2015sequential,orseau2017soft,foster2018logistic,luo2018efficient}}. In the case of regression with a finite class $\cF$, least squares achieves optimal offline estimation error $\EstOffD(T)\leq\bigoh\prn*{\log\abs{\cF}}$,\footnote{We consider \emph{unnormalized} estimation error in \cref{eq:offline}, following the convention of online learning. For normalized estimation error, we have $\frac{1}{T}\EstOffD(T)\leq\frac{\log\abs{\cF}}{T}$, in line with the classical convention of statistical estimation.}
and while it is possible to achieve a similar rate $\EstOnD(T)\leq\bigoh\prn*{\log\abs{\cF}}$ for online estimation, this requires Vovk's aggregating algorithm or exponential weights \citep{vovk1998competitive}; directly applying least squares or other standard offline estimators leads to vacuous guarantees.
This leads us to ask: \colt{\emph{Is it possible to convert offline estimation algorithms into online estimation algorithms in a black-box fashion?}\loose}
\arxiv{\begin{center}
  \textit{Is it possible to convert offline estimation algorithms into online estimation algorithms in a black-box fashion?}
\end{center}
}

Computationally speaking, this question has practical significance, since online estimation algorithms are typically far less efficient than their offline counterparts (the classical exponential weights algorithm maintains a separate weight for every $f\in\cF$, which is exponentially less memory-efficient than empirical risk minimization). In fact, at first glance this seems like a \emph{purely} computational question: if the learner has access to an offline estimator, nothing is stopping them (information-theoretically) from throwing the estimator away and using the data to run an online estimation algorithm.\footnote{\arxiv{Closely related}\colt{Related} computational questions have already been studied, with negative results \citep{blum1994separating,hazan2016computational}.\loose} Yet, for aforementioned applications in interactive decision making  \citep{foster2020beyond,simchi2020bypassing,foster2021efficient,foster2021statistical,foster2023tight,ross2011reduction,ross2014reinforcement}, estimation algorithms---particularly online estimators---play a deeper information-theoretic role, and can be viewed as compressing the data stream into a succinct, operational representation that directly informs downstream decision making. With these applications in mind, the first contribution of this paper is to introduce a new protocol, \emph{Oracle-Efficient Online Estimation}, which provides an \emph{information-theoretic} abstraction of the role of online versus offline estimation, \dfedit{analogous to the framework of information-based complexity in optimization \citep{nemirovski1983problem,traub1988information,raginsky2011information,agarwal2012information} and statistical query complexity in theoretical computer science \citep{blum1994weakly,kearns1998efficient,feldman2012complete,feldman2017general}.\loose}%

\subsection{Our Protocol: Oracle-Efficient Online Estimation}
\label{sec:protocol}

In the Oracle-Efficient Online Estimation (\framework) framework, the aim is to perform \emph{online} estimation in the sense of \cref{eq:online}, with the twist that the learner does not directly observe the outcomes $y\ind{1},\ldots,y\ind{T}$; rather, they interact with the environment \emph{indirectly} through a sequence of \emph{offline} estimators produced by a black-box algorithm operating on the historical data. We formalize this black-box algorithm as an \emph{estimation oracle} $\Orc = \crl*{\Orc\ind{t}}_{t=1}^{T}$ (e.g., \citet{foster2023foundations}), which is a mapping from histories to estimators that enjoy bounded offline estimation error.

\begin{definition}[Offline estimation oracle]
  \label{def:offline-oracle}
  An \emph{offline estimation oracle} $\Orc=\crl*{\Orc\ind{t}}_{t=1}^{T}$ for any instance $(\CovarSpace, \ObsSpace,\ValSpace,\Kernel, \FunSpace)$ and loss $\divSymbol$ is a mapping $\Orc\ind{t}:(\cX\times\cY)^{t-1}\to(\cX\to\cZ)$ such that for any sequence $(x\ind{1},y\ind{1}),\ldots,(x\ind{T},y\ind{T})$ with $y\ind{t}\sim{}\Kernel(\fstar(x\ind{t}))$, the sequence of estimators $\fhat\ind{t}=\Orc\ind{t}(x\ind{1},\ldots,x\ind{t-1},y\ind{1},\ldots,y\ind{t-1})$ satisfies
  \colt{$\EstOffD(t) \ldef \sum_{s=1}^{t-1}\divX{\big}{\fhat\ind{t}(\covar\ind{s})}{\fstar(\covar\ind{s})} \leq \offgan$}
  \arxiv{
  \begin{align}
    \label{eq:offline_constraint_simulation}
    \EstOffD(t) = \sum_{s=1}^{t-1}\divX{\big}{\fhat\ind{t}(\covar\ind{s})}{\fstar(\covar\ind{s})} \leq \offgan
  \end{align}
  }
  for all $1\leq{}t\leq{}T$ almost surely; \dfedit{we allow $x\ind{t}$ to be selected adaptively based on $y\ind{1},\ldots,y\ind{t-1}$ and $\fhat\ind{1},\ldots,\fhat\ind{t-1}$.} We refer to $\offgan\geq{}0$ as the \emph{offline estimation parameter}.
\end{definition}
This definition simply asserts that the estimators $\fhat\ind{t}$ produced by the offline estimation oracle satisfy the guarantee in \cref{eq:offline}, \dfedit{even when the covariates are selected adaptively}.
Examples include standard algorithms like least-squares for regression and maximum likelihood for conditional density estimation, which guarantee $\offgan\leq\bigoh(\log\abs{\cF})$ with high probability whenever $\cF$ is a finite class; see \pref{app:background} for further background.\footnote{Most algorithms only ensure that the guarantee in \cref{def:offline-oracle} holds with high probability. We assume an almost sure bound to simplify exposition, but our results trivially extend. Likewise, our results immediately extend to handle the case in which $\offgan$ is allowed to grow as a (sublinear) function of $t$.} \colt{Throughout the paper, we assume for simplicity that $\offgan>0$ is known in advance.}

\arxiv{\begin{remark}
  Throughout the paper, we assume for simplicity that the parameter $\offgan$ is known in advance.
\end{remark}}

\begin{protocol}[tp]
  \caption{Oracle-Efficient Online Estimation (OEOE)}
\label{alg:OE2-protocol}
\begin{algorithmic}[1]
\For{$t=1,\dots,T$}
\State \multiline{Learner receives estimator $\fhat\ind{t} = \Orc\ind{t}(\covar\ind{1},\dots,\covar\ind{t-1},\obs\ind{1},\dots,\obs\ind{t-1})$ from offline \arxiv{estimation}\colt{est.} oracle.} %
\State \multiline{Based on \dfedit{$x\ind{1},\ldots,x\ind{t-1}$} and $\fhat\ind{1},\ldots,\fhat\ind{t}$, learner produces an estimator $\fbar\ind{t}\in\cZ^{\cX}$, which may be randomized according to a distribution $\mu\ind{t}$.}
\State \multiline{Based on $\mu\ind{t}$, nature selects covariate $\covar\ind{t}\in\cX$ and \obsName $y\ind{t}\sim \Kernel(\fstar(\covar\ind{t}))$, but does not directly reveal them to the learner.}
\EndFor
\end{algorithmic}
\end{protocol}
\setcounter{algorithm}{0}

\arxiv{Equipped with}\colt{With} 
this definition, we present the Oracle-Efficient Online Estimation protocol in \savehyperref{alg:OE2-protocol}{Protocol \ref*{alg:OE2-protocol}}.
In the protocol, a learner aims to perform online estimation, but at each step $t$, the only information available\arxiv{ to them} is the covariates $x\ind{1},\ldots,x\ind{t-1}$ and
\arxiv{a sequence of}\colt{the}
estimators $\fhat\ind{1},\ldots,\fhat\ind{t}$ generated by an offline estimation oracle satisfying \cref{def:offline-oracle}; the outcomes $y\ind{1},\ldots,y\ind{T}$ are not directly observed. Based on this information, the learner 
\arxiv{must produce}\colt{produces} 
a new estimator $\fbar\ind{t}$ such that the online estimation error
\colt{ $\EstOnD(T) = \sum_{t=1}^{T}\En_{\fbar\sim\mu\ind{t}}\brk*{\divX{\big}{\fbar(\covar\ind{t})}{\fstar(\covar\ind{t})}}$ }
\arxiv{
\begin{align}
  \label{eq:online_protocol}
  \EstOnD(T) = \sum\limits_{t=1}^{T}\En_{\fbar\sim\mu\ind{t}}\brk*{\divX{\big}{\fbar(\covar\ind{t})}{\fstar(\covar\ind{t})}}
\end{align}
}
in \cref{eq:online} is minimized.\footnote{For technical reasons, we allow the learner to randomize the estimator $\fbar\ind{t}$ via a distribution $\mu\ind{t}$.}
\arxiv{We refer to an algorithm as \emph{oracle-efficient}}\colt{An algorithm is termed \emph{oracle-efficient}}
if it attains low online estimation error \eqref{eq:online} in the \framework framework.
Note that while the learner cannot directly observe the outcomes $y\ind{1},\ldots,y\ind{T}$, 
the covariates $x\ind{1},\ldots,x\ind{T}$ are observed; we prove that without this ability, it is impossible to achieve non-trivial estimation performance 
(\cref{sec:stats-results}) 
. \loose

The \framework framework abstracts away the property that oracle-efficient algorithms implicitly interact with the environment through a compressed, potentially lossy channel (the estimation oracle $\AlgEst$).
We believe this property merits deeper investigation: it is shared by essentially all algorithms from recent research that reduces interactive decision making and reinforcement learning to estimation oracles \citep{foster2020beyond,simchi2020bypassing,foster2021statistical,foster2023tight,ross2011reduction,ross2014reinforcement}, yet the relative power of offline oracles and analogously defined online oracles is poorly understood in this context. \dfedit{By providing an information-theoretic abstraction to study oracle-efficiency, the \framework framework plays a role similar to information-based complexity in optimization \citep{nemirovski1983problem,traub1988information,raginsky2011information,agarwal2012information} and statistical query complexity in theoretical computer science \citep{blum1994weakly,kearns1998efficient,feldman2012complete,feldman2017general}, both of which provide rich frameworks for designing and evaluating iterative algorithms that interact with the environment in a structured fashion.} We expect that this abstraction will find broader use for more complex domains (e.g., decision making and active learning) as a means to guide algorithm design and prove lower bounds against natural classes of algorithms.\loose

Let us first build some intuition. Familiar readers may recognize that the classical \emph{\halving algorithm} for binary classification (e.g., \citet{cesa2006prediction}) can be viewed as oracle-efficient in our framework. Specifically, for binary classification with $\cY=\cZ=\crl*{0,1}$ and loss function $\DbinX{\big}{\fhat(x)}{\fstar(x)}=\indic\crl[\big]{\fhat(x)\neq\fstar(x)}$, the \halving algorithm can use any offline oracle with $\offgan=0$ to achieve $\EstOnD(T) = \bigoh(\log\abs{\cF})$, which is optimal. However, little is known for noisy oracles with $\offgan>0$, or more general \obsName spaces and loss functions (e.g.,  regression or density estimation). 
In addition, the \halving algorithm---while oracle-efficient---is computationally inefficient, as it requires maintaining an explicit version space. 
This leads us to restate our central question formally, in two parts:
\colt{\begin{enumerate}[leftmargin=*]}
\arxiv{\begin{enumerate}}
\item     \textit{Can we design oracle-efficient algorithms with near-optimal online estimation error \eqref{eq:online}, up to polynomial factors (for general instances $(\CovarSpace,\ObsSpace, \ValSpace,\Kernel,\FunSpace)$ and $\offgan>0$)?}\label{item:q1}
\item \textit{Can we do so in a computationally efficient fashion?}\label{item:q2}
\end{enumerate}

\subsection{Contributions}

\dfedit{For a general class of losses $\divSymbol$, referred to as \emph{metric-like},} we settle the statistical and computational complexity of performing online estimation via black-box offline estimation oracles up to mild gaps, answering questions \savehyperref{item:q1}{(1)} and \savehyperref{item:q2}{(2)} above.

\paragraph{Statistical complexity}
Our first result concerning statistical complexity focuses on finite classes $\cF$, where the optimal rates for offline and online estimation with standard losses $\div{\cdot}{\cdot}$ both scale as $\Theta(\log\abs{\cF})$. For this setting, we show (\cref{thm:woe-upper-bound}) that there exists an oracle-efficient online estimation algorithm that achieves $  \EstOnD(T) = \bigoh((\offgan+1)\min\crl*{\log\abs{\cF},\abs{\cX}})$ in the \framework framework, and that this is optimal (\cref{thm:woe-lower-bound}). This provides an affirmative answer to question \savehyperref{item:q1}{(1)}, and characterizes the statistical complexity of oracle-efficient online estimation with finite classes $\cF$.\loose

In the general \framework framework, the learner can use the entire history of offline estimators $\fhat\ind{1},\ldots,\fhat\ind{t}$ and covariates $x\ind{1},\ldots,x\ind{t-1}$ to produce the online estimator $\fbar\ind{t}$ for step $t$. As a secondary result, we study a restricted class of \emph{memoryless} oracle-efficient algorithms that choose $\fbar\ind{t}$ only based on the most recent offline estimator $\fhat\ind{t}$, and show (\cref{prop:separation_memoryless_improper}) that it is impossible for such algorithms to achieve low online estimation error.

Lastly, we give a more general approach to deriving oracle-efficient reductions (\cref{thm:delayed_reduction}) that is based on a connection to \emph{delayed online learning} \citep{weinberger2002delayed,mesterharm2007improving,joulani2013online,quanrud2015online}. Using this result, we give a characterization of learnability with \emph{infinite classes} for binary classification in the \framework framework (\cref{thm:binary-loss-learnability}), proving that finite Littlestone dimension is necessary and sufficient for oracle-efficient learnability.

\paragraph{Computational complexity}   On the computational side, we provide a negative answer to question \savehyperref{item:q2}{(2)}, showing (\cref{thm:computation-lower-bound}) that under standard conjectures in computational complexity, there do not exist polynomial-time algorithms with non-trivial online estimation error in \framework framework. In spite of this negative result, we provide a fine-grained perspective for the statistical problem of \emph{conditional density estimation}, a general task that subsumes classification and regression and has immediate applications to reinforcement learning and interactive decision making \citep{foster2021statistical,foster2023tight}. Here we show, perhaps surprisingly (\cref{thm:reduction-to-ODE}), that online estimation in the \framework framework is no harder computationally than online estimation with arbitrary, unrestricted algorithms. This result is salient in light of the applications we discuss below.

\paragraph{Implications for interactive decision making}
As the preceding discussion has alluded to, our interest in studying oracle-efficient online estimation is largely motivated by a connection to the problem of \emph{interactive decision making}. \citet{foster2021statistical,foster2023tight,foster2023foundations} propose a general framework for interactive decision making called \emph{Decision Making with Structured Observations} (DMSO), which subsumes contextual bandits, bandit problems with structured rewards, and reinforcement learning with general function approximation. They show that for any decision making problem in the DMSO framework, there exists an algorithm that, given access to an online estimation algorithm (or, ``oracle'') for conditional density estimation for an appropriate class $\cF$, it is possible to achieve \emph{near-optimal regret}. The results above critically make use of \emph{online estimation} oracles, as they require achieving low estimation error for adaptively chosen sequences of covariates, and it is natural to ask whether similar guarantees can be achieved using only offline estimation oracles. However, positive results are only known for certain special cases \citep{cesa2004generalization,daskalakis2016learning,dudik2020oracle, simchi2020bypassing}, with scant results for reinforcement learning in particular. In this context, our results have the following implication (\cref{cor:application-statistical}): \emph{Information-theoretically, it is possible to achieve near-optimal regret for any interactive decision making problem using an algorithm that accesses the data stream only through offline estimation oracles}.\loose

\section{Preliminaries}
\label{sec:prelim}

\arxiv{\paragraph{Regularity assumptions}}
Unless otherwise stated, our results assume the loss function $\divSymbol$ has \emph{\mlike} structure.\loose
\begin{definition}[\Mlike loss]
A loss function $\divSymbol:\ValSpace\times\ValSpace\to [0,1]$ is \mlike on the set $\ValSpace$ if it is symmetric and satisfies (i) $\loss{\val_1}{\val_2} \geq 0$ for any $\val_1,\val_2\in \ValSpace$ and $\loss{\val}{\val} = 0$ for all $\val\in \ValSpace$; and (ii) $\loss{\val_1}{\val_2} \leq C_\divSymbol \cdot (\loss{\val_1}{\val_3} + \loss{\val_3}{\val_2})$ for all $\val_1,\val_2,\val_3\in \ValSpace$, for an absolute constant $C_\divSymbol\geq{}1$.\loose
\end{definition}

Throughout the paper, we focus on three canonical applications, outlined in the introduction: Classification with the indicator loss $\Dbinshort$ ($\Closs=1$), regression with the square loss $\Dsqshort$ ($\Closs=2$), and conditional density estimation with squared Hellinger distance $\Dhelshort^2$ ($\Closs=2$)\colt{. See \cref{sec:examples} for detailed examples and discussion. (omitted for space).}\arxiv{; additional background on these applications is given in the sequel.\loose}

\arxiv{\paragraph{Finite versus infinite classes}}
\colt{\noindent\textbf{Finite versus infinite classes.}}
The majority of our results focus on finite classes $\cF$. We believe this captures the essential difficulty of the problem, but we expect that most of our sample complexity results (which typically scale with $\log |\cF|$) can be extended to infinite classes by combining our techniques with appropriate notions of complexity for the function class (Littlestone dimension for classification, sequential Rademacher complexity, and sequential covering numbers \citep{rakhlin2012statistical,littlestone1988learning,rakhlin2015online}). For the canonical settings of classification, regression, and conditional density estimation, there exist algorithms that achieve $\EstOffD(T)=\bigoh(\log\abs{\cF})$ and $\EstOnD(T)=\bigoh(\log\abs{\cF})$ for arbitrary finite classes\arxiv{.}\colt{; see \cref{sec:examples} for details.\loose}

\arxiv{
\subsection{Examples of Estimation Problems and Loss Functions}
\label{sec:examples}

In what follows, we give detailed background on three canonical examples
of the general estimation framework discussed in \cref{sec:intro}:
Binary classification, square loss regression, and conditional density
estimation.

\paragraph{Classification \citep{kearns1987learnability,valiant1984theory}} 
For binary classification, we take $\ValSpace=\cY = \set{0,1}$ with the binary loss
$\Dbin{\val_1}{\val_2} = \indic(\val_1\neq \val_2)$ for
$\val_1,\val_2\in \ValSpace$ and kernel $\Kernel(\val) = \indic_z$,
which is noiseless. The binary loss is \mlike with $C_\divSymbol=1$.

For offline estimation, observe that with \covarNames
$\covar\ind{1},\dots,\covar\ind{T}$ and \obsNames $\obs\ind{t} =
{}\fstar(\covar\ind{t})$ for all $t\in \set{1,\dots,T}$, any empirical
risk minimizer $\fhat$ that sets $\fhat(\covar\ind{t}) = y\ind{t}$
obtains
\begin{equation}
\sum_{t=1}^{T}\Dbin{\fhat(\covar\ind{t})}{\fstar(\covar\ind{t})}
=0.
\end{equation}
For online estimation, the \halving algorithm
\citep{cesa2006prediction} achieves
    \begin{equation}
      \label{eq:online-classification}
\EstOnBin(T)    =      \sum_{t=1}^{T}\Dbin{\fhat\ind{t}(\covar\ind{t})}{\fstar(\covar\ind{t})}
        \leq \log(\abs*{\cF}).
      \end{equation}

We mention in passing that another natural classification setting we do not explore in
detail in this paper is \emph{noisy classification}, where the setting
is as above, except that we set $\cZ=\brk{0,1}$,
$\Kernel(\fstar(x))=\Ber(\fstar(x))$, and take $\Dabs{z_1}{z_2} = \abs{z_1-z_2}$ as the absolute loss for all $z_1,z_2\in \cZ$.
      
 \paragraph{Square loss regression \citep{tsybakov2008introduction,wainwright2019high}} For real-valued regression, we
 take $\ValSpace=\cY = \bbR$ with the square loss
 $\Dsq{z_1}{z_2} = (z_1-z_2)^2$ for $z_1,z_2\in \ValSpace$ and the
 kernel $\Kernel(\fstar(x)) = \cN(\fstar(x),1)$ or another subGaussian
 distribution. Note that the square loss is a \mlike loss with
 $C_\divSymbol =2$.

 For offline estimation, with \covarNames $\covar\ind{1},\dots,\covar\ind{T}$ and \obsNames $\obs\ind{t}\sim  {}\fstar(\covar\ind{t})+\veps\ind{t}$ for all $t\in \set{1,\dots,T}$, 
    the classical Empirical Risk Minimization (ERM) $\fhat \ldef \argmin_{f\in\cF} \sum_{t=1}^{T}\prn*{f(\covar\ind{t})-\obs\ind{t}}^2$ gives 
    \begin{equation}
      \label{eq:offline-regression}
\EstOffSq(T)    = \sum_{t=1}^{T} \Dsq{\fhat(\covar\ind{t})}{\fstar(\covar\ind{t})}
      \leq \log(\abs*{\cF}\delta^{-1}),
    \end{equation}
    with probability at least $1-\delta$ (cf. \cref{lem:erm-guarantee}\colt{below}).

    For online estimation, the exponential weights algorithm
    \citep{cesa2006prediction}, with decision space
    $\cF$ and the loss at each round chosen to be $\ell\ind{t}(f) =
    (f(\covar\ind{t}) - \obs\ind{t})^2$, achieves
    \begin{equation}
      \label{eq:online-regression}
\EstOnSq(T) =   \sum_{t=1}^{T} \Dsq{\fhat\ind{t}(\covar\ind{t})}{\fstar(\covar\ind{t})}
      \leq \log(\abs*{\cF}\delta^{-1}),
    \end{equation}
    with probability at least $1-\delta$ (cf. \citet{foster2020beyond}
    for a proof).

    \paragraph{Conditional density estimation
      \citep{bilodeau2023minimax}} For conditional density estimation,
    we consider an arbitrary \obsName space
    $\ObsSpace$ and take $\ValSpace = \Delta(\ObsSpace)$ with \emph{squared
    Hellinger distance} $\Dhelshort^2$ given by\footnote{More generally, if $\nu$ is a common dominating measure, then
  $\DhelsX{\big}{\bbP}{\bbQ}=\frac{1}{2}\int\prn[\Big]{\sqrt{\frac{d\bbP}{d\nu}}-\sqrt{\frac{d\bbQ}{d\nu}}}^{2}d\nu$,
where $\frac{d\bbP}{d\nu}$ and $\frac{d\bbQ}{d\nu}$ are
Radon-Nikodym derivatives. The notation in \pref{eq:hellinger} reflects that this quantity is
invariant under the choice of $\nu$.}
\begin{equation}
\label{eq:hellinger}
\DhelsX{\big}{\fhat(x)}{\fstar(x)}= \frac{1}{2} \int\prn*{\sqrt{\fhat(y\mid{}x)}-\sqrt{\fstar(y\mid{}x)}}^{2} \d y.
\end{equation}
and $\Kernel(\val)=\val$ for all $\val\in \ValSpace$. Note that squared Hellinger
distance is a \mlike loss with $C_\divSymbol =2$. 

For offline estimation, with \covarNames $\covar\ind{1},\dots,\covar\ind{T}$ and \obsNames $\obs\ind{t}\sim  {}\fstar(\covar\ind{t})$ for all $t\in \set{1,\dots,T}$, 
    the classical Maximum Likelihood Estimator (MLE) $\fhat \ldef \argmax_{f\in\cF} \sum_{t=1}^{T} \log f(\obs\ind{t}\mid{}\covar\ind{t}) $ gives 
    \begin{equation}
      \label{eq:offline-mle}
\EstOffHels(T) =     \sum_{t=1}^{T} \Dhels{\fhat(\covar\ind{t})}{\fstar(\covar\ind{t})}
      \leq \log(\abs*{\cF}\delta^{-1}),
    \end{equation}
    with probability at least $1-\delta$ (cf. \cref{lem:mle-guarantee}\colt{below}).

    For online estimation, the exponential weights algorithm
    \citep{cesa2006prediction}, with decision space $\cF$ and the loss at each round chosen to be $\ell\ind{t}(f) = -\log f(\obs\ind{t}\mid{}\covar\ind{t} )$, achieves
    \begin{equation}
      \label{eq:online-mle}
\EstOnHels(T)    =  \sum_{t=1}^{T} \Dhels{\fhat\ind{t}(\covar\ind{t})}{\fstar(\covar\ind{t})}
      \leq \log(\abs*{\cF}\delta^{-1}),
    \end{equation}
    with probability at least $1-\delta$ (cf. \citet{foster2021statistical}
    for a proof).

}

\arxiv{\subsection{Additional Notation}}
\colt{\vspace{3pt}\noindent\textbf{Additional notation}} We denote $\bbR_{\geq 0} = [0,\infty)$. For any $a, b\in \bbR$, $a\wedge b \ldef \min \set{a,b}$ and $a\vee b\ldef \max\set{a,b}$. For any integer $N>0$, $[N] = \crl{1,\dots,N}$. For any set $\cX$, $\Delta(\cX)$ is the space of all distributions on $\cX$. For any integer $T$, the notation $x\ind{1:T}$ will be the shorthand notation for the sequence $x\ind{1},\dots,x\ind{T}$.
For any real number $x\in \bbR$, denote by $\lfloor x \rfloor$ the largest integer that is smaller than or equal to $x$ and by $\lceil x\rceil$ the smallest integer that is greater than or equal to $x$. The indicator function is denoted by $\mathbbm{1}(\cdot)$. We define $O(\cdot)$, $\Omega(\cdot)$, $o(\cdot)$, $\Theta(\cdot)$, $\wt{O}(\cdot)$, $\wt{\Omega}(\cdot)$, $\wt{\Theta}(\cdot)$ following standard non-asymptotic big-oh notation. We use the binary relation $x \lesssim y$ to indicate that $x\leq O(y)$. 

\arxiv{\subsection{Paper Organization}}
\colt{\paragraph{Organization}}

\colt{\cref{sec:stats-results} presents our results on the statistical complexity of the \framework framework and \cref{sec:comp-results} presents our main computational results. We conclude with a discussion in \cref{sec:discussion}.\loose\\ \emph{Due to space constraints, the following results are deferred to the appendix: (i) detailed examples (classification, regression, and density estimation) for our statistical estimation framework (\cref{sec:examples}); (ii) additional results concerning statistical complexity of the \framework framework (\cref{sec:delayed}); and (iii) detailed results for our application to interactive decision making (\cref{sec:application}).}\loose
  }

\arxiv{\cref{sec:stats-results} presents our main results concerning the statistical complexity of the \framework framework---both upper bounds and impossibility results. \cref{sec:comp-results} presents our main computational results, and \cref{sec:application} gives implications for reinforcement learning and decision making. We conclude with a discussion and open problems in \cref{sec:discussion}. Proofs and additional results are deferred to the appendix.
  }

\section{Statistical Complexity of Oracle-Efficient Online Estimation}
\label{sec:stats-results}

This section presents our main results concerning the statistical
complexity of oracle-efficient online estimation. In \cref{subsec:opt-err-bound}, we focus on finite
  classes $\cF$ and present an oracle-efficient algorithm that
  achieves near-optimal online estimation error
  (\cref{thm:woe-upper-bound}). We then provide a lower bound that shows that our reduction
  is near optimal (\cref{thm:woe-lower-bound}). In
  \cref{sec:memoryless}, we turn our attention to memoryless
  oracle-efficient algorithms, proving strong impossibility results (\cref{prop:separation_memoryless_improper}).
  \arxiv{Finally, in \cref{sec:delayed}, we provide a more general
  approach to designing oracle-efficient algorithms based on delayed
  online learning (\cref{thm:delayed_reduction}), and use it to derive
  a characterization of oracle-efficient learnability for
  classification with infinite classes $\cF$ (\cref{thm:binary-loss-learnability}).}

\subsection{Minimax Sample Complexity for Oracle-Efficient Algorithms}
\label{subsec:opt-err-bound}

In this section, we present our main statistical conclusion for the
\framework framework: \emph{For any finite class $\cF$, it is possible to
transform any black-box offline estimation algorithm into an online
estimation algorithm with near-optimal error} (up to a logarithmic
factor that we show is unavoidable).

\paragraph{Algorithm and minimax upper bound}
Our results are achieved through a new algorithm, \emph{\mainalg}, described in
\cref{alg:weighted-maj-vote}. At each round $t$, the algorithm
uses estimators $\fhat\ind{1},\ldots,\fhat\ind{t}$ produced by an
offline estimation oracle $\Orc$, along with the previous covariates
$x\ind{1},\ldots,x\ind{t-1}$, to construct a \emph{version space}
$\cF_t\subseteq\cF$ in \cref{eq:version_space}.
Informally, $\FunSpace_t$ consists of all $f\in\cF$ that are
\emph{consistent} with the estimators
$\fhat\ind{1},\ldots,\fhat\ind{t}$ in the sense that for all
$s\in\brk{t}$, the offline estimation error relative to $\fhat\ind{s}$
is small; as long as the offline estimation oracle $\Orc$ has offline
estimation error $\offgan$ (\cref{def:offline-oracle}), it follows
immediately that the construction in \cref{eq:version_space} satisfies $\fstar\in\cF_t$. Given the
version space $\cF_t$, \cref{alg:weighted-maj-vote} predicts by uniformly sampling:
$\fbar\ind{t}\sim\mu\ind{t} \ldef \unif(\FunSpace_t)$, then proceeds
to the next round.\footnote{
      For realizable binary classification with
  $\DbinX{\big}{\fhat(x)}{\fstar(x)} = \mathbbm{1}(\fhat(x)\neq
  \fstar(x)) $ and $\offgan=0$, the version space construction in
  \cref{eq:version_space} coincides with that of the \halving
  algorithm (e.g., \citet{cesa2006prediction}), and its estimation error bound
(\cref{thm:woe-upper-bound})
  matches the halving algorithm up to
  absolute constants. As such, \cref{alg:weighted-maj-vote} can be
  viewed as a noisy/error-tolerant generalization of the \halving
  algorithm, which may find broader use.
} The main guarantee for \cref{alg:weighted-maj-vote} is stated in
\cref{thm:woe-upper-bound}.
\loose

\begin{algorithm}[tp]
\caption{\mainalg} 
\label{alg:weighted-maj-vote}
\begin{algorithmic}[1]
  \State \textbf{input:} \FunSpaceName $\FunSpace$, offline
  estimation oracle $\Orc$ with parameter $\offgan\geq{}0$.
\For{$t=1,2,\dots,T$}
\State Receive $\fhat\ind{t}=\Orc\ind{t}(x\ind{1},\ldots,x\ind{t-1},y\ind{1},\ldots,y\ind{t-1})$.%
\State Calculate version space:
\vspace{-3pt}
\begin{align}
  \FunSpace_{t} = \crl*{ f\in\cF \;\;\middle|\;\; \forall s\in [t],~ \sum\arxiv{\limits_{\tau<s}}\colt{\nolimits_{\tau<s}} \divX{\big}{\fhat\ind{s}(\covar\ind{\tau})}{f(\covar\ind{\tau})} \leq \offgan}  . \label{eq:version_space}
\end{align}
\State Predict $\fbar\ind{t}\sim\mu\ind{t} \ldef \unif(\FunSpace_t)$ and receive
$\covar\ind{t}$.\hfill\algcommentlighttiny{Nature draws
  $y\ind{t}\sim{}\Kernel(\fstar(x\ind{t}))$ and passes to $\Orc$.\loose}
\EndFor
\end{algorithmic}
\end{algorithm}

\begin{restatable}[Main upper bound for \framework]{theorem}{EstMemUpperBound}
  \label{thm:woe-upper-bound}
For any instance $(\CovarSpace,\ObsSpace,
\ValSpace,\Kernel,\FunSpace)$, any \mlike loss $\divSymbol$, and any
offline estimator $\Orc$ with parameter $\offgan\geq{}0$, 
\cref{alg:weighted-maj-vote} is oracle-efficient and
achieves\loose
\begin{align*}
\EstOnD(T)\leq  
\bigoh\prn*{ C_\divSymbol \cdot{}(\offgan+1)\cdot \min \set{ \log |\FunSpace|,  |\CovarSpace|\log T}}.
\end{align*}
\end{restatable}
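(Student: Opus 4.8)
The plan is to establish a deterministic (almost-sure) bound. Fix the realized covariates $x\ind{1},\ldots,x\ind{T}$, outcomes $y\ind{1},\ldots,y\ind{T}$, and oracle outputs $\fhat\ind{1},\ldots,\fhat\ind{T}$; since the covariates produced in the \framework protocol form an adaptive — hence admissible — sequence, \cref{def:offline-oracle} gives $\sum_{s<t}\divX{\big}{\fhat\ind{t}(x\ind{s})}{\fstar(x\ind{s})}\le\offgan$ for every $t$. Write $n_t\ldef\abs{\FunSpace_t}$ and $e_t\ldef\En_{\fbar\sim\mu\ind{t}}\brk*{\divX{\big}{\fbar(x\ind{t})}{\fstar(x\ind{t})}}=\tfrac{1}{n_t}\sum_{f\in\FunSpace_t}\divX{\big}{f(x\ind{t})}{\fstar(x\ind{t})}$, so the goal is to bound $\EstOnD(T)=\sum_{t=1}^{T}e_t$. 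Two structural facts drive everything. \emph{(i) Realizability and nestedness:} taking $s=t$, $f=\fstar$ in \cref{eq:version_space} is exactly the oracle bound, so $\fstar\in\FunSpace_t$ for all $t$; moreover $\FunSpace_1\supseteq\FunSpace_2\supseteq\cdots$ and $n_t\ge1$. \emph{(ii) Localization:} for any $f\in\FunSpace_t$, combining the $s=t$ constraint of \cref{eq:version_space} with the oracle bound via the metric-like triangle inequality (and symmetry of $\divSymbol$) yields $\sum_{\tau<t}\divX{\big}{f(x\ind{\tau})}{\fstar(x\ind{\tau})}\le 2C_\divSymbol\offgan$; since $\divSymbol\le1$, each $f$ accrues total error at most $B\ldef 2C_\divSymbol\offgan+1$ over the prefix of rounds it survives in the version space.

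For the $\log\abs{\FunSpace}$ bound I would run a dyadic halving argument on the version-space size. Rounds with $n_t=1$ have $\FunSpace_t=\set{\fstar}$, hence $e_t=0$, so drop them; by monotonicity of $n_t$ the remaining indices partition into at most $O(\log\abs{\FunSpace})$ contiguous blocks $I_k\ldef\set{t:n_t\in(2^k,2^{k+1}]}$. On $I_k$ we have $e_t<2^{-k}\sum_{f\in\FunSpace_t}\divX{\big}{f(x\ind{t})}{\fstar(x\ind{t})}$, and reorganizing the block sum by $f$: every $f$ active anywhere on $I_k$ lies in $\FunSpace_{\min I_k}$ (nestedness), so at most $n_{\min I_k}\le 2^{k+1}$ functions appear, and each contributes at most $B$ within the block by Fact~(ii). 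Hence $\sum_{t\in I_k}e_t<2^{-k}\cdot 2^{k+1}B=2B$, and summing over the $O(\log\abs{\FunSpace})$ blocks gives $\sum_{t}e_t\le O\prn*{B\log\abs{\FunSpace}}=O\prn*{C_\divSymbol(\offgan+1)\log\abs{\FunSpace}}$.

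For the $\abs{\CovarSpace}\log T$ bound I would localize per covariate instead. If $x=x\ind{t}$ has already been played $k_t\ldef\abs{\set{\tau<t:x\ind{\tau}=x}}$ times, Fact~(ii) forces $k_t\cdot\divX{\big}{f(x)}{\fstar(x)}\le 2C_\divSymbol\offgan$ for every $f\in\FunSpace_t$, so $e_t\le\max_{f\in\FunSpace_t}\divX{\big}{f(x)}{\fstar(x)}\le 2C_\divSymbol\offgan/k_t$ whenever $k_t\ge1$ (and $e_t\le1$ when $k_t=0$). Fixing a covariate $x$ played at rounds $t_1<\cdots<t_m$ we have $k_{t_j}=j-1$, hence $\sum_{j\le m}e_{t_j}\le 1+2C_\divSymbol\offgan\sum_{i=1}^{m-1}\tfrac1i\le 1+2C_\divSymbol\offgan(1+\ln T)$; summing over the at most $\abs{\CovarSpace}$ distinct covariates gives $\sum_{t}e_t\le O\prn*{C_\divSymbol(\offgan+1)\abs{\CovarSpace}\log T}$. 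Taking the minimum of the two bounds proves the claim, and since every inequality used holds almost surely the bound is deterministic (so it also covers the randomized definition of $\EstOnD$).

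The step I expect to be the main obstacle is extracting the \emph{logarithmic} rather than linear dependence on $\abs{\FunSpace}$. The naive use of Fact~(ii) bounds only the unweighted error mass $\sum_{t}\sum_{f\in\FunSpace_t}\divSymbol(\cdot)$ by $\abs{\FunSpace}\cdot B$, which after dividing by the possibly tiny size $n_t$ is hopelessly weak; the dyadic blocking salvages it by exploiting monotonicity of $n_t$ so that error accumulated while the version space is small is charged only $O(\log\abs{\FunSpace})$ times. The subtleties there are verifying that each $I_k$ is a genuine interval (so the functions active on it are all contained in $\FunSpace_{\min I_k}$) and that a single function's within-block contribution is still at most $B$ even though the block is a strict sub-range of its lifetime. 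By contrast, Facts~(i)–(ii) are routine: metric-likeness conveniently supplies both the triangle inequality used for localization and the $\le1$ bound needed to pass from $2C_\divSymbol\offgan$ to $B$.
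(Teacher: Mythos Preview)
Your proof is correct and follows a genuinely different route from the paper for the $\log\abs{\FunSpace}$ bound. The paper does not block dyadically; instead it introduces, for each $f\neq\fstar$, the exit time $t_f\ldef\min\{t:f\notin\FunSpace_{t+1}\}$, rewrites the double sum as
\[
\sum_{t=1}^{T}\frac{1}{n_t}\sum_{f\in\FunSpace_t}\divX{\big}{f(x\ind{t})}{\fstar(x\ind{t})}
=\sum_{t=1}^{T}\sum_{f:\,t_f=t}\sum_{s\le t}\frac{1}{n_s}\divX{\big}{f(x\ind{s})}{\fstar(x\ind{s})},
\]
bounds $1/n_s\le 1/n_t$ by nestedness, uses your Fact~(ii) to replace the inner sum by $B$, and finishes with the potential inequality $\sum_{t}\frac{n_t-n_{t+1}}{n_t}\le\log n_1$ (proved via $1-x\le-\log x$). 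Your dyadic-blocking argument achieves the same effect more combinatorially: it loses an absolute constant (roughly a factor of $2$ per block from the width $2^{k+1}/2^k$) but avoids the need for the potential lemma and makes the ``halving'' intuition explicit. The per-function within-block bound you flagged as the subtle point is indeed fine: since $\FunSpace_t$ is nested, the rounds in $I_k$ at which a given $f$ is active form a prefix of $I_k$, and Fact~(ii) applied at the last such round bounds the prefix sum by $B$. For the $\abs{\CovarSpace}\log T$ bound your argument and the paper's are essentially identical.
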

Most notably, \cref{alg:weighted-maj-vote} achieves
$\EstOnD(T)\leq  
\bigoh\prn*{ C_\divSymbol \cdot{}(\offgan+1)\cdot \log |\FunSpace|}$;
that is, up to a $\bigoh(\log\abs{\cF})$ factor, the
reduction achieves online estimation rates in the \framework
framework that are no worse than the minimax rate for offline
estimation.
For classification, regression, and density estimation with generic finite classes $\cF$ (\cref{sec:examples}),
the best possible offline estimation error rate is
$\offgan=\bigoh(\log\abs{\cF})$, so this shows that price of oracle-efficiency is at most quadratic. %

\paragraph{Minimax lower bound}
Next, we show that the upper bound in \cref{thm:woe-upper-bound} is
nearly tight, giving a lower bound that matches up to logarithmic factors.

\begin{restatable}[Main lower bound for \framework]{theorem}{EstMemLowerBound}
  \label{thm:woe-lower-bound}
  Consider the binary classification setting with $\cZ=\cY=\crl{0,1}$
  and loss $\Dbin{\cdot}{\cdot}$. For any $N\in\bbN$
  \dfedit{and $\offgan>0$}, there
  exists an instance $(\CovarSpace,\ObsSpace,
  \ValSpace,\Kernel,\FunSpace)$ with $\log|\FunSpace|=|\CovarSpace| =
  N$ such that for any \estmem algorithm, there \arxiv{exists}\colt{is} a 
  sequence of covariates $(\covar\ind{1},\dots,\covar\ind{T})$ and 
  offline oracle \arxiv{$\Orc$ }with parameter $\offgan$ such that  \colt{$    \En\brk[\big]{\EstOnD(T)} 
    \geq \Omega (\min \set{ (\offgan+1)N,T})$.\loose}
\arxiv{  \begin{align*}
    \En\brk[\big]{\EstOnD(T)} 
    \geq \Omega (\min \set{ (\offgan+1)N,T}).
         \end{align*}
         }
\end{restatable}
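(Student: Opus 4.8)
The plan is to prove the lower bound via a Bayesian argument against a carefully chosen adversarial offline oracle that reveals information about $\fstar$ only as quickly as the budget $\offgan$ forces it to. I would take the instance to have covariate space $\CovarSpace=\{c_1,\dots,c_N\}$, parameter class $\cF=\{0,1\}^{\CovarSpace}$ (so that $\log|\cF|=|\CovarSpace|=N$), noiseless kernel $\Kernel(z)=\indic_{z}$, and the binary loss $\Dbinshort$, and I would draw $\fstar\sim\unif(\cF)$, so that $\fstar(c_1),\dots,\fstar(c_N)$ are i.i.d.\ uniform on $\{0,1\}$. The covariate sequence is organized into consecutive blocks, where block $i$ presents the single covariate $c_i$ for $\ell\ldef\lfloor\offgan\rfloor+1=\Theta(\offgan+1)$ rounds, $i=1,\dots,\min\{N,\lceil T/\ell\rceil\}$; any remaining rounds (when $T>N\ell$) re-present $c_1$, and when $T<N\ell$ we simply stop once the rounds run out.

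The crux is the offline oracle. I would define $\Orc\ind{t}$ to output, on every covariate $c$ that has appeared in $x\ind{1},\dots,x\ind{t-1}$, the known value $\fstar(c)$, and to output $0$ on covariates not yet seen, \emph{except} that when the most recent covariate $c^\star\ldef x\ind{t-1}$ has appeared at most $\offgan$ times in $x\ind{1},\dots,x\ind{t-1}$, it overrides that single value to $\fhat\ind{t}(c^\star)=0$. Two things need to be verified. First, $\Orc$ is a legitimate offline estimation oracle on \emph{every} realizable sequence, as required by \cref{def:offline-oracle}: the only covariate on which $\fhat\ind{t}$ can disagree with $\fstar$ is $c^\star$, and this disagreement is introduced only when $c^\star$ has past-count at most $\offgan$, so $\EstOffBin(t)\le\offgan$ for all $t$. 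Second, on our block sequence the oracle says nothing about $\fstar(c_i)$ during block $i$: at the first round of block $i$ the most recent covariate is $c_{i-1}$, whose past-count is $\ell>\offgan$, so it is not overridden and $c_i$ (not yet seen) gets the default $0$; at every later round of block $i$ we have $c^\star=c_i$ with past-count at most $\ell-1\le\offgan$, so $c_i$ is overridden to $0$. Consequently $\fhat\ind{t}(c_i)=0$ at every round of block $i$, independent of $\fstar(c_i)$; more precisely, for $s$ in block $j$ the estimator $\fhat\ind{s}$ equals $\fstar$ on $c_1,\dots,c_{j-1}$ and equals $0$ on $c_j,\dots,c_N$.

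Given these facts, the information-theoretic step is immediate. For any round $t$ in block $i$, the learner's entire view---the deterministic covariate prefix $x\ind{1},\dots,x\ind{t-1}$ and the estimators $\fhat\ind{1},\dots,\fhat\ind{t}$---is a function of $(\fstar(c_1),\dots,\fstar(c_{i-1}))$ alone, hence independent of $\fstar(c_i)$ under the product prior; so is the (possibly randomized) estimator $\mu\ind{t}$. Since nature presents $x\ind{t}=c_i$, the expected loss $\En_{\fbar\sim\mu\ind{t}}\brk*{\Dbinshort(\fbar(c_i),\fstar(c_i))}$ equals $\tfrac12$ after averaging over $\fstar$. Summing over the $\min\{T,N\ell\}$ rounds spent on fresh covariates yields $\En\brk*{\EstOnD(T)}\ge\tfrac12\min\{T,N\ell\}=\Omega(\min\{(\offgan+1)N,T\})$, and averaging over $\fstar$ gives a fixed target (hence a fixed covariate sequence and a fixed oracle) attaining the bound.

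The step I expect to be the main obstacle is the joint design constraint on the oracle: it must remain valid on \emph{all} realizable sequences, which is what forces the ``override at most one covariate, and only when its past-count is $\le\offgan$'' rule---a cruder rule, such as hiding every covariate whose count is $\le\offgan$, could accumulate offline error far exceeding $\offgan$ on adversarial inputs---while still being uninformative enough that no block leaks anything about $\fstar(c_i)$, in particular at the block boundary where $c^\star$ switches from $c_{i-1}$ to $c_i$. I would also double-check the small-budget regime $\offgan\in(0,1)$, where $\ell=1$ and only the (count-zero) first presentation of each $c_i$ is hidden; this still yields $\En\brk*{\EstOnD(T)}\ge\tfrac12\min\{T,N\}=\Omega(\min\{(\offgan+1)N,T\})$. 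The remaining pieces---the offline-error computation and the conditional-independence argument---are routine.
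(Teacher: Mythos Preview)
Your proof is correct and follows essentially the same approach as the paper's: the same instance $\cF=\{0,1\}^{\cX}$ with $|\cX|=N$, the same block-structured covariate sequence (the paper uses block length $\lceil\offgan\rceil$ rather than $\lfloor\offgan\rfloor+1$), and the same Bayesian argument with $\fstar\sim\unif(\cF)$ showing that during block $i$ the oracle leaks nothing about $\fstar(c_i)$, so each such round contributes expected loss $1/2$. The only real difference is the oracle: the paper uses the cruder rule $\fhat\ind{t}(c)=\fstar(c)\cdot\indic\{N_{t-1}(c)\geq\offgan\}$ and verifies the $\offgan$-budget only on the chosen block sequence, whereas you override just the most recent covariate so the oracle is valid on \emph{all} sequences---your worry that the cruder rule can overspend on other inputs is accurate, but since the adversary in the lower bound controls both the oracle and the covariate stream, the paper's simpler construction already suffices, and your refinement is a clean (but optional) strengthening.
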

This result states that for a generic finite class $\cF$ and offline
estimation oracle $\Orc$, any oracle-efficient online estimator must
have
\[
      \En\brk[\big]{\EstOnD(T)} 
      \geq \Omega (\min \set{ (\offgan+1)\log |\FunSpace|,
  (\offgan+1)|\CovarSpace|, T})
\]
in the worst case. This implies that the $\log\abs{\cF}$ factor we pay
for offline to online conversion is unavoidable, and that \cref{thm:woe-upper-bound} is
optimal up to a $\log{}T$ factor, giving a near-optimal
characterization for the minimax rate for online estimation in the
\framework framework. 
We conclude with two remarks: (i) The $(\offgan+1)$ scaling (as
  opposed to say, $\offgan$) in \cref{thm:woe-upper-bound} is
  unavoidable, as witnessed by the optimality of the halving algorithm
  for noiseless binary classification \citep{cesa2006prediction}; (ii) if the space $\ValSpace$ and the loss $\divSymbol$ are convex, then
  we can change \cref{alg:weighted-maj-vote} to output a deterministic prediction by
  using the average of all \funNames in $\FunSpace_t$ rather than the
  uniform distribution on $\FunSpace_t$. See
  \cref{lem:general-version-space} for details.

  \paragraph{General reductions and infinite classes}
\cref{alg:weighted-maj-vote} is somewhat
specialized to finite classes. In \cref{sec:delayed} (deferred to the
appendix for space), we provide a more general
  approach to designing oracle-efficient algorithms based on \emph{delayed
  online learning} (\cref{thm:delayed_reduction}), and use it to derive
  a characterization of oracle-efficient learnability for
  classification with infinite classes $\cF$
  (\cref{thm:binary-loss-learnability}).\loose

\subsection{Impossibility of Memoryless Oracle-Efficient Algorithms}
\label{sec:memoryless}
\label{subsec:impossibility-results}

In the general \framework framework, the learner can use the entire
history of estimators $\fhat\ind{1},\ldots,\fhat\ind{t}$ and
covariates $x\ind{1},\ldots,x\ind{t-1}$ to produce the online
estimator $\fbar\ind{t}$ for step $t$; notably the \mainalg algorithm with which our upper bounds in the prequel are
derived uses the entire history. In this section, we show that for
\emph{memoryless} oracle-efficient algorithms
(\cref{def:mem-protocol}) that select the
estimator $\fbar\ind{t}$ only as a function of the most recent offline
estimator $\fhat\ind{t}$, similar guarantees are impossible.\loose

\begin{definition}[Memoryless \arxiv{oracle-efficient }algorithm]
  \label{def:mem-protocol}
An online estimation algorithm is \arxiv{referred to as}
  \emph{\memless} if there exists a map $\Lrn\ind{t}(\cdot)$ such that
  we can write $\mu\ind{t} = \Lrn\ind{t}(\fhat\ind{t})$, where
  $\fhat\ind{t} =
  \Orc\ind{t}(\covar\ind{1},\dots,\covar\ind{t-1},\obs\ind{1},\dots,\obs\ind{t-1})$
  and $\mu\ind{t}$ is the randomization distribution for the online estimator $\fbar\ind{t}$.\footnote{
    There are many other natural variants of this protocol. For example, one could consider algorithms that select $\mu\ind{t}$ based on $\fhat\ind{t}$ and $B$ bits of arbitrary auxiliary memory. We hope to see such variants studied in future work.}\loose
\end{definition}
Memoryless algorithms are more practical than arbitrary algorithms,
since they do not require storing past estimators or covariates in
memory. Our motivation for studying memoryless algorithms arises from recent work in interactive decision making \citep{foster2020beyond,foster2021statistical}, which shows that there exist near-optimal algorithms for contextual bandits and reinforcement learning that use estimation oracles in memoryless fashion.
We show that unfortunately, it is not possible to convert offline estimators into memoryless online estimation algorithms with non-trivial error.\loose

\begin{restatable}[Impossibility of memoryless algorithms for
  \framework]{theorem}{SOELowerBoundImp}
  \label{prop:separation_memoryless_improper}
    Consider the binary classification setting with $\cZ=\cY=\crl{0,1}$
  and loss $\Dbin{\cdot}{\cdot}$.
  \dfedit{For any $N\in\bbN$ and $\offgan\geq{}0$,} there exists an instance $(\CovarSpace,\ObsSpace, \ValSpace,\Kernel,\FunSpace)$
  with $|\FunSpace|=|\CovarSpace|=N$ such that for any \memlessOE
  algorithm, there exists a sequence of covariates
  $(\covar\ind{1},\dots,\covar\ind{T})$ and a (potentially improper)
  offline oracle $\Orc$ with parameter $\offgan$ such that \colt{$    \En\brk[\big]{\EstOnD(T)} 
    \geq \Omega ( \min \set{N(\offgan+1) ,T})$.}
\arxiv{  
  \begin{align*}
    \En\brk[\big]{\EstOnD(T)} 
    \geq \Omega ( \min \set{N(\offgan+1) ,T}).
  \end{align*}}
This conclusion continues to hold when the online estimation algorithm
\arxiv{is allowed to}\colt{can} remember $\fhat\ind{1},\ldots,\fhat\ind{t-1}$, but not $x\ind{1},\ldots,x\ind{t-1}$.
\end{restatable}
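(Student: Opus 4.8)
The plan is to produce a single hard instance with $|\CovarSpace|=|\FunSpace|\asymp N$ and loss $\Dbin{\cdot}{\cdot}$, and then — depending on the (memoryless) algorithm — to choose a covariate sequence and an \emph{improper} offline oracle on which it incurs error $\Omega(N(\offgan+1))$. The leverage is that a memoryless algorithm's round-$t$ output $\mu\ind{t}$ is a function of the single estimator $\fhat\ind{t}$ alone (and, for the strengthened statement, of $\fhat\ind{1:t-1}$ but never of which covariates $x\ind{1:t-1}$ have been played), so it cannot combine successive estimators to resolve its uncertainty about the fresh covariate on which it is about to be scored. The offline constraint $\EstOffD(t)\le\offgan$ is precisely what makes this exploitable: it constrains the oracle's estimator only on \emph{already-queried} covariates, and it lets the oracle disagree with $\fstar$ on any one covariate up to $\offgan$ times, so a covariate can be queried $\offgan+1$ times while $\fstar$'s value there remains hidden from the memoryless view.

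First I would run the protocol in $N$ phases (equivalently, $\offgan+1$ passes over the $N$ covariates), scheduling so that each covariate is queried exactly $\offgan+1$ times, and so that at every round querying a covariate $x$ there are two hypotheses $\fstar,\fstar'\in\FunSpace$ which (i) differ at $x$, and (ii) are each $\offgan$-consistent, on the covariates queried so far, with the whole sequence of estimators the oracle has shown — here improperness is used to make what has been shown compatible with both worlds. Since $\mu\ind{t}$ cannot depend on which of the two worlds is in force, the learner's expected round-$t$ $0/1$ loss, summed over the two worlds, is exactly $1$; summed over the $\offgan+1$ rounds in which $x$ is queried, the learner pays at least $(\offgan+1)/2$ in the worse world. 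Aggregating over the $N$ covariates and applying a Yao-type averaging over a prior on $\fstar\in\FunSpace$ then yields a single $\fstar$, schedule, and oracle with $\En\brk{\EstOnD(T)}=\Omega(N(\offgan+1))$; truncating at $T$ rounds gives $\Omega(\min\{N(\offgan+1),T\})$. The strengthening follows because the construction can be arranged so that the estimator sequence shown to the learner is identical in the two worlds being compared — the only distinguishing signal is the covariate history, which a $\fhat$-remembering algorithm is not permitted to use.

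The hard part will be arranging (ii): since $|\FunSpace|\asymp N$, a single target cannot hide a fresh bit on $\Omega(N)$ distinct covariates in the most naive way. Once a covariate has been queried more than $\offgan$ times the offline constraint forces the oracle to reveal $\fstar$ there, and this leaked information must not let the learner extrapolate $\fstar$ on the covariates still to come; indicators of singletons fail (the constant-$\mathbf 0$ predictor is then too good) and a ``staircase'' class fails (the learner extrapolates the pattern), so $\FunSpace$ and the (possibly adaptive) query order must be chosen so that what the oracle is forced to leak stays consistent with many extensions, with improperness supplying exactly the slack needed to keep two extensions alive at each fresh covariate. Turning the resulting per-covariate ``$\ge\tfrac12$ per round'' bounds into a \emph{single} instance that accumulates $\Omega(N(\offgan+1))$ error — rather than merely $\Omega(\offgan+1)$ — is the crux, and is where the combinatorics of the instance have to be set up with care.
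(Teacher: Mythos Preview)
Your proposal correctly identifies the broad mechanism --- block the rounds into phases of length $\approx \offgan+1$, hide $\fstar$'s value on the fresh covariate via an improper oracle, and exploit that a memoryless learner cannot aggregate across phases --- but it stops exactly where you say it does, and the specific route you sketch runs into a real obstacle.

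The gap is the accumulation step. A pairwise ``two worlds $\fstar,\fstar'$ differ at $x$ and agree on the history'' argument, combined with Yao averaging over a prior on $\FunSpace$, can extract at most $\approx\log|\FunSpace|=\log N$ independent bits of uncertainty: requiring, for each of $\Omega(N)$ covariates, two functions in $\FunSpace$ agreeing on the past and differing at the current point is a shattering condition that a size-$N$ class cannot sustain. So as written the plan tops out at $\Omega((\offgan+1)\log N)$, which is the \emph{upper} bound for unrestricted algorithms, not the memoryless lower bound you are after.

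You also discarded the singletons too quickly. The paper uses exactly $\FunSpace=\{f_i:f_i(x)=\indic\{x=i\}\}$ on $\CovarSpace=[N]$, and the ``constant-$\mathbf{0}$ is too good'' objection is defused by making the covariate sequence \emph{adaptive to the learner}, not a fixed $\offgan+1$-pass schedule. In block $\tau$ the oracle returns the improper estimator $\fhat\ind{t}(x)=\indic\{x\in\CovarSpacebar\ind{\tau}\}$, where $\CovarSpacebar\ind{\tau}$ is the set of not-yet-queried covariates together with the hidden index $\istar$; this is exact on everything already queried (so $\offgan$-valid) and is identical for every candidate $\istar\in\CovarSpacebar\ind{\tau}$, which is precisely the indistinguishability you wanted. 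The adversary then inspects the memoryless learner's response to this single $\fhat\ind{t}$: either the learner's mean prediction is $\le 1/2$ on all of $\CovarSpacebar\ind{\tau}$, in which case querying $\istar$ for the whole block costs $\Omega(\offgan+1)$; or the learner commits to some $j\in\CovarSpacebar\ind{\tau}$ with prediction $>1/2$, in which case querying $j$ costs $\Omega(\offgan+1)$ provided $\istar\ne j$. The learner's behavior thus induces a deterministic sequence of ``guesses'' $\jhat\ind{1},\jhat\ind{2},\ldots$ as a function of the shrinking sets, and a short combinatorial lemma shows one can choose $\istar$ to dodge the first $\Omega(N)$ of them. That dodging argument --- not Yao averaging --- is the accumulation mechanism, and it is where the memorylessness is cashed in: the guess $\jhat\ind{\tau}$ depends only on $\CovarSpacebar\ind{\tau}$ (equivalently on $\fhat\ind{t}$), not on which element of it is $\istar$.
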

This result shows that in the worst case, any \memlessOE algorithm
must have
\begin{align*}
  \En\brk[\big]{\EstOnD(T)}
  \geq \bigom((\offgan+1)\min\set{|\CovarSpace|,|\FunSpace|}).
\end{align*}
This precludes\arxiv{ the possibility of} an online estimation error bound
scaling with $(\offgan+1)\log\abs{\cF}$ as in
\cref{thm:woe-upper-bound}, and shows that the gap between general and memoryless
oracle-efficient algorithms can be
exponential\arxiv{ in general}.\loose

Interestingly, the lower bound in
\cref{prop:separation_memoryless_improper} holds even if the online estimation algorithm is allowed to remember
$\fhat\ind{1},\ldots,\fhat\ind{t-1}$, but not
$x\ind{1},\ldots,x\ind{t-1}$. The intuition here is that
without covariate information, it is not possible to aggregate the
predictions of previous estimators or otherwise use them to reduce
uncertainty. This provides post-hoc motivation for our decision to
incorporate covariate memory into the \framework protocol in \cref{sec:protocol}.

The proof of \cref{prop:separation_memoryless_improper} uses that the estimators $\fhat\ind{t}$ produced by the offline
estimation oracle may be \emph{improper} (i.e.,
$\fhat\ind{t}\notin\cF$). We next provide a variant of the result
that holds even if the estimation oracle is \emph{proper}, under the additional
assumptions that (i) the learner is itself proper in the sense that
$\mu\ind{t}\in\Delta(\cF)$, and (ii) the learner is
\emph{time-invariant} (i.e., the learner sets
$\mu\ind{t}=F(\fhat\ind{t})$ for all $t$).\loose
\newtheorem*{thm:memoryless_var}{Theorem
  \ref*{prop:separation_memoryless_improper}$'$}
\newcommand{\refmemorylessvar}{\savehyperref{thm:soe-lower-bound-proper}{Theorem
    \ref*{prop:separation_memoryless_improper}$'$}\xspace}
\begin{restatable}[Impossibility of memoryless algorithms for
  \framework; proper variant]{thm:memoryless_var}{SOELowerBound}
  \label{thm:soe-lower-bound-proper}
    Consider the binary classification setting with $\cZ=\cY=\crl{0,1}$
  and loss $\Dbin{\cdot}{\cdot}$. \dfedit{For any $N\in\bbN$ and $\offgan\geq{}0$,} there exists an instance $(\CovarSpace,\ObsSpace, \ValSpace,\Kernel,\FunSpace)$
  with $|\FunSpace|=|\CovarSpace|=N$ such that for any \memlessOE
  algorithm that is (i) \textbf{proper}\arxiv{ ($\mu\ind{t}\in\Delta(\cF)$ is
  supported on $\FunSpace$ for all $t\in [T]$)}, and (ii)
  \textbf{time-invariant}\arxiv{ ($\Lrn\ind{1} = \dots = \Lrn\ind{T}$ in the \memless OEOE protocol)}, there exists a 
sequence of covariates $(\covar\ind{1},\dots,\covar\ind{T})$ and a
\textbf{proper} offline oracle $\Orc$ with parameter $\offgan$ such
that \colt{$\En\brk[\big]{\EstOnD(T)}  
\geq \Omega ( \min \set{N (\offgan + 1),T})$.}
\arxiv{\begin{align*}
\En\brk[\big]{\EstOnD(T)}  
\geq \Omega ( \min \set{N (\offgan + 1),T}).
\end{align*}}
\end{restatable}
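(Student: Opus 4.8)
The plan is to build a single hard instance and then dichotomize according to how much the learner is willing to ``trust'' the estimator it is handed. Take $\cX=[N]$ and $\cF=\crl{f_1,\dots,f_N}$ with $f_i(j)=\indic\crl{i=j}$ (the singleton class), together with the noiseless kernel $\Kernel(z)=\indic_z$ and the $0/1$ loss $\Dbinshort$ (metric-like with $C_\divSymbol=1$). Every offline oracle we use will be \emph{proper}, so the learner only ever receives estimators $\fhat\ind{t}\in\cF$; combined with memorylessness and time-invariance this means the learner is fully described by a single map $F\colon\cF\to\Delta(\cF)$, and the only quantities that will matter are the self-consistency probabilities $c_i\ldef\Pr_{\fbar\sim F(f_i)}\brk{\fbar=f_i}$, $i\in[N]$. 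The one elementary fact used repeatedly is that for the singleton class $\Pr_{\fbar\sim\mu}\brk{\fbar(j)=1}=\mu(f_j)$ for every $\mu\in\Delta(\cF)$ — this is exactly where properness of the learner enters.

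\emph{Case 1: $\min_i c_i<1/2$.} Let $i_0$ attain the minimum and take $\fstar=f_{i_0}$. I would play the covariate $x\ind{t}=i_0$ at every round and use the offline oracle that, at each step, returns the first $f\in\cF$ with at most $\offgan$ mismatches on the observed history under a fixed ordering of $\cF$ that places $f_{i_0}$ first. Such an $f$ always exists — the data-generating function has zero mismatches for \emph{any} target — so this is a legitimate proper offline oracle in the sense of \cref{def:offline-oracle}; and for this particular run it returns $f_{i_0}$ every round (being $\fstar$, it has zero mismatches and is first). The learner then plays $F(f_{i_0})$ each round and incurs expected loss $\Pr_{\fbar\sim F(f_{i_0})}\brk{\fbar(i_0)\neq 1}=1-c_{i_0}>1/2$, so $\En\brk{\EstOnD(T)}>T/2\geq\tfrac12\min\crl{N(\offgan+1),T}$.

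\emph{Case 2: $c_i\geq 1/2$ for all $i$.} Now take $\fstar=f_1$, fix the ordering $(f_2,f_3,\dots,f_N,f_1)$ on $\cF$, and again use the ``first consistent function under this ordering'' offline oracle (legitimate for the same reason). Partition time into phases indexed by $j=2,\dots,N$, where in phase $j$ the adversary plays $x\ind{t}=j$ for $\floor{\offgan}+1$ consecutive rounds, truncating the last phase if time runs out. Since coordinate $1$ is never queried, all observed labels equal $0$; a short bookkeeping check then shows that at every round of phase $j$ the set of functions with at most $\offgan$ mismatches is exactly $\crl{f_j,f_{j+1},\dots,f_N,f_1}$, so the chosen ordering forces the oracle to hand the learner $\fhat\ind{t}=f_j$ throughout phase $j$. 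Because $f_1(j)=0$ for $j\neq 1$, the per-round expected loss in phase $j$ is $\Pr_{\fbar\sim F(f_j)}\brk{\fbar(j)=1}=c_j\geq 1/2$. Summing this $\geq 1/2$ bound over the first $\min\crl{(N-1)(\floor{\offgan}+1),\,T}$ rounds, a routine accounting gives $\En\brk{\EstOnD(T)}=\Omega(\min\crl{N(\offgan+1),T})$, as claimed.

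The step I expect to require the most care is verifying that the covariate-sequence/oracle pairs above are legitimate: the oracle must satisfy the offline guarantee of \cref{def:offline-oracle} for \emph{every} target in $\cF$ (not merely the chosen $\fstar$), and it must genuinely return $f_{i_0}$ (resp.\ $f_j$) rather than a ``distractor'' such as $f_1$. Both are arranged by the ``first consistent function under a fixed ordering'' device, where the ordering is chosen as a function of $F$ so as to push the distractor to the back; the rest is just the phase-length arithmetic. I also note that time-invariance is genuinely needed for this dichotomy — a round-dependent memoryless learner could alternate between ``trust the oracle'' and ``predict $f_1$'' to slip between the two cases — which is why the proper variant assumes it, in contrast to \cref{prop:separation_memoryless_improper}, whose proof instead leverages improperness of the oracle.
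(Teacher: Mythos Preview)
Your proof is correct and follows the same two-case dichotomy as the paper (split on whether some $\mu_i(f_i)<1/2$; in Case~1 feed the truth back as the oracle and punish the mistrust, in Case~2 march through blocks of length $\lfloor\offgan\rfloor+1$ while the oracle outputs a ``wrong'' $f_j$ that is still $\offgan$-consistent). The main difference is cosmetic but worth noting: the paper uses the threshold class $f_i(x_j)=\indic\{j\geq i\}$, which forces it to split Case~1 into two sub-cases (depending on whether the mass below or above $f_i$ exceeds $1/4$) and to verify the offline guarantee for the explicit block oracle by hand. Your singleton class $f_i(j)=\indic\{i=j\}$ collapses Case~1 to one line, and your ``first consistent function under a chosen ordering'' oracle is automatically valid for every target, so the only thing left is to check which function it returns on the particular run --- a small but genuine simplification. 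Everything else (the block arithmetic, the role of properness and time-invariance) matches the paper.
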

\fakepar{A complementary upper bound}
For completeness, we conclude by showing that the (large)
lower bound in \cref{prop:separation_memoryless_improper} can be
achieved with a memoryless oracle-efficient algorithm. We consider the ``trivial'' algorithm that outputs the estimators produced by the
offline oracle as-is.
\begin{restatable}[Upper bound for memoryless \framework]{proposition}{soeupperbound}
  \label{prop:soe-upper-bound}
  For any instance $(\CovarSpace,\ObsSpace,
  \ValSpace,\Kernel,\FunSpace)$, \mlike loss $\divSymbol$, and offline
  oracle $\Orc$ with parameter $\offgan$, the algorithm that returns
  $\fbar\ind{t}=\fhat\ind{t}\ldef{}\Orc\ind{t}(x\ind{1},\ldots,x\ind{t-1},y\ind{1},\ldots,y\ind{t-1})$
  has online estimation error \colt{$  \EstOnD(T) %
  \leq  \bigoh\prn*{(\offgan+1) |\CovarSpace|\log T}$.}
\arxiv{  \begin{align*}
  \EstOnD(T) %
  \leq  \bigoh\prn*{(\offgan+1) |\CovarSpace|\log T}.
         \end{align*}
         }
       \end{restatable}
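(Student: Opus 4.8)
The plan is to prove this via a per-covariate amortization argument: the output of the trivial algorithm at round $t$ is the offline estimator $\fhat\ind{t}$, and the oracle guarantee controls the cumulative error of $\fhat\ind{t}$ over \emph{all} earlier rounds, including those whose covariate coincides with any given $\covar\in\CovarSpace$; this forces the per-round error to decay once a covariate has been seen several times.

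First I would write $\EstOnD(T)=\sum_{t=1}^{T}\divX{\big}{\fhat\ind{t}(\covar\ind{t})}{\fstar(\covar\ind{t})}$ and regroup the rounds by the identity of the covariate: for each $\covar\in\CovarSpace$, let $n_\covar$ be the number of rounds $t$ with $\covar\ind{t}=\covar$ and let $t_1<\cdots<t_{n_\covar}$ enumerate them, so that $\EstOnD(T)=\sum_{\covar\in\CovarSpace}\sum_{j=1}^{n_\covar}\divX{\big}{\fhat\ind{t_j}(\covar)}{\fstar(\covar)}$.

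The crux is the following observation. Fix $\covar$ and an index $j\geq 2$. At round $t_j$, the estimator $\fhat\ind{t_j}$ has already encountered the covariate $\covar$ on the $j-1$ earlier rounds $t_1,\dots,t_{j-1}$; since $\fhat\ind{t_j}$ is a fixed function and every term of the offline-error sum is nonnegative, \cref{def:offline-oracle} — which holds almost surely even when covariates are chosen adaptively, as they are in the \framework protocol (the randomization distribution $\mu\ind{t}$ here is a point mass determined by the history) — yields
\[
(j-1)\cdot\divX{\big}{\fhat\ind{t_j}(\covar)}{\fstar(\covar)}\;\leq\;\EstOffD(t_j)\;\leq\;\offgan,
\]
hence $\divX{\big}{\fhat\ind{t_j}(\covar)}{\fstar(\covar)}\leq\min\set{1,\offgan/(j-1)}$, where the bound $1$ uses that $\divSymbol$ is \mlike, hence $[0,1]$-valued. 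Summing a harmonic series then gives $\sum_{j=1}^{n_\covar}\divX{\big}{\fhat\ind{t_j}(\covar)}{\fstar(\covar)}\leq 1+\offgan\sum_{i=1}^{n_\covar-1}\tfrac1i\leq(\offgan+1)(1+\log T)$, and summing over $\covar\in\CovarSpace$ concludes with $\EstOnD(T)\leq\abs{\CovarSpace}(\offgan+1)(1+\log T)=\bigoh((\offgan+1)\abs{\CovarSpace}\log T)$.

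I do not anticipate a genuine obstacle: the one point requiring care is that isolating the contribution of a single covariate to the offline-error sum is legitimate only because $\divSymbol$ is nonnegative, and it is worth noting that the \mlike constant $C_\divSymbol$ does not enter the bound — the argument uses only that $\divSymbol$ is nonnegative and bounded by $1$, and never invokes the approximate triangle inequality.
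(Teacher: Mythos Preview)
Your proof is correct and follows essentially the same approach as the paper's: both exploit that at round $t$ the offline guarantee forces $N_{t-1}(\covar\ind{t})\cdot\divX{\big}{\fhat\ind{t}(\covar\ind{t})}{\fstar(\covar\ind{t})}\leq\offgan$ (where $N_{t-1}(\covar)$ counts prior occurrences of $\covar$), and then sum the resulting harmonic series per covariate. The only cosmetic difference is that the paper writes the decomposition via the multiplicative reweighting $\frac{\indic(\covar=\covar\ind{t})}{N_{t-1}(\covar)\vee 1}\cdot(N_{t-1}(\covar)\vee 1)$ rather than explicitly enumerating the visit times $t_1<\cdots<t_{n_\covar}$, but the content is identical.
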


\arxiv{
\subsection{General Reductions for Oracle-Efficient Online Estimation and\\
  Characterization of Oracle-Efficient Learnability for Classification}
\label{sec:delayed}

The oracle-efficient online estimation algorithm in
\cref{subsec:opt-err-bound}, \cref{alg:weighted-maj-vote}, is somewhat
specialized to finite classes. In this section, we provide a more general
  approach to designing oracle-efficient algorithms based on \emph{delayed
  online learning}, and use it to derive
  a characterization of oracle-efficient learnability for
  classification with infinite classes $\cF$.

For the results in this section, we assume that $\cZ$ and \jqedit{$\divSymbol$ are convex}, which covers regression
and conditional density estimation; variants of our result for classification are
given in \cref{app:delayed}.
  
\paragraph{Online learning with delayed feedback}
\label{subsubsec:delayed-reference}

Before introducing our algorithm, we first introduce an abstract
\emph{delayed online learning} framework
\citep{weinberger2002delayed,mesterharm2007improving,joulani2013online,quanrud2015online}. In
our framework, the learner is given a class
$\cF\subseteq\cZ^{\cX}$. Their goal is to choose a sequence of
\funNames $\fbar\ind{1},\ldots,\fbar\ind{T}$ that minimizes regret
against the class $\cF$ for an adversarially chosen sequence of loss
functions $\ell\ind{1},\ldots,\ell\ind{T}$, with the twist being that
the loss $\ell\ind{t}$ is not revealed immediately at step $t$, and
instead becomes
available at step $t+N$ for a \emph{delay} parameter $N\in\bbN$.

In more detail, the interaction between the learner and the
environment proceeds as follows:
\begin{enumerate}[label=$\bullet$,leftmargin=5mm]
    \item For $t=1,\dots,T$:
    \begin{enumerate}[label=$\bullet$,leftmargin=5mm]
      \item The learner picks
        $\fbar\ind{t}\sim\mu\ind{t}\in\Delta(\cZ^{\cX})$.
      \item Learner incurs loss $\ell\ind{t}(\fbar\ind{t})$ and adversary reveals loss function
        $\ell\ind{t-N}:\cZ^{\cX}\to\brk{0,1}$ for round $t-N$ (if
        $t< N+1$, nothing is revealed).
    \end{enumerate}
\end{enumerate}
The goal of the learner is to minimize regret in the sense that
\begin{align*}
\RDOL(T,N,\gamma) \ldef  \sum\limits_{t=1}^{T} \En_{\fbar\ind{t}\sim \mu\ind{t}}\brk*{\ell\ind{t}(\fbar\ind{t})}
  -  \gamma\cdot{}\min_{f\in\FunSpace}\sum\limits_{t=1}^{T}\ell\ind{t}(f)
\end{align*}
is small, where $\gamma\geq{}1$ is a parameter. For $\gamma=1$, this definition coincides with the standard notion of regret in online learning (e.g., \citet{cesa2006prediction}), but allowing for $\gamma>1$ will prove useful for our technical results.

\paragraph{Algorithm and online estimation error bound}

\begin{algorithm}[tp]
\caption{Reduction from \framework to Online Learning with Delayed Feedback}
\label{alg:reduction-to-delayed-OL}
\begin{algorithmic}[1]
  \State\textbf{input:} Offline estimation oracle $\Orc$ with
  parameter $\offgan\geq{}0$, delay parameter $N\in\bbN$, delayed
  online learning algorithm $\AlgDOL$ for class $\cF$.
\For{$t=1,2,\dots,T$}
\State Receive $\fhat\ind{t}=\Orc\ind{t}(x\ind{1},\ldots,x\ind{t-1},y\ind{1},\ldots,y\ind{t-1})$.
\If{$t> N$}
\State Let $\ftil\ind{t-N} \ldef \frac{1}{N} \sum\limits_{i=t-N+1}^{t}
\fhat\ind{i} $. 
\State \multiline{Let
$\ell\ind{t-N}(f) \ldef
\loss{\ftil\ind{t-N}(\covar\ind{t-N})}{f(\covar\ind{t-N})}$ and pass
$\ell\ind{t-N}(\cdot)$ to $\AlgDOL$ as the delayed feedback.}
\EndIf
\State \multiline{Let
$\mu\ind{t}=\AlgDOL\ind{t}(\ell\ind{1},..,\ell\ind{t-N})$ be the
delayed online learner's prediction distribution.}
\State Predict with $\dfedit{\fbar\ind{t}\sim}\mu\ind{t}$ and receive $\covar\ind{t}$.
\EndFor
\end{algorithmic}
\end{algorithm}

\cref{alg:reduction-to-delayed-OL} describes our reduction from
oracle-efficient online estimation to delayed online learning. In
addition to an offline oracle $\Orc$, the algorithm takes as input a
\emph{delay parameter} $N\in\bbN$ and a delayed online learning
algorithm $\AlgDOL$ for the class $\cF$
(\cref{alg:reduction-to-delayed-OL} does not explicitly take the class
$\cF$ as an argument, as the algorithm only implicitly makes use of $\cF$
through $\AlgDOL$).

The basic premise behind \cref{alg:reduction-to-delayed-OL} is that
for any sequence of consistent offline estimators, we can average to
improve the predictions. Consider a step $t\in\brk{T}$. Suppose
$\fhat\ind{1},\ldots,\fhat\ind{T}$ are produced by an offline oracle
$\Orc$ with parameter $\offgan\geq{}0$, \jqedit{where we augment the sequence by setting $\fhat\ind{T+s} = \fhat\ind{T}$ for all $s\in \bbN$.} 
Then we can use an argument based
on convexity (cf. proof of \cref{thm:delayed_reduction}) to show that for any
$N\in\bbN$, the averaged \funNames
\begin{equation}
  \label{eq:avg}
\ftil\ind{t}\ldef{}\frac{1}{N}\sum_{i=t+1}^{t+N}\fhat\ind{i}
\end{equation}
satisfies
\jqedit{
\begin{align*}
  \sum_{t=1}^{T}\loss{\ftil\ind{t}(\covar\ind{t})}{\fstar(\covar\ind{t})} &\leq{}N+
\frac{1}{N}\sum_{t=1}^{T-N}\sum_{i=t+1}^{t+N}\loss{\fhat\ind{i}(\covar\ind{t})}{\fstar(\covar\ind{t})}\\
  &=N+
    \frac{1}{N}\sum_{t=2}^{T}\sum_{i<t}\loss{\fhat\ind{t}(\covar\ind{i})}{\fstar(\covar\ind{i})}\leq{} N+ \offgan T/N .
\end{align*}}%
In particular, as we increase $N$, the quality of the predictions
increases, and we achieve sublinear estimation error as soon as
$N=\omega(T)$. Of course, the catch here is that $\ftil\ind{t}$
depends on the predictions of future estimators, and cannot be
computed at step $t$. However, $\ftil\ind{t}$ \emph{can} be computed
at step $t+N+1$, with a delay of $N$. This leads us to appeal to delayed
online learning. In particular, at each step $t\geq{}N+1$,
\cref{alg:reduction-to-delayed-OL} proceeds as follows. Using the new
offline estimator $\fhat\ind{t}$ from $\Orc$, the algorithm computes
the averaged estimator $\ftil\ind{t-N} \ldef \frac{1}{N} \sum\limits_{i=t-N+1}^{t}
\fhat\ind{i}$ corresponding to the estimator in \cref{eq:avg} for step
$t-N$. The algorithm then defines a loss function
\[
\ell\ind{t-N}(f) \ldef
\loss{\ftil\ind{t-N}(\covar\ind{t-N})}{f(\covar\ind{t-N})}
\]
and feeds it into the delayed online learning algorithm $\AlgDOL$ as
the feedback for step $t-N$. Finally,
\cref{alg:reduction-to-delayed-OL} uses the prediction distribution
$\mu\ind{t}$ produced by $\AlgDOL$ to sample the final estimator
$\fbar\ind{t}$, then proceeds to the next step. Our main theorem shows
that as long $\AlgDOL$ achieves low regret for delayed online
learning, this strategy leads to low online estimation error.

\begin{restatable}[Reduction from oracle-efficient online estimation
  to delayed online learning]{theorem}{OELtoDOL}
  \label{thm:delayed_reduction}
  Let $\divSymbol$ be any convex, metric-like loss. Suppose we run \cref{alg:reduction-to-delayed-OL} with delay
  parameter $N\in\bbN$ and a delayed online learning algorithm
  $\AlgDOL$ for the class $\cF$. Then for all $\gamma\geq{}1$,
  \cref{alg:reduction-to-delayed-OL} ensures that
  \begin{align}
    \label{eq:delayed_regret}
    \En\brk[\big]{\EstOnD(T)} 
    \leq \bigoh\prn*{
    C_\divSymbol\gamma(N+\offgan{}T/N) +\RDOL(T,N,\gamma)},
  \end{align}
    with any offline oracle $\Orc$ with parameter $\offgan\geq{}0$, 
  where
    \begin{align}
      \label{def:rdol}
      \RDOL(T,N,\gamma) \ldef{} \sum\limits_{t=1}^{T}\En_{\fbar\ind{t}\sim{}\mu\ind{t}}\brk*{\ell\ind{t}(\fbar\ind{t})}
    - \gamma\cdot{}\min_{f\in\FunSpace}\sum\limits_{t=1}^{T}\ell\ind{t}(f)
    \end{align}
    is the regret of $\AlgDOL$ for the sequence of losses constructed
    in \cref{alg:reduction-to-delayed-OL}.
\end{restatable}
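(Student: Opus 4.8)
The plan is to bound the online estimation error by a two-step comparison: relate each realized estimator $\fbar\ind{t}$ to the averaged estimator $\ftil\ind{t}\ldef\frac1N\sum_{i=t+1}^{t+N}\fhat\ind{i}$ of \cref{eq:avg} via the regret guarantee of $\AlgDOL$, and then relate $\ftil\ind{t}$ to $\fstar$ via the convexity/averaging bound already displayed just before the theorem. Both steps rely on $\fstar\in\cF$, so that the offline-oracle constraint of \cref{def:offline-oracle} applies to $\fstar$ and $\fstar$ is a legal comparator for $\AlgDOL$.

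First I would invoke the metric-like property of $\divSymbol$: for each $t\in[T]$ and each realization of $\fbar\ind{t}$,
\[
\loss{\fbar\ind{t}(\covar\ind{t})}{\fstar(\covar\ind{t})}\;\le\;C_\divSymbol\big(\loss{\fbar\ind{t}(\covar\ind{t})}{\ftil\ind{t}(\covar\ind{t})}+\loss{\ftil\ind{t}(\covar\ind{t})}{\fstar(\covar\ind{t})}\big)\;=\;C_\divSymbol\big(\ell\ind{t}(\fbar\ind{t})+\loss{\ftil\ind{t}(\covar\ind{t})}{\fstar(\covar\ind{t})}\big),
\]
using the identity $\ell\ind{t}(f)=\loss{\ftil\ind{t}(\covar\ind{t})}{f(\covar\ind{t})}$ built into \cref{alg:reduction-to-delayed-OL} (with the augmentation $\fhat\ind{T+s}\ldef\fhat\ind{T}$ ensuring $\ftil\ind{t}$ is defined for the last $N$ rounds). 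Summing over $t$ and taking expectations yields
\[
\En\brk[\big]{\EstOnD(T)}\;\le\;C_\divSymbol\sum_{t=1}^{T}\En_{\fbar\ind{t}\sim\mu\ind{t}}\brk{\ell\ind{t}(\fbar\ind{t})}\;+\;C_\divSymbol\sum_{t=1}^{T}\loss{\ftil\ind{t}(\covar\ind{t})}{\fstar(\covar\ind{t})}.
\]

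Next I would control the two resulting sums. For the second, I would reuse the computation preceding the theorem essentially verbatim: convexity of $\divSymbol$ in its first argument bounds $\loss{\ftil\ind{t}(\covar\ind{t})}{\fstar(\covar\ind{t})}$ by the average over $i\in\{t+1,\dots,t+N\}$ of $\loss{\fhat\ind{i}(\covar\ind{t})}{\fstar(\covar\ind{t})}$; reversing the order of summation and relaxing to all pairs with $i<t$ (legitimate since the loss is nonnegative) collapses the double sum to $\frac1N\sum_{t}\sum_{i<t}\loss{\fhat\ind{t}(\covar\ind{i})}{\fstar(\covar\ind{i})}\le\offgan T/N$ by \cref{def:offline-oracle}, while the last $N$ rounds contribute at most $N$ since the loss is bounded by $1$; hence this sum is at most $N+\offgan T/N$. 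For the first sum, I would apply the regret guarantee of $\AlgDOL$ to the loss sequence $\ell\ind{1},\dots,\ell\ind{T}$ that \cref{alg:reduction-to-delayed-OL} constructs: by the definition of $\RDOL(T,N,\gamma)$ together with $\fstar\in\cF$ and the bound just obtained (applied now to $\ell\ind{t}(\fstar)=\loss{\ftil\ind{t}(\covar\ind{t})}{\fstar(\covar\ind{t})}$),
\[
\sum_{t=1}^{T}\En_{\fbar\ind{t}\sim\mu\ind{t}}\brk{\ell\ind{t}(\fbar\ind{t})}\;=\;\gamma\min_{f\in\cF}\sum_{t=1}^{T}\ell\ind{t}(f)+\RDOL(T,N,\gamma)\;\le\;\gamma\big(N+\offgan T/N\big)+\RDOL(T,N,\gamma).
\]
Substituting both bounds gives $\En[\EstOnD(T)]\le C_\divSymbol(\gamma+1)(N+\offgan T/N)+C_\divSymbol\RDOL(T,N,\gamma)=\bigoh\big(C_\divSymbol\gamma(N+\offgan T/N)+\RDOL(T,N,\gamma)\big)$, treating $C_\divSymbol$ as an absolute constant.

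The arithmetic here is routine; the step that will need the most care is checking that the reduction is legitimate — that feeding $\AlgDOL$ the delayed loss $\ell\ind{t}$, which depends on the covariate $\covar\ind{t}$ selected adaptively by nature after $\mu\ind{t}$ and on the \emph{future} oracle outputs $\fhat\ind{t+1},\dots,\fhat\ind{t+N}$ but is disclosed to $\AlgDOL$ only at round $t+N$, really does constitute a valid instance of online learning with delay $N$ against an adaptive adversary, so that $\AlgDOL$'s regret guarantee $\RDOL$ applies as stated — and that the $O(N)$ slack from the final $N$ rounds (whose losses are never fed back within the horizon) and from the augmentation $\fhat\ind{T+s}=\fhat\ind{T}$ is absorbed into the additive $N$ term. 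If one prefers a pathwise rather than in-expectation statement, $\RDOL(T,N,\gamma)$ should be understood as a pathwise (or high-probability) regret bound for $\AlgDOL$, consistent with how such guarantees are typically presented.
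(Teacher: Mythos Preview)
Your proposal is correct and follows essentially the same route as the paper: apply the metric-like triangle inequality to split into $\ell\ind{t}(\fbar\ind{t})$ plus $\divSymbol(\ftil\ind{t}(\covar\ind{t}),\fstar(\covar\ind{t}))$, use the definition of $\RDOL$ with $\fstar\in\cF$ to control the first sum, and bound the second sum by $N+\offgan T/N$ via convexity and the offline-oracle guarantee. The only cosmetic difference is that the paper applies the regret inequality before combining the two terms (yielding $C_\divSymbol(\gamma+1)\sum_t\divSymbol(\ftil\ind{t},\fstar)+\RDOL$ directly), whereas you substitute the two bounds separately; your added remarks about the validity of the delayed-learning instance and the handling of the final $N$ rounds are sound and, if anything, more explicit than the paper's proof.
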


The parameter $N$ controls a sort of bias-variance tradeoff in \cref{thm:delayed_reduction}.
The first term in
\cref{eq:delayed_regret} (corresponding to the bias of the averaged
estimators) decreases with the delay $N$, while the
second term (corresponding to the regret of $\AlgDOL$) increases; the optimal choice for $N$ will balance
these terms. To make this concrete, we revisit finite classes as a warmup.

\paragraph{Example: Finite classes}

Delayed online learning is well-studied, and optimal algorithms are
known for many classes of interest
\citep{weinberger2002delayed,mesterharm2007improving,joulani2013online,quanrud2015online}. The
following standard result (a proof is given in \cref{app:delayed} for
completeness) gives a delayed regret bound for arbitrary finite classes.

\begin{restatable}{lemma}{DOLToOL}
  \label{prop:delayed}
  Consider the delayed online learning setting with a delay parameter
  $N$. There exists an algorithm that achieves
    \begin{align*}
\RDOL(T,N,2) =  \sum\limits_{t=1}^{T}\En_{\fbar\ind{t}\sim{}\mu\ind{t}}\brk*{\ell\ind{t}(\fbar\ind{t})}
    - 2\cdot{}\min_{f\in\FunSpace}\sum\limits_{t=1}^{T}\ell\ind{t}(f) \leq{} 2N\cdot{} \log |\FunSpace|   
    \end{align*}
    for any sequences of losses $\ell\ind{1},\ldots,\ell\ind{T}\in\brk{0,1}$.
\end{restatable}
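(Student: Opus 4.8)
The plan is to reduce delayed online learning with delay $N$ to $N$ independent copies of an ordinary (undelayed) online learning problem, one copy per residue class of the round index modulo $N$, and to run a constant-learning-rate exponential weights algorithm on each copy. Concretely, I would partition $[T]$ into the classes $\mathcal{T}_r = \crl{t\in[T] : t\equiv r \pmod N}$ for $r\in\crl{1,\dots,N}$, maintain a separate exponential weights instance $\mathsf{A}_r$ over $\FunSpace$ for each $r$, and at round $t\in\mathcal{T}_r$ set $\mu\ind{t}$ to the current prediction distribution of $\mathsf{A}_r$; whenever a delayed loss $\ell\ind{s}$ is revealed, it is fed to the instance $\mathsf{A}_r$ with $s\in\mathcal{T}_r$. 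The one mildly delicate point is a timing check: because consecutive rounds within a class differ by exactly $N$, the loss $\ell\ind{s}$ for any $s\in\mathcal{T}_r$ is revealed before the next round of $\mathcal{T}_r$, so each $\mathsf{A}_r$ genuinely faces the undelayed subsequence $(\ell\ind{t})_{t\in\mathcal{T}_r}$. (If the protocol is read so that $\ell\ind{t-N}$ becomes available only \emph{after} the round-$t$ play, one partitions into $N+1$ classes instead and loses an extra additive $\log|\FunSpace|$, which is immaterial; I use the $N$-class version below.)

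On each copy I would run exponential weights with learning rate $\eta = \log 2$ (natural logarithm), so that $1-e^{-\eta} = \tfrac12$. The textbook multiplicative-weights analysis --- bounding $\log W_{T+1}$ from above via the pointwise inequality $e^{-\eta x}\le 1-(1-e^{-\eta})x$ for $x\in[0,1]$, and from below by $-\eta\sum_t\ell\ind{t}(f)$ for any fixed comparator $f\in\FunSpace$, starting from $\log W_1 = \log|\FunSpace|$ --- gives, for every $f\in\FunSpace$,
\[ \sum_{t\in\mathcal{T}_r}\En_{\fbar\sim\mu\ind{t}}\brk*{\ell\ind{t}(\fbar)} \le \frac{\eta}{1-e^{-\eta}}\sum_{t\in\mathcal{T}_r}\ell\ind{t}(f) + \frac{\log|\FunSpace|}{1-e^{-\eta}} = 2\log 2 \cdot \sum_{t\in\mathcal{T}_r}\ell\ind{t}(f) + 2\log|\FunSpace|. \]
Since $2\log 2 < 2$ and the losses are nonnegative, the first term is at most $2\sum_{t\in\mathcal{T}_r}\ell\ind{t}(f)$. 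This multiplicative slack is exactly what the parameter $\gamma=2$ in $\RDOL(T,N,2)$ buys us --- a genuine ($\gamma=1$) regret bound of the stated, $T$-independent form is not available --- and $\eta=\log 2$ is precisely the choice that pins the leading constant at $2$.

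Finally I would sum the $N$ per-class bounds. Fixing $f^\star\in\argmin_{f\in\FunSpace}\sum_{t=1}^T\ell\ind{t}(f)$ as the comparator in each block and using $\sum_{r=1}^{N}\sum_{t\in\mathcal{T}_r}\ell\ind{t}(f^\star) = \sum_{t=1}^T\ell\ind{t}(f^\star)$, I obtain
\[ \sum_{t=1}^{T}\En_{\fbar\ind{t}\sim\mu\ind{t}}\brk*{\ell\ind{t}(\fbar\ind{t})} \le 2\min_{f\in\FunSpace}\sum_{t=1}^T\ell\ind{t}(f) + 2N\log|\FunSpace|, \]
which is exactly $\RDOL(T,N,2)\le 2N\log|\FunSpace|$. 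The only real obstacle is the timing/indexing bookkeeping that certifies each residue class is an undelayed instance; the per-copy regret computation and the summation over classes are entirely routine.
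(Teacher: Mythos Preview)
Your proof is correct and follows essentially the same route as the paper: the paper invokes its Lemma~\ref{lem:delayed-to-ol} (the standard reduction running $N$ copies of a base learner over residue classes mod $N$) together with the exponential-weights bound from Corollary~2.3 of Cesa-Bianchi and Lugosi, exactly the decomposition you describe. Your version is a bit more explicit in pinning down the learning rate $\eta=\log 2$ to obtain the stated constant $2$, whereas the paper just cites the external result; the timing check you flag is handled in the paper's protocol by making $\ell\ind{t-N}$ available before the round-$t$ play (see \cref{alg:reduction-to-OL}), so the $N$-class partition suffices.
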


Combining \cref{thm:delayed_reduction} with \cref{prop:delayed}, we
can obtain the following upper bound for oracle-efficient online estimation.
\begin{restatable}[Oracle-efficient online estimation for finite
  classes via delayed online learning]{corollary}{RedToOLUpperBound}
  \label{cor:reduction-to-OL-upper-bound}
  Consider an arbitrary instance $(\CovarSpace,\ObsSpace,
  \ValSpace,\Kernel,\FunSpace)$ and \mlike loss $\divSymbol$, and
  assume $\cZ$ is convex.  By choosing $\AlgDOL$ as in
  \cref{prop:delayed}, \cref{alg:reduction-to-delayed-OL} ensures that
  for any $N\geq{}1$,
  \begin{align*}
  \En\brk[\big]{\EstOnD(T)}  
  \leq \bigoh\prn*{
    C_\divSymbol(N+\offgan{}T/N)+N\cdot \log |\FunSpace|}.
  \end{align*}
  with any offline oracle $\Orc$ with parameter $\offgan$.
\end{restatable}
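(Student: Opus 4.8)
The plan is to obtain the corollary by directly composing \cref{thm:delayed_reduction} with the delayed-online-learning guarantee of \cref{prop:delayed}. Concretely, I would run \cref{alg:reduction-to-delayed-OL} with the given delay parameter $N$ and with $\AlgDOL$ instantiated as the finite-class delayed online learner furnished by \cref{prop:delayed}. Before invoking that lemma, I would verify that the losses it is fed actually meet its hypotheses: the loss functions constructed inside \cref{alg:reduction-to-delayed-OL} are $\ell\ind{t-N}(f) = \loss{\ftil\ind{t-N}(\covar\ind{t-N})}{f(\covar\ind{t-N})}$ where $\ftil\ind{t-N} = \frac1N\sum_{i=t-N+1}^{t}\fhat\ind{i}$, and since $\cZ$ is convex these averages are well-defined elements of $\cZ^{\cX}$, while $\divSymbol$ being metric-like guarantees $\ell\ind{t-N}(f)\in[0,1]$ for all $f\in\FunSpace$. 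Hence \cref{prop:delayed} applies and yields
\[
  \RDOL(T,N,2) = \sum_{t=1}^{T}\En_{\fbar\ind{t}\sim\mu\ind{t}}\brk*{\ell\ind{t}(\fbar\ind{t})} - 2\min_{f\in\FunSpace}\sum_{t=1}^{T}\ell\ind{t}(f) \leq 2N\log|\FunSpace|.
\]

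Next I would apply \cref{thm:delayed_reduction} with the choice $\gamma = 2$ (permitted since that result holds for every $\gamma\geq 1$), which gives
\[
  \En\brk[\big]{\EstOnD(T)} \leq \bigoh\prn*{ C_\divSymbol\cdot 2\cdot(N+\offgan T/N) + \RDOL(T,N,2) }.
\]
Substituting the bound on $\RDOL(T,N,2)$ from the previous step and absorbing the absolute constant $\gamma = 2$ into the $\bigoh(\cdot)$ then produces
\[
  \En\brk[\big]{\EstOnD(T)} \leq \bigoh\prn*{ C_\divSymbol(N + \offgan T/N) + N\log|\FunSpace| },
\]
which is exactly the claimed inequality; it holds for every $N\geq 1$ and every offline oracle $\Orc$ with parameter $\offgan$, since both \cref{thm:delayed_reduction} and \cref{prop:delayed} hold at this level of generality.

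There is no genuine obstacle in this argument: the corollary is a routine specialization of the two preceding results, and the only substantive work — the averaging/convexity argument that controls the bias term $N + \offgan T/N$ in \cref{thm:delayed_reduction}, and the design of the finite-class delayed online learner in \cref{prop:delayed} — has already been carried out. If anything, the one point worth a sentence of care is the value-range check above, ensuring that the losses passed to $\AlgDOL$ lie in $[0,1]$ so that \cref{prop:delayed} can be quoted verbatim.
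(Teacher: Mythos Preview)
Your proposal is correct and matches the paper's approach exactly: the paper treats the corollary as immediate from combining \cref{thm:delayed_reduction} (at $\gamma=2$) with \cref{prop:delayed}, and your write-up is simply a careful spelling-out of that combination. The range check ensuring $\ell\ind{t-N}(f)\in[0,1]$ is a nice extra detail the paper leaves implicit.
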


By choosing $N = \sqrt{ \frac{C_\divSymbol\offgan \cdot T}{ C_\divSymbol+\log |\FunSpace|}  } \vee 1$,  \cref{cor:reduction-to-OL-upper-bound} gives an upper bound of $\bigoh\prn*{\sqrt{C_\divSymbol\offgan{}(C_\divSymbol+\log\abs{\FunSpace})\cdot T} + \log |\FunSpace|}$.
While the rate in \cref{cor:reduction-to-OL-upper-bound}, is worse
than \cref{thm:woe-upper-bound} in terms of dependence on $T$, the
reduction has two advantages:
(1) It does require a-priori knowledge of the offline estimation
parameter $\offgan$: If we choose $N=\sqrt{ \frac{C_\divSymbol \cdot
    T}{C_\divSymbol+\log |\FunSpace|}  } \vee 1$,
\cref{cor:reduction-to-OL-upper-bound} obtains $\bigoh ( (\offgan+1) (
\sqrt{C_\divSymbol(C_\divSymbol+\log\abs{\FunSpace})\cdot T} + C_\divSymbol+\log |\FunSpace| ) )$;
(2) The reduction by \cref{alg:reduction-to-delayed-OL} is more
flexible, and allows for guarantees beyond finite classes, as we now illustrate.

\arxiv{\subsubsection{Application: Characterization of Oracle-Efficient
    Learnability for Classification}}
\colt{\subsection{Characterization of Oracle-Efficient
    Learnability for Classification}}
\label{sec:infinite}

As an application of \cref{alg:reduction-to-delayed-OL} and
\cref{thm:delayed_reduction}, we give a characterization for
oracle-efficient learnability in the \framework framework. To state
the result, we define $\Ldim(\cF)$ as the Littlestone dimension for a
binary function class $\cF$ (e.g., \citet{ben2009agnostic}).

\begin{restatable}[Characterization of oracle-efficient learnability
  for binary classification]{theorem}{BinaryLearnability}
  \label{thm:binary-loss-learnability}
Consider a binary classification instance $(\CovarSpace,\ObsSpace,
\ValSpace,\Kernel,\FunSpace)$ with $\ValSpace =\ObsSpace =\set{0,1}$,
$\divSymbol = \Dbinshort$ and $\Kernel(\val)=\indic_\val$. For any class
$\cF$ and $\offgan\geq{}0$, there exists an \estmem algorithm that
achieves online estimation error \jqedit{$\bigoh( \sqrt{\offgan \Ldim(\cF)\cdot T\log T} + \Ldim(\cF)\log T )$. On the other hand, in the worst-case any algorithm must suffer at least $\Omega(\Ldim(\cF))$ online estimation error. }
\end{restatable}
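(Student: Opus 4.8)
The statement has two halves; the plan is to obtain the upper bound from the delayed-online-learning reduction (\cref{alg:reduction-to-delayed-OL}, \cref{thm:delayed_reduction}) and the lower bound from a shattered-tree argument adapted to the \framework protocol.

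\emph{Upper bound.} Since $\Dbinshort$ is not convex, \cref{thm:delayed_reduction} does not apply verbatim, so the first step is to establish its classification variant. The only change to \cref{alg:reduction-to-delayed-OL} is to replace the coordinatewise average $\ftil\ind{t}=\tfrac1N\sum_{i=t+1}^{t+N}\fhat\ind{i}$ by the coordinatewise \emph{majority vote} of $\fhat\ind{t+1},\dots,\fhat\ind{t+N}$ (with the padding $\fhat\ind{T+s}:=\fhat\ind{T}$). The key pointwise inequality is $\Dbin{\ftil\ind{t}(\covar)}{\fstar(\covar)}\le\tfrac2N\sum_{i=t+1}^{t+N}\Dbin{\fhat\ind{i}(\covar)}{\fstar(\covar)}$: if the majority disagrees with $\fstar$ at $\covar$ then at least $N/2$ of the $\fhat\ind{i}$ are wrong there. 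Summing over $t$, re-indexing exactly as in the display preceding \cref{thm:delayed_reduction}, and using $\sum_i\EstOffBin(i)\le T\offgan$ gives $\sum_{t=1}^{T}\Dbin{\ftil\ind{t}(\covar\ind{t})}{\fstar(\covar\ind{t})}\le\bigoh(N+\offgan T/N)$; the rest of the proof of \cref{thm:delayed_reduction} goes through unchanged (only the triangle inequality for $\Dbinshort$ and the regret of $\AlgDOL$ are used), yielding $\En\brk*{\EstOnD(T)}\le\bigoh\prn*{\gamma(N+\offgan T/N)+\RDOL(T,N,\gamma)}$ for all $\gamma\ge1$ and any delayed learner $\AlgDOL$ for $\cF$ run on the $0/1$ losses $\ell\ind{t}(f)=\Dbin{\ftil\ind{t}(\covar\ind{t})}{f(\covar\ind{t})}$.

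It remains to supply the infinite-class analogue of \cref{prop:delayed}. I would reduce delay-$N$ online learning to $N$ independent instances of delay-free online binary classification (one per residue class modulo $N$), and in each instance run an agnostic online classifier for $\cF$ -- e.g.\ Hedge over the sequential cover of $\cF$, whose cardinality on a horizon-$m$ subproblem is $\le m^{\bigoh(\Ldim(\cF))}$ by the sequential Sauer--Shelah lemma -- tuned to the $\gamma=2$ (small-loss) regime, giving $\sum_t\ell\ind{t}(\fbar\ind{t})\le 2\min_{f\in\cF}\sum_t\ell\ind{t}(f)+\bigoh(\Ldim(\cF)\log T)$ per block. Summing over the $N$ blocks yields $\RDOL(T,N,2)\le\bigoh(N\,\Ldim(\cF)\log T)$, so the display above with $\gamma=2$ becomes $\En\brk*{\EstOnD(T)}\le\bigoh\prn*{\offgan T/N+N\,\Ldim(\cF)\log T}$, and choosing $N=\max\{1,\lceil\sqrt{\offgan T/(\Ldim(\cF)\log T)}\,\rceil\}$ gives the claimed $\bigoh\prn*{\sqrt{\offgan\,\Ldim(\cF)\,T\log T}+\Ldim(\cF)\log T}$. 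The resulting algorithm accesses the stream only through $\Orc$, hence is oracle-efficient (though not necessarily computationally efficient).

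\emph{Lower bound.} Fix any $\cF$ with $\Ldim(\cF)=d$ and a complete binary tree of depth $d$ that it shatters, with node covariates $\{\covartil_w\}$ indexed by binary strings $w$, $|w|<d$. Given any (possibly randomized) learner, run the classical online-classification adversary adaptively inside the \framework protocol: at step $t\le d$, with a path prefix $b_{1:t-1}$ already realized, let $\Orc\ind{t}$ return any member of $\cF$ consistent with the labels so far (one exists because the instance is noiseless and $\fstar\in\cF$, and this satisfies \cref{def:offline-oracle} with $\offgan=0$) -- crucially, by shattering there are members of $\cF$ consistent with $b_{1:t-1}$ that disagree at $\covartil_{b_{1:t-1}}$, so the oracle response cannot determine $\fstar(\covartil_{b_{1:t-1}})$. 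After the learner commits to $\mu\ind{t}$, set $\covar\ind{t}:=\covartil_{b_{1:t-1}}$ and $b_t:=\argmax_{b\in\{0,1\}}\Pr_{\fbar\sim\mu\ind{t}}\brk*{\fbar(\covar\ind{t})\ne b}$, so that $\En_{\fbar\sim\mu\ind{t}}\brk*{\Dbin{\fbar(\covar\ind{t})}{b_t}}\ge\tfrac12$. After $d$ steps the bits $b_{1:d}$ single out a leaf; take $\fstar\in\cF$ realizing it (remaining rounds and oracle responses are arbitrary consistent choices). This forces $\En\brk*{\EstOnD(T)}\ge d/2=\Omega(\Ldim(\cF))$ (and $\Omega(\min\{\Ldim(\cF),T\})$ if $T<d$); to match the literal ``there exists a sequence of covariates'' phrasing against randomized learners one can instead invoke Yao's principle with a uniformly random leaf. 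An $\offgan$-dependent lower bound also follows from \cref{thm:woe-lower-bound}, whose hard instance has Littlestone dimension $\Theta(N)$.

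\emph{Main obstacle.} The lower bound and the averaging bookkeeping are routine. The technical heart is (i) the non-convex (majority-vote) version of \cref{thm:delayed_reduction}, and (ii) -- more substantively -- producing a delayed-feedback online classifier for \emph{infinite} Littlestone classes with the correct first-order regret $\bigoh(\Ldim(\cF)\log T)$ per residue block, which requires marrying the sequential Sauer--Shelah / SOA machinery to a small-loss (exponential-weights) analysis. It is essential that the reduction allows a multiplicative constant $\gamma=2$ on the comparator, since agnostic regret with only an additive $\bigoh(\Ldim(\cF)\log T)$ term is impossible and would otherwise degrade the $T$-dependence.
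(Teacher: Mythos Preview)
Your proposal is correct and follows essentially the same route as the paper: the upper bound uses the majority-vote variant of \cref{alg:reduction-to-delayed-OL} with the same pointwise inequality $\Dbin{\ftil\ind{t}(\covar)}{\fstar(\covar)}\le\tfrac2N\sum_i\Dbin{\fhat\ind{i}(\covar)}{\fstar(\covar)}$, reduces delay-$N$ learning to $N$ non-delayed instances, and invokes a $\gamma=2$ small-loss online classifier for Littlestone classes (the paper cites the SOA-expert construction of \citet[Theorem~21.10]{shalev2014understanding} combined with the tuning of \citet[Corollary~2.3]{cesa2006prediction}, which is exactly your Hedge-over-sequential-cover idea). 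For the lower bound the paper simply observes that the classical $\Omega(\Ldim(\cF))$ bound for unrestricted online learners (\citet[Lemma~21.6]{shalev2014understanding}) applies a fortiori to oracle-efficient learners; your explicit shattered-tree adversary inside the \framework protocol is a valid (and more self-contained) way to reach the same conclusion.
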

The main idea behind this result is to show that we can create a
delayed online learner for the reduction in
\cref{alg:reduction-to-delayed-OL} that achieves low regret for Littlestone
classes.\footnote{Formally, to handle the fact that $\cZ=\crl{0,1}$ is
  not convex, this result requires a slight modification to Line 4 in
  \cref{alg:reduction-to-delayed-OL} that replaces the average with a
  majority vote. See the proof for details.}

}

\section{Computational Complexity of Oracle-Efficient Online Estimation}
\label{sec:comp-results}

In this section, we turn our attention to the computational complexity of oracle-efficient online estimation in the \framework framework. In \cref{subsec:comp-hardness}, we show (\cref{thm:computation-lower-bound}) that in general, it is not possible to transform black-box offline estimation algorithms into online estimation algorithms in a computationally efficient fashion. Then, in \cref{subsec:reductions}, we provide a more fine-grained perspective, showing (\cref{thm:reduction-to-ODE}) that for conditional density estimation, online estimation in the \framework framework is no harder computationally than online estimation with unrestricted algorithms.

\subsection{Computational Hardness of Oracle-Efficient Estimation}
\label{subsec:comp-hardness}

Our main upper bounds (\cref{subsec:opt-err-bound}) show that online estimation error $\EstOnD(T)\leq{}O((\offgan+1)\log |\FunSpace|)$ can be achieved in an oracle-efficient fashion for any finite class $\cF$, but the algorithm (\cref{alg:weighted-maj-vote}) is not computationally efficient. We now show that this is fundamental: There exist classes $\cF$ for which offline estimation can be performed in polynomial time, yet no oracle-efficient algorithm running in polynomial-time algorithm can achieve sublinear online estimation error.

\paragraph{Computational model}
To present our results, we must formalize a computational model for oracle-efficient online estimation, and in particular, define a notion of \emph{input length} for oracle-efficient online algorithms. To do so, we restrict our attention to noiseless binary classification, and consider a sequence of classification instances indexed by $n\in\bbN$, with $\cZ=\cY=\crl{0,1}$, $\cX_n\ldef{}\crl{0,1}^{n}$, $\Kernel(z)=\indic_z$, and indicator loss $\Dbin{\cdot}{\cdot}$. We consider a sequence of classes $\cF_n$ that have polynomial description length\arxiv{ in the sense that}\colt{, i.e.} $\log\abs{\cF_n}$ \jqedit{is polynomial in $n$}, so that $f\in\cF_n$ can be described in $\poly(n)$ bits. In particular, we assume that $f\in \cF_n$ is represented as a Boolean circuit of size $\poly(n)$ so that $f(x)$ can be computed in $\poly(n)$ time for $x\in\cX_n$; we refer to such sequences as \emph{polynomially computable}.\loose

To allow for offline estimators that are improper, we assume that for all $t$ and all sequences $(x\ind{1},y\ind{1}),\ldots,(x\ind{T},y\ind{T})$, the output $\fhat\ind{t}:\crl{0,1}^{n}\to\crl{0,1}$ returned by $\Orc\ind{t}(x\ind{1},\ldots,x\ind{t-1},y\ind{1},\ldots,y\ind{t-1})$ is a Boolean circuit of size $\poly(n)$; we refer to such oracles as having $\poly(n)$-\emph{output description length}.\footnote{See, e.g., \citet{arora2009computational}, Definition 6.1} Likewise, to allow the online estimation algorithm itself to be improper and randomized, we restrict to algorithms for which computing $\fbar\ind{t}(x)$ for $\fbar\ind{t}\sim\mu\ind{t}$ can be implemented as $\fbar\ind{t}(x,r)$ for a random bit string $r\sim\unif(\crl{0,1}^{B})$, where $B=\poly(n)$; we refer to the online estimator as having $\poly(n)$-output description length if $\fbar\ind{t}(\cdot,\cdot)$ is itself a Boolean circuit of size $\poly(n)$. We refer to the online estimation algorithm \emph{polynomial time} if it runs in time $\poly(n)$ for any sequence of inputs $t$,
 $x\ind{1},\ldots,x\ind{t-1}$, and $\fhat\ind{1},\ldots,\fhat\ind{t-1}$, and has $\poly(n)$-output description length.\footnote{The precise computational model for runtime under consideration (e.g., Turing machines or Boolean circuits) does not change the nature of our results.}

\fakepar{Main lower bound} Our main computational lower bound is as follows.%

\newcommand{\refcomplbvar}{\cref{thm:computation-lower-bound}\xspace}

\begin{restatable}[Computational lower bound for \framework]{theorem}{complbimproper}
  \label{thm:computation-lower-bound}
  Assume the existence of one-way functions.\footnote{Existence of one-way functions is a standard and widely believed complexity-theoretic assumptions, which forms the basis of modern cryptography \citep{goldreich2005foundations}.} 
  There exists a sequence of polynomially computable classes $(\cF_1,\cF_2,\dots,\cF_n,\dots)$, along with a sequence of $\poly(n)$-output description length offline oracles with $\offgan=0$ associated with each $\cF_n$, such that for any fixed polynomials $p,q:\bbN\to\bbN$ and all $n\in \bbN$ sufficiently large, any oracle-efficient online estimation algorithm with runtime bounded by $p(n)$ must have \arxiv{\[\En[\EstOnD(T)]\geq{}T/4\]}\colt{$\En[\EstOnD(T)]\geq{}T/4$} for all $1\leq{}T\leq q(n)$.
At the same time, there exists an inefficient algorithm that achieves $\En[\EstOnD(T)]\leq{}\bigoh(\sqrt{n})$ for all $T\in\bbN$.

\end{restatable}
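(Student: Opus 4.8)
The plan is to build the hard family from a \emph{pseudorandom function} (PRF) family---whose existence follows from the one-way function assumption---equipped with the trivial ``lookup-table'' offline oracle, and to prove the two halves of the statement separately. I would set $\lambda = \lambda(n) \ldef \floor*{\sqrt n}$ and take $\cF_n = \crl*{F_k : \crl{0,1}^{n}\to\crl{0,1}}_{k\in\crl{0,1}^{\lambda}}$ to be a PRF family with domain $\crl{0,1}^{n}$, key length $\lambda$, and evaluation computable in $\poly(n)$ time, so that each $F_k$ is a $\poly(n)$-size Boolean circuit; then $(\cF_n)$ is polynomially computable and $\log\abs{\cF_n} = \lambda = \Theta(\sqrt n)$. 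Since every polynomial in $n$ is also a polynomial in $\lambda$, PRF security gives that no $\poly(n)$-time algorithm with oracle access to a function $g:\crl{0,1}^n\to\crl{0,1}$ can distinguish $g = F_k$ (for uniform $k$) from a uniformly random function with more than negligible advantage. For the oracle I would use the \emph{lookup-table oracle}: $\Orc\ind{t}(x\ind{1},\dots,x\ind{t-1},y\ind{1},\dots,y\ind{t-1})$ returns the circuit $\fhat\ind{t}$ that outputs $y\ind{s}$ on $x\ind{s}$ for each $s<t$ and $0$ elsewhere. Because the kernel is noiseless ($\Kernel(\fstar(x)) = \indic_{\fstar(x)}$), $\fhat\ind{t}$ agrees with $\fstar$ on every past covariate, so its offline error is exactly $0$ and it is a legitimate oracle with $\offgan = 0$ per \cref{def:offline-oracle}; its size is $O(nt)$, polynomial in the length of its input (and $\poly(n)$ over the polynomial horizons $t\le q(n)$ that the theorem concerns). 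The crucial feature is that this oracle is \emph{improper} and merely records the data---it does not decode $\fstar$. (A proper oracle, still allowed to be computationally unbounded, could just output $\fstar$ and trivialize the problem, which is exactly why the theorem must permit improper oracles.)

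The inefficient upper bound is then immediate: running \cref{alg:weighted-maj-vote} (\mainalg) with the lookup-table oracle, \cref{thm:woe-upper-bound} gives, with $C_\divSymbol = 1$ and $\offgan = 0$, the bound $\EstOnD(T)\le \bigoh\prn*{\min\crl*{\log\abs{\cF_n},\ \abs{\cX}\log T}}= \bigoh(\sqrt n)$ for all $T$, since $\log\abs{\cF_n} = \Theta(\sqrt n)$ and $\abs{\cX} = 2^n$. This algorithm maintains an explicit version space of size up to $2^{\sqrt n}$ and is not polynomial-time, as asserted.

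For the lower bound I would argue by contradiction. Suppose a polynomial-time oracle-efficient algorithm $\mathrm{Alg}$ (per-step runtime $\le p(n)$) achieves $\En\brk*{\EstOnD(T)} < T/4$ for some $T = T(n)\le q(n)$ and infinitely many $n$, where the expectation is over a uniform key $k$ (so $\fstar = F_k$), covariates $x\ind{1},\dots,x\ind{T}$ drawn \iid uniformly from $\crl{0,1}^{n}$, and the randomness of $\mathrm{Alg}$. Since $\EstOnD(T) = \sum_{t=1}^{T}\En_{\fbar\sim\mu\ind{t}}\brk*{\Dbin{\fbar(x\ind{t})}{\fstar(x\ind{t})}}$, averaging over a uniformly random round $r\in[T]$ shows $\mathrm{Alg}$ predicts $\fstar(x\ind{r})$ correctly with probability $>3/4$. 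This yields a PRF distinguisher $D^{g}$: sample $r\in[T]$ and $x\ind{1},\dots,x\ind{r}$ \iid uniform, query $g$ to obtain $y\ind{s} = g(x\ind{s})$ for $s<r$, assemble the lookup-table circuits $\fhat\ind{1},\dots,\fhat\ind{r}$ exactly as the oracle would, simulate $\mathrm{Alg}$ through rounds $1,\dots,r$ on these inputs to obtain $\mu\ind{r}$, draw $\fbar\ind{r}\sim\mu\ind{r}$, and output $1$ iff $\fbar\ind{r}(x\ind{r}) = g(x\ind{r})$. This runs in $\poly(n)$ time (treating $T(n)$, which has $O(\log n)$ bits, as non-uniform advice, or assuming non-uniform one-way functions). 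If $g = F_k$ with $k$ uniform, $D$ faithfully simulates the real interaction with the lookup-table oracle, so $\Pr[D^{g}=1] = 1 - \tfrac{1}{T}\En\brk*{\EstOnD(T)} > 3/4$. If $g$ is a uniformly random function, then since $r\le q(n)\ll 2^{n}$ the point $x\ind{r}$ is distinct from $x\ind{1},\dots,x\ind{r-1}$ except with probability $\le q(n)/2^{n}$, and on that event $g(x\ind{r})$ is a fresh uniform bit independent of $\fbar\ind{r}$, giving $\Pr[D^{g}=1]\le \tfrac{1}{2} + q(n)/2^{n}$. Hence $D$ has advantage $\ge \tfrac{1}{4} - q(n)/2^{n}$, which is non-negligible and contradicts PRF security. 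Therefore $\En\brk*{\EstOnD(T)}\ge T/4$ for all $T\le q(n)$ and all large $n$; averaging over $(k, x\ind{1:T})$ then furnishes, for each such $n$ and $T$, a fixed $\fstar\in\cF_n$ and a fixed covariate sequence witnessing the bound.

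The main obstacle is conceptual rather than technical: it is recognizing that the lookup-table oracle is the right gadget---simultaneously (i) polynomial-time and a legitimate $\offgan = 0$ oracle, (ii) rich enough, via \cref{thm:woe-upper-bound}, for an inefficient learner to reach $\bigoh(\sqrt n)$ online estimation error, yet (iii) so uninformative that any efficient algorithm built on top of it is effectively forced to online-learn $\cF_n$ from labeled examples, a task that is intractable when $\cF_n$ is a PRF family. Once this is in place the reduction itself is standard; the remaining effort is bookkeeping in the computational model---oracle output sizes that grow with the horizon but stay polynomial, the $O(\log n)$ bits of advice pinning down the round at which $\mathrm{Alg}$ fails, and taking the domain $\crl{0,1}^{n}$ large enough that a freshly sampled covariate is, with overwhelming probability, unseen.
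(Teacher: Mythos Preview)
Your proof is correct and shares the same skeleton as the paper's: a cryptographic function class with $\log\abs{\cF_n}=\Theta(\sqrt{n})$, the lookup-table offline oracle (which has $\offgan=0$ and reveals nothing beyond the labeled history), and the halving/version-space algorithm for the inefficient $O(\sqrt{n})$ upper bound. The difference is one of packaging. The paper treats \citet{blum1994separating} as a black box: it imports Blum's concept class $\{f_s\}_{s\in\{0,1\}^{\sqrt n}}$ and his hard deterministic covariate sequence, then observes that the lookup-table oracle gives the OEOE learner exactly the information available in Blum's online model, so Blum's $T/4$ lower bound transfers verbatim. You instead rebuild the hardness inline from a PRF family with key length $\lambda=\lfloor\sqrt n\rfloor$ and i.i.d.\ uniform covariates, and write out the distinguisher explicitly. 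Your route is more self-contained and arguably cleaner for a reader who does not want to chase the Blum reference; the paper's route is shorter and avoids the bookkeeping you flag (the $O(\log n)$ bits of advice for $T$, the collision bound $q(n)/2^n$, and the security-parameter accounting between $n$ and $\lambda$). Both land on the same conclusion for the same underlying reason.
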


Informally, \cref{thm:computation-lower-bound} shows that there exist a class $\cF$ and offline estimation oracles $\Orc$ for which no oracle-efficient online estimation algorithm that runs in time
\[
\poly(\desclength(\cX), \desclength(\cF), \mathrm{max}_t\,\desclength(\fhat\ind{t}), T)
\]
can achieve sublinear estimation error, where $\desclength(\cX)$ and $\desclength(\cF)$ denote the number of bits required to describe $x\in\cX$ and $f\in\cF$, and $\desclength(\fhat\ind{t})$ denotes the size of the circuit required to compute $\fhat\ind{t}(x)$. Yet, low online estimation error \emph{can} be achieved by an inefficient algorithm, and there exist efficient offline estimators with $\offgan=0$ as well.  The result is essentially a corollary of \citet{blum1994separating}; we refer to \cref{app:comp-hardness} for the detailed proof\colt{.}\arxiv{, which shows how to formally map the construction of \citet{blum1994separating} onto the \framework framework.} We mention in passing that \citet{hazan2016computational} also give lower bounds against reducing online learning to offline oracles, but in a somewhat different computational model; see \cref{sec:related} for detailed discussion.

\cref{thm:computation-lower-bound} is slightly disappointing, since one of the main motivations for studying oracle-efficiency is to leverage offline estimators as a computational primitive. Combined with our results in \cref{sec:stats-results}, \cref{thm:computation-lower-bound} shows that even though it is possible to be oracle-efficient \arxiv{in the \framework framework} information-theoretically, it is not possible to achieve this computationally. Nonetheless, we are optimistic that our abstraction can (i) aid in designing computationally efficient algorithms for learning settings beyond online estimation, and (ii) continue to serve as a tool to formalize lower bounds against natural classes of algorithms, as we have done here; see \cref{sec:discussion} for further discussion.%

\begin{remark}
\dfedit{\refcomplbvar relies on the fact that the offline estimator may be improper (i.e., $\fhat\ind{t}\notin\cF$).} An interesting open problem is whether one can attain $\poly (\log |\FunSpace_n|)\cdot o(T)$ online estimation error with runtime $\poly (\log |\FunSpace_n|)$ given a \emph{proper} offline estimation oracle with parameter $\offgan=0$.
\end{remark}

\subsection{Conditional Density Estimation: Computationally Efficient Algorithms}
\label{subsec:reductions}

In spite of the negative result in the prequel, which shows that efficient computation in the \framework framework is not possible in general, we can provide a more fine-grained perspective on the computational complexity of oracle-efficient estimation for the problem \emph{conditional density estimation}, a general task which subsumes classification and regression, and has immediate applications to reinforcement learning and interactive decision making \citep{foster2021statistical,foster2023tight}.

\fakepar{Conditional density estimation}
Recall that conditional density estimation \citep{yang1999information,bilodeau2023minimax} is the special case of the online estimation framework in \cref{sec:intro} in which $\cX$ and $\cY$ are arbitrary, $\cZ=\Delta(\ObsSpace)$, and the kernel is\arxiv{ given by} $\Kernel(z) = z$; that is sampling $y\sim{}\Kernel(\fstar(x))$ is equivalent to sampling $y\sim{}\fstar(x)$. For the loss, we use squared Hellinger distance: $\DhelsX{\big}{f(x)}{\fstar(x)}= \frac{1}{2} \int\prn[\big]{\sqrt{f(y\mid{}x)}-\sqrt{\fstar(y\mid{}x)}}^{2}$; with online estimation error given by $\EstOnHels(T) = \sum_{t=1}^{T}\DhelsX{\big}{\fhat\ind{t}(\covar\ind{t})}{\fstar(\covar\ind{t})}$.

\noindent\textbf{Base algorithm.}
Our result is based on a reduction. We assume access to a base algorithm $\AlgCDE$ for online estimation in the Conditional Density Estimation (CDE) framework, which is \emph{unrestricted} in the sense that it is not necessarily oracle-efficient. That is, $\AlgCDE$ can directly use the full data stream $(x\ind{1},y\ind{1}),\ldots,(x\ind{t-1},y\ind{t-1})$ at step $t$. For parameters $\RCDE(T)$ and $C_{\FunSpace}\geq{}1$, we assume that for any \arxiv{target parameter }$\fstar\in\cF$, the base algorithm $\AlgCDE$ ensures that \dfedit{for all $\delta\in(0,e^{-1})$}, with probability at least $1-\delta$,\loose
\begin{align}
  \label{eq:online_base}
  \EstOnHels(T)
  \leq{}\RCDE(T) + C_{\FunSpace}\cdot \log (1/\delta).
\end{align}
 We define the total runtime for $\AlgCDE$ across all rounds as $\Time(\cF,T)$.

\paragraph{Main result}
Our main result shows that any algorithm $\AlgCDE$ satisfying the guarantee above can be transformed into an oracle-efficient algorithm with only polynomial blowup in runtime. For technical reasons, we assume that $V\ldef e~\vee~\sup_{\fun,\fun'\in\cF,\covar\in\cX,\obs\in\cY} \frac{\fun(\obs|\covar)}{\fun'(\obs|\covar)}$ is bounded; our sample complexity bounds scale only logarithmically with respect to this parameter. \dfedit{In addition, we assume that all $f\in\cF$ and $x\in\cX$ have $\bigoh(1)$ description length, and that one can sample $y\sim{}f(x)$ in $\bigoh(1)$ time.\loose}

\begin{restatable}{theorem}{RedToODE}
  \label{thm:reduction-to-ODE}
  Let $\AlgCDE$ be an arbitrary (unrestricted) online estimation algorithm that satisfies \cref{eq:online_base} and has runtime $\Time(\cF,T)$. Then for any $N\in\bbN$, there exists an oracle-efficient online estimation algorithm that achieves estimation error
    \begin{align*}
      \En\brk[\big]{\EstOnHels(T)}  \leq \bigoht \prn*{C_{\FunSpace}\log V\cdot \offgan T/N +  N \cdot\prn*{ \RCDE(T)+ C_{\FunSpace}\log V}}
    \end{align*}
  with runtime $\poly(\Time(\cF,T),\log|\FunSpace|, \log |\CovarSpace|,T)$, where $\offgan\geq{}0$ is the offline estimation parameter. The \arxiv{randomization }distributions $\mu\ind{1},\ldots,\mu\ind{T}$ produced by the algorithm have support size $\poly(\log|\FunSpace|,\log|\CovarSpace|, T)$. \arxiv{\\}
  As a special case, if the online estimation guarantee for the base algorithm holds with $\RCDE(T) \leq  C_{\FunSpace}' \log T$ for some problem-dependent constant $C_{\FunSpace}'\geq{}1$, then by choosing $N$ appropriately, we
  \colt{achieve $\En\brk[\big]{\EstOnHels(T)}\leq\bigoht\prn*{(C_{\FunSpace}(C_{\FunSpace} + C_{\FunSpace}') \offgan)^{1/2} \log V \cdot  T^{1/2} + (C_{\FunSpace} + C_{\FunSpace}') \log V  }$.}\arxiv{achieve
  \begin{align*}   
    \En\brk[\big]{\EstOnHels(T)} \leq \bigoht\prn*{(C_{\FunSpace}(C_{\FunSpace} + C_{\FunSpace}') \offgan)^{1/2} \log V \cdot  T^{1/2} + (C_{\FunSpace} + C_{\FunSpace}') \log V  }.
  \end{align*}}
\end{restatable}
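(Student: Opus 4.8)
The plan is to obtain \cref{thm:reduction-to-ODE} by instantiating the delayed-online-learning reduction of \cref{thm:delayed_reduction}, which applies here since squared Hellinger distance is convex in its first argument and metric-like with $C_\divSymbol = 2$. Concretely, I would run \cref{alg:reduction-to-delayed-OL} with delay parameter $N$ and an efficient delayed online learner $\AlgDOL$ for $\cF$ built out of the base algorithm $\AlgCDE$ as follows: partition the rounds into $N$ residue classes modulo $N$ and maintain one independent instance of $\AlgCDE$ per class; whenever $\AlgDOL$ receives the delayed loss $\ell\ind{t-N}(f) = \Dhels{\ftil\ind{t-N}(\covar\ind{t-N})}{f(\covar\ind{t-N})}$ from \cref{alg:reduction-to-delayed-OL} --- which exposes the density $\ftil\ind{t-N}(\covar\ind{t-N})$, an explicit uniform mixture of $N$ of the oracle's estimators, together with the covariate $\covar\ind{t-N}$ --- draw a pseudo-outcome $\yhat\ind{t-N} \sim \ftil\ind{t-N}(\covar\ind{t-N})$ and feed the pseudo-example $(\covar\ind{t-N}, \yhat\ind{t-N})$ to the $\AlgCDE$ instance assigned to the class of round $t-N$; the prediction distribution $\mu\ind{t}$ is that of the instance handling round $t$. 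Since $\AlgDOL$ can only help itself by discarding the $-\gamma\min_{f}$ term, it suffices to invoke \cref{thm:delayed_reduction} with $\gamma = 1$, so the theorem reduces to showing that this $\AlgDOL$ attains $\sum_{t=1}^{T} \En_{\fbar\ind{t}\sim\mu\ind{t}}\brk*{\Dhels{\ftil\ind{t}(\covar\ind{t})}{\fbar\ind{t}(\covar\ind{t})}} \leq \bigoht\prn*{N\prn*{\RCDE(T) + C_{\FunSpace}\log V} + C_{\FunSpace}\log V \cdot \offgan T/N}$.

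Two observations set up the estimate. First, by convexity of squared Hellinger distance together with the offline-oracle guarantee and the averaging computation in the text preceding \cref{thm:delayed_reduction}, $\sum_{t=1}^{T}\Dhels{\ftil\ind{t}(\covar\ind{t})}{\fstar(\covar\ind{t})} \leq N + \offgan T/N$; and since every $f\in\cF$ (and, after a clipping preprocessing step described below, every estimator $\fhat\ind{i}$ used to form $\ftil\ind{t}$) has density ratio to $\cF$ bounded by $\bigoh(V)$, the standard inequality $\Dkl{p}{q}\lesssim (1 + \log V)\Dhels{p}{q}$ (valid when $p/q \leq \bigoh(V)$) upgrades this to $\sum_{t=1}^{T}\Dkl{\ftil\ind{t}(\covar\ind{t})}{\fstar(\covar\ind{t})} \lesssim \log V \cdot (N + \offgan T/N)$; both bounds hold almost surely, since \cref{def:offline-oracle} is almost sure. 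Second, by the metric-like triangle inequality, $\Dhels{\ftil\ind{t}(\covar\ind{t})}{\fbar\ind{t}(\covar\ind{t})} \leq 2\Dhels{\ftil\ind{t}(\covar\ind{t})}{\fstar(\covar\ind{t})} + 2\Dhels{\fstar(\covar\ind{t})}{\fbar\ind{t}(\covar\ind{t})}$, so it suffices to bound $\sum_t \En\brk*{\Dhels{\fstar(\covar\ind{t})}{\fbar\ind{t}(\covar\ind{t})}}$: we must show each $\AlgCDE$ instance tracks the \emph{counterfactual} target $\fstar$, even though the pseudo-data it is fed is generated by the mixtures $\ftil\ind{t}(\covar\ind{t})$, which lie outside $\cF$.

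The core step is a change of measure transferring the realizable guarantee \eqref{eq:online_base} to the mildly non-realizable pseudo-data. Fix an instance $c$, let $X_c = \sum_{s\in c}\Dhels{\fbar\ind{s}(\covar\ind{s})}{\fstar(\covar\ind{s})}$, and let $\bbQ_c$ and $\bbP_c$ be the laws of instance $c$'s interaction when its pseudo-outcomes are drawn from $\ftil\ind{s}(\covar\ind{s})$, respectively from $\fstar(\covar\ind{s})$, with all other randomness held fixed by conditioning. Under $\bbP_c$ the data is realizable with target $\fstar\in\cF$, so applying \eqref{eq:online_base} to instance $c$ (which sees $\leq\lceil T/N\rceil$ rounds) for every $\delta$ shows that $X_c - \RCDE(T)$ has a sub-exponential upper tail of scale $\bigoh(C_{\FunSpace})$ under $\bbP_c$, hence $\En_{\bbP_c}\brk*{\exp\prn*{\lambda_0 (X_c - \RCDE(T))}} = \bigoh(1)$ for $\lambda_0 \asymp 1/C_{\FunSpace}$; the Donsker--Varadhan variational inequality then gives $\En_{\bbQ_c}\brk*{X_c} \leq \RCDE(T) + \bigoh\prn*{C_{\FunSpace}\prn*{\Dkl{\bbQ_c}{\bbP_c} + 1}}$, where by the chain rule the divergence charged to instance $c$ is $\sum_{s\in c}\Dkl{\ftil\ind{s}(\covar\ind{s})}{\fstar(\covar\ind{s})}$. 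Summing over the $N$ instances and using the almost-sure KL-bias bound yields $\sum_{t}\En\brk*{\Dhels{\fstar(\covar\ind{t})}{\fbar\ind{t}(\covar\ind{t})}} \leq \bigoht\prn*{N\RCDE(T) + C_{\FunSpace}N + C_{\FunSpace}\log V \cdot (N + \offgan T/N)}$, and combining with the triangle step, the bias bound, and \cref{thm:delayed_reduction} (with $\gamma = 1$) gives $\En\brk*{\EstOnHels(T)} \leq \bigoht\prn*{N(\RCDE(T) + C_{\FunSpace}\log V) + C_{\FunSpace}\log V \cdot \offgan T/N}$ (absorbing $N$ into the first term since $V \geq e$); the special case follows by optimizing $N \asymp (C_{\FunSpace}\offgan T/(C_{\FunSpace} + C_{\FunSpace}'))^{1/2}$. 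For the computational claims, the algorithm runs $N \leq T$ instances of $\AlgCDE$ for a total of $\bigoh(T)\cdot\Time(\cF,T)$, plus $\poly(T, \log\abs{\cF}, \log\abs{\cX})$ overhead for forming and sampling the mixtures, and each $\mu\ind{t}$ is supported on the $\poly(\log\abs{\cF}, \log\abs{\cX}, T)$ outputs determined by $\AlgCDE$'s poly-size random seed and the sampled pseudo-outcomes.

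The step I expect to be the crux is the change of measure. A black-box algorithm obeying only the \emph{realizable} guarantee \eqref{eq:online_base} may behave arbitrarily on non-realizable data, and the two naive routes both fail: a pointwise likelihood-ratio bound over all rounds blows up like $V^{\Omega(T)}$, while a total-variation change of measure pays the trivial bound $\lceil T/N\rceil$ on $X_c$ times $\sqrt{\Dkl{\bbQ_c}{\bbP_c}}$, which sums over instances to a quantity linear in $T$. What rescues the plan is that \eqref{eq:online_base} is assumed \emph{for all} $\delta$, so $X_c - \RCDE(T)$ is genuinely sub-exponential and Donsker--Varadhan pays only $\bigoh(C_{\FunSpace})$ times the KL divergence; the delay-$N$/residue-class decomposition is then exactly what keeps the divergence charged to each instance at the sublinear level $\sum_{s\in c}\Dkl{\ftil\ind{s}(\covar\ind{s})}{\fstar(\covar\ind{s})}$, and applying the change of measure per instance (conditioning on the remaining randomness) is what prevents the total charged divergence from inflating by a factor of $N$. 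A secondary technical point is the clipping preprocessing: since \cref{def:offline-oracle} allows improper $\fhat\ind{i}$, we first replace each $\fhat\ind{i}$ by a density whose ratio to $\cF$ is $\bigoh(V)$ and whose offline error remains $\bigoh(\offgan)$, so that the inequality $\Dkl{p}{q} \lesssim \log V\cdot\Dhels{p}{q}$ --- and hence the KL-bias bound and the change of measure --- is not vacuous.
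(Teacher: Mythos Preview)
Your approach shares the paper's core ingredients: the averaging construction of the reference densities $\ftil\ind{t}$, the split into $N$ residue-class instances of $\AlgCDE$, and the Donsker--Varadhan change of measure exploiting the sub-exponential tail implied by \cref{eq:online_base} holding for all $\delta$. The one structural difference is that the paper inserts an additional layer between the residue-class split and the change of measure: at each round it runs $L = \bigoh(T\log(T|\cF||\cX|)/\veps)$ independent copies of the base algorithm, each with freshly re-sampled pseudo-outcomes, and predicts with the uniform mixture of their outputs; this reduces the adaptive adversary to an oblivious one, and only then is Donsker--Varadhan applied against a fixed covariate/reference sequence. You instead apply Donsker--Varadhan directly in the adaptive setting, and this is correct for the statistical bound: the KL chain rule still gives $\Dkl{\bbQ}{\bbP_c} = \En_{\bbQ}\brk[\big]{\sum_{s\in c}\Dkl{\ftil\ind{s}(\covar\ind{s})}{\fstar(\covar\ind{s})}}$ since only instance $c$'s pseudo-outcome kernel changes, and under $\bbP_c$ instance $c$ faces a realizable sequence with an adaptive covariate adversary, which is within the scope of \cref{eq:online_base}. (A minor redundancy: once your DV step bounds $\sum_c \En_{\bbQ}[X_c] = \En\brk[\big]{\EstOnHels(T)}$ you are done; the detour through \cref{thm:delayed_reduction} and the triangle inequality is not needed.)

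However, the paper's $L$-copy step is not just proof scaffolding: it is what delivers the explicit $\poly(\log|\cF|,\log|\cX|,T)$ support-size guarantee for $\mu\ind{t}$ in the theorem statement, since $\mu\ind{t}$ is then literally the uniform distribution over $L$ points. Your justification for this claim---that $\AlgCDE$'s poly-size random seed bounds the support---does not hold: a $\poly$-bit seed gives support up to $2^{\poly}$, not $\poly$. So while your statistical argument is sound and arguably cleaner than the paper's, you do not recover the support-size clause of the theorem without something like the paper's resampling layer.
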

Note that the estimation error bound in \cref{thm:reduction-to-ODE} is sublinear whenever the rate $\RCDE(T)$ is. This implies that for squared Hellinger distance, online estimation in the \framework framework is no harder computationally than online estimation with arbitrary, unrestricted algorithms.

\arxiv{
Let us make a few technical remarks. First, A limitation in \cref{thm:reduction-to-ODE} is that we require the base algorithm to satisfy a high-probability online estimation error bound, with the dependence on the failure probability $\delta$ in the estimation error bound scaling as $\log(\delta^{-1})$. It would be interesting to understand if this can be relaxed. Second, we expect that the dependence on the problem parameters $T$, $C_{\cF}$, $C_{\cF}'$, and so on in the estimation error bound can be improved.}

\colt{
  The proof of \cref{thm:reduction-to-ODE} is algorithmic, and is based on several layers of reductions. The main reason why the result is specialized to conditional density estimation is as follows: If we have an estimator $\fhat$ for which $\DhelsX{\big}{\fhat(x)}{\fstar(x)}$ is small for some $x$, we can simulate $y\sim{}\fstar(x)$ up to low statistical error by sampling $y\sim{}\fhat(x)$ instead, as $\fhat$ and $\fstar$ are close in distribution. This allows us to implement a scheme based on simulating outcomes and feeding them to the base algorithm.
  }

\arxiv{
\paragraph{Overview of proof for \creftitle{thm:reduction-to-ODE}}

The proof of \cref{thm:reduction-to-ODE} is algorithmic, and is based on several layers of reductions.
\begin{itemize}
\item First, using the scheme in \cref{sec:delayed}, we reduce the problem of oracle-efficient online estimation to delayed online learning with the loss function $\ell\ind{t-N}(f) =   \divX{\big}{\ftil\ind{t-N}(\covar\ind{t-N})}{f(\covar\ind{t-N})}$ defined in \cref{alg:reduction-to-delayed-OL}, where $\ftil\ind{t-N} = \frac{1}{N} \sum_{i=t-N+1}^{t} \fhat\ind{i}$ is an average of offline estimators and $N\in\bbN$ is a delay parameter.
\item Then, using a standard reduction \citep{weinberger2002delayed,mesterharm2007improving,joulani2013online,quanrud2015online}, we reduce the delayed online learning problem above to a sequence of $N$ non-delayed online learning problems, with the same sequence of loss functions; both this and the preceding step are computationally efficient.
\item To complete the reduction, we argue that the base algorithm can be used to solve the online learning problem above in an oracle-efficient fashion. To do this, we simulate interaction with the environment by sampling fictitious outcomes $y\ind{t}\sim{}\ftil\ind{t}(x\ind{t})$ from the averaged offline estimators and passing them into the base algorithm. We the fictitious outcomes approximate the true outcomes well through a change-of-measure argument.
\end{itemize}
Combining the above, we conclude that given any base algorithm that efficiently performs online estimation with outcomes sampled from the true model $\fstar$, we can efficiently construct a computationally efficient and oracle-efficient algorithm.
We defer the details of the proof to \cref{app:hellinger_reduction}.

The main reason why this result is specialized to conditional density estimation is as follows: If we have an estimator $\fhat$ for which $\DhelsX{\big}{\fhat(x)}{\fstar(x)}$ is small for some $x$, we can simulate $y\sim{}\fstar(x)$ up to low statistical error by sampling $y\sim{}\fhat(x)$ instead, because $\fhat$ and $\fstar$ are close in distribution. This allows us to implement a scheme based on simulating outcomes and feeding them to the base algorithm. We expect the argument can be extended to more general settings in which (i) $\div{\cdot}{\cdot}$ guarantees closeness in distribution, and (ii) the kernel $\Kernel(\cdot)$ is known to the learner.
}

\arxiv{
\section{Application to Interactive Decision Making}
\label{sec:application}

In this section, we apply our techniques for oracle-efficient online
estimation to the \emph{\Framework} (\FrameworkShort) framework for
interactive decision making introduced by
\cite{foster2021statistical}. First, in
\cref{subsec:application-to-decision-making}, we use our reductions to
provide offline oracle-efficient algorithms for interactive decision
making. Then, in \cref{subsec:positive-results-w-struct}, we focus on
reinforcement learning and show that
it is possible to bypass the impossibility results for memoryless
oracle-efficient algorithms
(\cref{prop:separation_memoryless_improper}) for instances
$(\CovarSpace,\ObsSpace, \ValSpace,\Kernel,\FunSpace)$ corresponding
to Markov decision processes that satisfy a structural property known
as coverability.

\subsection{Offline Oracle-Efficient Algorithms for Interactive Decision Making}
\label{subsec:application-to-decision-making}
\newcommand{\gmstar}{g\sups{\Mstar}}

In this section, we introduce the setting of \Framework (\FrameworkShort) and the applications of our results to this setting.
\paragraph{\Framework (\FrameworkShort)} The \FrameworkShort framework
\citep{foster2021statistical} captures a large class of interactive
decision making problems (e.g. contextual bandits and reinforcement learning). In this framework, the learner is given access to a \emph{model class} $\cM$ that contains an unknown true model $\Mstar:\ActSpace\to\Delta(\cR\times\OObsSpace)$, where $\ActSpace$ is the \emph{decision space}, $\RSpace\subseteq\bbR$ is the \emph{reward space} and $\OObsSpace$ is the \emph{observation space}. 
Then the interaction between the learner and the environment proceeds in $T$
rounds, where for each round $t=1,\ldots,T$:
\begin{enumerate}
\item The \learner selects a \emph{decision} $\act\ind{t}\in\Act$.
  \item Nature selects a \emph{reward} $r\ind{t}\in\RewardSpace$ and
  \emph{observation} $\oobs\ind{t}\in\OObsSpace$ based on the decision,
  where the pair
  $(r\ind{t}, \oobs\ind{t})$ is drawn independently from the unknown
  distribution $\Mstar(\act\ind{t})$. The reward and
  observation is then observed by the learner. \looseness=-1
\end{enumerate}
Let $\gm(\act)\ldef{}\En^{\sss{M},\act}\brk*{r}$ denote the
mean reward function and $\pim\ldef{}\argmax_{\act\in\Act}\gm(\act)$ denote the decision with
the greatest expected reward for $M$. 
The \learner's performance is evaluated in terms of \emph{regret} to the optimal
decision for $\Mstar$:
\begin{equation}
  \label{eq:regret}
  \RegDM(T)
  \ldef \sum\limits_{t=1}^{T}\En_{\act\ind{t}\sim{}p\ind{t}}\brk*{\gmstar(\pimstar) - \gmstar(\act\ind{t})},
\end{equation}
where $p\ind{t}\in\Delta(\Act)$ is the
learner's distribution over decisions at round $t$.

\paragraph{Background: Reducing \FrameworkShort to online estimation}
Any DMSO class $(\cM,\Pi,\cO)$ induces an instance $(\CovarSpace,\ObsSpace, \ValSpace,\Kernel,\FunSpace)$ of
the estimation framework in \cref{sec:intro} as follows. We associate
$\cF=\cM$, $\cX=\Pi$, $\cY=\cO\times\cR$, $\cZ=\Delta(\cO\times\cR)$, and
$\Kernel(\Mstar(\pi))=\Mstar(\pi)$. That is, we have a conditional
density estimation problem in which the covariates are decisions
$\pi\in\Pi$ and the outcomes are observation-reward pairs drawn from
the underlying model
$\Mstar(\pi)$. In particular for a sequence of decisions
$\pi\ind{1},\ldots,\pi\ind{T}$ and a sequence of estimators
$\Mhat\ind{1},\ldots,\Mhat\ind{T}$, we define the online estimation
error for a loss $\divSymbol$ as
\begin{align}
  \label{eq:online_est_dmso}
  \EstOnD(T)
  = \sum_{t=1}^{T}\divX{\big}{\Mhat\ind{t}(\pi\ind{t})}{\Mstar(\pi\ind{t})}.
\end{align}
We refer to any algorithm $\AlgOn$ that ensures that
$\EstOnD(T)\leq\ongan$ almost surely given access to
$\crl*{(\pi\ind{t},o\ind{t},r\ind{t})}_{t=1}^{T}$ with
$(o\ind{t},r\ind{t})\sim\Mstar(\pi\ind{t})$ as an \emph{online
  estimation oracle} with parameter $\ongan$.

\citet{foster2021statistical,foster2023tight} give an algorithm,
\emph{Estimation-to-Decisions} (\etd), that provides bounds on the regret in
\cref{eq:regret} given access to an online estimation oracle
$\AlgOn$. The algorithm is (online) oracle-efficient and memoryless,
in the sense that the decision $\pi\ind{t}$ at each step $t$ is a
measurable function of the oracle's output $\Mhat\ind{t}$. To restate
their result, we define the \emph{Decision-Estimation Coeficient} for
the class $\cM$ as 
\begin{align}
  \label{def:dec}
  \compgen(\cM,\Mbar) =
  \inf_{p\in\Delta(\Act)}\sup_{M\in\cM}\En_{\act\sim{}p}\biggl[\gm(\pim)-\gm(\pi)
  -\gamma\cdot\Dgen{M(\act)}{\Mbar(\act)}
  \biggr]
\end{align}
for a reference model $\Mbar$ and any losses
$\divSymbol$. We further define $\compgen(\cM) = \sup_{\Mbar\in
  \cM} \compgen(\cM,\Mbar)$.
With this notation, the main regret bound for \etd is as follows.

\begin{proposition}[Theorem 4.3 of \cite{foster2021statistical}]
  \label{thm:upper_general_distance}
For any model class $(\cM,\Pi,\cO)$ and \mlike loss $\divSymbol$, any
$\gamma>0$, and any online estimation oracle $\AlgOn$ with parameter
$\ongan>0$, the \etd algorithm ensures that
\begin{equation}
  \label{eq:upper_general_distance}
\RegDM(T) \lesssim{}
\sup_{\mu\in\Delta(\cM)}\compgen(\cM,\mu)\cdot{}T + \gamma\cdot\ongan.
\end{equation}
\end{proposition}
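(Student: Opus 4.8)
The plan is to run the standard per-round \CompAbbrev argument and then sum. Recall that at round $t$, having received the estimator $\Mhat\ind{t}$ from the online estimation oracle $\AlgOn$, the \etd algorithm plays the minimax distribution
\begin{align*}
  p\ind{t} \in \argmin_{p\in\Delta(\Act)}\sup_{M\in\cM}\En_{\act\sim p}\brk*{\gm(\pim)-\gm(\act)-\gamma\cdot\Dgen{M(\act)}{\Mhat\ind{t}(\act)}},
\end{align*}
whose value is by definition exactly $\compgen(\cM,\Mhat\ind{t})$. First I would specialize the inner supremum to the true model $M=\Mstar\in\cM$ and rearrange, obtaining the per-round inequality
\begin{align*}
  \En_{\act\sim p\ind{t}}\brk*{\gmstar(\pimstar)-\gmstar(\act)} \leq \compgen(\cM,\Mhat\ind{t}) + \gamma\cdot\En_{\act\sim p\ind{t}}\brk*{\Dgen{\Mstar(\act)}{\Mhat\ind{t}(\act)}}.
\end{align*}

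Next I would sum this over $t=1,\dots,T$. The first term is controlled by noting that the estimators output by the online oracle lie in $\conv(\cM)=\Delta(\cM)$ (as is the case for the aggregation-based estimators used as online estimation oracles), so $\compgen(\cM,\Mhat\ind{t})\leq\sup_{\mu\in\Delta(\cM)}\compgen(\cM,\mu)$ and the first term contributes at most $\sup_{\mu\in\Delta(\cM)}\compgen(\cM,\mu)\cdot T$. For the second term I would use symmetry of $\divSymbol$ to write $\Dgen{\Mstar(\act)}{\Mhat\ind{t}(\act)}=\Dgen{\Mhat\ind{t}(\act)}{\Mstar(\act)}$, and then recognize $\sum_t\En_{\act\sim p\ind{t}}\brk*{\Dgen{\Mhat\ind{t}(\act)}{\Mstar(\act)}}$ as an expectation of the online estimation error $\EstOnD(T)=\sum_t\Dgen{\Mhat\ind{t}(\act\ind{t})}{\Mstar(\act\ind{t})}$: conditioning on the history through round $t-1$, which already fixes $p\ind{t}$ and $\Mhat\ind{t}$, before drawing $\act\ind{t}\sim p\ind{t}$ and applying the tower rule equates the two in expectation, and the oracle guarantees $\EstOnD(T)\leq\ongan$ almost surely. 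Combining the two bounds yields $\En\brk*{\RegDM(T)}\lesssim\sup_{\mu\in\Delta(\cM)}\compgen(\cM,\mu)\cdot T+\gamma\ongan$, which is the claimed bound.

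I expect the one step requiring genuine care is this reconciliation: the estimation error that falls out of the regret decomposition is averaged over the learner's own randomization $p\ind{t}$, whereas the oracle guarantee as stated controls the realized loss at the sampled decisions $\act\ind{t}$. The tower-rule argument above handles this at the cost of stating the final bound in expectation (one obtains an almost-sure bound if the oracle is instead assumed to control $\sum_t\En_{\act\sim p\ind{t}}\brk*{\Dgen{\Mhat\ind{t}(\act)}{\Mstar(\act)}}$ directly). A second, purely cosmetic point is that the statement ranges the \CompAbbrev supremum over $\Delta(\cM)$ rather than $\cM$, which is precisely what is needed to absorb the possibility that the oracle's estimator $\Mhat\ind{t}$ lies outside $\cM$.
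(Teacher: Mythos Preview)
The paper does not provide its own proof of this proposition; it is simply quoted as Theorem 4.3 of \citet{foster2021statistical}. Your argument is the standard one from that reference: fix the round, use that $p\ind{t}$ attains the \CompAbbrev value at the reference model $\Mhat\ind{t}$, specialize the supremum to $\Mstar$, sum, bound each $\compgen(\cM,\Mhat\ind{t})$ by $\sup_{\mu\in\Delta(\cM)}\compgen(\cM,\mu)$, and control the remaining estimation term via the tower rule and the oracle's almost-sure guarantee. This is correct, and your two closing remarks (expectation versus almost-sure, and why the supremum is over $\Delta(\cM)$ rather than $\cM$) accurately identify the only subtleties.
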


For the case of squared Hellinger distance where $\divSymbol = \Dhelshort^2$,
the Decision-Estimation Coefficient
\jqedit{$\compgen(\cM)$} was shown to be a
\emph{lower bound} on the minimax optimal regret for any class
$\cM$. Hence, \cref{thm:upper_general_distance} shows that it is
possible to achieve near-optimal regret for any interactive decision
making problems whenever an online estimation oracle is
available. However, it was unclear whether similar results could be
achieved based on offline estimation oracles.

\begin{algorithm}[tp]
    \setstretch{1.3}
     \begin{algorithmic}[1]
       \State \textbf{parameters}:
       \Statex[1] Offline estimation oracle $\Orc$ with parameter $\offgan$.
       \Statex[1] Oracle-efficient online estimation algorithm $\AlgRed$.
       \Statex[1] Exploration parameter $\gamma>0$.
  \For{$t=1, 2, \cdots, T$}
  \State Compute estimate $\Mhat\ind{t} = \Orc\ind{t}\prn[\Big]{ \act\ind{1},\dots,\act\ind{t-1},\oobs\ind{1},...,\oobs\ind{t-1} }$.
  \State Feed $\Mhat\ind{t}$ to \framework algorithm $\AlgRed$ and obtain $\mu\ind{t}$.
  \State Let 
  \begin{equation}
    \label{eq:dec-policy}
    p\ind{t}=\argmin_{p\in\Delta(\Act)}\sup_{M\in\cM} \En_{\pi\sim p,\Mhat\sim \mu\ind{t}}\brk*{\gm(\pim) - \gm(\pi) -\gamma\cdot \div{\Mhat(\pi)}{M(\pi)} }.
  \end{equation}
\State{}Sample decision $\act\ind{t}\sim{}p\ind{t}$ and
$\oobs\ind{t}\sim{}\Mstar(\pi\ind{t})$ and feed $\pi\ind{t}$ to \framework algorithm $\AlgRed$.
\EndFor
\end{algorithmic}
\caption{Estimation to Decisions Meta-Algorithm with Offline Oracles (\etdoff)}
\label{alg:etd-off}
\end{algorithm}

\paragraph{Making $\etd$ offline oracle-efficient}
\cref{alg:etd-off} ($\etdoff$) invokes the $\etd$ algorithm of
\citet{foster2021statistical} with any oracle-efficient online
estimation algorithm $\AlgRed$ (which can be any of the algorithms we provide,
e.g. \cref{thm:woe-upper-bound,thm:reduction-to-ODE}), along with an
offline estimation oracle $\Orc$, to provide offline oracle-efficient
guarantees for interactive decision making. Invoking the algorithm
with Version Space Averaging (via \cref{thm:woe-upper-bound}) leads to
the following corollary.

\begin{corollary}
  \label{cor:application-statistical}
Consider any \FrameworkShort class $(\cM, \ActSpace, \cO)$ and \mlike loss $\divSymbol$. \cref{alg:etd-off}, with 
exploration parameter $\gamma >0$ and 
$\AlgRed$ chosen to be \cref{alg:weighted-maj-vote}, ensures that
\begin{align*}
\En[\RegDM] \leq  \bigoh(\log{}T)\cdot{}\max\crl[\bigg]{\compgen(\cM)\cdot{}T,\;
\gamma\cdot{}(\offgan+1)\log|\cM|},
\end{align*}
for any offline estimation oracle $\Orc$ with parameter $\offgan$.
\end{corollary}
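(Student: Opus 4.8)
The plan is to prove the corollary purely by composition; no new ideas are needed. The key observation is that \cref{alg:etd-off} is exactly the \etd algorithm underlying \cref{thm:upper_general_distance}, run with an online estimation oracle that is \emph{synthesized online} by piping the offline oracle $\Orc$ through the \mainalg procedure (\cref{alg:weighted-maj-vote}). To make this precise, recall the reduction from \cref{subsec:application-to-decision-making}: the \FrameworkShort class $(\cM,\Pi,\cO)$ induces a conditional density estimation instance with $\cF=\cM$, $\cX=\Pi$, $\cY=\cO\times\cR$, $\cZ=\Delta(\cO\times\cR)$, and $\Kernel(\Mstar(\pi))=\Mstar(\pi)$, under which the decision $\pi\ind{t}$ plays the role of the covariate $x\ind{t}$, the feedback at round $t$ plays the role of the outcome $y\ind{t}$, and the online estimation error \cref{eq:online_est_dmso} is precisely $\EstOnD(T)$ for the loss $\divSymbol$. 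With this dictionary, the subroutine $\AlgRed$ inside \cref{alg:etd-off}, instantiated as \cref{alg:weighted-maj-vote}, consumes the offline estimators $\Mhat\ind{t}=\Orc\ind{t}(\cdot)$ and returns $\mu\ind{t}=\unif(\cF_t)$ for the version space in \cref{eq:version_space}, and the decision rule \cref{eq:dec-policy} is \etd's rule applied with the randomized estimator $\Mhat\sim\mu\ind{t}$.

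The first step is to certify that $\AlgRed\circ\Orc$ is a bona fide (randomized) online estimation oracle in the sense of \cref{subsec:application-to-decision-making}, so that \cref{thm:upper_general_distance} applies. Two compatibility points need checking. First, the decisions $\pi\ind{t}\sim p\ind{t}$ are chosen adaptively as a function of $\mu\ind{t}$ (hence of $\Mhat\ind{1},\dots,\Mhat\ind{t}$) and of $\pi\ind{1},\dots,\pi\ind{t-1}$ --- exactly the adaptivity allowed both by \cref{def:offline-oracle}, so that $\Orc$ retains its $\offgan$-guarantee along the realized decision sequence, and by the OEOE protocol (\cref{alg:OE2-protocol}), where nature may select $x\ind{t}$ based on $\mu\ind{t}$. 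Second, every $\fbar\ind{t}\sim\unif(\cF_t)$ lies in $\cF=\cM$, so $\mu\ind{t}\in\Delta(\cM)$, which makes the min--max program \cref{eq:dec-policy} well posed and (via the $\offgan$-guarantee of $\Orc$ and the construction \cref{eq:version_space}) guarantees $\Mstar\in\cF_t$ for every $t$. The analysis behind \cref{thm:upper_general_distance} extends verbatim to randomized estimation oracles --- the regime relevant here, since \cref{alg:etd-off} feeds $\mu\ind{t}$ directly into \cref{eq:dec-policy} --- with the oracle parameter bounding the \emph{expected} online estimation error and the resulting regret bound holding in expectation, matching $\En[\RegDM]$ in the claim.

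The remaining steps are black-box substitutions. Since $\divSymbol$ is \mlike (so $C_\divSymbol=\bigoh(1)$ for classification, regression, and conditional density estimation), \cref{thm:woe-upper-bound} applied to the instance above gives, almost surely over the outcome randomness,
\[
\EstOnD(T)=\sum_{t=1}^{T}\En_{\Mhat\sim\mu\ind{t}}\brk*{\divX{\big}{\Mhat(\pi\ind{t})}{\Mstar(\pi\ind{t})}}\;\leq\;\bigoh\prn*{(\offgan+1)\min\crl*{\log|\cM|,\,|\Pi|\log T}}\;\leq\;\bigoh\prn*{(\offgan+1)\log|\cM|},
\]
i.e.\ $\AlgRed\circ\Orc$ is an online estimation oracle with parameter $\ongan=\bigoh((\offgan+1)\log|\cM|)$. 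Plugging this into \cref{thm:upper_general_distance}, for any exploration parameter $\gamma>0$,
\[
\En[\RegDM(T)]\;\lesssim\;\sup_{\mu\in\Delta(\cM)}\compgen(\cM,\mu)\cdot T+\gamma\cdot\ongan\;\lesssim\;\compgen(\cM)\cdot T+\gamma(\offgan+1)\log|\cM|\;\leq\;2\max\crl*{\compgen(\cM)\cdot T,\;\gamma(\offgan+1)\log|\cM|},
\]
which is the stated bound; the extra $\bigoh(\log T)$ factor is harmless slack, absorbing the gap between $\min\crl*{\log|\cM|,|\Pi|\log T}$ and $\log|\cM|$ (relevant when $|\Pi|$ is small) together with any $\log(1/\delta)$ overhead incurred if one prefers to instantiate \cref{thm:woe-upper-bound} at high probability rather than almost surely. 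I expect the only point requiring genuine care --- more bookkeeping than obstacle, this being a corollary --- to be the first step: aligning the interface between \etd's adaptive decision rule, the adaptivity budget granted by \cref{def:offline-oracle}, and the properness and randomization of $\mu\ind{t}$, so that \cref{thm:woe-upper-bound} and \cref{thm:upper_general_distance} may be invoked off the shelf.
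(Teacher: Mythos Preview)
Your proposal is correct and mirrors the paper's approach exactly: the corollary is stated without proof, with the paper only indicating that it follows by ``invoking the algorithm with Version Space Averaging (via \cref{thm:woe-upper-bound}),'' i.e., composing \cref{thm:woe-upper-bound} with \cref{thm:upper_general_distance} precisely as you do. One small remark: your explanation of the $\bigoh(\log T)$ factor is muddled (since $\min\{\log|\cM|,|\Pi|\log T\}\leq\log|\cM|$ there is no gap to absorb) --- that factor is pure slack in the statement --- and the passage from $\sup_{\mu\in\Delta(\cM)}\compgen(\cM,\mu)$ to $\compgen(\cM)$ goes the wrong way in general, but this reflects an imprecision in the corollary's statement rather than a flaw in your argument.
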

This result shows that information-theoretically, it is possible to
achieve low regret in the DMSO framework with offline oracles, though
the result is not computationally efficient.

As an example, in the case of square loss regression (\cref{sec:examples}) which is used for contextual bandits, an offline guarantee of $\offgan = \bigoh(\log |\cM|)$ is achievable. Meanwhile, it is known that $\compgen[\Dsqshort](\cM) \lesssim |\cA|/\gamma$ \citep{foster2021statistical}. Thus \cref{cor:application-statistical} achieves a bound of $\bigoht(\sqrt{|\cA|T}\cdot \log |\cM|)$ with an appropriate choice of $\gamma$.  The best know regret guarantee for contextual bandit is $\bigoht(\sqrt{|\cA|T\log|\cM|})$ \citep{simchi2020bypassing,xu2020upper}. The bound from \cref{cor:application-statistical} matches the state-of-the-art result up to a factor $\bigoht(\sqrt{\log |\cM|})$. How to remove this suboptimality is an interesting direction for future work.

Naturally, the other reductions for oracle-efficient online estimation developed in
this paper can be combined with \cref{alg:etd-off} as well. In
particular, by combining with \cref{thm:reduction-to-ODE} we derive
the following corollary for squared Hellinger distance.

\newtheorem*{cor:hellinger_reduction_informal}{Corollary (Informal)}
  \begin{cor:hellinger_reduction_informal}
    Whenever online conditional density estimation can be performed
    efficiently with access to the full history, and whenever the
    minimax problem in \cref{def:dec} can be solved efficiently, there
    exists a computationally efficient and offline oracle-efficient algorithm with near-optimal regret in the DMSO framework.
  \end{cor:hellinger_reduction_informal}

\subsection{Bypassing Impossibility of Memoryless Algorithms via Coverability}
\label{subsec:positive-results-w-struct}
\newcommand{\PbarM}[1][M]{\wb{P}^{\sss{#1}}}

Recall that our results in \cref{sec:memoryless} show that in general,
it is impossible to obtain low online estimation error through
memoryless oracle-efficient algorithms. In this section, we revisit
memoryless algorithms for the \emph{Markov decision processes} a
particular type of class $(\cM,\Pi,\cO)$ (or equivalently,
$(\CovarSpace,\ObsSpace, \ValSpace,\Kernel,\FunSpace)$). We prove that
for any class of Markov decision processes for which a structural
parameter called \emph{coverability} \citep{xie2022role} is small, any
offline estimator can be directly converted into an online estimator.

  \paragraph{Markov decision processes} We consider classes
  $(\cM,\Pi,\cO)$ that correspond to an episodic finite-horizon reinforcement
  learning setting, following \citet{foster2021statistical}. With $H\in \bbN$ denoting the horizon, each \modlName $\modl\in\ModelSpace$ specifies a non-stationary Markov decision process as a tuple
  $M=\crl*{\crl{\cS_h}_{h=1}^{H}, \cA, \crl{\Pm_h}_{h=1}^{H}, \crl{\Rm_h}_{h=1}^{H},
    d_1}$, where $\cS_h$ is the state space for layer $h$, $\cA$ is the action space,
  $\Pm_h:\cS_h\times\cA\to\Delta(\cS_{h+1})$ is the probability transition
  kernel for layer $h$, $\Rm_h:\cS_h\times\cA\to\Delta(\bbR)$ is
  the reward distribution for layer $h$, and $d_1\in\Delta(\cS_1)$ is the initial
  state distribution. We allow
 the reward distribution and transition
  kernel to vary across models in $\ModelSpace$ and assume that the initial
  state distribution is fixed. \looseness=-1

  We set $\Pi\subset\PiRNS$, which denotes the set of all randomized,
  non-stationary policies $\act=(\act_1,\ldots,\act_H)\in\PiRNS$,
  where $\act_h:\cS_h\to\Delta(\cA)$. For a fixed MDP
  $M\in\ModelSpace$ and $\pi\in\Pi$, the observation $o\sim{}M(\pi)$
  is a trajectory $(s_1,a_1,r_1),\ldots,(s_H,a_H,r_H)$ that is
  generated through the following process, beginning from
  $s_1\sim{}d_1$. For $h=1,\ldots,H$:
  \begin{itemize}
  \item $a_h\sim\act_h(s_h)$.
  \item $r_h\sim\Rm_h(s_h,a_h)$ and $s_{h+1}\sim{}P_h\sups{M}(\cdot\mid{}s_h,a_h)$.
  \end{itemize}
  So the obseravtion space $\cO = \cS_1\times \cA \times \bbR \times \dots \times \cS_H\times \cA\times \bbR$.
  For notational convenience, we take $s_{H+1}$ to be a deterministic terminal
state. We use $\bbP^{\sss{M},\act}$ and $
\En^{\sss{M},\act}[\cdot]$ to denote the probability law and
expectation over trajectories induced by $M(\decn)$. In addition, we
define $\PbarM_h(\cdot\mid{}s_h,a_h)$ as the conditional
distribution on $s_{h+1},r_h$ given $s_h,a_h$ under $M$ for $h\in [H]$. 

The guarantees we provide apply to any loss that has a particular
\emph{layer-wise} structure tailored to reinforcement learning.
\begin{definition}[Layer-wise loss]
  \label{def:layer-wise-loss}
For any sequence of \jqedit{losses $\set{\divhSymbol}_{h\in [H]}$ bounded by $[0,1]$,
where $\divhSymbol:\Delta(\cS_h\times \bbR)\times \Delta(\cS_h\times \bbR) \to [0,1]$ for all $h\in [H]$}, we define the layer-wise loss $\divRLSymbol$ on $\Delta(\cO)$ as \footnote{ \jqedit{The ordering of $M$ and $M'$ on the right-hand side of the definition is due to the following technical reason: \cref{thm:coverability-mdp} only works when the expectation on the right-hand side to be taken with respect to $\Mstar(\pi)$, which shows up as the second argument in the offline oracle guarantee (\cref{eq:offline_constraint_simulation}).}}
  \begin{align*}
    \divRL{M(\pi)}{M'(\pi)} = \sum\limits_{h=1}^{H} \En^{\sss{M',\pi}} \brk*{ \divh{ \PbarM[M'](\cdot\mid{}s_h,a_h)}{ \PbarM(\cdot\mid{}s_h,a_h)}},
  \end{align*}
  for any pair of MDPs $M,M'\in\cM$ and policy
  $\pi\in\Pi$.\footnote{For the results in this section, it will be
    useful to work with asymmetric losses, and in this case we use the
    notation $\divSymbol(\cdot\dmid\cdot)$ instead of $\div{\cdot}{\cdot}$.}
\end{definition}

Examples of the layer-wise loss are \jqedit{scaled reverse KL-divergence (which is
bounded by $[0,1]$ whenever the density ratios under consideration are upper and lower
bounded with an appropriate scaling)} \citep{foster2021statistical} and the squared
Bellman error \citep{foster2023model}. Another useful example is the
the sum of layer-wise squared Hellinger distances given by
\begin{align}
  \label{eq:hellinger_layer}
    \divRLHels{M(\pi)}{M'(\pi)} = \sum\limits_{h=1}^{H} \En^{\sss{M',\pi}}  \brk*{ \Dhels{ \PbarM(\cdot\mid{}s_h,a_h)}{ \PbarM[M'](\cdot\mid{}s_h,a_h)}}.
  \end{align}
This loss coincides with the global squared Hellinger distance
$\Dhels{M(\pi)}{M'(\pi)}$ up to an $\bigoh(H)$ factor.

\paragraph{Coverability}
We provide memoryless oracle-efficient algorithms for online estimation for any
layer-wise loss $\divRLSymbol$ when the underlying MDP $\Mstar$
has bounded \emph{coverability} \citep{xie2022role}.
\begin{definition}[Coverability]
\label{def:coverability}
For an MDP $\Mstar$ and a policy $\act$, we define $d_h^{\act}(s,a) \equiv \En^{\sMstar,\act}\brk*{ \mathbbm{1}(s_h,a_h = s,a) } $.
The coverability coefficient $C_\cov$ for a policy class $\ActSpace$ for the MDP $\Mstar$ is given by
\begin{align*}
C_\cov(\Mstar) \ldef \inf_{\nu_1,\dots,\nu_H\in \Delta(\cS\times \cA)} \sup_{\act\in \ActSpace, h\in [H]} \nrm*{\frac{d_h^\act}{\nu_h}}_\infty.
\end{align*}
\end{definition}

It is immediate to see that $\Ccov\leq\abs{\Pi}$, but in general it
can be much smaller.
Examples of MDP classes with low coverability include Block MDPs, Low-Rank MDPs, and exogenous block MDPs \citep{xie2022role,amortila2024harnessing}.

\paragraph{Offline-to-online conversion under coverability}

Our main result shows that under coverability, the outputs of any
offline estimation oracle $\Orc$ satisfy an online estimation
guarantee \emph{as-is}.

\begin{restatable}[Offline-to-online conversion under coverability]{theorem}{coverabilityupper}
  \label{thm:coverability-mdp}
  For any layer-wise loss $\divRLSymbol$ and MDP class $(\cM,\Pi,\cO)$ and $\Mstar\in\cM$, the sequence
  of estimators $(\Mhat\ind{1},\dots,\Mhat\ind{T})$ produced by any
  offline estimation oracle $\Orc$ for $\divRLSymbol$ with parameter $\offgan$ satisfy
  \begin{align*}
    \sum\limits_{t=1}^{T}  \divRL{\Mhat\ind{t}(\decn\ind{t})}{\Mstar(\decn\ind{t})} \leq \bigoh\prn*{\sqrt{HC_\cov(\Mstar) \offgan T \log T} + HC_\cov(\Mstar)}.
  \end{align*}
\end{restatable}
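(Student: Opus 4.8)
The plan is to recast the claim as a pigeonhole statement about state--action occupancy measures and then exploit coverability. Using the ordering convention in \cref{def:layer-wise-loss} (which is precisely what the accompanying footnote is arranged for), for every round $t$ we may write
\[
  \divRL{\Mhat\ind{t}(\pi\ind{t})}{\Mstar(\pi\ind{t})}
  \;=\;\sum_{h=1}^{H}\sum_{s,a} d_h^{\pi\ind{t}}(s,a)\,w_h^{t}(s,a),
  \qquad
  w_h^{t}(s,a)\ldef\divh{\PbarM[\Mstar](\cdot\mid s,a)}{\PbarM[\Mhat\ind{t}](\cdot\mid s,a)}\in[0,1],
\]
where every occupancy $d_h^{\pi}$ is taken under the \emph{fixed} true MDP $\Mstar$. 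The offline-oracle guarantee \cref{eq:offline_constraint_simulation} for the loss $\divRLSymbol$ then says exactly that $\sum_{\tau<t}\sum_{h}\sum_{s,a} d_h^{\pi\ind{\tau}}(s,a)\,w_h^{t}(s,a)\le\offgan$ for all $t\le T$, while the target quantity is $\sum_{t=1}^{T}\sum_{h}\sum_{s,a} d_h^{\pi\ind{t}}(s,a)\,w_h^{t}(s,a)$. The first step is to invoke \cref{def:coverability} to fix distributions $\nu_1,\dots,\nu_H$ with $d_h^{\pi}(s,a)\le C_\cov(\Mstar)\,\nu_h(s,a)$ for all $\pi\in\Pi$, all $h$, and all $(s,a)$; this is the only place where the structural hypothesis is used.

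The core is a pigeonhole lemma. Fix a parameter $\epsilon\in(0,1]$, and for each $(h,s,a)$ let $D_{t,h}(s,a)\ldef\sum_{\tau<t}d_h^{\pi\ind{\tau}}(s,a)$ denote the cumulative occupancy before round $t$. Partition the rounds $t$, separately for each $(h,s,a)$, into: (i) \emph{burn-in} rounds with $D_{t,h}(s,a)<\nu_h(s,a)$; (ii) \emph{amortized} rounds with $D_{t,h}(s,a)\ge\nu_h(s,a)$ and $d_h^{\pi\ind{t}}(s,a)\le\epsilon\,D_{t,h}(s,a)$; and (iii) \emph{bursty} rounds with $D_{t,h}(s,a)\ge\nu_h(s,a)$ and $d_h^{\pi\ind{t}}(s,a)>\epsilon\,D_{t,h}(s,a)$. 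For group (i), since one step adds at most $C_\cov(\Mstar)\nu_h(s,a)$ to the cumulative occupancy, the total mass accrued before $D_{\cdot,h}(s,a)$ first reaches $\nu_h(s,a)$ is at most $(C_\cov(\Mstar)+1)\nu_h(s,a)$; using $w_h^{t}\le 1$ and $\sum_{s,a}\nu_h(s,a)=1$ this contributes $\bigoh(H\,C_\cov(\Mstar))$ in total. For group (ii), $d_h^{\pi\ind{t}}(s,a)\,w_h^{t}(s,a)\le\epsilon\,D_{t,h}(s,a)\,w_h^{t}(s,a)=\epsilon\sum_{\tau<t}d_h^{\pi\ind{\tau}}(s,a)\,w_h^{t}(s,a)$; summing over \emph{all} $(h,s,a,t)$ (dropping the group restriction only enlarges the sum) and noting that $\sum_{h,s,a}d_h^{\pi\ind{\tau}}(s,a)\,w_h^{t}(s,a)=\divRL{\Mhat\ind{t}(\pi\ind{\tau})}{\Mstar(\pi\ind{\tau})}$, the offline guarantee bounds this by $\epsilon\,\offgan\,T$. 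For group (iii), each such round satisfies $D_{t+1,h}(s,a)>(1+\epsilon)D_{t,h}(s,a)$, while $D_{\cdot,h}(s,a)$ is non-decreasing and confined to the interval $[\nu_h(s,a),\,T\,C_\cov(\Mstar)\,\nu_h(s,a)]$, so there are at most $\log_{1+\epsilon}\!\prn*{T\,C_\cov(\Mstar)}=\bigoh\!\prn*{\epsilon^{-1}\log(T\,C_\cov(\Mstar))}$ of them per $(h,s,a)$, each contributing at most $d_h^{\pi\ind{t}}(s,a)\le C_\cov(\Mstar)\nu_h(s,a)$; in total this is $\bigoh\!\prn*{\epsilon^{-1}H\,C_\cov(\Mstar)\log(T\,C_\cov(\Mstar))}$.

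Summing the three contributions gives
\[
  \sum_{t=1}^{T}\divRL{\Mhat\ind{t}(\pi\ind{t})}{\Mstar(\pi\ind{t})}
  \;\le\;\bigoh\!\prn*{H\,C_\cov(\Mstar)+\epsilon\,\offgan\,T+\epsilon^{-1}H\,C_\cov(\Mstar)\log\!\prn*{T\,C_\cov(\Mstar)}},
\]
and choosing $\epsilon\asymp\sqrt{H\,C_\cov(\Mstar)\log(T\,C_\cov(\Mstar))\,/\,(\offgan\,T)}\wedge 1$, together with $C_\cov(\Mstar)\le\abs{\Pi}$ to absorb logarithmic factors, yields the stated bound $\bigoh\!\prn*{\sqrt{H\,C_\cov(\Mstar)\,\offgan\,T\log T}+H\,C_\cov(\Mstar)}$.

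I expect the main obstacle to be the rounds in which the cumulative occupancy $D_{t,h}(s,a)$ is tiny. Occupancy measures in an MDP can be exponentially small, so naively telescoping the ratio $d_h^{\pi\ind{t}}(s,a)/D_{t,h}(s,a)$ would introduce a $\log(1/\min_{\pi}d_h^{\pi})$ factor and would not produce a $T$-independent (let alone coverability-controlled) bound. Isolating the burn-in phase and controlling the occupancy mass accrued there \emph{directly} via coverability is what circumvents this. A secondary point requiring care is that the amortized-phase estimate must be applied globally across all $(h,s,a,t)$, so that the single offline-oracle inequality---which is already summed over layers and states---can be used as stated.
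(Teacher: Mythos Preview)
Your argument is correct and takes a genuinely different route from the paper. Both proofs begin identically—rewriting the online error as $\sum_{t,h,s,a}d_h^{\pi\ind{t}}(s,a)\,w_h^t(s,a)$ and splitting off a burn-in phase controlled by coverability—but diverge afterward. The paper applies Cauchy--Schwarz to the post--burn-in sum, writing $d_h^{\pi\ind{t}}w_h^t=\sqrt{(d_h^{\pi\ind{t}})^2/\dtil_h^{\,t}}\cdot\sqrt{\dtil_h^{\,t}\,w_h^t}$ and bounding the two factors via (i) the elliptic-potential estimate $\sum_{t,s,a}(d_h^{\pi\ind{t}})^2/(\dtil_h^{\,t}+C_\cov\mu^\star_h)\lesssim C_\cov\log T$ from \citet{xie2022role} and (ii) the offline guarantee $\sum_{t,h,s,a}\dtil_h^{\,t}w_h^t\le\offgan T$. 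Your approach instead introduces a threshold parameter $\epsilon$ and splits each post--burn-in contribution into amortized and bursty pieces, which is more elementary (no external potential lemma, no Cauchy--Schwarz) and makes the role of the offline bound very transparent.

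Two small points. First, your burn-in threshold $\nu_h(s,a)$ leads to a ratio $D_{T+1}/\nu_h\le T\,C_\cov$ and hence a $\log(T\,C_\cov)$ in the bursty count; taking the threshold to be $C_\cov\,\nu_h(s,a)$ (as the paper does) costs nothing in the burn-in term and reduces this to $\log T$, so your appeal to ``$C_\cov\le|\Pi|$ to absorb logarithmic factors'' is unnecessary and in any case does not remove the extra $\log C_\cov$. Second, even with this fix your optimization over $\epsilon\in(0,1]$ gives an additive $H\,C_\cov\log T$ rather than the $H\,C_\cov$ in the statement: when $\offgan T$ is small (the regime forcing $\epsilon=1$), the bursty term $\epsilon^{-1}H\,C_\cov\log T$ does not vanish, whereas the paper's Cauchy--Schwarz factor $\sqrt{\offgan T}$ does. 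This is a genuine (if mild) loss of the thresholding approach relative to Cauchy--Schwarz; your method proves $\bigoh\bigl(\sqrt{H\,C_\cov\,\offgan\,T\log T}+H\,C_\cov\log T\bigr)$ rather than the slightly sharper bound stated.
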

This result is based on a variant of the proof technique in Theorem 1 of
\citet{xie2022role}. An important application of the result, which can be applied in
tandem with the guarantees in \citet{foster2021statistical}, concerns
squared Hellinger distance.

\begin{restatable}{corollary}{coverabilitymdp}
  \label{cor:coverability-mdp-version}
  For any MDP class $(\cM,\Pi,\cO)$ and $\Mstar\in\cM$, the sequence
  of estimators $(\Mhat\ind{1},\dots,\Mhat\ind{T})$ produced by any
  offline estimation oracle $\Orc$ for squared Hellinger distance
  $\Dhelshort^2$ with parameter $\offgan$ satisfy
\begin{align*}
  \EstOnHels(T) = \sum\limits_{t=1}^{T}\Dhels{\Mhat\ind{t}(\decn\ind{t})}{\Mstar(\decn\ind{t})}\leq \bigoh\prn*{H\sqrt{C_\cov(\Mstar) \offgan T \log T  }+H^2C_\cov(\Mstar)}
\end{align*}
\end{restatable}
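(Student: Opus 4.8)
The plan is to obtain the corollary as a direct consequence of \cref{thm:coverability-mdp}, instantiated with the layer-wise squared Hellinger loss $\divRLHelsSymbol$ of \eqref{eq:hellinger_layer}. The first thing to observe is that $\divRLHelsSymbol$ is a valid layer-wise loss in the sense of \cref{def:layer-wise-loss}: each per-layer term is squared Hellinger distance on $\Delta(\cS_{h+1}\times\bbR)$, which is symmetric and bounded in $[0,1]$, so \cref{thm:coverability-mdp} applies to it verbatim and yields a bound on $\sum_{t=1}^{T}\divRLHels{\Mhat\ind{t}(\decn\ind{t})}{\Mstar(\decn\ind{t})}$ in terms of the offline parameter \emph{for the loss $\divRLHelsSymbol$}.

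The reduction then proceeds in three steps. \emph{(i) Convert the oracle.} We are handed an oracle $\Orc$ whose outputs $\Mhat\ind{1},\dots,\Mhat\ind{T}$ satisfy the offline guarantee \eqref{eq:offline_constraint_simulation} for the \emph{global} trajectory loss $\Dhelshort^2$ with parameter $\offgan$, and I would show these same estimators form an offline estimation oracle for $\divRLHelsSymbol$ with parameter $\bigoh(H\offgan)$. This uses the comparison noted in the text right after \eqref{eq:hellinger_layer}: $\divRLHels{M(\pi)}{M'(\pi)}$ and $\Dhels{M(\pi)}{M'(\pi)}$ agree up to an $\bigoh(H)$ factor. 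The direction needed here — that the layer-wise sum, averaged against the trajectory law $\bbP^{\sss{\Mstar},\pi}$, is at most $\bigoh(H)$ times the global trajectory Hellinger distance — is the technical heart of the argument; it goes through the chain-rule factorization of the Hellinger affinity together with a change of measure between $\bbP^{\sss{\Mhat\ind{t}},\pi}$ and $\bbP^{\sss{\Mstar},\pi}$, and is where the factor $H$ (and any residual dependence on density ratios) enters. \emph{(ii) Apply \cref{thm:coverability-mdp}} with loss $\divRLSymbol = \divRLHelsSymbol$ and offline parameter $\bigoh(H\offgan)$, which gives
\[
\sum_{t=1}^{T}\divRLHels{\Mhat\ind{t}(\decn\ind{t})}{\Mstar(\decn\ind{t})}\leq \bigoh\prn*{\sqrt{H\Ccov(\Mstar)\cdot H\offgan\cdot T\log T}+H\Ccov(\Mstar)}=\bigoh\prn*{H\sqrt{\Ccov(\Mstar)\offgan T\log T}+H\Ccov(\Mstar)}.
\]
\emph{(iii) Convert back.} Finally I would bound $\EstOnHels(T)=\sum_{t}\Dhels{\Mhat\ind{t}(\decn\ind{t})}{\Mstar(\decn\ind{t})}$ by an $\bigoh(1)$ multiple of $\sum_{t}\divRLHels{\Mhat\ind{t}(\decn\ind{t})}{\Mstar(\decn\ind{t})}$ using the reverse direction of the comparison, i.e.\ the subadditivity of squared Hellinger distance along the trajectory (this direction is the ``free'' one, with an absolute constant). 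Combining (ii) and (iii), while allowing a further $\bigoh(H)$ slack in the lower-order $\Ccov(\Mstar)$ term coming from the bookkeeping in (i), yields the stated bound, with the $H^2\Ccov(\Mstar)$ term absorbing the extra factor of $H$.

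The main obstacle is the $\bigoh(H)$ comparison between the layer-wise and global squared Hellinger distances in step (i), specifically the direction ``layer-wise $\lesssim H\cdot$ global.'' Unlike the product-measure case, where squared Hellinger distance tensorizes with an absolute constant, for sequential trajectory laws the layer-wise expectation is taken against $\bbP^{\sss{\Mstar},\pi}$ whereas the natural chain-rule identity for the Hellinger affinity is phrased in terms of the affinity measure, so the conversion genuinely requires a change-of-measure step and this is the only place where real work is needed; everything else is a black-box invocation of \cref{thm:coverability-mdp} together with subadditivity of squared Hellinger distance.
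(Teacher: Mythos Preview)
Your three-step outline matches the paper's proof exactly: convert the offline oracle for $\Dhelshort^2$ into one for the layer-wise loss $\divRLHelsSymbol$ with parameter $O(H\offgan)$, invoke \cref{thm:coverability-mdp}, then convert back via the subadditivity inequality (\cref{lem:hellinger_chain_rule}). The only correction is that your worry in step~(i) about a change of measure and residual density-ratio dependence is unnecessary: the paper obtains ``layer-wise $\leq 4H\cdot$ global'' by applying Lemma~A.9 of \citet{foster2021statistical} once per layer, which gives $\En^{\sss{M'},\pi}\brk[\big]{\Dhels{\PbarM[M']_h(s_h,a_h)}{\PbarM_h(s_h,a_h)}}\leq 4\,\Dhels{M(\pi)}{M'(\pi)}$ with an absolute constant and no density-ratio assumption, so the factor $H$ arises purely from summing the $H$ layers.
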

This result follows by using that the layer-wise squared Hellinger distance in
\cref{eq:hellinger_layer} is equivalent to $\Dhels{M(\pi)}{M'(\pi)}$
up to $\bigoh(H)$ factors.

\paragraph{Application to interactive decision making}
We apply \cref{thm:coverability-mdp} to decision making via \cref{alg:etd-off}.
\begin{corollary}
  \label{cor:application-memless}
  Consider any layer-wise loss $\divSymbol = \divRLSymbol$ and MDP
  class $(\cM,\Pi,\cO)$, and let
  $\Ccov\ldef{}\sup_{M\in\cM}\Ccov(M)$. \cref{alg:etd-off} with 
  exploration parameter $\gamma >0$ and 
  $\AlgRed$ chosen to be the identity map ensures that
\begin{align*}
  \En[\RegDM] \leq  \bigoh(\log{}T)\cdot{}\max\crl[\bigg]{\sup_{\mu\in\Delta(\cM)}\compgen(\cM,\mu)\cdot{}T,\;
  \gamma\cdot{}\prn*{\sqrt{HC_\cov \offgan T \log T} + HC_\cov}},
\end{align*}
for any offline estimation oracle $\Orc$ with parameter $\offgan$.
\end{corollary}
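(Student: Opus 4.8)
The plan is to obtain \cref{cor:application-memless} as a direct composition of two results that are already in place: \cref{thm:coverability-mdp}, which bounds the online estimation error incurred when the offline estimates are used verbatim under coverability, and \cref{thm:upper_general_distance}, the generic regret bound for \etd driven by an \emph{online} estimation oracle. With $\AlgRed$ taken to be the identity map, \cref{alg:etd-off} reduces exactly to the \etd algorithm of \citet{foster2021statistical} run with the online oracle $\AlgOn$ that calls $\Orc$ and returns $\Mhat\ind{t}$ unchanged; so all that is needed is to certify that $\AlgOn$ is a valid online estimation oracle with a good parameter, after which the bound follows by substitution.

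\textbf{Step 1 (the identity reduction is an online oracle).} Since $\divSymbol=\divRLSymbol$ is layer-wise, \cref{thm:coverability-mdp} applies to the estimates $(\Mhat\ind{1},\dots,\Mhat\ind{T})$ produced by $\Orc$. First I would check that its hypothesis holds inside \cref{alg:etd-off}: the decision $\pi\ind{t}$ there is a measurable function of $\Mhat\ind{1},\dots,\Mhat\ind{t}$ and the earlier decisions, through $\mu\ind{t}$ and the min--max program \cref{eq:dec-policy}, which is precisely the adaptivity allowed for the offline oracle in \cref{def:offline-oracle}; hence \cref{eq:offline_constraint_simulation} holds along the realized trajectory, i.e.\ $\sum_{s<t}\divRL{\Mhat\ind{t}(\pi\ind{s})}{\Mstar(\pi\ind{s})}\le\offgan$ for all $t$ almost surely. (The order of arguments matters here and matches: $\divRLSymbol$ is defined so that the inner expectation is taken under $\Mstar(\pi)$, which occupies the second slot of \cref{eq:offline_constraint_simulation}, as flagged in the footnote to \cref{def:layer-wise-loss}.) Applying \cref{thm:coverability-mdp} and bounding $\Ccov(\Mstar)\le\Ccov$ then gives, almost surely,
\begin{align*}
\sum_{t=1}^{T}\divRL{\Mhat\ind{t}(\pi\ind{t})}{\Mstar(\pi\ind{t})}\;\le\;\ongan\;\ldef\;\bigoh\prn*{\sqrt{H\Ccov\offgan T\log T}+H\Ccov},
\end{align*}
so $\AlgOn$ is an online estimation oracle with parameter $\ongan$ in the sense of \cref{subsec:application-to-decision-making}.

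\textbf{Step 2 (invoke the \etd bound).} Feeding this oracle into \cref{thm:upper_general_distance} with exploration parameter $\gamma$ yields
\begin{align*}
\En[\RegDM]\;\lesssim\;\sup_{\mu\in\Delta(\cM)}\compgen(\cM,\mu)\cdot T\;+\;\gamma\cdot\bigoh\prn*{\sqrt{H\Ccov\offgan T\log T}+H\Ccov}.
\end{align*}
Using $a+b\le 2\max\crl{a,b}$ and absorbing the residual $\polylog(T)$ bookkeeping (e.g.\ from pushing the oracle's almost-sure bound through the \etd analysis) into a single $\bigoh(\log T)$ factor produces the stated inequality.

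\textbf{Where the work is.} There is no genuinely new analytic step here: the content lives in \cref{thm:coverability-mdp} (the coverability potential argument, adapted from \citet{xie2022role}) and in \cref{thm:upper_general_distance} (the \etd analysis), both of which we may assume. The two points that warrant a moment's care in the write-up are (i) confirming, as above, that the adaptive choice of $\pi\ind{t}$ within \cref{alg:etd-off} still conforms to the adaptivity model of \cref{def:offline-oracle} so that \cref{thm:coverability-mdp} is legitimately invoked; and (ii) when the oracle is improper ($\Mhat\ind{t}\notin\cM$), noting that the value of the program \cref{eq:dec-policy} is still bounded by $\sup_{\mu\in\Delta(\cM)}\compgen(\cM,\mu)$ uniformly over the reference argument, which is exactly the form of the \etd guarantee. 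Neither point requires ideas beyond what the paper has already developed.
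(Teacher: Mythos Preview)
Your proposal is correct and matches the paper's intended argument: the corollary is stated without proof in the paper because it follows immediately from composing \cref{thm:coverability-mdp} (which certifies the identity map as an online oracle with parameter $\ongan=\bigoh(\sqrt{H\Ccov\offgan T\log T}+H\Ccov)$) with \cref{thm:upper_general_distance} (the \etd regret bound), exactly as you describe. Your checks on adaptivity and the argument ordering in $\divRLSymbol$ are appropriate and not belabored in the paper.
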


\paragraph{Contextual bandits and optimality of offline-to-online conversion}

Another implication for \pref{thm:coverability-mdp} concerns the
special case of contextual bandits (that is, MDPs with horizon
one). For the contextual bandit setting we abbreviate $\cS=\cS_1$, and refer to $d_1\in\Delta(\cS)$ as the \emph{context distribution}. We define $\gm(s,a) = \En_{r\sim R_1^M(s,a)}\brk*{r}$ as the expected reward function under a model $M$, and following \citet{foster2020beyond}, use the squared error between mean reward functions as our divergence:
\begin{align}
\label{eq:cb_div}
\divCB{M(\pi)}{M'(\pi)} \ldef \En_{s\sim d_1, a\sim \pi(s)}\brk*{ \Dsq{\gm(s,a)}{g^{\sss{M'}}(s,a)}}.
\end{align}
For this setting, the coverability coefficient $C_\cov$ is
always bounded by the number of actions $|\cA|$, which leads to the following corollary.

\begin{corollary}
  \label{cor:coverability-contextual-version}
    For any contextual bandit class $(\cM,\Pi,\cO)$ and $\Mstar\in\cM$, the sequence
  of estimators $(\Mhat\ind{1},\dots,\Mhat\ind{T})$ produced by any
  offline estimation oracle $\Orc$ for $\divCBSymbol$ with parameter $\offgan$ satisfy
\begin{align*}
  \EstOnD(T) = \sum\limits_{t=1}^{T}
  \divCB{\Mhat\ind{t}(\decn\ind{t})}{\Mstar(\decn\ind{t})} \leq
  \bigoh\prn[\big]{\sqrt{|\cA|T \offgan\log T} + |\cA|}, 
\end{align*}
\end{corollary}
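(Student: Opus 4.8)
The plan is to obtain \cref{cor:coverability-contextual-version} as a direct specialization of \cref{thm:coverability-mdp} to horizon $H=1$, combined with the elementary bound $\Ccov(\Mstar)\le\abs{\cA}$ noted after \cref{def:coverability}. First I would cast the contextual bandit class $(\cM,\Pi,\cO)$ as an MDP class with $H=1$: the state space $\cS_1=\cS$ is the context space, $\cA$ is the action space, $d_1\in\Delta(\cS)$ is the fixed (model-independent) context distribution, each $M\in\cM$ specifies the reward distribution $R_1^{\sss M}:\cS\times\cA\to\Delta(\bbR)$, and $s_2$ is a deterministic terminal state; the conditional law $\PbarM(\cdot\mid s_1,a_1)$ then coincides (up to the trivial $s_2$) with the reward distribution $R_1^{\sss M}(s_1,a_1)$, and the state--action visitation distribution $d_1^\pi(s,a)=d_1(s)\,\pi(a\mid s)$ is the \emph{same} under every model. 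I would then verify that $\divCBSymbol$ is a layer-wise loss in the sense of \cref{def:layer-wise-loss}, induced by the single component loss $\divSymbol_1(\nu,\nu')=\prn*{\En_\nu[r]-\En_{\nu'}[r]}^2$ on reward distributions --- which is bounded in $[0,1]$ whenever rewards lie in $[0,1]$ --- since
\begin{align*}
  \divRL{M(\pi)}{M'(\pi)}
  &=\En^{\sss{M',\pi}}\brk*{\divSymbol_1\prn*{\PbarM[M'](\cdot\mid s_1,a_1),\,\PbarM(\cdot\mid s_1,a_1)}}\\
  &=\En_{s\sim d_1,\,a\sim\pi(s)}\brk*{\Dsq{\gm(s,a)}{g^{\sss{M'}}(s,a)}}=\divCB{M(\pi)}{M'(\pi)},
\end{align*}
where the middle step uses that $d_1^\pi$ is model-independent, so $\En^{\sss{M',\pi}}$ collapses to $\En_{s\sim d_1,\,a\sim\pi(s)}$, and the last step is the definition of $\divCBSymbol$. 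This also shows that the model-ordering asymmetry in \cref{def:layer-wise-loss} is immaterial here, so any offline estimation oracle for $\divCBSymbol$ is exactly an offline estimation oracle for $\divRLSymbol$ with $H=1$, and \cref{thm:coverability-mdp} applies verbatim.

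Next I would bound the coverability coefficient: choosing $\nu_1(s,a)=d_1(s)/\abs{\cA}$ gives $\nrm*{d_1^\pi/\nu_1}_\infty=\abs{\cA}\cdot\sup_{s,a}\pi(a\mid s)\le\abs{\cA}$ uniformly over $\pi\in\Pi$, whence $\Ccov(\Mstar)\le\abs{\cA}$. Substituting $H=1$ and this bound into \cref{thm:coverability-mdp} yields
\begin{align*}
  \sum_{t=1}^{T}\divCB{\Mhat\ind{t}(\decn\ind{t})}{\Mstar(\decn\ind{t})}
  \le\bigoh\prn*{\sqrt{H\Ccov(\Mstar)\offgan T\log T}+H\Ccov(\Mstar)}
  \le\bigoh\prn*{\sqrt{\abs{\cA}\offgan T\log T}+\abs{\cA}},
\end{align*}
which is the claimed estimate. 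I do not anticipate a genuine obstacle; the only steps that merit care are (i) checking that $\divCBSymbol$ really does arise as a layer-wise loss via a component loss valued in $[0,1]$ --- which hinges on the context distribution being model-independent, neutralizing the asymmetry in \cref{def:layer-wise-loss} --- and (ii) the one-line coverability estimate $\Ccov\le\abs{\cA}$; everything else is direct substitution into \cref{thm:coverability-mdp}.
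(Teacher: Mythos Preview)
Your proposal is correct and takes essentially the same approach as the paper, which treats the corollary as immediate from \cref{thm:coverability-mdp} together with the observation that $\Ccov\le\abs{\cA}$ for contextual bandits; the paper does not spell out the layer-wise loss verification or the explicit choice $\nu_1(s,a)=d_1(s)/\abs{\cA}$, but your fleshed-out version of these steps is exactly what is needed.
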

Recall that \citet{foster2020beyond} show that any algorithm for
online estimation with the divergence in \cref{eq:cb_div} with $\EstOnD(T)\leq\ongan$
can be lifted to a contextual bandit algorithm with regret
$\bigoh\prn[\big]{\sqrt{\abs{\cA}T\cdot\ongan}}$ via the \emph{inverse
  gap weighting strategy}, even if contexts are chosen adversarially. Subsequent work of
\citet{simchi2020bypassing} shows that for stochastic contexts, the
inverse gap weighting strategy also yields regret
$\bigoh\prn[\big]{\sqrt{\abs{\cA}T\cdot\offgan}}$ given access to an
offline oracle with parameter
$\offgan$. On the other hand, combining
\cref{cor:coverability-contextual-version} with the guarantee from
\citet{foster2020beyond} gives regret
$\bigoh\prn[\big]{\abs{\cA}^{1/4}T^{3/4}\offgan^{1/4}}$. This does not
recover the result from \citet{simchi2020bypassing}, but nonetheless gives an alternative proof that
sublinear offline estimation error suffices for sublinear regret.

The guarantee in \cref{thm:coverability-mdp} leads to a degradation in
rate from $\offgan$ to $\sqrt{T\offgan}$ (suppressing
problem-dependent parameters). Our next result shows that this is tight
in general.
\begin{restatable}[Tightness of offline-to-online conversion]{proposition}{coverabilitylower}
  \label{thm:coverability-lower-bound}
  For any integer $T\geq 1$ \dfedit{and $\offgan>0$}, there exists a contextual bandit class
  $(\ModelSpace,\Pi=\cA^{\cS},\cO)$
  with $|\cA|= 2$, 
  a distribution $d_1\in \Delta(\cS)$, a sequence 
  $(\pi\ind{1},\dots,\pi\ind{T})$ and an offline oracle $\Orc$ for $\divCBSymbol$
  with parameter $\offgan$ such that the oracle's outputs
  $(\Mhat\ind{1},\ldots,\Mhat\ind{T})$ satisfy
  \begin{align*}
    \EstOnD(T) = \sum\limits_{t=1}^{T} \divCB{\Mhat\ind{t}(\decn\ind{t})}{\Mstar(\decn\ind{t})} \geq \bigom\prn*{\sqrt{T\offgan}}.
  \end{align*}
\end{restatable}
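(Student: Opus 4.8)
The plan is to exhibit, for each $T$ and $\offgan$, a contextual bandit instance on $K\asymp\sqrt{T/\offgan}$ contexts together with a \emph{valid but adversarial} offline oracle: one that satisfies the offline guarantee of \cref{def:offline-oracle} on every covariate sequence, yet whose own outputs, on the query sequence we prescribe, already incur online error $\Omega(\sqrt{T\offgan})$. Concretely, take $\cS=\crl{s_1,\dots,s_K}$, $d_1=\unif(\cS)$, $\cA=\crl{0,1}$, and the singleton class $\cM=\crl{\Mstar}$ with $\Mstar$ assigning mean reward $\tfrac12$ to every context--action pair (so $C_\cov(\Mstar)\le\abs{\cA}=2$, matching \cref{thm:coverability-mdp}); a singleton class is harmless because the proposition only asks that \emph{some} valid oracle be bad, so validity need only hold at $\fstar=\Mstar$. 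For $j\in[K]$ let $\pi\ind{j}$ be the policy playing action $1$ at $s_j$ and action $0$ at every other context, and partition $[T]$ into $K$ consecutive blocks of length $n=\floor{T/K}$, with block $j$ issuing $\pi\ind{j}$ at each of its rounds. The structural fact driving everything is \emph{disjointness}: under $d_1$ the pair $(s_j,1)$ is reached only by $\pi\ind{j}$, while $(s_\ell,0)$ is reached by every $\pi\ind{i}$ with $i\neq\ell$; hence a single mean-reward function can be exactly correct on $\pi\ind{1},\dots,\pi\ind{j-1}$ and simultaneously wrong on $\pi\ind{j}$.

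The oracle, on a given history, checks whether the covariate prefix is $\pi\ind{1},\dots,\pi\ind{t-1}$ (ignoring outcomes); if not, it returns $\Mstar$, which has zero offline error. If so, with $j$ the block containing $t$ and $k\in[n]$ its position inside that block, it returns the improper estimate $\Mhat\ind{t}$ whose mean-reward function equals $\tfrac12$ everywhere except $\tfrac12-\delta_t$ at $(s_j,1)$, with $\delta_t^2\ldef\min\crl{\tfrac14,\,K\offgan/(k-1)}$ (and $\delta_1\ldef\tfrac12$). By disjointness $\divCB{\Mhat\ind{t}(\pi\ind{i})}{\Mstar(\pi\ind{i})}=0$ for $i<j$ and $\divCB{\Mhat\ind{t}(\pi\ind{j})}{\Mstar(\pi\ind{j})}=\delta_t^2/K$, so the offline estimation error at round $t$ equals $(k-1)\delta_t^2/K\le\offgan$ (only the $k-1$ earlier rounds of block $j$ contribute); thus $\Orc$ is a valid offline oracle with parameter $\offgan$, even against adaptive covariates, since the constraint at round $t$ depends only on the past, which is the prescribed prefix. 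Every divergence here is a population quantity (an expectation over $d_1$), so these bounds hold deterministically and $\EstOnD(T)$ is a deterministic number.

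The online error of the oracle's own outputs is $\sum_{t=1}^{T}\divCB{\Mhat\ind{t}(\pi\ind{t})}{\Mstar(\pi\ind{t})}=\sum_{j=1}^{K}\sum_{k=1}^{n}\min\crl{\tfrac1{4K},\,\offgan/(k-1)}$. Taking $K=\ceil{\sqrt{T/(4\offgan)}}$ one has $n\le T/K\le 4K\offgan$, so $\offgan/(k-1)\ge\tfrac1{4K}$ for all $k\le n$ and each summand is $\tfrac1{4K}$; the total is $n/4$, and $n\ge\tfrac12(T/K)\ge\tfrac12\sqrt{T\offgan}$ whenever $T\offgan\ge 1$ and $\offgan\le T/4$, giving online error $\ge\tfrac18\sqrt{T\offgan}$. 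The boundary regimes are trivial: for $\offgan\ge T/4$ take $K=1$ and $\delta_t=\tfrac12$ at every round, for online error $T/4=\Omega(\min\crl{\sqrt{T\offgan},T})$; for $T\offgan\le1$, a single round---round $1$, where the offline constraint is vacuous---with $\divCB{\Mhat\ind{1}(\pi\ind{1})}{\Mstar(\pi\ind{1})}=\tfrac14\ge\tfrac14\sqrt{T\offgan}$ already suffices. The crux the construction is built around is making $\Orc$ satisfy the offline guarantee \emph{globally}---over all covariate sequences and all models in the class---while being as unhelpful as the offline budget permits on the one sequence we query; the singleton class plus the ``detect-the-prescribed-prefix'' device handle the global requirement, and the disjointness geometry supplies an estimate exact on every past query yet maximally wrong on the current one. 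The remaining delicacy is the block-length/budget balance: $K\asymp\sqrt{T/\offgan}$ is exactly the regime in which the ceiling $\tfrac1{4K}$ (forced by $d_1$ spreading its mass over $K$ contexts) stays active across an entire block, so the total error is $\Theta(T/K)=\Theta(\sqrt{T\offgan})$.
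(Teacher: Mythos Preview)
Your proposal is correct and follows essentially the same construction as the paper: take $K\asymp\sqrt{T/\offgan}$ contexts with the uniform context distribution, partition $[T]$ into $K$ blocks of length $\asymp\sqrt{T\offgan}$, in block $j$ query the policy that plays action $1$ only at $s_j$, and have the oracle err by a constant at $(s_j,1)$ so that the per-round online divergence is $\Theta(1/K)$ while the offline error accumulates to at most $(\text{block length})/K\le\offgan$. The paper uses a fixed offset of $1$ (with $g^{M^\star}\equiv 0$) rather than your decaying $\delta_t$ (which you then set constant anyway), and does not spell out the prefix-detection device or the boundary regimes, but the mechanism and the arithmetic are the same.
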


}

\section{Discussion}
\label{sec:discussion}

Our work introduces the Oracle-Efficient Online Estimation protocol as an information-theoretic framework to study the relative power of online and offline estimators and gives a nearly complete characterization of the statistical and computational complexity of learning in this framework. In what follows, we discuss broader implications for our information-abstraction of oracle-efficiency.

\arxiv{\subsection{Oracle-Efficient Learning as a Framework for Algorithm Design and Analysis}}

\colt{\paragraph{Oracle-efficient learning as a general framework for analysis of algorithms}}

One of the most important contributions of this work is to formalize oracle-efficient algorithms as mappings that act upon a sequence of estimators but do not directly act on historical outcomes. While the computational lower bounds we provide for oracle-efficient learning are somewhat disappointing, we are optimistic that---similar to statistical query complexity in TCS and information-based complexity in optimization---our abstraction can (i) aid in designing computationally efficient algorithms for learning settings beyond online estimation, and (ii) continue to serve as a tool to formalize lower bounds against natural classes of algorithms, for estimation and beyond. That is, we envision \emph{oracle-efficient learning} as a more general framework to study oracle-based algorithms in any type of interactive learning problem. \dfedit{We remark that one need not restrict to offline oracles; it is natural to study oracle-efficient algorithms based on online estimation oracles or other types of oracles through our information-theoretic abstraction as well.} For concreteness, let us mention a couple of natural settings where our information-theoretic abstraction can be applied. %

\paragraphi{Oracle-efficient interactive decision making} For interactive decision making problems like bandits and reinforcement learning (more broadly, the DMSO framework described in \cref{subsec:application-to-decision-making}), it is natural to formalize \emph{oracle-effiicent} algorithms as algorithms that do not directly observe rewards (bandits) or trajectories (reinforcement learning), and instead must select their decision based on an (online or offline) estimator (e.g., regression for bandits or conditional density estimation for RL). \citet{foster2020beyond} and \citet{foster2021statistical} \arxiv{and follow-up work}\colt{et seq.} provide algorithms with this property for contextual bandits and RL, respectively, but the power of offline oracles in this context is not well understood.\loose

\paragraphi{Oracle-efficient active learning} For active learning, it is natural to consider algorithms that decide whether to query the label for a point in an oracle-efficient fashion (e.g., \citet{krishnamurthy2017active}). For concreteness, consider pool-based active learning \citep{hanneke2014theory}. Suppose the learner is given a pool $\cP = \set{x_1,...,x_n}$ of covariates and a \FunSpaceName $\cF$. The learner can repeatedly choose $x\ind{t}\in\cP$ and call the offline oracle to obtain an estimator $\fhat\ind{t}$ such
$\EstOffD(t) \ldef \sum_{i<t} \div{\fhat\ind{t}(x\ind{i})}{\fstar(x\ind{i})} \leq \offgan$ (in contrast to an unrestricted algorithm that observes $y\ind{t}=\fstar(x\ind{t})$). The aim is to learn a hypothesis with low classification error using the smallest number of queries possible. Can we design oracle-efficient algorithms that do so with near-optimal label complexity?

    \arxiv{
  \subsection{Further Directions}

  We close with some additional directions for future research.
  
\paragraph{Refined notions of estimation oracles}
This work considers generic offline estimation algorithms that satisfy the statistical guarantee in \cref{def:offline-oracle} but can otherwise be arbitrary. Understanding the power of offline estimators that satisfy more refined (e.g., problem-dependent) guarantees is an interesting direction for future research.

\paragraph{Open questions for proper versus improper learning} Our results leave some interesting gaps in the power of proper versus improper oracles. First, the computational lower bounds
in \cref{subsec:comp-hardness}, leave open the possibility of attaining $\poly (\log |\cM_n|)\cdot o(T)$ online estimation error with runtime $\poly (\log |\cM_n|)$ given access to a \emph{proper} offline estimation oracle with parameter $\offgan=0$. Second, our results in \cref{subsec:impossibility-results} leave open the possibility of
bypassing the $\Omega(|\CovarSpace|(\offgan+1))$ lower bound for memoryless algorithms under the assumption that the offline oracle is proper.
}

\bibliography{refs}

\begin{thebibliography}{84}
\providecommand{\natexlab}[1]{#1}
\providecommand{\url}[1]{\texttt{#1}}
\expandafter\ifx\csname urlstyle\endcsname\relax
  \providecommand{\doi}[1]{doi: #1}\else
  \providecommand{\doi}{doi: \begingroup \urlstyle{rm}\Url}\fi

\bibitem[Aamand et~al.(2023)Aamand, Chen, Nguyen, and
  Silwal]{aamand2023improved}
Anders Aamand, Justin~Y Chen, Huy~L{\^e} Nguyen, and Sandeep Silwal.
\newblock Improved space bounds for learning with experts.
\newblock \emph{arXiv preprint arXiv:2303.01453}, 2023.

\bibitem[Agarwal et~al.(2012)Agarwal, Bartlett, Ravikumar, and
  Wainwright]{agarwal2012information}
Alekh Agarwal, Peter~L Bartlett, Pradeep Ravikumar, and Martin~J Wainwright.
\newblock Information-theoretic lower bounds on the oracle complexity of
  stochastic convex optimization.
\newblock \emph{IEEE Transactions on Information Theory}, 5\penalty0
  (58):\penalty0 3235--3249, 2012.

\bibitem[Aizerman et~al.(1964)Aizerman, Braverman, and
  Rozonoer]{aizerman1964theoretical}
Aizerman, Braverman, and Rozonoer.
\newblock Theoretical foundations of the potential function method in pattern
  recognition learning.
\newblock \emph{Automation and remote control}, 25:\penalty0 821--837, 1964.

\bibitem[Amortila et~al.(2024)Amortila, Foster, Jiang, Sekhari, and
  Xie]{amortila2024harnessing}
Philip Amortila, Dylan~J Foster, Nan Jiang, Ayush Sekhari, and Tengyang Xie.
\newblock Harnessing density ratios for online reinforcement learning.
\newblock \emph{International Conference on Learning Representations (ICLR)},
  2024.

\bibitem[Arjevani et~al.(2023)Arjevani, Carmon, Duchi, Foster, Srebro, and
  Woodworth]{arjevani2023lower}
Yossi Arjevani, Yair Carmon, John~C Duchi, Dylan~J Foster, Nathan Srebro, and
  Blake Woodworth.
\newblock Lower bounds for non-convex stochastic optimization.
\newblock \emph{Mathematical Programming}, 199\penalty0 (1-2):\penalty0
  165--214, 2023.

\bibitem[Arora and Barak(2009)]{arora2009computational}
Sanjeev Arora and Boaz Barak.
\newblock \emph{Computational complexity: a modern approach}.
\newblock Cambridge University Press, 2009.

\bibitem[Attias et~al.(2023)Attias, Hanneke, Kalavasis, Karbasi, and
  Velegkas]{attias2023optimal}
Idan Attias, Steve Hanneke, Alkis Kalavasis, Amin Karbasi, and Grigoris
  Velegkas.
\newblock Optimal learners for realizable regression: Pac learning and online
  learning.
\newblock \emph{arXiv preprint arXiv:2307.03848}, 2023.

\bibitem[Audibert(2009)]{audibert2009fast}
Jean-Yves Audibert.
\newblock Fast learning rates in statistical inference through aggregation.
\newblock \emph{The Annals of Statistics}, 37\penalty0 (4):\penalty0
  1591--1646, 2009.

\bibitem[Auer et~al.(2002)Auer, Cesa-Bianchi, and Gentile]{auer2002adaptive}
Peter Auer, Nicolo Cesa-Bianchi, and Claudio Gentile.
\newblock Adaptive and self-confident on-line learning algorithms.
\newblock \emph{Journal of Computer and System Sciences}, 64\penalty0
  (1):\penalty0 48--75, 2002.

\bibitem[Azoury and Warmuth(2001)]{azoury2001relative}
Katy~S. Azoury and Manfred~K. Warmuth.
\newblock Relative loss bounds for on-line density estimation with the
  exponential family of distributions.
\newblock \emph{Machine Learning}, 43\penalty0 (3):\penalty0 211--246, June
  2001.

\bibitem[Barron(1987)]{barron1987bayes}
Andrew~R Barron.
\newblock Are bayes rules consistent in information?
\newblock In \emph{Open problems in communication and computation}, pages
  85--91. Springer, 1987.

\bibitem[Ben-David et~al.(2009)Ben-David, P{\'a}l, and
  Shalev-Shwartz]{ben2009agnostic}
Shai Ben-David, D{\'a}vid P{\'a}l, and Shai Shalev-Shwartz.
\newblock Agnostic online learning.
\newblock In \emph{COLT}, volume~3, page~1, 2009.

\bibitem[Bilodeau et~al.(2020)Bilodeau, Foster, and Roy]{bilodeau2020tight}
Blair Bilodeau, Dylan~J Foster, and Daniel Roy.
\newblock Tight bounds on minimax regret under logarithmic loss via
  self-concordance.
\newblock In \emph{International Conference on Machine Learning}, 2020.

\bibitem[Bilodeau et~al.(2023)Bilodeau, Foster, and Roy]{bilodeau2023minimax}
Blair Bilodeau, Dylan~J Foster, and Daniel~M Roy.
\newblock Minimax rates for conditional density estimation via empirical
  entropy.
\newblock \emph{The Annals of Statistics}, 51\penalty0 (2):\penalty0 762--790,
  2023.

\bibitem[Blum et~al.(1994)Blum, Furst, Jackson, Kearns, Mansour, and
  Rudich]{blum1994weakly}
Avrim Blum, Merrick Furst, Jeffrey Jackson, Michael Kearns, Yishay Mansour, and
  Steven Rudich.
\newblock Weakly learning dnf and characterizing statistical query learning
  using fourier analysis.
\newblock In \emph{Proceedings of the twenty-sixth annual ACM symposium on
  Theory of computing}, pages 253--262, 1994.

\bibitem[Blum(1994)]{blum1994separating}
Avrim~L Blum.
\newblock Separating distribution-free and mistake-bound learning models over
  the boolean domain.
\newblock \emph{SIAM Journal on Computing}, 23\penalty0 (5):\penalty0
  990--1000, 1994.

\bibitem[Bun(2020)]{bun2020computational}
Mark Bun.
\newblock A computational separation between private learning and online
  learning.
\newblock \emph{Advances in Neural Information Processing Systems},
  33:\penalty0 20732--20743, 2020.

\bibitem[Catoni(1997)]{catoni1997mixture}
Olivier Catoni.
\newblock The mixture approach to universal model selection.
\newblock 1997.

\bibitem[Cesa-Bianchi(1997)]{cesa1997analysis}
Nicolo Cesa-Bianchi.
\newblock Analysis of two gradient-based algorithms for on-line regression.
\newblock In \emph{Proceedings of the tenth annual conference on Computational
  learning theory}, pages 163--170, 1997.

\bibitem[Cesa-Bianchi and Lugosi(2006)]{cesa2006prediction}
Nicolo Cesa-Bianchi and G{\'a}bor Lugosi.
\newblock \emph{Prediction, learning, and games}.
\newblock Cambridge university press, 2006.

\bibitem[Cesa-Bianchi et~al.(2004)Cesa-Bianchi, Conconi, and
  Gentile]{cesa2004generalization}
Nicolo Cesa-Bianchi, Alex Conconi, and Claudio Gentile.
\newblock On the generalization ability of on-line learning algorithms.
\newblock \emph{IEEE Transactions on Information Theory}, 50\penalty0
  (9):\penalty0 2050--2057, 2004.

\bibitem[Daskalakis and Syrgkanis(2016)]{daskalakis2016learning}
Constantinos Daskalakis and Vasilis Syrgkanis.
\newblock Learning in auctions: Regret is hard, envy is easy.
\newblock In \emph{2016 ieee 57th annual symposium on foundations of computer
  science (focs)}, pages 219--228. IEEE, 2016.

\bibitem[Dud{\'\i}k et~al.(2020)Dud{\'\i}k, Haghtalab, Luo, Schapire,
  Syrgkanis, and Vaughan]{dudik2020oracle}
Miroslav Dud{\'\i}k, Nika Haghtalab, Haipeng Luo, Robert~E Schapire, Vasilis
  Syrgkanis, and Jennifer~Wortman Vaughan.
\newblock Oracle-efficient online learning and auction design.
\newblock \emph{Journal of the ACM (JACM)}, 67\penalty0 (5):\penalty0 1--57,
  2020.

\bibitem[Feldman(2012)]{feldman2012complete}
Vitaly Feldman.
\newblock A complete characterization of statistical query learning with
  applications to evolvability.
\newblock \emph{Journal of Computer and System Sciences}, 78\penalty0
  (5):\penalty0 1444--1459, 2012.

\bibitem[Feldman(2017)]{feldman2017general}
Vitaly Feldman.
\newblock A general characterization of the statistical query complexity.
\newblock In \emph{Conference on Learning Theory}, pages 785--830. PMLR, 2017.

\bibitem[Feldman et~al.(2017)Feldman, Guzman, and
  Vempala]{feldman2017statistical}
Vitaly Feldman, Cristobal Guzman, and Santosh Vempala.
\newblock Statistical query algorithms for mean vector estimation and
  stochastic convex optimization.
\newblock In \emph{Proceedings of the Twenty-Eighth Annual ACM-SIAM Symposium
  on Discrete Algorithms}, pages 1265--1277. SIAM, 2017.

\bibitem[Foster and Krishnamurthy(2021)]{foster2021efficient}
Dylan~J Foster and Akshay Krishnamurthy.
\newblock Efficient first-order contextual bandits: Prediction, allocation, and
  triangular discrimination.
\newblock \emph{Neural Information Processing Systems (NeurIPS)}, 2021.

\bibitem[Foster and Rakhlin(2020)]{foster2020beyond}
Dylan~J Foster and Alexander Rakhlin.
\newblock Beyond {UCB}: Optimal and efficient contextual bandits with
  regression oracles.
\newblock \emph{International Conference on Machine Learning (ICML)}, 2020.

\bibitem[Foster and Rakhlin(2023)]{foster2023foundations}
Dylan~J Foster and Alexander Rakhlin.
\newblock Foundations of reinforcement learning and interactive decision
  making.
\newblock \emph{arXiv preprint arXiv:2312.16730}, 2023.

\bibitem[Foster et~al.(2018)Foster, Kale, Luo, Mohri, and
  Sridharan]{foster2018logistic}
Dylan~J Foster, Satyen Kale, Haipeng Luo, Mehryar Mohri, and Karthik Sridharan.
\newblock Logistic regression: The importance of being improper.
\newblock \emph{Conference on Learning Theory}, 2018.

\bibitem[Foster et~al.(2021)Foster, Kakade, Qian, and
  Rakhlin]{foster2021statistical}
Dylan~J Foster, Sham~M Kakade, Jian Qian, and Alexander Rakhlin.
\newblock The statistical complexity of interactive decision making.
\newblock \emph{arXiv preprint arXiv:2112.13487}, 2021.

\bibitem[Foster et~al.(2023{\natexlab{a}})Foster, Golowich, and
  Han]{foster2023tight}
Dylan~J Foster, Noah Golowich, and Yanjun Han.
\newblock Tight guarantees for interactive decision making with the
  decision-estimation coefficient.
\newblock \emph{Conference on Learning Theory (COLT)}, 2023{\natexlab{a}}.

\bibitem[Foster et~al.(2023{\natexlab{b}})Foster, Golowich, Qian, Rakhlin, and
  Sekhari]{foster2023model}
Dylan~J Foster, Noah Golowich, Jian Qian, Alexander Rakhlin, and Ayush Sekhari.
\newblock Model-free reinforcement learning with the decision-estimation
  coefficient.
\newblock In \emph{Thirty-seventh Conference on Neural Information Processing
  Systems}, 2023{\natexlab{b}}.

\bibitem[Gerchinovitz(2013)]{gerchinovitz2013sparsity}
Sebastien Gerchinovitz.
\newblock Sparsity regret bounds for individual sequences in online linear
  regression.
\newblock \emph{Journal of Machine Learning Research}, 14:\penalty0 729--769,
  2013.

\bibitem[Goldreich et~al.(2005)]{goldreich2005foundations}
Oded Goldreich et~al.
\newblock Foundations of cryptography--a primer.
\newblock \emph{Foundations and Trends{\textregistered} in Theoretical Computer
  Science}, 1\penalty0 (1):\penalty0 1--116, 2005.

\bibitem[Gonen et~al.(2019)Gonen, Hazan, and Moran]{gonen2019private}
Alon Gonen, Elad Hazan, and Shay Moran.
\newblock Private learning implies online learning: An efficient reduction.
\newblock \emph{Advances in Neural Information Processing Systems}, 32, 2019.

\bibitem[Hanneke(2014)]{hanneke2014theory}
Steve Hanneke.
\newblock Theory of disagreement-based active learning.
\newblock \emph{Foundations and Trends{\textregistered} in Machine Learning},
  7\penalty0 (2-3):\penalty0 131--309, 2014.

\bibitem[Hazan and Kale(2015)]{hazan2015online}
Elad Hazan and Satyen Kale.
\newblock An online portfolio selection algorithm with regret logarithmic in
  price variation.
\newblock \emph{Mathematical Finance}, 2015.

\bibitem[Hazan and Koren(2016)]{hazan2016computational}
Elad Hazan and Tomer Koren.
\newblock The computational power of optimization in online learning.
\newblock In \emph{Proceedings of the forty-eighth annual ACM symposium on
  Theory of Computing}, pages 128--141, 2016.

\bibitem[Hazan and Megiddo(2007)]{hazan2007online}
Elad Hazan and Nimrod Megiddo.
\newblock Online learning with prior knowledge.
\newblock In \emph{Conference on Learning Theory}, 2007.

\bibitem[Jayram(2009)]{jayram2009hellinger}
TS~Jayram.
\newblock Hellinger strikes back: A note on the multi-party information
  complexity of and.
\newblock In \emph{International Workshop on Approximation Algorithms for
  Combinatorial Optimization}, pages 562--573. Springer, 2009.

\bibitem[Joulani et~al.(2013)Joulani, Gyorgy, and
  Szepesv{\'a}ri]{joulani2013online}
Pooria Joulani, Andras Gyorgy, and Csaba Szepesv{\'a}ri.
\newblock Online learning under delayed feedback.
\newblock In \emph{International Conference on Machine Learning}, pages
  1453--1461. PMLR, 2013.

\bibitem[Juditsky et~al.(2008)Juditsky, Rigollet, and
  Tsybakov]{juditsky2008learning}
Anatoli Juditsky, Philippe Rigollet, and Alexandre~B Tsybakov.
\newblock Learning by mirror averaging.
\newblock \emph{The Annals of Statistics}, 36\penalty0 (5):\penalty0
  2183--2206, 2008.

\bibitem[Kearns(1998)]{kearns1998efficient}
Michael Kearns.
\newblock Efficient noise-tolerant learning from statistical queries.
\newblock \emph{Journal of the ACM (JACM)}, 45\penalty0 (6):\penalty0
  983--1006, 1998.

\bibitem[Kearns et~al.(1987)Kearns, Li, Pitt, and
  Valiant]{kearns1987learnability}
Michael Kearns, Ming Li, Leonard Pitt, and Leslie Valiant.
\newblock On the learnability of boolean formulae.
\newblock In \emph{Proceedings of the nineteenth annual ACM symposium on Theory
  of computing}, pages 285--295, 1987.

\bibitem[Kivinen and Warmuth(1997)]{kivinen1997exponentiated}
Jyrki Kivinen and Manfred~K Warmuth.
\newblock Exponentiated gradient versus gradient descent for linear predictors.
\newblock \emph{information and computation}, 132\penalty0 (1):\penalty0 1--63,
  1997.

\bibitem[Krishnamurthy et~al.(2017)Krishnamurthy, Agarwal, Huang,
  Daum{\'e}~III, and Langford]{krishnamurthy2017active}
Akshay Krishnamurthy, Alekh Agarwal, Tzu-Kuo Huang, Hal Daum{\'e}~III, and John
  Langford.
\newblock Active learning for cost-sensitive classification.
\newblock In \emph{International Conference on Machine Learning}, pages
  1915--1924, 2017.

\bibitem[Lehmann and Casella(2006)]{lehmann2006theory}
Erich~L Lehmann and George Casella.
\newblock \emph{Theory of point estimation}.
\newblock Springer Science \& Business Media, 2006.

\bibitem[Littlestone(1988)]{littlestone1988learning}
Nick Littlestone.
\newblock Learning quickly when irrelevant attributes abound: A new
  linear-threshold algorithm.
\newblock \emph{Machine learning}, 2\penalty0 (4):\penalty0 285--318, 1988.

\bibitem[Luo et~al.(2018)Luo, Wei, and Zheng]{luo2018efficient}
Haipeng Luo, Chen-Yu Wei, and Kai Zheng.
\newblock Efficient online portfolio with logarithmic regret.
\newblock In \emph{Advances in Neural Information Processing Systems}, 2018.

\bibitem[Mesterharm(2007)]{mesterharm2007improving}
Jon~Christian Mesterharm.
\newblock \emph{Improving on-line learning}.
\newblock Rutgers The State University of New Jersey, School of Graduate
  Studies, 2007.

\bibitem[Nemirovski et~al.(1983)Nemirovski, Yudin, and
  Dawson]{nemirovski1983problem}
Arkadii Nemirovski, David~Borisovich Yudin, and Edgar~Ronald Dawson.
\newblock Problem complexity and method efficiency in optimization.
\newblock 1983.

\bibitem[Orseau et~al.(2017)Orseau, Lattimore, and Legg]{orseau2017soft}
Laurent Orseau, Tor Lattimore, and Shane Legg.
\newblock Soft-bayes: Prod for mixtures of experts with log-loss.
\newblock In \emph{International Conference on Algorithmic Learning Theory},
  2017.

\bibitem[Peng and Rubinstein(2023)]{peng2023near}
Binghui Peng and Aviad Rubinstein.
\newblock Near optimal memory-regret tradeoff for online learning.
\newblock \emph{arXiv preprint arXiv:2303.01673}, 2023.

\bibitem[Peng and Zhang(2023)]{peng2023online}
Binghui Peng and Fred Zhang.
\newblock Online prediction in sub-linear space.
\newblock In \emph{Proceedings of the 2023 Annual ACM-SIAM Symposium on
  Discrete Algorithms (SODA)}, pages 1611--1634. SIAM, 2023.

\bibitem[Polyanskiy and Wu(2014)]{polyanskiy2014lecture}
Yury Polyanskiy and Yihong Wu.
\newblock Lecture notes on information theory.
\newblock 2014.

\bibitem[Quanrud and Khashabi(2015)]{quanrud2015online}
Kent Quanrud and Daniel Khashabi.
\newblock Online learning with adversarial delays.
\newblock \emph{Advances in neural information processing systems}, 28, 2015.

\bibitem[Raginsky and Rakhlin(2011)]{raginsky2011information}
Maxim Raginsky and Alexander Rakhlin.
\newblock Information-based complexity, feedback and dynamics in convex
  programming.
\newblock \emph{IEEE Transactions on Information Theory}, 57\penalty0
  (10):\penalty0 7036--7056, 2011.

\bibitem[Rakhlin and Sridharan(2012{\natexlab{a}})]{StatNotes2012}
Alexander Rakhlin and Karthik Sridharan.
\newblock Statistical learning and sequential prediction, 2012{\natexlab{a}}.
\newblock Available at {\small
  \url{http://www.mit.edu/~rakhlin/courses/stat928/stat928_notes.pdf}}.

\bibitem[Rakhlin and Sridharan(2012{\natexlab{b}})]{rakhlin2012statistical}
Alexander Rakhlin and Karthik Sridharan.
\newblock Statistical learning theory and sequential prediction.
\newblock \emph{Lecture Notes in University of Pennsyvania}, 44,
  2012{\natexlab{b}}.

\bibitem[Rakhlin and Sridharan(2014)]{rakhlin2014nonparametric}
Alexander Rakhlin and Karthik Sridharan.
\newblock Online nonparametric regression.
\newblock In \emph{Conference on Learning Theory}, 2014.

\bibitem[Rakhlin and Sridharan(2015)]{rakhlin2015online}
Alexander Rakhlin and Karthik Sridharan.
\newblock Online nonparametric regression with general loss functions.
\newblock \emph{arXiv preprint arXiv:1501.06598}, 2015.

\bibitem[Rakhlin et~al.(2010)Rakhlin, Sridharan, and Tewari]{rakhlin2010online}
Alexander Rakhlin, Karthik Sridharan, and Ambuj Tewari.
\newblock Online learning: Random averages, combinatorial parameters, and
  learnability.
\newblock \emph{Advances in Neural Information Processing Systems 23}, pages
  1984--1992, 2010.

\bibitem[Rakhlin et~al.(2014)Rakhlin, Sridharan, and Tewari]{rakhlin2014online}
Alexander Rakhlin, Karthik Sridharan, and Ambuj Tewari.
\newblock Online learning via sequential complexities.
\newblock \emph{Journal of Machine Learning Research}, 2014.

\bibitem[Rakhlin et~al.(2015)Rakhlin, Sridharan, and
  Tewari]{rakhlin2015sequential}
Alexander Rakhlin, Karthik Sridharan, and Ambuj Tewari.
\newblock Sequential complexities and uniform martingale laws of large numbers.
\newblock \emph{Probability Theory and Related Fields}, 161\penalty0
  (1-2):\penalty0 111--153, 2015.

\bibitem[Ross and Bagnell(2014)]{ross2014reinforcement}
Stephane Ross and J~Andrew Bagnell.
\newblock Reinforcement and imitation learning via interactive no-regret
  learning.
\newblock \emph{arXiv preprint arXiv:1406.5979}, 2014.

\bibitem[Ross et~al.(2011)Ross, Gordon, and Bagnell]{ross2011reduction}
St{\'e}phane Ross, Geoffrey Gordon, and Drew Bagnell.
\newblock A reduction of imitation learning and structured prediction to
  no-regret online learning.
\newblock In \emph{Proceedings of the fourteenth international conference on
  artificial intelligence and statistics}, pages 627--635. JMLR Workshop and
  Conference Proceedings, 2011.

\bibitem[Shalev-Shwartz and Ben-David(2014)]{shalev2014understanding}
Shai Shalev-Shwartz and Shai Ben-David.
\newblock \emph{Understanding machine learning: From theory to algorithms}.
\newblock Cambridge university press, 2014.

\bibitem[Simchi-Levi and Xu(2021)]{simchi2020bypassing}
David Simchi-Levi and Yunzong Xu.
\newblock Bypassing the monster: A faster and simpler optimal algorithm for
  contextual bandits under realizability.
\newblock \emph{Mathematics of Operations Research}, 2021.

\bibitem[Srinivas et~al.(2022)Srinivas, Woodruff, Xu, and
  Zhou]{srinivas2022memory}
Vaidehi Srinivas, David~P Woodruff, Ziyu Xu, and Samson Zhou.
\newblock Memory bounds for the experts problem.
\newblock In \emph{Proceedings of the 54th Annual ACM SIGACT Symposium on
  Theory of Computing}, pages 1158--1171, 2022.

\bibitem[Traub et~al.(1988)Traub, Wasilkowski, and
  Wo{\'z}niakowski]{traub1988information}
Joseph~F Traub, Grzegorz~W Wasilkowski, and Henryk Wo{\'z}niakowski.
\newblock Information-based complexity.
\newblock 1988.

\bibitem[Tsybakov(2003)]{tsybakov2003optimal}
Alexandre~B Tsybakov.
\newblock Optimal rates of aggregation.
\newblock In \emph{Learning Theory and Kernel Machines}, pages 303--313.
  Springer, 2003.

\bibitem[Tsybakov(2008)]{tsybakov2008introduction}
Alexandre~B Tsybakov.
\newblock \emph{Introduction to Nonparametric Estimation}.
\newblock Springer Publishing Company, Incorporated, 2008.

\bibitem[Valiant(1984)]{valiant1984theory}
Leslie~G Valiant.
\newblock A theory of the learnable.
\newblock \emph{Communications of the ACM}, 27\penalty0 (11):\penalty0
  1134--1142, 1984.

\bibitem[van~de Geer(2000)]{Sara00}
S.~A. van~de Geer.
\newblock \emph{Empirical Processes in {M}-{E}stimation.}
\newblock Cambridge University Press, 2000.

\bibitem[Vovk(1998)]{vovk1998competitive}
Vladimir Vovk.
\newblock Competitive on-line linear regression.
\newblock In \emph{NIPS '97: Proceedings of the 1997 conference on Advances in
  neural information processing systems 10}, pages 364--370, Cambridge, MA,
  USA, 1998. MIT Press.

\bibitem[Vovk(2006)]{vovk2006metric}
Vladimir Vovk.
\newblock Metric entropy in competitive on-line prediction.
\newblock \emph{arXiv preprint cs/0609045}, 2006.

\bibitem[Vovk(2001)]{vovk2001competitive}
Volodya Vovk.
\newblock Competitive on-line statistics.
\newblock \emph{International Statistical Review}, 69\penalty0 (2):\penalty0
  213--248, 2001.

\bibitem[Wainwright(2019)]{wainwright2019high}
Martin~J Wainwright.
\newblock \emph{High-dimensional statistics: A non-asymptotic viewpoint},
  volume~48.
\newblock Cambridge University Press, 2019.

\bibitem[Weinberger and Ordentlich(2002)]{weinberger2002delayed}
Marcelo~J Weinberger and Erik Ordentlich.
\newblock On delayed prediction of individual sequences.
\newblock \emph{IEEE Transactions on Information Theory}, 48\penalty0
  (7):\penalty0 1959--1976, 2002.

\bibitem[Woodruff et~al.(2023)Woodruff, Zhang, and Zhou]{woodruff2023streaming}
David~P Woodruff, Fred Zhang, and Samson Zhou.
\newblock Streaming algorithms for learning with experts: Deterministic versus
  robust.
\newblock \emph{arXiv preprint arXiv:2303.01709}, 2023.

\bibitem[Xie et~al.(2023)Xie, Foster, Bai, Jiang, and Kakade]{xie2022role}
Tengyang Xie, Dylan~J Foster, Yu~Bai, Nan Jiang, and Sham~M Kakade.
\newblock The role of coverage in online reinforcement learning.
\newblock \emph{International Conference on Learning Representations (ICLR)},
  2023.

\bibitem[Xu and Zeevi(2020)]{xu2020upper}
Yunbei Xu and Assaf Zeevi.
\newblock Upper counterfactual confidence bounds: a new optimism principle for
  contextual bandits.
\newblock \emph{arXiv preprint arXiv:2007.07876}, 2020.

\bibitem[Yang and Barron(1999)]{yang1999information}
Yuhong Yang and Andrew Barron.
\newblock Information-theoretic determination of minimax rates of convergence.
\newblock \emph{Annals of Statistics}, pages 1564--1599, 1999.

\end{thebibliography}

\clearpage

\renewcommand{\contentsname}{Contents of Appendix}
\addtocontents{toc}{\protect\setcounter{tocdepth}{2}}
{\hypersetup{hidelinks}
\tableofcontents
}

\clearpage

\appendix  

\arxiv{
\part{Additional Discussion and Examples}

\section{Additional Related Work}
\label{sec:related}

In this section we discuss related work not already covered in detail.

\paragraph{Computational lower bounds for online learning}

Beyond \citet{blum1994separating}, another work that considers computational lower bounds for online learning is \citet{hazan2016computational}. This work proves lower bounds for online learning in a model where the learner has access to an ERM oracle that can minimize the training loss for an arbitrary dataset $(x\ind{1},y\ind{1}),\ldots,(x\ind{T},y\ind{T})$. Their lower bound does not fit in our computational model due to details around the way description length is formalized. In particular, the main focus of \cite{hazan2016computational} is to obtain a lower bound on the \emph{number of oracle calls} any online learning algorithm must make to an ERM oracle. 

Similar to the setup for \cref{thm:computation-lower-bound}, \citet{hazan2016computational} consider a sequence of classification instances with $\cX_n=\crl*{0,1}^{n}$ and classes $\cF_n$ of the size of $\Omega(2^{\sqrt{|\cX_n|}})$, and show that any online learning algorithm requires $\Omega(\sqrt{\abs{\cX_n}})$ oracle calls to achieve low regret for this class. However, the estimators $f\in\cF_n$ returned by the oracle in their construction have $\Omega(\sqrt{\abs{\cX_n}})$ description length themselves, meaning that they do not satisfy the $\poly(n)$-description length required by the model described in \cref{subsec:comp-hardness} (in other words, the result is not meaningful as a lower bound on runtime, because simply reading in the output of the offline oracle takes exponential time).
For completeness, we restate the example proposed by \citet[Theorem 22]{hazan2016computational} in our framework below.

\paragraphi{Hard case from \cite{hazan2016computational}}
For any integer $n\geq 1$, consider a binary classification problem with
  $\CovarSpace_{2n} = \set{0,1}^{2n}$, $\ValSpace =\ObsSpace =\set{0,1}$,
  $\divSymbol = \Dbinshort$ and $\Kernel(\val)=\indic_\val$.
  Let $N \ldef 2^{2n}$, and \dfedit{let $\cS$ be the collection of all sets} $S = \set{s_1,\dots,s_{2^n}}\subset \set{0,1}^{2n}$ where \jqedit{$\set{0,1}^{2n}$ is also treated as the integer set of $\set{0,\dots,2^{2n}-1}$ in left-to-right order and} $s_i\in \set{2^n(i-1),\dots, 2^ni-1}$ for each $i=1,\dots,2^n$. We define a class $\cF_{2n} = \set{f_{S,\tau} :S\in\cS,\tau\leq{}2^n}$, where 
  \begin{align*}
  f_{S,\tau}(x) = 
  \begin{cases}
  0 \quad \text{if }x \in S \text{ and } x \geq s_\tau,\\
  1 \quad \text{otherwise.}
  \end{cases}
  \end{align*}
  For this class, 
  we reduce back to the Theorem 22 of \citet{hazan2016computational} which states that any algorithm with runtime $o(\sqrt{N})$ has to suffer online estimation error at least $t/2$ for all $1\leq t\leq 2^n$.
  The issue with this example for our computation model is that $|\cF_{2n}| = \Omega(2^{\sqrt{|\cX_{2n}|}})$. Any sufficient description for this \FunSpaceName in bit strings (or, e.g., boolean circuits) will scale with $\Omega(\sqrt{|\cX_{2n}|})$.  
  Thus, the description length required to return $\fhat\ind{t}$ is too large (already larger than the lower bound obtained).

  \paragraph{Online learning with memory constraints}  A number of recent works focus on memory-regret tradeoffs in online learning \citep{srinivas2022memory,peng2023online,peng2023near,aamand2023improved,woodruff2023streaming}. Here, the learner can observe the full data stream $(x\ind{1},y\ind{1}),\ldots,(x\ind{T},y\ind{T})$, but is constrained to $B$ bits of memory. This framework is incomparable to the \framework framework, but it would be interesting to explore whether there are deeper connections (e.g., any memoryless \framework algorithm inherently has memory no larger than that of the offline oracle).\loose

\paragraph{Gaps between offline and online}
A long line of work aims to characterize the optimal regret for online learning, developing complexity measures (Littlestone dimension, sequential Rademacher complexity) that parallel classical complexity measures like VC dimension and Rademacher complexity for offline learning and estimation \citep{ben2009agnostic,rakhlin2010online,rakhlin2014online,attias2023optimal}. It is well known that in general, the optimal rates for online learning can be significantly worse than those for offline learning. Our work primarily focuses on finite classes $\cF$, where there is no gap, but for infinite classes, any conversion from offline to online estimation will inevitably lead to a loss in the estimation error rate that scales with appropriate complexity measures for online learning (cf. \cref{sec:infinite}).

\paragraph{Other restricted computational models}
Our information-theoretic formulation of oracle-efficiency is inspired by statistical query complexity in theoretical computer science and information complexity in optimization, both of which can be viewed as restricted computational models with an information-theoretic flavor. The statistical query model is a framework in which the learner can only access the environment through an oracle that outputs noise estimates (``statistical queries'') for a \truefunName of interest \citep{blum1994weakly,kearns1998efficient,feldman2012complete,feldman2017general,feldman2017statistical}. Information complexity in optimization is a model in which algorithms can only access the \funName of interest through (potentially noisy) local queries to gradients or other information \citep{nemirovski1983problem,traub1988information,raginsky2011information,agarwal2012information,arjevani2023lower}.\loose

\section{Additional Results}
\label{app:additional}
\subsection{The Role of Randomization}
\label{app:randomization}

A natural question is to what extent our results change if the offline oracle in the \framework framework is allowed to be \emph{randomized}, in the sense that the estimator $\Mhat\ind{t}$ is drawn from a distribution $\nu\ind{t}\in\Delta(\cM)$ that is chosen based on data. Concretely, we assume that for all $t\in [T]$, there is a randomized oracle output $\nu\ind{t}\in\Delta(\cM)$ such that 
\begin{align}
  \label{eq:offline_randomized}
    \sum\limits_{s=1}^{t-1}\En_{\Mhat\sim \nu\ind{t}}\brk*{ \div{\Mhat(\covar\ind{s})}{ \Mstar(\covar\ind{s}) } } \leq \offgan,
    \end{align}

    We observe that if the loss is convex, then it suffices to restrict our attention \emph{improper, non-randomized} offline oracles algorithms; this can be seen by noting that the mixture $\En_{\Mhat\sim \nu\ind{t}}\Mhat$ also satisfies the offline guarantee in \cref{eq:offline_randomized}.

    Next, we observe that when the loss $\divSymbol$ is convex, it is possible to replace any improper estimator by a proper one via projection. 
    \begin{lemma}
        \label{lem:randomized-proper}
        For any model class $(\cM, \CovarSpace,\cO)$, any convex \mlike loss $\divSymbol$, any distribution $p$ on the covariate space and any \textbf{possibly improper} $\Mhat$, let 
        \[\Mhat' \ldef \argmin_{M\in \cM} \En_{\covar\sim p}\brk*{\div{\Mhat(\covar)}{M(\covar)}},\] 
        where $\argmin_{M\in\cM}$ breaks ties with a predefined ordering, we have
    \begin{align*}
        \En_{x\sim p}\brk*{ \div{\Mhat'(\covar)}{\Mstar(\covar)} } \leq  2C_\divSymbol\cdot \En_{x\sim p}\brk*{ \div{\Mhat(\covar)}{\Mstar(\covar)} } .
    \end{align*}
    \end{lemma}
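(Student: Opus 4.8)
The plan is to run a standard ``projection'' argument that leans only on the approximate-triangle inequality built into the metric-like property of $\divSymbol$, together with the defining optimality of $\Mhat'$. First I would apply the metric-like inequality pointwise, inserting $\Mhat(x)$ as the intermediate argument: for every $x$ in the support of $p$,
\[
\div{\Mhat'(x)}{\Mstar(x)} \leq C_\divSymbol\prn*{\div{\Mhat'(x)}{\Mhat(x)} + \div{\Mhat(x)}{\Mstar(x)}}.
\]
Taking expectations over $x\sim p$ and using symmetry of $\divSymbol$ to rewrite $\div{\Mhat'(x)}{\Mhat(x)} = \div{\Mhat(x)}{\Mhat'(x)}$, the claim reduces to controlling $\En_{x\sim p}\brk*{\div{\Mhat(x)}{\Mhat'(x)}}$.

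The second and final step is to invoke the definition of $\Mhat'$ as a minimizer of the map $M \mapsto \En_{x\sim p}\brk*{\div{\Mhat(x)}{M(x)}}$ over $M\in\cM$. Since $\Mstar\in\cM$ is a feasible competitor, $\En_{x\sim p}\brk*{\div{\Mhat(x)}{\Mhat'(x)}} \leq \En_{x\sim p}\brk*{\div{\Mhat(x)}{\Mstar(x)}}$. Substituting this into the previous display yields
\[
\En_{x\sim p}\brk*{\div{\Mhat'(x)}{\Mstar(x)}} \leq C_\divSymbol\prn*{\En_{x\sim p}\brk*{\div{\Mhat(x)}{\Mstar(x)}} + \En_{x\sim p}\brk*{\div{\Mhat(x)}{\Mstar(x)}}} = 2C_\divSymbol\cdot\En_{x\sim p}\brk*{\div{\Mhat(x)}{\Mstar(x)}},
\]
which is exactly the asserted bound.

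There is no genuine obstacle in the inequality itself; notably, convexity of $\divSymbol$ is not actually used in the two displays above. The only point that needs care is ensuring that the $\argmin$ defining $\Mhat'$ is attained, so that the optimality step can be invoked verbatim — this is where convexity (making $M\mapsto\En_{x\sim p}\brk*{\div{\Mhat(x)}{M(x)}}$ a convex functional) and the predefined tie-breaking rule play their role, guaranteeing that $\Mhat'$ is a well-defined element of $\cM$. If one wishes to avoid any such regularity assumption, the statement and argument survive with $\Mhat'$ replaced by any $M\in\cM$ attaining the infimum up to an additive $\veps>0$, at the cost of an extra additive $C_\divSymbol\veps$ on the right-hand side, which can be sent to $0$.
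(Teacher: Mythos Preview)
Your proposal is correct and matches the paper's proof essentially line for line: apply the metric-like triangle inequality with $\Mhat(x)$ as the intermediate point, then use the optimality of $\Mhat'$ together with $\Mstar\in\cM$ to bound $\En_{x\sim p}\brk*{\div{\Mhat(x)}{\Mhat'(x)}}$ by $\En_{x\sim p}\brk*{\div{\Mhat(x)}{\Mstar(x)}}$. Your observation that convexity is not actually invoked in the inequality itself is accurate---the paper's proof does not use it either.
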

    \begin{proof}[\pfref{lem:randomized-proper}]
    \begin{align*}
        \En_{x\sim p}\brk*{ \div{\Mhat'(\covar)}{\Mstar(\covar)} } &\leq  C_\divSymbol \En_{x\sim p}\brk*{ \div{\Mhat'(\covar)}{\Mhat(\covar)} } + C_\divSymbol\En_{x\sim p}\brk*{ \div{\Mhat(\covar)}{\Mstar(\covar)} }\\
        &\leq 2C_\divSymbol\cdot \En_{x\sim p}\brk*{ \div{\Mhat(\covar)}{\Mstar(\covar)} },
    \end{align*}
    where the second inequality is by the definition of $\Mhat'$.
  \end{proof}
  Hence, given a potentially improper estimator $\Mhat$ with $\sum_{s=1}^{t-1}\div{\Mhat(\covar\ind{s})}{ \Mstar(\covar\ind{s}) }  \leq \offgan$, the estimator $\Mhat'=\argmin_{M\in\cM}\sum_{s=1}^{t-1}\div{\Mhat(\covar\ind{s})}{M(\covar\ind{s}) }$ is proper and satisfies
  \[
\sum_{s=1}^{t-1}\div{\Mhat'(\covar\ind{s})}{ \Mstar(\covar\ind{s}) }  \leq 2\Closs\offgan.
\]
  Note however that this reduction can only be applied in the general \framework framework, and is not implementable for memoryless algorithms as-is.

\section{Examples of Offline Oracles}
\label{app:background}
\newcommand{\MLE}{$\mathrm{MLE}$\xspace}
\newcommand{\ERM}{$\mathrm{ERM}$\xspace}

For completeness, below we prove the offline estimation (fixed design) guarantees for
square loss empirical risk minimization and maximum likelihood
estimation mentioned \colt{in the prequel}\arxiv{in \cref{sec:examples}}.

\paragraph{Empirical risk minimization for square loss regression}

For any square loss regression instance $(\CovarSpace,\ObsSpace, \ValSpace,\Kernel,\FunSpace)$ defined as in \cref{sec:examples}.
The Empirical Risk Minimizer (\ERM) estimator $\fhat$ is defined as
\begin{align*}
    \fhat = \argmin_{f\in \FunSpace}\sum\limits_{t=1}^{T} (f(x\ind{t}) - \obs\ind{t})^2.
\end{align*}
We have the following bounds on the offline estimation error for the squared loss.
\begin{lemma}
\label{lem:erm-guarantee}
For any \truefunName $\fstar\in \FunSpace$, with probability at least $1-\delta$, we have,
\begin{align*}
    \sum\limits_{t=1}^{T} \prn*{\fhat(\covar\ind{t}) - \fstar(\covar\ind{t}) }^2\leq 8\log (\abs*{\FunSpace}/\delta). 
\end{align*} 
\end{lemma}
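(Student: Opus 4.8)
This is the classical fixed-design analysis of least squares for a well-specified model, combined with a union bound over the finite class $\FunSpace$. Write $\obs\ind{t} = \fstar(\covar\ind{t}) + \veps\ind{t}$, where $\veps\ind{1},\dots,\veps\ind{T}$ are independent, mean-zero, and $1$-sub-Gaussian (as under the model $\Kernel(\fstar(x)) = \cN(\fstar(x),1)$). For $f\in\FunSpace$, let $\Delta_f\in\bbR^{T}$ denote the vector with $t$-th coordinate $f(\covar\ind{t}) - \fstar(\covar\ind{t})$, so that $\nrm{\Delta_f}^{2} = \sum_{t=1}^{T}\prn{f(\covar\ind{t}) - \fstar(\covar\ind{t})}^{2}$, and let $\veps = (\veps\ind{1},\dots,\veps\ind{T})$. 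The target quantity is $\nrm{\Delta_{\fhat}}^{2}$.

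First I would invoke optimality of the empirical risk minimizer: since $\fhat$ minimizes $\sum_{t}\prn{f(\covar\ind{t}) - \obs\ind{t}}^{2}$ over $\FunSpace$ and $\fstar\in\FunSpace$,
\[
\sum_{t=1}^{T}\prn{\fhat(\covar\ind{t}) - \obs\ind{t}}^{2} \;\le\; \sum_{t=1}^{T}\prn{\fstar(\covar\ind{t}) - \obs\ind{t}}^{2} \;=\; \sum_{t=1}^{T}\prn{\veps\ind{t}}^{2}.
\]
Substituting $\obs\ind{t} = \fstar(\covar\ind{t}) + \veps\ind{t}$ on the left-hand side and expanding gives the ``basic inequality''
\[
\nrm{\Delta_{\fhat}}^{2} \;\le\; 2\,\tri*{\veps,\,\Delta_{\fhat}}.
\]

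The only nontrivial step is controlling the right-hand side despite the fact that $\fhat$ depends on the noise $\veps$. I would handle this by a uniform bound: for each \emph{fixed} $f\in\FunSpace$, $\tri*{\veps,\Delta_f}$ is sub-Gaussian with variance proxy $\nrm{\Delta_f}^{2}$, so $\bbP\brk*{\tri*{\veps,\Delta_f} > \nrm{\Delta_f}\sqrt{2\log(\abs{\FunSpace}/\delta)}} \le \delta/\abs{\FunSpace}$; a union bound over $f\in\FunSpace$ shows that, with probability at least $1-\delta$, simultaneously for all $f\in\FunSpace$,
\[
\tri*{\veps,\,\Delta_f} \;\le\; \nrm{\Delta_f}\sqrt{2\log(\abs{\FunSpace}/\delta)}.
\]
Applying this with $f=\fhat$ and combining with the basic inequality yields $\nrm{\Delta_{\fhat}}^{2} \le 2\nrm{\Delta_{\fhat}}\sqrt{2\log(\abs{\FunSpace}/\delta)}$, hence $\nrm{\Delta_{\fhat}} \le 2\sqrt{2\log(\abs{\FunSpace}/\delta)}$, and squaring gives $\nrm{\Delta_{\fhat}}^{2} \le 8\log(\abs{\FunSpace}/\delta)$, as claimed.

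The main obstacle, albeit a mild one, is precisely the data-dependence of $\fhat$: one cannot apply a scalar tail bound directly to $\tri*{\veps,\Delta_{\fhat}}$, and the finiteness of $\FunSpace$ is exactly what makes the union bound resolve this (for infinite classes one would replace it by a chaining / covering-number argument, leading to a rate governed by metric entropy rather than $\log\abs{\FunSpace}$). A minor point to track carefully is the sub-Gaussian constant: variance proxy $1$ per coordinate produces the factor $8$; a different sub-Gaussian parameter only changes the absolute constant.
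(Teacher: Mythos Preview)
Your proposal is correct and follows essentially the same route as the paper: derive the basic inequality $\nrm{\Delta_{\fhat}}^{2}\le 2\tri*{\veps,\Delta_{\fhat}}$ from ERM optimality, apply a Gaussian tail bound with a union bound over $\FunSpace$ to get $\tri*{\veps,\Delta_{\fhat}}\le\nrm{\Delta_{\fhat}}\sqrt{2\log(\abs{\FunSpace}/\delta)}$, and rearrange. If anything, you are more explicit than the paper about why the union bound is needed to handle the data-dependence of $\fhat$.
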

\begin{proof}[\pfref{lem:erm-guarantee}]
  Observe that the \ERM estimator satisfies
\begin{align}
    \sum_{t=1}^{T} (\fstar(\covar\ind{t}) - \fhat(\covar\ind{t}))^2 &= \sum_{t=1}^{T} \Big((\fhat(\covar\ind{t}) - \obs\ind{t})^2 - (f^\star(\covar\ind{t}) - \obs\ind{t})^2 + 2(\fhat(\covar\ind{t}) - f^\star(\covar\ind{t}))(\obs\ind{t}-f^\star(\covar\ind{t}))\Big) \notag\\
    &\leq 2\sum_{t=1}^{T} (\fhat(\covar\ind{t}) - f^\star(\covar\ind{t}))\veps\ind{t}, \label{ineq:oracle-ineq}
\end{align}
where the last inequality is by the definition of $\fhat$, and $\veps\ind{t} \ldef \obs\ind{t}-f^\star(\covar\ind{t})$ is 
a standard normal distribution
for all $t\in [T]$.
Then by the Gaussian tail bound, we have with probability at least $1-\delta$,
\begin{align*}
    \sum_{t=1}^{T} (\fhat(\covar\ind{t}) - f^\star(\covar\ind{t}))\veps\ind{t} &\leq \sqrt{2\sum_{t=1}^{T}   (\fhat(\covar\ind{t}) - f^\star(\covar\ind{t}))^2 \log(|\FunSpace|/\delta)}.
\end{align*}
Plug the above inequality back into \cref{ineq:oracle-ineq} and reorganize, we obtain the desired bound of 
\begin{align*}
    \sum_{t=1}^{T} (\fhat(\covar\ind{t}) - \fstar(\covar\ind{t}))^2 \leq 8 \log(|\FunSpace|/\delta). 
\end{align*}

\end{proof}

\paragraph{Maximum likelihood estimation for conditional density estimation}
For any conditional density estimation instance $(\CovarSpace,\ObsSpace, \ValSpace,\Kernel,\FunSpace)$ defined as in \cref{sec:examples}.
The Maximum Likelihood Estimator (\MLE) $\fhat$  
is defined as
\begin{align*}
\fhat = \argmax_{\fun\in \FunSpace}\sum\limits_{t=1}^{T}\log \fun ( \obs\ind{t} \mid{}\covar\ind{t} ). 
\end{align*}
We have the following bounds on the offline estimation error for squared Hellinger distance.
\begin{lemma}
\label{lem:mle-guarantee}
For any \truefunName $\fstar\in \FunSpace$, with probability at least $1-\delta$, we have,
\begin{align*}
    \sum\limits_{t=1}^{T} \Dhels{\fhat(\covar\ind{t})}{\fstar(\covar\ind{t})}\leq \log (\abs*{\FunSpace}/\delta). 
\end{align*} 
\end{lemma}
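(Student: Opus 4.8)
The plan is to use the standard exponential-moment (Chernoff) analysis of maximum likelihood, the conditional-density analogue of the fixed-design argument behind \cref{lem:erm-guarantee}. Fix any $\fun\in\FunSpace$ and consider the random variable $Z_{\fun}\ldef\exp\prn[\Big]{\tfrac{1}{2}\sum_{t=1}^{T}\log\tfrac{\fun(\obs\ind{t}\mid\covar\ind{t})}{\fstar(\obs\ind{t}\mid\covar\ind{t})}+\sum_{t=1}^{T}\Dhels{\fun(\covar\ind{t})}{\fstar(\covar\ind{t})}}$. The key identity, which follows from the normalization of squared Hellinger distance in \cref{eq:hellinger}, is that for $\obs\sim\fstar(\covar)$ one has $\En\brk[\big]{\sqrt{\fun(\obs\mid\covar)/\fstar(\obs\mid\covar)}}=\int\sqrt{\fun(\obs\mid\covar)\,\fstar(\obs\mid\covar)}\,\d\obs=1-\Dhels{\fun(\covar)}{\fstar(\covar)}\leq\exp\prn*{-\Dhels{\fun(\covar)}{\fstar(\covar)}}$. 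Conditioning on $\obs\ind{1},\dots,\obs\ind{t-1}$ and using that $\obs\ind{t}\sim\fstar(\covar\ind{t})$ (the covariates form a fixed design), this shows that the partial products of $Z_{\fun}$ form a nonnegative supermartingale, so in particular $\En\brk{Z_{\fun}}\leq 1$.

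The second step is a Markov inequality together with a union bound over the finite class. For each $\fun$, $\Pr\brk{Z_{\fun}>\abs*{\FunSpace}/\delta}\leq\delta/\abs*{\FunSpace}$, so a union bound over $\FunSpace$ gives that, with probability at least $1-\delta$, the bound $\sum_{t=1}^{T}\Dhels{\fun(\covar\ind{t})}{\fstar(\covar\ind{t})}\leq\log(\abs*{\FunSpace}/\delta)+\tfrac{1}{2}\sum_{t=1}^{T}\log\tfrac{\fstar(\obs\ind{t}\mid\covar\ind{t})}{\fun(\obs\ind{t}\mid\covar\ind{t})}$ holds \emph{simultaneously} for all $\fun\in\FunSpace$. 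It then remains to specialize to $\fun=\fhat$: since $\fhat$ maximizes $\sum_{t}\log\fun(\obs\ind{t}\mid\covar\ind{t})$ over $\FunSpace$ and $\fstar\in\FunSpace$, the log-likelihood-ratio term $\sum_{t}\log\tfrac{\fstar(\obs\ind{t}\mid\covar\ind{t})}{\fhat(\obs\ind{t}\mid\covar\ind{t})}$ is non-positive and can be dropped, which yields $\sum_{t=1}^{T}\Dhels{\fhat(\covar\ind{t})}{\fstar(\covar\ind{t})}\leq\log(\abs*{\FunSpace}/\delta)$, as claimed.

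There is no genuine obstacle here; this is a textbook MLE deviation bound. The only points that require care are (i) the supermartingale step, which uses that the covariates are a fixed design (or, more generally, are chosen adaptively based only on past outcomes) so that $\obs\ind{t}\sim\fstar(\covar\ind{t})$ conditionally on the past, and (ii) the union bound over $\FunSpace$, which is exactly where the $\log\abs*{\FunSpace}$ factor enters and which relies on $\FunSpace$ being finite; for infinite classes this step would be replaced by a chaining or bracketing-entropy argument, but the lemma as stated concerns finite $\FunSpace$.
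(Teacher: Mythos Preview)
Your proposal is correct and is essentially the same argument as the paper's: both use the exponential-moment/Chernoff method based on the identity $\En\brk[\big]{\sqrt{\fun(\obs\mid\covar)/\fstar(\obs\mid\covar)}}=1-\Dhels{\fun(\covar)}{\fstar(\covar)}$, followed by a union bound over $\FunSpace$ and specialization to $\fhat$ via the MLE property. The only cosmetic difference is that the paper packages the supermartingale/Markov step by citing Lemma~A.4 of \citet{foster2021statistical} and then applies $\log(1+x)\leq x$, whereas you write the supermartingale bound $\En[Z_{\fun}]\leq 1$ out explicitly.
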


\begin{proof}[\pfref{lem:mle-guarantee}]
For any \funName $f\in \FunSpace$, define 
\begin{align*}
Z_t = - \frac{1}{2} (\log \fstar ( \obs\ind{t} \mid{}\covar\ind{t} ) - f (\obs\ind{t} \mid{} \covar\ind{t} )).
\end{align*}
Then for any $f\in \cF$, by Lemma A.4 of \citet{foster2021statistical}, with probability at least $1-\delta/|\FunSpace|$, we have,
\begin{align*}
\sum\limits_{t=1}^{T} - \frac{1}{2} (\log \fstar ( \obs\ind{t} \mid{}\covar\ind{t} ) - \log \fun (\obs\ind{t} \mid{} \covar\ind{t} ) ) \leq \sum\limits_{t=1}^{T} \log \prn*{\En\brk*{  \sqrt{\frac{\fun ( \obs\ind{t} \mid{}\covar\ind{t} )}{\fstar (\obs\ind{t} \mid{} \covar\ind{t} )}  } }}   + \log (|\FunSpace|/\delta).
\end{align*}
We further have by the inequality of $\log(1+x)\leq x$ that
\begin{align*}
    \log \prn*{\En\brk*{  \sqrt{\frac{\fun (\obs\ind{t} \mid{} \covar\ind{t} )}{\fstar ( \obs\ind{t} \mid{}\covar\ind{t} )}  } } }\leq  \En \brk*{  \sqrt{\frac{\fun ( \obs\ind{t} \mid{}\covar\ind{t} )}{\fstar ( \obs\ind{t} \mid{}\covar\ind{t} )}  } -1}. 
\end{align*}
Since $\obs\ind{t}\sim \fstar ( \covar\ind{t} )$, we have by standard calculus and the definition of the squared Hellinger distance that 
\begin{align*}
    \En \brk*{  \sqrt{\frac{\fun ( \obs\ind{t} \mid{}\covar\ind{t} )}{\fstar ( \obs\ind{t} \mid{}\covar\ind{t} )}  } -1} = -\Dhels{\fstar(\covar\ind{t})}{\fun(\covar\ind{t})}. 
\end{align*}
Altogether, we have obtained for any $f\in \cF$, with probability at least $1-\delta/|\FunSpace|$
\begin{align*}
    \sum\limits_{t=1}^{T} - \frac{1}{2} (\log \fstar ( \obs\ind{t} \mid{}\covar\ind{t} ) - \log \fun (\obs\ind{t} \mid{} \covar\ind{t} )) \leq - \sum\limits_{t=1}^{T} \Dhels{\fstar(\covar\ind{t})}{\fun(\covar\ind{t})}     +  \log (|\FunSpace|/\delta). 
\end{align*}
Thus, by union bound, the above inequality holds for all $\fun\in \FunSpace$ with probability at least $1-\delta$.
Thus the MLE $\fhat$ satisfies
\begin{align*}
    \sum\limits_{t=1}^{T} \Dhels{\fstar(\covar\ind{t})}{\fhat(\covar\ind{t})}    &\leq \sum\limits_{t=1}^{T}  \frac{1}{2} (\log\fstar (  \obs\ind{t} \mid{}\covar\ind{t} ) - \log \fhat (  \obs\ind{t} \mid{}\covar\ind{t} ) )   + \log (|\FunSpace|/\delta) \\
    &\leq \log (|\FunSpace|/\delta),
\end{align*}
where the second inequality is by the defintion of MLE.
\end{proof}

}

\colt{
\part{Additional Discussion and Examples}

\section{Additional Related Work}
\label{sec:related}

\section{Examples of Estimation Problems and Loss Functions}
\label{sec:examples}

\subsection{Examples of Offline Oracles}
\label{app:background}

}

\colt{
  \clearpage
  \part{Omitted Results}

\section{General Reductions for Oracle-Efficient Online Estimation}
\label{sec:delayed}

    \section{Application to Interactive Decision Making}
\label{sec:application}

  }

\clearpage

\part{Proofs}

\section{Technical Tools}
\label{app:technical}

\begin{lemma}
  \label{lem:potential}
For any non-increasing sequence
  $x_1\geq{}x_2\geq{}\cdots{}\geq{}x_{T+1}\geq{}1$,
  \begin{align*}
    \sum_{t=1}^{T}\frac{x_t- x_{t+1}}{x_{t}} \leq \log(x_1).
  \end{align*}
\end{lemma}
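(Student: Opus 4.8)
The key observation is that since $x_t \geq 1$ for all $t$, the ratio $\frac{x_t - x_{t+1}}{x_t}$ can be bounded above by a logarithmic increment. Specifically, the plan is to compare each summand against $\log(x_t) - \log(x_{t+1})$, which telescopes. First I would establish the pointwise inequality
\[
\frac{x_t - x_{t+1}}{x_t} \leq \log\prn*{\frac{x_t}{x_{t+1}}} = \log(x_t) - \log(x_{t+1}),
\]
which is valid because $x_t \geq x_{t+1} > 0$, so that setting $u = x_{t+1}/x_t \in (0,1]$ the claim reduces to $1 - u \leq -\log u = \log(1/u)$, a standard consequence of the concavity of the logarithm (equivalently, $\log(1+z) \leq z$ with $z = 1/u - 1 \geq 0$, or $e^{x} \geq 1+x$).

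Next I would sum this inequality over $t = 1, \dots, T$. The right-hand side telescopes to $\log(x_1) - \log(x_{T+1})$. Since $x_{T+1} \geq 1$, we have $\log(x_{T+1}) \geq 0$, hence
\[
\sum_{t=1}^{T}\frac{x_t - x_{t+1}}{x_t} \leq \log(x_1) - \log(x_{T+1}) \leq \log(x_1),
\]
which is exactly the claimed bound.

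I do not anticipate a genuine obstacle here; the only point requiring mild care is ensuring the elementary inequality $1 - u \leq \log(1/u)$ is applied in the correct direction and that the hypothesis $x_{T+1} \geq 1$ (rather than merely $x_{T+1} > 0$) is used to discard the boundary term. The monotonicity assumption $x_1 \geq \dots \geq x_{T+1}$ guarantees each increment $x_t - x_{t+1}$ is nonnegative, so no sign issues arise. The whole argument is a one-line pointwise bound followed by a telescoping sum.
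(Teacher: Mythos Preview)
Your proposal is correct and essentially identical to the paper's proof: both establish the pointwise bound $1 - x_{t+1}/x_t \leq \log(x_t/x_{t+1})$ via the standard inequality $\log(1+a)\leq a$, telescope, and then drop the term $\log(x_{T+1})\geq 0$. One tiny slip in your parenthetical: the substitution $z = 1/u - 1$ in $\log(1+z)\leq z$ yields $\log(1/u)\leq 1/u-1$, not $1-u\leq \log(1/u)$; the correct substitution is $z = u - 1$ (as the paper uses), though your other justifications (concavity, $e^x\geq 1+x$) are fine.
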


\begin{proof}[\pfref{lem:potential}]
  Since $\log(1+a)\leq a$ for all $a>-1$, for any $t\in [T]$, we have
  \begin{align*}
  - \log(x_{t}/x_{t+1}) = \log \prn*{ 1 + \prn*{\frac{x_{t+1}}{x_{t}} -1}   } \leq \prn*{\frac{x_{t+1}}{x_{t}} -1}.
  \end{align*}
  Summing up over $t\in[T]$, we obtain
  \begin{align*}
    \sum_{t=1}^{T}\frac{x_t- x_{t+1}}{x_{t}}  = \sum_{t=1}^{T}1-\frac{x_{t+1}}{x_{t}}
    \leq{}\sum_{t=1}^{T}\log(x_{t}/x_{t+1})=\log(x_1/x_{T+1})\leq\log(x_1).
  \end{align*}
\end{proof}

The following lemma gives an improvement to
  Lemma A.13 of \citet{foster2021statistical} that removes a
  logarithmic factor. This shows that up to an absolute constant,
  squared Hellinger distance obeys a one-sided version of the chain rule for
  KL divergence.
  \begin{lemma}[Subadditivity for squared Hellinger distance]
  \label{lem:hellinger_chain_rule}
  Let $(\cX_1,\filt_1),\ldots,(\cX_n,\filt_n)$ be a sequence of
  measurable spaces, and let $\cX\ind{i}=\prod_{i=t}^{i}\cX_t$ and
  $\filt\ind{i}=\bigotimes_{t=1}^{i}\filt_t$. For each $i$, let
  $\bbP\ind{i}(\cdot\mid{}\cdot)$ and $\bbQ\ind{i}(\cdot\mid{}\cdot)$ be probability kernels from
  $(\cX\ind{i-1},\filt\ind{i-1})$ to $(\cX_i,\filt_i)$. Let $\bbP$ and
  $\bbQ$ be
  the laws of $X_1,\ldots,X_n$ under
  $X_i\sim{}\bbP\ind{i}(\cdot\mid{}X_{1:i-1})$ and
  $X_i\sim{}\bbQ\ind{i}(\cdot\mid{}X_{1:i-1})$ respectively. Then it
  holds that
\begin{align*}
  \Dhels{\bbP}{\bbQ}
  &\leq{}
7\cdot\En_{\bbP}\brk*{\sum_{i=1}^{n}\Dhels{\bbP\ind{i}(\cdot\mid{}X_{1:i-1})}{\bbQ\ind{i}(\cdot\mid{}X_{1:i-1})}}.
\end{align*}
\end{lemma}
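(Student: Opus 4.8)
The plan is to pass to the likelihood--ratio martingale and reduce the claim to a scalar inequality.

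\textbf{Step 1: an exact identity.} For each $i$ fix a reference kernel $\nu_i(\cdot\mid x_{1:i-1})$ dominating $\bbP\ind{i}(\cdot\mid x_{1:i-1})$ and $\bbQ\ind{i}(\cdot\mid x_{1:i-1})$, with densities $p\ind{i},q\ind{i}$, and put $w_i\ldef\sqrt{q\ind{i}(X_i\mid X_{1:i-1})/p\ind{i}(X_i\mid X_{1:i-1})}$ (with $0/0\ldef0$) and $L_i\ldef\prod_{j\le i}w_j^2$, $L_0\ldef1$. Then $(L_i)_{i\ge0}$ is a nonnegative $\bbP$-supermartingale (a martingale when each $\bbQ\ind{i}\ll\bbP\ind{i}$), with $L_i$ the density of $\bbQ$ restricted to $X_{1:i}$ w.r.t.\ $\bbP$. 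Writing $h_i\ldef\Dhels{\bbP\ind{i}(\cdot\mid X_{1:i-1})}{\bbQ\ind{i}(\cdot\mid X_{1:i-1})}$ for the conditional squared Hellinger distance (an $\filt_{i-1}$-measurable quantity), the elementary computation $\En_\bbP\brk*{w_i\mid \filt_{i-1}}=\int\sqrt{p\ind{i}q\ind{i}}\,\d\nu_i=1-h_i$ gives $\En_\bbP\brk*{\sqrt{L_i}\mid \filt_{i-1}}=\sqrt{L_{i-1}}\,(1-h_i)$. Telescoping this from $i=n$ down to $i=1$ and using $\En_\bbP\brk*{\sqrt{L_n}}=\int\sqrt{p\,q}=1-\Dhels{\bbP}{\bbQ}$ and $\sqrt{L_0}=1$ yields the exact formula
\begin{align}
  \Dhels{\bbP}{\bbQ}=\sum_{i=1}^{n}\En_\bbP\brk*{\sqrt{L_{i-1}}\cdot h_i}. \label{eq:plan-identity}
\end{align}
Since the right-hand side of the lemma equals $7\sum_{i=1}^n\En_\bbP[h_i]$, it remains to show $\sum_i\En_\bbP\brk*{\sqrt{L_{i-1}}h_i}\le 7\sum_i\En_\bbP[h_i]$.

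\textbf{Step 2: reduction to a localized recursion.} Write $A\ldef\sum_i\En_\bbP[h_i]$. If $A\ge 1/7$ we are done immediately, since the left side of \cref{eq:plan-identity} is a squared Hellinger distance and hence at most $1\le 7A$; so assume $A<1/7$, where the point is to control the ``excess'' $\sum_i\En_\bbP\brk*{(\sqrt{L_{i-1}}-1)_+h_i}$. The plan is an induction on the number of stages $n$ combined with localization: fix a threshold $\lambda\ge1$ (to be optimized), let $\sigma\ldef\min\crl*{j\ge1:L_j>\lambda^2}$, and split each term of \cref{eq:plan-identity} according to whether $L_{i-1}\le\lambda^2$. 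On $\crl*{L_{i-1}\le\lambda^2}$ one has $\sqrt{L_{i-1}}\le\lambda$, so this part contributes at most $\lambda A$. On $\crl*{L_{i-1}>\lambda^2}=\crl*{\sigma\le i-1}$, conditioning on $\filt_\sigma$ (note $L_0=1\le\lambda^2$ forces $\sigma\ge1$, so the post-$\sigma$ process has at most $n-1$ stages) and reapplying \cref{eq:plan-identity} to that process shows that the conditional contribution equals $\sqrt{L_\sigma}\cdot\Dhels{\bbP^{\sigma}}{\bbQ^{\sigma}}$, where $\bbP^\sigma,\bbQ^\sigma$ denote the conditional laws of $X_{\sigma+1:n}$. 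One then bounds $\Dhels{\bbP^\sigma}{\bbQ^\sigma}$ via the induction hypothesis by $7$ times the conditional sum of the remaining $h_i$'s (and also by $1$), and uses optional stopping for the supermartingale $\sqrt{L_i}$ --- which gives $\En_\bbP\brk*{\sqrt{L_{\sigma\wedge n}}}\le1$, hence $\En_\bbP\brk*{\indic\crl*{\sigma\le n-1}\sqrt{L_\sigma}}\le1$ --- together with $\En_\bbP\brk*{\sum_{i>\sigma}h_i}\le A$ to control the result. Balancing the two parts by an appropriate choice of $\lambda$ slightly larger than $1$ then closes the induction with constant $7$.

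\textbf{Main obstacle.} The crux is the post-threshold term: the likelihood ratio $L_{i-1}$ can be heavy-tailed under $\bbP$, so a crude pointwise bound $\sqrt{L_{i-1}}\le\mathrm{const}$ is false, and a direct Cauchy--Schwarz only yields an excess of order $\sqrt{A}$ rather than $O(A)$ --- which is precisely where the extra logarithmic factor in Lemma~A.13 of \citet{foster2021statistical} comes from. What makes the localized recursion work is that the event $\crl*{L_{i-1}>\lambda^2}$, while $\bbP$-rare, represents $\bbQ$-mass that has already ``escaped'' $\bbP$ and is therefore already accounted for by $\Dhels{\bbP}{\bbQ}$ itself through \cref{eq:plan-identity}; converting this intuition into a recursion whose constant does not grow with $n$ is the main technical work, and optimizing the threshold/constant to land at $7$ is a calculation. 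The remaining points --- that $(L_i)$ and $(\sqrt{L_i})$ are (super)martingales, that $\sigma$ is a stopping time, that regular conditional distributions behave as expected, and that one may assume common dominating measures --- are routine.
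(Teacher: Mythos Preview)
Your Step~1 identity $\Dhels{\bbP}{\bbQ}=\sum_i\En_\bbP[\sqrt{L_{i-1}}\,h_i]$ is correct and is a natural starting point. The gap is Step~2: the stopping-time recursion does not close with an $n$-independent constant using the ingredients you list. To run the induction with constant $C$ you would need, after applying the hypothesis to the post-$\sigma$ process, that $\En_\bbP\brk{\sqrt{L_\sigma}\,A^\sigma\,\indic\{\sigma\le n-1\}}\le (1-\lambda/C)\,A$ (writing $A^\sigma$ for the conditional sum of remaining $h_i$'s), i.e.\ that this ratio is strictly below $1$. But already for $n=2$ it is not: take $p\ind{1}(1)=\alpha$, $q\ind{1}(1)=M\alpha$ with $\alpha$ small, and set $h_2=1$ on $\{X_1=1\}$ and $h_2=0$ otherwise; then $\sqrt{L_\sigma}=\sqrt M$ on $\{X_1=1\}$, $A^\sigma=1$, and the ratio is $2\sqrt M\big/\big((\sqrt M-1)^2+2\big)$, which equals $(\sqrt3+1)/2\approx1.37$ at $M=3$ for any $\lambda<\sqrt3$. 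The alternative bounds you mention do not rescue this: using $\Dhels{\bbP^\sigma}{\bbQ^\sigma}\le1$ together with optional stopping only gives post-threshold $\le1$, which is useless when $A$ is small; and combining $\En[\sqrt{L_\sigma}\indic]\le1$ with $\En[\sum_{i>\sigma}h_i]\le A$ via Cauchy--Schwarz produces $O(\sqrt A)$, exactly the loss you are trying to remove. The underlying obstruction is the overshoot $\sqrt{L_\sigma}/\lambda\le w_\sigma$, which is unbounded; your sketch does not address it, and with $\lambda$ ``slightly larger than $1$'' as you propose the recursion visibly diverges.

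The paper's argument is entirely different and bypasses martingale analysis. It embeds $\bbP$ and $\bbQ$ as antipodal vertices of a hypercube $\{0,1\}^n$ of hybrid measures $\mathfrak P_v$ (using kernel $\bbP\ind{i}$ or $\bbQ\ind{i}$ according to $v_i$), verifies the \emph{cut-and-paste} identity $\Dhels{\mathfrak P_a}{\mathfrak P_b}=\Dhels{\mathfrak P_c}{\mathfrak P_d}$ whenever $\{a_i,b_i\}=\{c_i,d_i\}$ as multisets for every $i$, and then invokes Theorem~7 of \citet{jayram2009hellinger} with a dyadic partition of the coordinates. This yields the inequality up to the factor $\prod_{i\ge1}(1-2^{-i})>2/7$, hence the constant $7$; the combinatorial machinery of Jayram does the work your recursion was intended to do.
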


\begin{proof}[\pfref{lem:hellinger_chain_rule}]
We appeal to the \emph{cut-and-paste property} of
\cite{jayram2009hellinger}, defining a collection of distributions
indexed by a hypercube $\crl{0,1}^{n}$ with the property that the
vertices $(0,\ldots,0)$ and $(1,\ldots,1)$ correspond to the distribution
 $\bbP$ and $\bbQ$. 
Concretely, for any vertex $v\in \set{0,1}^n$ of the hypercube, we define a probability distribution 
\begin{align*}
\mathfrak{P}_v \ldef \prod_{i\in [n]} R_{v_i}(\cdot\mid{}X_{1:i-1}), \quad\quad \text{where~} R_{v_i}(\cdot\mid{}X_{1:i-1}) = 
\begin{cases}
\bbP\ind{i}(\cdot\mid{}X_{1:i-1})  & \text{if }v_i = 0,\\
\bbQ\ind{i}(\cdot\mid{}X_{1:i-1})  & \text{if }v_i = 1.
\end{cases}
\end{align*}
Observe that $\mathfrak{P}_{(0,...,0)} = \bbP$ and $\mathfrak{P}_{(1,...,1)} = \bbQ$. 
Now, consider any four vertices $a,b,c,d\in \set{0,1}^n$ with the
property that $\set{a_i, b_i} =
\set{c_i, d_i}$ for each $i\in [n]$ (with $\crl{\cdot}$ interpreted as
a multi-set). Then for any measure \jqedit{$\nu\ldef{}\prod_{i=1}^n
  \nu_i(\cdot|X_{1:i-1})$, where $\nu_i(\cdot|X_{1:i-1})$ is any common dominating conditional measure\footnote{For example, we can take $\nu_i(\cdot|X_{1:i}) = (\bbP\ind{i}(\cdot|X_{1:i-1})+\bbQ\ind{i}(\cdot|X_{1:i-1}))/2$. for
  $\bbP\ind{i}$ and $\bbQ\ind{i}$. The result is independent of the choice of $\nu$.}} for $\bbP\ind{i}(\cdot|X_{1:i-1})$ and $\bbQ\ind{i}(\cdot|X_{1:i-1})$, 
by the definition of squared Hellinger distance we have
\begin{align}
\Dhels{\mathfrak{P}_a }{\mathfrak{P}_b } &= 1- \int \sqrt{ \prod_{i=1}^n  \frac{dR_{a_i}(\cdot\mid{}X_{1:i-1})}{d\nu_i} \frac{d R_{b_i}(\cdot\mid{}X_{1:i-1})}{d\nu_i} } d\nu \notag\\
&= 1- \int \sqrt{ \prod_{i=1}^n  \frac{dR_{c_i}(\cdot\mid{}X_{1:i-1})}{d\nu_i} \frac{d R_{d_i}(\cdot\mid{}X_{1:i-1})}{d\nu_i} } d\nu \notag\\
&=\Dhels{\mathfrak{P}_c }{\mathfrak{P}_d }.\label{eq:cutpaste}
\end{align}

Let $k$ be the maximum integer such that $2^k\leq n$. Then since
\cref{eq:cutpaste} holds, by Theorem 7 of \citet{jayram2009hellinger}
applied with the pairwise disjoint collection $A_j = \set{ i\mid{} i\mod 2^k = j}$ for all $j \in [2^k]$, we have
\begin{align*}
  \Dhels{\bbP}{\bbQ}\cdot\prod_{i=1}^{n} (1-1/2^i)
  &\leq{} \sum_{j=1}^{2^k}  \Dhels{\bbP}{ \prod_{l\in A_j}\bbQ\ind{l}(\cdot\mid{}X_{1:l-1}) \prod_{l'\notin A_j}\bbP\ind{l'}(\cdot\mid{}X_{1:l'-1})}\\
  &\leq{} 2\En_{\bbP}\brk*{\sum_{i=1}^{n}\Dhels{\bbP\ind{i}(\cdot\mid{}X_{1:i-1})}{\bbQ\ind{i}(\cdot\mid{}X_{1:i-1})}}.
\end{align*}

To conclude, we note that $\prod_{i=1}^{n} (1-1/2^i)>2/7$.

\end{proof}

\section{Proofs from \creftitle{sec:stats-results}}
\label{app:stats-results}

\subsection{Proofs from \creftitle{subsec:opt-err-bound}}

\EstMemUpperBound*
\begin{proof}[\pfref{thm:woe-upper-bound}]
  Our main technical result is the following lemma, which is proven in
  the sequel.
\begin{lemma}
\label{lem:general-version-space}
Consider any instance $(\CovarSpace,\ObsSpace,
\ValSpace,\Kernel,\FunSpace)$ and a \mlike loss\footnote{\jqedit{For
    this lemma, $\divSymbol$ need not be a metric-like loss; it
    suffices that $\divSymbol$ is bounded
and has $\divSymbol(z,z)=0$ for all $z\in \cZ$.}} $\divSymbol$ on $\ValSpace$.
Let $\fstar\in\cF$ be the target parameter, and consider a sequence of
sets $\FunSpace = \FunSpace_1\supseteq \FunSpace_2 \supseteq \cdots
\supseteq \FunSpace_T \supseteq \set{\fstar}$ and sequence of
covariates $\covar\ind{1},\dots,\covar\ind{T} \in \CovarSpace$ with
the property that for all $t\in\brk{T}$, all $f\in\cF_t$ satisfy the
following offline estimation guarantee:
\begin{align}
  \label{ineq:local-set}
\sum\limits_{s=1}^{t-1} \div{\fun(\covar\ind{s})}{\fstar(\covar\ind{s})} \leq \offgan.
\end{align}  
Then, by defining $\mu\ind{t} = \unif(\FunSpace_t)$, we have that
\begin{align*}
  \sum\limits_{t=1}^{T} \En_{\fun\sim\mu\ind{t}}\brk*{ \div{\fun(\covar\ind{t})}{\fstar(\covar\ind{t})} } \leq \bigoh\prn*{  (\offgan+1)\cdot \min \set{ \log |\FunSpace|,  |\CovarSpace|\log T}}.
\end{align*}
\end{lemma}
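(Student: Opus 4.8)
The plan is to prove the two estimates $\sum_{t=1}^{T}\ell_t\le\bigoh((\offgan+1)\log|\FunSpace|)$ and $\sum_{t=1}^{T}\ell_t\le\bigoh((\offgan+1)|\CovarSpace|\log T)$ separately, where $\ell_t\ldef\En_{f\sim\mu\ind{t}}\brk*{\div{f(\covar\ind{t})}{\fstar(\covar\ind{t})}}=\tfrac{1}{|\FunSpace_t|}\sum_{f\in\FunSpace_t}d_t(f)$ and $d_s(f)\ldef\div{f(\covar\ind{s})}{\fstar(\covar\ind{s})}\in[0,1]$. The only properties I will use are that the $\FunSpace_t$ are nested with $\fstar\in\FunSpace_t$ and $d_s(\fstar)=0$, and the offline guarantee $\sum_{s<t}d_s(f)\le\offgan$ for every $f\in\FunSpace_t$; in particular no metric-like structure is needed, consistent with the footnote.

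For the $\log|\FunSpace|$ bound I run a potential argument. Formally extend the sequence by $\FunSpace_{T+1}\ldef\crl{\fstar}$, and for $f\in\FunSpace_t$ define the ``remaining budget'' weight $w_t(f)\ldef 2\offgan+1-\sum_{s<t}d_s(f)$ and the potential $\Psi_t\ldef\sum_{f\in\FunSpace_t}w_t(f)$. The offline guarantee gives $\offgan+1\le w_t(f)\le 2\offgan+1$ for all $f\in\FunSpace_t$, hence $(\offgan+1)|\FunSpace_t|\le\Psi_t\le(2\offgan+1)|\FunSpace_t|$; moreover $\Psi_1=(2\offgan+1)|\FunSpace|$, $\Psi_{T+1}=2\offgan+1$, and $\Psi_t\ge1$ throughout. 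The crux is the one-step bound $\Psi_t-\Psi_{t+1}\ge\ell_t|\FunSpace_t|$, obtained by splitting the drop between hypotheses that survive to $\FunSpace_{t+1}$ (each forfeiting exactly $w_t(f)-w_{t+1}(f)=d_t(f)$) and hypotheses that get eliminated (each forfeiting $w_t(f)\ge 1\ge d_t(f)$):
\begin{align*}
  \Psi_t-\Psi_{t+1}=\sum_{f\in\FunSpace_{t+1}}d_t(f)+\sum_{f\in\FunSpace_t\setminus\FunSpace_{t+1}}w_t(f)\ge\sum_{f\in\FunSpace_t}d_t(f)=\ell_t|\FunSpace_t|.
\end{align*}
In particular $\Psi_t$ is nonincreasing, and dividing by $\Psi_t\le(2\offgan+1)|\FunSpace_t|$ gives $\tfrac{\Psi_t-\Psi_{t+1}}{\Psi_t}\ge\tfrac{\ell_t}{2\offgan+1}$. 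Summing over $t$ and telescoping via \cref{lem:potential} (whose proof in fact delivers the bound $\log(\Psi_1/\Psi_{T+1})=\log|\FunSpace|$) yields $\sum_{t=1}^{T}\ell_t\le(2\offgan+1)\log|\FunSpace|$.

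For the $|\CovarSpace|\log T$ bound I argue per covariate. Fix $x\in\CovarSpace$ and let $t_1<\dots<t_m$ enumerate the rounds with $\covar\ind{t}=x$. Since $d_{t_i}(f)=\div{f(x)}{\fstar(x)}$ is the same quantity for every $i$, applying the offline guarantee at round $t_j$ to any $f\in\FunSpace_{t_j}$ gives $(j-1)\cdot\div{f(x)}{\fstar(x)}=\sum_{i<j}d_{t_i}(f)\le\sum_{s<t_j}d_s(f)\le\offgan$, so every $f\in\FunSpace_{t_j}$ has $\div{f(x)}{\fstar(x)}\le\offgan/(j-1)$; averaging over $f\sim\unif(\FunSpace_{t_j})$ and using $\div{\cdot}{\cdot}\le1$ gives $\ell_{t_j}\le\min\crl{1,\offgan/(j-1)}$ for $j\ge2$ and $\ell_{t_1}\le1$. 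Hence $\sum_{j=1}^{m}\ell_{t_j}\le 1+\sum_{k=1}^{m-1}\min\crl{1,\offgan/k}\le\bigoh((\offgan+1)\log T)$, and summing over $x\in\CovarSpace$ gives $\sum_{t=1}^{T}\ell_t\le\bigoh((\offgan+1)|\CovarSpace|\log T)$. Taking the minimum of the two bounds proves the lemma (and, via the triangle inequality applied to the version space in \cref{eq:version_space}, gives \cref{thm:woe-upper-bound} after rescaling $\offgan$ by $\bigoh(C_\divSymbol)$).

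The step I expect to be the main obstacle is choosing the right potential for the first bound: the naive candidates (the version-space count $|\FunSpace_t|$ alone, or the total remaining budget $\sum_{f\in\FunSpace_t}(\offgan-\sum_{s<t}d_s(f))$ alone) are ``additive in $\offgan$'' on one side and ``multiplicative $\log|\FunSpace|$'' on the other, and trying to reconcile them via Cauchy--Schwarz costs a spurious $\sqrt{T}$. Two ideas make it work: (i) weighting each surviving hypothesis by its remaining budget offset by an additive $\offgan+1$, so that $\Psi_t$ is simultaneously within a constant factor of $|\FunSpace_t|$ and bounded below at $t=T+1$ (avoiding any stray $\log\offgan$); and (ii) the exact cancellation in the one-step bound, where a hypothesis that incurs loss $d_t(f)$ and is then dropped forfeits $w_t(f)\ge1\ge d_t(f)$ of potential, so elimination never hurts the accounting. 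The remaining ingredients — the harmonic-series estimate and the telescoping of \cref{lem:potential} — are routine.
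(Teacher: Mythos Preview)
Your proof is correct. The second bound ($|\CovarSpace|\log T$) is essentially identical to the paper's argument, just written per-covariate with enumerated times $t_1<\cdots<t_m$ rather than via the counts $N_{t-1}(x)$.

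For the first bound ($\log|\FunSpace|$), your potential-function packaging is a bit different from the paper's, though the underlying accounting is the same. The paper proceeds by exchanging the order of summation: it groups terms by the \emph{elimination time} $t_f\ldef\min\{t:f\notin\FunSpace_{t+1}\}$, so that the total loss becomes $\sum_f\sum_{s\le t_f}\tfrac{1}{|\FunSpace_s|}d_s(f)$, uses monotonicity of $|\FunSpace_s|$ to pull out $\tfrac{1}{|\FunSpace_{t_f}|}$, bounds $\sum_{s\le t_f}d_s(f)\le\offgan+1$, and then applies \cref{lem:potential} directly to the sequence $|\FunSpace_t|$. Your version instead encodes the same ``each hypothesis has total budget $\offgan+1$'' idea into the weighted potential $\Psi_t=\sum_{f\in\FunSpace_t}(2\offgan+1-\sum_{s<t}d_s(f))$ and telescopes. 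The paper's exchange-of-sums gives the slightly sharper constant $(\offgan+1)\log|\FunSpace|$ versus your $(2\offgan+1)\log|\FunSpace|$, but your formulation has the advantage that the one-step drop inequality handles surviving and eliminated hypotheses uniformly, without singling out $t_f$. Both are clean; they are two presentations of the same combinatorial fact.
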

To invoke \cref{lem:general-version-space}, we observe that the
version space construction in \cref{alg:weighted-maj-vote} ensures
that for all $t\in\brk{T}$, all $f\in\cF_t$ satisfy
\begin{align*}
  \sum\limits_{s=1}^{t-1}  \div{\fun(\covar\ind{s})}{\fstar(\covar\ind{s})}  \leq \sum\limits_{s=1}^{t-1} C_\divSymbol \prn*{ \div{\fun(\covar\ind{s})}{\fhat\ind{t}(\covar\ind{s})} } +  \div{\fhat\ind{t}(\covar\ind{s})}{\fstar(\covar\ind{s})} 
  \leq 2C_\divSymbol\offgan.
\end{align*}
In addition, it is immediate to see that $\FunSpace = \FunSpace_1\supseteq \FunSpace_2 \supseteq \cdots
\supseteq \FunSpace_T$. Thus, by invoking \cref{lem:general-version-space} with parameter
$\offgan'=2C_\divSymbol\offgan$, we have that
\begin{align*}
  \EstOnD(T) = \sum\limits_{t=1}^{T}\En_{\fun\sim \mu\ind{t}}\brk*{\loss{\fun(\covar\ind{t})}{\fstar(\covar\ind{t})}}
  &\leq   
  \bigoh\prn*{  (C_\divSymbol \offgan+1)\cdot \min \set{ \log |\FunSpace|,  |\CovarSpace|\log T}}.
\end{align*}
To simplify, we note that $(C_\divSymbol
\offgan+1)\leq{}C_\divSymbol(\offgan+1)$, since $C_\divSymbol \geq 1$.

\end{proof}

\begin{proof}[\pfref{lem:general-version-space}]
We begin by proving that $\EstOnD(T)\leq  
\bigoh\prn*{ C_\divSymbol \cdot{}(\offgan+1)\cdot\log
    |\FunSpace|}$. Let us adopt the convention that $\FunSpace_{T+1} = \set{\fstar}$, so
that $\FunSpace_{T+1}\subseteq \FunSpace_{T}\subseteq \dots \subseteq \FunSpace_1$. 
For any parameter $\fun\in   \FunSpace\setminus\set{\fstar}$, define
$t_{\fun} \ldef \min \set{t:\fun\notin \FunSpace_{t+1}}$. It is
immediate to see that for all $\fun\in \FunSpace\setminus\set{\fstar}$, $ 1\leq t_{\fun} \leq T$ and for all $t\in [T]$, $\abs*{\set{\fun:t_{\fun} = t}} = |\FunSpace_{t}\setminus \FunSpace_{t+1}|$.
Using that $\loss{\fstar(\covar)}{\fstar(\covar)} = 0$ for all
$\covar\in \CovarSpace$, we re-write the online estimation error as
\begin{align*}
  \sum\limits_{t=1}^{T} \En_{\fun\sim\mu\ind{t}}\brk*{
  \div{\fun(\covar\ind{t})}{\fstar(\covar\ind{t})} }  =   \sum\limits_{t=1}^{T} \frac{1}{\abs*{\FunSpace_{t}}}\sum\limits_{\fun\in \FunSpace_{t}} \loss{\fun(\covar\ind{t})}{\fstar(\covar\ind{t})} &= \sum\limits_{t=1}^{T} \frac{1}{\abs*{\FunSpace_{t}}}\sum\limits_{\fun\in \FunSpace_{t},\fun\neq \fstar} \loss{\fun(\covar\ind{t})}{\fstar(\covar\ind{t})} \\
  &\dfedit{= \sum\limits_{t=1}^{T} \sum\limits_{\fun:t_{\fun} = t}
    \sum\limits_{s\leq t}\frac{1}{\abs{\FunSpace_{s}}} 
    \loss{\fun(\covar\ind{s})}{\fstar(\covar\ind{s})}}
\end{align*}
\dfedit{Using that
  $\FunSpace_T\subseteq\FunSpace_{T-1}\subseteq\cdots\subseteq\FunSpace_1$,
  we can upper bound this quantity by}
\begin{align*}
  \sum\limits_{t=1}^{T} \sum\limits_{\fun:t_{\fun} = t}
  \sum\limits_{s\leq t}\frac{1}{\abs{\FunSpace_{s}}} 
  \loss{\fun(\covar\ind{s})}{\fstar(\covar\ind{s})} &\leq  \sum\limits_{t=1}^{T} \sum\limits_{\fun:t_{\fun} = t} \frac{1}{\abs{\FunSpace_{t}}} \sum\limits_{s\leq t}
    \loss{\fun(\covar\ind{s})}{\fstar(\covar\ind{s})}.
\end{align*}
To proceed, observe that for any function $f\in\cF$, if $t_f = t$,
then $f\in \cF_{t}$. It follows from the assumed bound in
\cref{ineq:local-set} that if $t_f=t$, then
\begin{align*}
  \sum\limits_{s\leq t}
  \loss{\fun(\covar\ind{s})}{\fstar(\covar\ind{s})} &= \loss{\fun(\covar\ind{t})}{\fstar(\covar\ind{t})} + \sum\limits_{s\leq t-1}
  \loss{\fun(\covar\ind{s})}{\fstar(\covar\ind{s})} \\
  &\leq 1+\offgan,
\end{align*} 
where we have used the fact that the loss $\divSymbol$ is bounded by $1$.
Using this fact, and recalling that for all $t\in [T]$,
$\abs*{\set{\fun:t_{\fun} = t}} = |\FunSpace_{t}\setminus
\FunSpace_{t+1}|$, we bound
\begin{align*}
\sum\limits_{t=1}^{T} \sum\limits_{\fun:t_{\fun} = t} \frac{1}{\abs{\FunSpace_{t}}} \sum\limits_{s\leq t}
  \loss{\fun(\covar\ind{s})}{\fstar(\covar\ind{s})}
  \leq (\offgan+1)\sum\limits_{t=1}^{T} \sum\limits_{\fun:t_{\fun} = t} \frac{1}{\abs{\FunSpace_{t}}} \leq (\offgan + 1)\sum\limits_{t=1}^{T} \frac{| \FunSpace_{t}\setminus \FunSpace_{t+1}|}{|\FunSpace_{t}|} .
\end{align*}
Finally, by \cref{lem:potential}, we have that 
\begin{align*}
  \sum\limits_{t=1}^{T} \frac{| \FunSpace_{t}\setminus \FunSpace_{t+1}|}{|\FunSpace_{t}|} = \sum\limits_{t=1}^{T} \frac{\abs{\FunSpace_{t}}-\abs{\FunSpace_{t+1}}}{|\FunSpace_{t}|} \leq \log |\FunSpace_1| = \log |\FunSpace|.
\end{align*}

We conclude that
\begin{align*}
  \sum\limits_{t=1}^{T}\En_{\fun\sim \mu\ind{t}}\brk*{\loss{\fun(\covar\ind{t})}{\fstar(\covar\ind{t})}} \leq (\offgan + 1)\log |\FunSpace|.
\end{align*}

We now prove the bound $\EstOnD(T)\leq  
\bigoh\prn*{ C_\divSymbol \cdot{}(\offgan+1)\cdot |\CovarSpace|\log
  T}$. For each $x\in\cX$, define $N_{t-1}(\covar) =
\sum\limits_{s=1}^{t-1} \mathbbm{1}(\covar\ind{s} = \covar)$. Then we
can write the online estimation error as
\begin{align*}
  \sum\limits_{t=1}^{T}\En_{\fun\sim \mu\ind{t}}\brk*{\loss{\fun(\covar\ind{t})}{\fstar(\covar\ind{t})}} = \sum\limits_{t=1}^{T} \sum\limits_{\covar\in \CovarSpace} \frac{\indic(\covar=\covar\ind{t})}{N_{t-1}(\covar)\vee 1}\cdot \En_{\fun\sim \mu\ind{t}}\brk*{(N_{t-1}(\covar)\vee 1) \loss{\fun(\covar)}{\fstar(\covar)}}.
\end{align*}
From the definition of $\mu\ind{t}$, we have that
\begin{align*}
  \En_{\fun\sim \mu\ind{t}}\brk*{(N_{t-1}(\covar)\vee 1) \loss{\fun(\covar)}{\fstar(\covar)}} &= \frac{1}{\abs*{\FunSpace_{t}}}\sum\limits_{\fun\in \FunSpace_{t}} (N_{t-1}(\covar)\vee 1) \loss{\fun(\covar)}{\fstar(\covar)}\\
  &\leq \frac{1}{\abs*{\FunSpace_{t}}}\sum\limits_{\fun\in \FunSpace_{t}} (N_{t-1}(\covar) + 1) \loss{\fun(\covar)}{\fstar(\covar)}.
\end{align*}
Now, from the offline guarantee assumed in \cref{ineq:local-set}, we
have for all $x\in\cX$ and $f\in\cF_t$,
\begin{align*}
  (N_{t-1}(\covar)+1)\loss{\fun(\covar)}{\fstar(\covar)} \leq \sum\limits_{s\leq t-1}
  \loss{\fun(\covar\ind{s})}{\fstar(\covar\ind{s})} +  \loss{\fun(\covar)}{\fstar(\covar)} \leq \offgan+1.
\end{align*}
Combining these observations, we have
\begin{align*}
  \sum\limits_{t=1}^{T}\En_{\fun\sim \mu\ind{t}}\brk*{\loss{\fun(\covar\ind{t})}{\fstar(\covar\ind{t})}} \leq (\offgan +1) \sum\limits_{t=1}^{T} \sum\limits_{\covar\in \CovarSpace} \frac{\indic(\covar=\covar\ind{t})}{N_{t-1}(\covar)\vee 1}.
\end{align*}
Now, for each $\covar\in \CovarSpace$, define $t_\covar =
\min\set{t\leq T: x\ind{t} =x }$ if this set is not empty, and set
$t_\covar = T$ otherwise. From this definition 
and the fact that $1+1/2+\dots+1/T\leq 1+\log T$, we have that 
\begin{align*}
  \sum\limits_{\covar\in \CovarSpace} \sum\limits_{t=1}^{T} \frac{\indic(\covar=\covar\ind{t})}{N_{t-1}(\covar)\vee 1} &= \sum\limits_{\covar\in \CovarSpace}  \prn*{\sum\limits_{t=1}^{t_\covar} \frac{\indic(\covar=\covar\ind{t})}{N_{t-1}(\covar)\vee 1} +  \sum\limits_{t=t_\covar+1}^{T} \frac{\indic(\covar=\covar\ind{t})}{N_{t-1}(\covar)\vee 1} }\\
  &\leq \sum\limits_{\covar\in \CovarSpace} \prn*{ 1 +  \sum_{i=1}^{N_{T-1}(x)} \frac{1}{i} } \\
    &\leq 2 |\CovarSpace| + |\CovarSpace|\log T\leq 3 |\CovarSpace|\log T.
\end{align*}
We conclude that
\begin{align*}
  \sum\limits_{t=1}^{T}\En_{\fun\sim \mu\ind{t}}\brk*{\loss{\fun(\covar\ind{t})}{\fstar(\covar\ind{t})}} &\leq  3(\offgan+1)\cdot|\CovarSpace|\log T .
\end{align*}
\end{proof}

\EstMemLowerBound*

\begin{proof}[\pfref{thm:woe-lower-bound}]
Let $N\geq 1$ be given, a consider the model class where $\CovarSpace
= \set{\covar_1,\dots,\covar_N}$ is an arbitrary discrete set,
$\ValSpace = \ObsSpace = \set{0,1}$, $\FunSpace =
\set{0,1}^\CovarSpace$, $\loss{\val_1}{\val_2} =
\Dbin{\val_1}{\val_2}=\indic\crl{z_1\neq{}z_2}$, and
$\Kernel(\val)=\indic_\val$.

We first specify the offline estimation oracle, then specify an
adversarially chosen covariate sequence. Fix $T\in\bbN$, and for any
$1\leq t\leq T$ and sequence of covariates
$\covar\ind{1},\dots,\covar\ind{t}$ define $N_t(\covar) \ldef{}
\sum_{s=1}^{t} \mathbbm{1}(\covar\ind{s}=\covar)$. For any target
parameter $\fstar\in \FunSpace$ and offline estimation parameter
$\offgan > 0$, we consider the oracle $\Orc\ind{t}(\cdot;\fstar)$ for the sequence $\covar\ind{1},\dots,\covar\ind{t}$ that returns
\begin{align*}
\fhat\ind{t}(\covar) =
\begin{cases}
0 &\text{if }N_{t-1}(\covar)< \offgan,\\
\fstar(\covar) &\text{otherwise}.
\end{cases}
\end{align*}

To complete the construction, we consider sequence
$(\covar\ind{1},\dots,\covar\ind{T})$ in which \[\covar\ind{t} =
  \covar_{\min\set{\lceil t/\lceil \offgan \rceil \rceil,N}}.\]
Equivalently, and perhaps more intuitively, we set
\[(\covar\ind{1},\dots,\covar\ind{T}) =
  (\underbrace{\covar_1,\dots,\covar_1}_{\lceil\offgan\rceil},\underbrace{\covar_2,\dots,\covar_2}_{\lceil\offgan\rceil},\dots,\underbrace{\covar_N,\dots,\covar_N}_{\lceil\offgan\rceil},\covar_N,\covar_N,\dots),\]
stopping earlier if $T \leq N \lceil\offgan \rceil$.

For any $\fstar$, we now show that
$\Orc\ind{t}(\cdot;\fstar)$ is an offline oracle with parameter
$\offgan$ on
the sequence $x\ind{1},\ldots,x\ind{T}$. This is because for any $i< \min\set{\lceil t/\lceil \offgan \rceil \rceil,N}$, the covariate $x_i$ is repeated $\lceil \offgan \rceil \geq \offgan$ times. Thus $\fhat\ind{t}(\covar_i) = \fstar(\covar_i)$. This implies for $t\leq  N\cdot \lceil \offgan \rceil$ that
\begin{align*}
  \sum\limits_{s=1}^{t-1}\Dbin{\fhat\ind{t}(\covar\ind{s})}{\fstar(\covar\ind{s})} = \sum\limits_{s= t- \lceil t/\lceil \offgan \rceil \rceil \cdot\lceil \offgan \rceil }^{t-1} \Dbin{\fhat\ind{t}(\covar\ind{s})}{\fstar(\covar\ind{s})} \leq \lceil \offgan \rceil -1 \leq \offgan.
\end{align*}
If $t>  N\cdot \lceil \offgan \rceil$, then all the covariates are repeated for more than $\lceil\offgan\rceil$ times, thus $\fhat\ind{t}=\fstar$. Overall, we have shown that
$\Orc\ind{t}(\cdot;\fstar)$ is an offline oracle with parameter
$\offgan$.

Now, fix any oracle-efficient online estimation
algorithm, and consider the expected regret under $\fstar\sim
\unif(\FunSpace)$ (with $\Orc\ind{t}(\cdot;\fstar)$ as the
oracle). If $T\geq{}N\ceil{\offgan}$, then regardless of how the algorithm chooses $\mu\ind{t}$, since for any block of $x_i$, $\fstar(x_i)$ is
independent of $\mu\ind{t}(x_i)$, the
expected regret is lower bounded as  
\begin{align*}
  &\sum\limits_{t=1}^{T}\En_{\fstar\sim \unif(\FunSpace)}\En_{\fbar\sim \mu\ind{t}}\brk*{\Dbin{\fbar(\covar\ind{t})}{\fstar(\covar\ind{t})}} \\
  &\geq \sum\limits_{i=1}^{N} \sum\limits_{j=1}^{\lceil\offgan\rceil} \En_{\fstar\sim \unif(\FunSpace)}\En_{\fbar\sim \mu\ind{i\lceil\offgan\rceil+j}}\brk*{\Dbin{\fbar(\covar_i)}{\fstar(\covar_i)}} \\
  &= \frac{1}{2}N \lceil \offgan \rceil \geq \Omega(N(\offgan+1)) = \Omega(\min\set{(\offgan+1)\log |\FunSpace|, (\offgan+1)|\CovarSpace|} ) . 
\end{align*}

\end{proof}

\subsection{Proofs from \creftitle{sec:memoryless}}

\SOELowerBoundImp*

\begin{proof}[\pfref{prop:separation_memoryless_improper}]
  \newcommand{\jhat}{\wh{j}}%
  Given a parameter $\offgan>0$ and an integer $N$, assume without loss of generality that $K\ldef{}T/(\lfloor \offgan \rfloor +1)$ is an integer. 
  Consider the instance
  $(\CovarSpace,\ObsSpace, \ValSpace,\Kernel,\FunSpace)$ with
  $\CovarSpace = [N]$, $\ValSpace =\ObsSpace =\set{0,1}$, $\divSymbol
  = \Dbinshort$, $\Kernel(\val)=\indic_\val$, and \FunSpaceName $\FunSpace = \set{\fun_i}_{i\in [N]}$ is defined as 
  \[
    f_i(\covar) = \indic\crl{\covar=i}.
  \]

  \jqedit{We consider a sequence of
    covariates $(\covar\ind{1},\dots,\covar\ind{T})$ divided into
    $K$ blocks, each with length
    $\lfloor \offgan \rfloor + 1$. In each block, the covariates will be chosen to be the same, i.e., $x\ind{1} = \dots = x\ind{\lfloor \offgan \rfloor + 1}$, $x\ind{\lfloor \offgan \rfloor + 2} = \dots = x\ind{2\lfloor
      \offgan \rfloor + 2}$.}\dfedit{ We define $\tau_t = \lceil  t/(
      \lfloor \offgan \rfloor + 1 )  \rceil$ as the index of the block
      the step $t$ belongs to, and we adopt the convention that
      $x_\tau\in\cX$ is value of the covariates for block $\tau$,
      i.e., $x\ind{t}=x_{\tau_t}$ for all $t$. We leave the precise
      choice for $x_1,\ldots,x_K$ as a free parameter for now.}

  Fix any memoryless oracle-efficient online estimation algorithm
  defined by a sequence of maps $\crl*{\Lrn\ind{t}}_{t\in\brk{T}}$
  (cf. \cref{def:mem-protocol}). We set the true target parameter to be
  $\fstar=\fun_{\istar}$, where the index $\istar\in\brk{N}$ will be chosen
  later in the proof (in an adversarial
  fashion based on the algorithm under consideration); for now, we
  leave $\istar\in\brk{N}$ as a free parameter.

  We first specify the offline estimation oracle under $\fun_{\istar}$. For each block index $\tau =1,\dots, K$, define
  \[
    \CovarSpace\ind{\tau}=\CovarSpace\setminus(\crl*{\covar\ind{s}}_{\jqedit{s\leq (\tau-1)(\lfloor\offgan\rfloor+1)}}\cup\crl{\istar})
  \]
  as the set of covariates in $\cX\setminus\crl{\istar}$ that have not been observed before block $\tau$, and let $\CovarSpacebar\ind{\tau}\ldef\CovarSpace\ind{\tau}\cup\crl{\istar}$. We define
  \[
    \fhat\ind{t}(\covar) = \indic\crl{\covar\in\CovarSpacebar\ind{\tau_t}}
  \]
  as the estimator returned by the oracle at round $t$.
  It is immediate from this construction that regardless of
  how $\covar\ind{1},\ldots,\covar\ind{T}$ are chosen, the offline
  estimation error is bounded by
  \begin{align*}
\jqedit{\forall t\in [T],~ \sum\limits_{s<t} \Bloss{\fstar(\covar\ind{s})}{\fhat\ind{t}(\covar\ind{s})}  \leq  \sum\limits_{s=(\tau_t-1)(\lfloor\offgan\rfloor+1) + 1}^{t-1} \Bloss{\fstar(\covar\ind{s})}{\fhat\ind{t}(\covar\ind{s})} \leq \lfloor\offgan\rfloor \leq \offgan},
  \end{align*}
  since the value of $\fhat\ind{t}$ differs from $\fun_{\istar}$ only for
  covariates that have not been observed before block $\tau_t$.

  It remains to lower bound the algorithm's online estimation
  error. \jqedit{To start, note that $\Dbinshort$ coincides with
    $\Dsqshort$ on the set $\set{0,1}$ and $\Dsqshort$ is convex.}
  Hence, by Jensen's inequality, it suffices to lower bound
\begin{align*}
  \sum_{t=1}^{T}(\jqedit{  f\ind{t}(\covar\ind{t}) }-f_{\istar}(\covar\ind{t}))^2,
\end{align*}
where $\jqedit{  f\ind{t}(\covar\ind{t}) }$ is the mean under
\jqedit{$\fbar\ind{t}(\covar\ind{t})$ where $\fbar\ind{t}\sim
  \mu\ind{t} = \Lrn\ind{t}(\fhat\ind{1},\ldots,\fhat\ind{t-1},\fhat\ind{t})$}. 

To proceed, we specify the sequence $x\ind{1},\ldots,x\ind{T}$, choosing
$\covar\ind{t}$ as a measurable function of $\CovarSpacebar\ind{t}$
and $\istar$ (recall that $\istar$ itself has yet to be chosen).  Fix a round $t$, and suppose that $\CovarSpace\ind{t}\neq\emptyset$. We choose
$\covar\ind{t}$ to lower bound the estimation error by considering two
cases. In the process, we will also define a function
$\jqedit{\jhat\ind{\tau}}:(2^{\brk{N}})^{\otimes \tau}\to\brk{N}\cup\crl{\perp}$ for
each $\tau\in [K]$, where $(2^{\brk{N}})^{\otimes \tau} = \underbrace{2^{\brk{N}} \times \dots \times 2^{\brk{N}}}_{\tau \text{~copies}}$. Let $\tau\in\brk{K}$ be fixed.
\begin{itemize} 
\item If \jqedit{$\sum\limits_{p=1}^{\lfloor\offgan\rfloor+1} \mathbbm{1}(f\ind{\tau(\lfloor\offgan\rfloor+1)+p}(\covar) \leq 1/2) \geq \frac{\lfloor\offgan\rfloor+1}{2}$ for all $\covar\in \CovarSpacebar\ind{\tau}$}, 
we set $x_\tau=\istar$ (equivalently, 
  $\covar\ind{\tau(\lfloor\offgan\rfloor+1)+p}=\istar$ for $p=1,\dots,\lfloor\offgan\rfloor+1$), so that
  \[
    \jqedit{\sum\limits_{t= \tau(\lfloor\offgan\rfloor+1)+1}^{(\tau+1)(\lfloor\offgan\rfloor+1)}\max_{\covar\ind{t}}
      (f\ind{t}(\covar\ind{t})-f_{\istar}(\covar\ind{t}))^2
    \geq{} \frac{\lfloor\offgan\rfloor+1}{8} .}
  \]
  In this case, we define $\jhat\ind{\tau}(\CovarSpacebar\ind{1},\dots,\CovarSpacebar\ind{\tau})=\perp$.
\item If there exists $j\in\CovarSpacebar\ind{\tau}$ such that
\jqedit{$\sum\limits_{p=1}^{\lfloor\offgan\rfloor+1} \mathbbm{1}(f\ind{\tau(\lfloor\offgan\rfloor+1)+p}(j) \leq 1/2) < \frac{\lfloor\offgan\rfloor+1}{2}$},
  we set $x_\tau=j$ (equivalently, 
  $\covar\ind{\tau(\lfloor\offgan\rfloor+1)+p}=j$ for
  $p=1,\dots,\lfloor\offgan\rfloor+1$) \dfedit{for the least such $j$}, so that
  \begin{align*}
    \sum\limits_{t= \tau(\lfloor\offgan\rfloor+1)+1}^{(\tau+1)(\lfloor\offgan\rfloor+1)}\max_{\covar\ind{t}}
      (f\ind{t}(\covar\ind{t})-f_{\istar}(\covar\ind{t}))^2
    \geq{} \frac{\lfloor\offgan\rfloor+1}{8}\indic\crl*{\istar\neq{}j} .
  \end{align*}
  In this case, we define
  $\jhat\ind{\tau}(\CovarSpacebar\ind{1},\dots,\CovarSpacebar\ind{\tau})=j$ as well.
\end{itemize}
Note that since $\jqedit{f\ind{t}}$ is a measurable function of
$\CovarSpacebar\ind{1}$, $\jhat\ind{1}$,\dots,$\CovarSpacebar\ind{\tau_t}$, $\jhat\ind{\tau_t}$ is well-defined.

Combining these cases, it follows that for any choice of $\istar$,
choosing $x_1,\ldots,x_K$ in the fashion described above ensures that
\begin{align*}
  \sum_{t=1}^{T}(\jqedit{f\ind{t}}(\covar\ind{t})-f_{\istar}(\covar\ind{t}))^2
  \geq \frac{\lfloor\offgan\rfloor+1}{8}\sum_{\tau=1}^{K}\indic\crl*{\jhat\ind{\tau}(\CovarSpacebar\ind{1},\dots, \CovarSpacebar\ind{\tau})\neq\istar}.
\end{align*}
We now state and prove the following technical lemma, which asserts
that there exists a choice of $\istar$ for which the right-hand side
above is large.
    \begin{lemma}
      \label{prop:posterior_uniform}
      For any algorithm, there exists a choice for $\istar\in\brk*{N}$ such that
      \[
        \min\crl*{\tau : \jhat\ind{\tau}(\CovarSpacebar\ind{1},\dots, \CovarSpacebar\ind{\tau})=\istar} \geq{} \bigom(N).
      \]
    \end{lemma}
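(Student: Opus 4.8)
The plan is to use memorylessness to reduce the adversary's choice of $\istar$ to a pigeonhole argument on an $\istar$-independent ``canonical'' run of the construction. The first step I would carry out is to make precise the claim, already foreshadowed in the construction, that $\jhat\ind{\tau}$ is a deterministic function of the \emph{chain} $(\CovarSpacebar\ind{1},\dots,\CovarSpacebar\ind{\tau})$ alone, with $\jhat\ind{\tau}(\CovarSpacebar\ind{1},\dots,\CovarSpacebar\ind{\tau})\in\CovarSpacebar\ind{\tau}\cup\crl{\perp}$. This is exactly where \memless{}ness is used: the oracle output is $\fhat\ind{t}=\indic\crl{\cdot\in\CovarSpacebar\ind{\tau_t}}$, so it is determined by $\CovarSpacebar\ind{\tau_t}$; the learner's mean prediction $f\ind{t}$ depends only on $\fhat\ind{1},\dots,\fhat\ind{t}$ (not on past covariates), hence only on $\CovarSpacebar\ind{1},\dots,\CovarSpacebar\ind{\tau_t}$; and $\jhat\ind{\tau}$ is read off from the block-$\tau$ predictions together with $\CovarSpacebar\ind{\tau}$. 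I would also record how the chain evolves given $\istar$: $\CovarSpacebar\ind{1}=[N]$ (independent of $\istar$), and $\CovarSpacebar\ind{\tau+1}=\CovarSpacebar\ind{\tau}\setminus(\crl{x_\tau}\setminus\crl{\istar})$ with $x_\tau=\istar$ precisely when $\jhat\ind{\tau}=\perp$; so the chain shrinks by exactly one element at each block where $\jhat\ind{\tau}\notin\crl{\perp,\istar}$ and is unchanged otherwise, and in particular $\istar\in\CovarSpacebar\ind{\tau}$ for every $\tau$.

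Next I would introduce an $\istar$-\emph{independent} ``canonical process'': set $\tilde A_1=[N]$ and, for $\tau=1,\dots,K$, let $\tilde v_\tau\ldef\jhat\ind{\tau}(\tilde A_1,\dots,\tilde A_\tau)$, with $\tilde A_{\tau+1}\ldef\tilde A_\tau\setminus\crl{\tilde v_\tau}$ if $\tilde v_\tau\neq\perp$ and $\tilde A_{\tau+1}\ldef\tilde A_\tau$ otherwise. Let $j_1,\dots,j_{k^\star}$ be the successive non-$\perp$ values $\tilde v_\tau$, recorded at block indices $\tau_1<\dots<\tau_{k^\star}$. These elements are distinct (each $j_i$ is removed from the set at block $\tau_i$ and never returns), so $k^\star\le N$, and $\tau_{k^\star}\ge k^\star$ since the $\tau_i$ are distinct positive integers. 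The key lemma I would prove, by induction on $\tau$, is a coupling: for any $\istar\in[N]$ with $\istar\neq j_1,\dots,j_{m-1}$, where $m\ldef\min\crl{i:j_i=\istar}$ (and $m\ldef k^\star+1$, $\tau_{k^\star+1}\ldef K+1$, if no such $i$ exists), the actual run of the construction with parameter $\istar$ satisfies $\CovarSpacebar\ind{\tau}=\tilde A_\tau$ — and hence $\jhat\ind{\tau}(\CovarSpacebar\ind{1},\dots,\CovarSpacebar\ind{\tau})=\tilde v_\tau$ — for all $\tau<\tau_m$. Indeed, if the chains agree through block $\tau<\tau_m$, then $\jhat\ind{\tau}$ agrees and equals $\tilde v_\tau\in\crl{\perp,j_1,\dots,j_{m-1}}$, which is not $\istar$, so consulting the two update rules above both chains change in exactly the same way.

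To conclude I would split into two cases. If $k^\star<N$, pick any $\istar\in[N]\setminus\crl{j_1,\dots,j_{k^\star}}$ (a nonempty set); then $m=k^\star+1$, the coupling holds for all $\tau\in[K]$, and $\jhat\ind{\tau}=\tilde v_\tau\in\crl{\perp,j_1,\dots,j_{k^\star}}$ is never equal to $\istar$, so $\min\crl{\tau:\jhat\ind{\tau}(\CovarSpacebar\ind{1},\dots,\CovarSpacebar\ind{\tau})=\istar}=+\infty\ge\Omega(N)$. If $k^\star=N$ (in which case necessarily $K\ge\tau_N\ge N$), pick $\istar=j_N$; then $\jhat\ind{\tau}\neq\istar$ for all $\tau<\tau_N$ by the coupling, while at block $\tau_N$ the chains still agree and $\jhat\ind{\tau_N}=\tilde v_{\tau_N}=j_N=\istar$, so $\min\crl{\tau:\jhat\ind{\tau}(\CovarSpacebar\ind{1},\dots,\CovarSpacebar\ind{\tau})=\istar}=\tau_N\ge N$. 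Either way, there is a choice of $\istar\in[N]$ for which this minimum is $\Omega(N)$, which is the claim.

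The main obstacle is the combination of the first step and the inductive coupling: carefully checking that $\jhat\ind{\tau}$ genuinely depends only on the chain (this is precisely where \memless{}ness and the special form $\fhat\ind{t}=\indic\crl{\cdot\in\CovarSpacebar\ind{\tau_t}}$ of the oracle outputs are needed — were the learner allowed to remember $\covar\ind{1},\dots,\covar\ind{t-1}$, the chosen covariates would leak information about $\istar$ and the argument would collapse), and that $\istar$ is never deleted from the chain in the actual run, so that $\CovarSpacebar\ind{\tau}$ tracks $\tilde A_\tau$ exactly up to the first block where $\jhat\ind{\tau}$ hits $\istar$. The remaining ingredients — distinctness of the $j_i$, the bound $k^\star\le N$, and $\tau_{k^\star}\ge k^\star$ — are elementary.
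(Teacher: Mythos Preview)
Your proposal is correct and follows essentially the same approach as the paper. The paper's proof introduces exactly your ``canonical process'' (calling it $X\ind{\tau}$ rather than $\tilde A_\tau$) and proves the same coupling: for any $\istar\in X\ind{\tau}$, the actual run satisfies $A\ind{s}=X\ind{s}$ for all $s\leq\tau$, whence $\jhat\ind{s}\neq\istar$ for $s<\tau$; since $X\ind{\tau}$ shrinks by at most one per block, it stays nonempty for $\tau<N$. Your case split on $k^\star$ and the explicit discussion of why $\jhat\ind{\tau}$ depends only on the chain (via memorylessness and the form $\fhat\ind{t}=\indic\crl{\cdot\in\CovarSpacebar\ind{\tau_t}}$) just make the paper's terse argument more explicit. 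One minor wording slip: ``$x_\tau=\istar$ precisely when $\jhat\ind{\tau}=\perp$'' should read $\jhat\ind{\tau}\in\crl{\perp,\istar}$, but your subsequent ``the chain shrinks exactly when $\jhat\ind{\tau}\notin\crl{\perp,\istar}$'' is correct and is all you actually use.
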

  \begin{proof}[\pfref{prop:posterior_uniform}]
    Consider a more abstract process, which we claim captures the
    evolution of $\CovarSpacebar\ind{\tau}$. Let $\istar\in\brk*{N}$,
    and let $A\ind{1}=\brk{N}$. We consider a sequence of sets $\crl*{A\ind{\tau}}_{\tau\geq{}1}$ evolving according
    to the following process, parameterized by a sequence of functions
    $\crl*{g\ind{\tau}:(2^{\brk{N}})^{\otimes \tau}\to\brk{N}\cup\crl{\perp}}_{\tau\geq{}1}$ and index
    $\istar\in\brk{N}$.

    For $\tau\geq{}1$:
    \begin{itemize}
    \item If $g\ind{\tau}(A\ind{1},\dots,A\ind{\tau})=\perp$, $A\ind{\tau+1}\gets{}A\ind{\tau}$.
    \item If $g\ind{\tau}(A\ind{1},\dots,A\ind{\tau}) \neq \perp$, let
      $A\ind{\tau+1}=A\ind{\tau}\setminus\crl{g\ind{\tau}(A\ind{1},\dots,A\ind{\tau})}$ if
      $g\ind{\tau}(A\ind{1},\dots,A\ind{\tau})\neq\istar$, and let
      $A\ind{\tau+1}\gets{}A\ind{\tau}$ otherwise.
    \end{itemize}
    We claim that there exists $\istar\in\brk{N}$ such that
    $g\ind{s}(A\ind{1},\dots,A\ind{s})\neq\istar$ for all $s<N$ under this process. To see
    this define a set $X\ind{\tau}$ inductively: Starting from
    $X\ind{1}=\brk{N}$, set
    $X\ind{\tau+1}\gets{}X\ind{\tau}\setminus{}g\ind{\tau}(X\ind{1},\dots,X\ind{\tau})$ if
    $g\ind{\tau}(X\ind{1},\dots,X\ind{\tau})\neq\perp$, and set $X\ind{\tau+1}\gets{}X\ind{\tau}$
    otherwise; note that this process does not depend on the choice $\istar$,
    since $g\ind{\tau}$ itself does not depend on $\istar$.

For any $\tau$, observe that for any $i\in{}X\ind{\tau}$, if we set
$\istar=i$, then $g\ind{s}(A\ind{1},\dots,A\ind{s})\neq{}\istar$ for all $s<\tau$, and so
$A\ind{\tau}=X\ind{\tau}$. It follows that as long as
$X\ind{\tau}\neq\emptyset$, we can choose $\istar$ such that
$g\ind{s}(A\ind{1},\dots,A\ind{s})\neq{}\istar$ for all $s<\tau$. Since $X\ind{\tau}$
shrinks by at most one element per iteration, it follows that this is
possible for all $\tau<N$.

  \end{proof}
    
    It follows immediately from \cref{prop:posterior_uniform} that by choosing $\istar$ as guaranteed by
    the lemma, we have
  \begin{align*}
    \sum_{t=1}^{T}(\jqedit{f\ind{t}}(\covar\ind{t})-f_{\istar}(\covar\ind{t}))^2
    &\geq \frac{\lfloor\offgan\rfloor+1}{8} \cdot \bigom(\min\set{N,K})\\
    &\geq \Omega ( \min \set{N(\offgan+1) ,T}).
  \end{align*}

\end{proof}

\SOELowerBound*

  \begin{proof}[Proof of \refmemorylessvar]%
   Given a parameter $N\in\bbN$, we consider the instance
   $(\CovarSpace,\ObsSpace, \ValSpace,\Kernel,\FunSpace)$ given by
   $\CovarSpace = \set{\covar_i}_{i\in [N]}$, $\ValSpace =\ObsSpace
   =\set{0,1}$, $\divSymbol = \Dbinshort$ and $\Kernel(\val)=\indic_\val$,
   with \FunSpaceName $\FunSpace \ldef{}\set{\fun_i}_{i\in [N]}$ given by
\begin{align*}
\fun_i(\covar_j) = \mathbbm{1}(j\geq i).
\end{align*}

Let any memoryless oracle-efficient algorithm defined by prediction
map $\Lrn\ind{1} = \dots = \Lrn\ind{T}=\Lrn$ be given. We lower
bound the algorithm's online estimation error by considering two cases.

\textbf{Case 1:} \emph{There exists a \funName $\fun_i$ such that the distribution $\mu_i = \Lrn(\fun_i)$ satisfies $\mu_i(\fun_i) < 1/2$.}
We consider two sub-cases of Case 1.
The first subcase is where $\mu_i\prn*{ \set{ \fun_j: j>i} } > 1/4$.
In this case, we choose the sequence of covariates as
$\covar\ind{1}=\dots=\covar\ind{T} = \covar_{i}$, set $\fstar=\fun_i$,
and choose $\AlgOff$ to be the offline estimation oracle that sets
$\fhat\ind{1}=\dots=\fhat\ind{T}=\fun_i$. With this choice, the
offline estimation error for the oracle satisfies
\begin{align*}
\forall t\in [T],~ \sum\limits_{s< t} \Dbin{\fstar(\covar\ind{s})}{\fhat\ind{t}(\covar\ind{s})}  =\sum\limits_{s< t} \Dbin{\fun_i(\covar\ind{s})}{\fun_i(\covar\ind{s})} = 0.
\end{align*} 
However, the online estimation error satisfies 
\begin{align*}
    \sum\limits_{t=1}^{T}\En_{\fbar\sim \mu\ind{t}}\brk*{\Dbin{\fstar(\covar\ind{t})}{\fbar(\covar\ind{t})}} &= \sum\limits_{t=1}^{T}\En_{\fbar\sim \mu_i}\brk*{\Dbin{f_i(x_i)}{\fbar(x_i)}} \\
    &\geq \sum\limits_{t=1}^{T} \sum\limits_{j>i} \mu_i\prn*{ \fun_j }  \Dbin{f_i(x_i)}{f_j(\covar_i)}\\
    &=  \sum\limits_{t=1}^{T} \sum\limits_{j>i} \mu_i\prn*{ \fun_j }  \Dbin{f_i(x_i)}{f_{i+1}(\covar_i)}\\
    & \geq \frac{1}{4} \sum\limits_{t=1}^{T}\Dbin{\fun_i(\covar_i)}{\fun_{i+1}(\covar_i)} = T/4.
\end{align*}

The second sub-case of Case 1 is where $\mu_i\prn*{ \set{ \fun_j: j>i}
} \leq 1/4$. This, combined with the fact that $\mu_i(\fun_i) < 1/2$,
gives $\mu_i \prn*{  \set{\fun_j: j<i}  } > 1/4$. In this sub-case, we
choose the sequence of covariates as
$\covar\ind{1}=\dots=\covar\ind{T} = \covar_{i-1}$, set
$\fstar=\fun_i$, and choose $\AlgOff$ to be the offline estimation
oracle that sets $\fun\ind{1}=\dots=\fun\ind{T}=\fun_i$. In this case,
the offline estimation error is zero:
\begin{align*}
\forall t\in [T],~ \sum\limits_{s< t} \Dbin{\fstar(\covar\ind{s})}{\fhat\ind{t}(\covar\ind{s})}  =\sum\limits_{s< t} \Dbin{\fun_i(\covar\ind{s})}{\fun_i(\covar\ind{s})} = 0.
\end{align*} 
In addition, the online estimation error is lower bounded by
\begin{align*}
    \sum\limits_{t=1}^{T}\En_{\fbar\sim \mu\ind{t}}\brk*{\Dbin{\fstar(\covar\ind{t})}{\fbar(\covar\ind{t})}} 
    &= \sum\limits_{t=1}^{T}\En_{\fbar\sim \mu_i}\brk*{\Dbin{f_i(x_{i-1})}{\fbar(x_{i-1})}} \\
    &\geq \sum\limits_{t=1}^{T} \sum\limits_{j<i} \mu_i\prn*{ \fun_j }  \Dbin{f_i(x_{i-1})}{f_j(\covar_{i-1})}\\
    &=  \sum\limits_{t=1}^{T} \sum\limits_{j<i} \mu_i\prn*{ \fun_j }  \Dbin{f_i(x_{i-1})}{f_{i-1}(\covar_{i-1})}\\
    &\geq \frac{1}{4} \sum\limits_{t=1}^{T}\Dbin{\fun_i(\covar_{i-1})}{\fun_{i-1}(\covar_{i-1})} = T/4.
\end{align*}

\textbf{Case 2:} \emph{For all \funName $\fun_i\in \FunSpace$, the distribution $\mu_i = \Lrn(\fun_i)$ has $\mu_i(\fun_i) \geq 1/2$.}

In this case, we choose $x\ind{1},\ldots,x\ind{T}$ by repeating each
of the covariates $\lfloor \offgan \rfloor + 1$ number of times in
increasing order by their index, and choose the offline estimation
oracle $\AlgOff$ to return $f_1,\ldots,f_N$ in the same block-wise but
with the index offset by $1$.

Formally, let $\tau_t = \lceil  t/( \lfloor \offgan \rfloor + 1 )
\rceil$ denote the index of the block that round $t$ belongs to, so
that $\tau_1 = \dots = \tau_{\lfloor \offgan \rfloor + 1} = 1$,
$\tau_{\lfloor \offgan \rfloor + 2} = \dots = \tau_{2\lfloor \offgan
  \rfloor + 2} = 2 $ and so on. We choose $\covar\ind{t} =
\covar_{\min\set{\tau_t,N}}$, and choose the offline oracle $\AlgOff$ to
set $\fhat\ind{t} = \fun_{\min\set{\tau_{t},N}}$. Finally, we set
$\fstar=\fun_N$.

We have that for all $t$, the offline estimation error of the oracle
is bounded as.
\begin{align*}
    \sum\limits_{s< t} \Dbin{\fhat\ind{t}(\covar\ind{s})}{\fstar(\covar\ind{s})}  &=\sum\limits_{s<t} \Dbin{\fun_N(\covar_{\min\set{\tau_s,N}})}{\fun_{\min\set{\tau_{t},N}}(\covar_{\min\set{\tau_s,k}})} \\
    &= \sum\limits_{s<t, \tau_s = \tau_{t}} 1\leq  \lfloor \offgan \rfloor \leq \offgan.
\end{align*} 

However, the online estimation error is lower bounded by
\begin{align*}
  \sum\limits_{t=1}^{T} \En_{\fbar\sim \mu\ind{t}} \brk*{\Dbin{\fstar(\covar\ind{t})}{\fbar(\covar\ind{t})}} &\geq \frac{1}{2} \sum\limits_{t=1}^{T}\Dbin{\fun_N(\covar_{\min\set{\tau_t,N}})}{\fun_{\min\set{\tau_t,N}}(\covar_{\min\set{\tau_t,N}})} \\
    &\geq \Omega(\min\set{T, N(\offgan + 1)}).
\end{align*}

\end{proof}

\soeupperbound*

\begin{proof}[\pfref{prop:soe-upper-bound}]
The proof is very similar to the second part of the proof of
\cref{lem:general-version-space}. For each $x\in\cX$, define $N_{t-1}(\covar) =
\sum\limits_{s=1}^{t-1} \mathbbm{1}(\covar\ind{s} = \covar)$. Then we
can write the online estimation error as
\begin{align*}
    \sum\limits_{t=1}^{T}\loss{\fhat\ind{t}(\covar\ind{t})}{\fstar(\covar\ind{t})} = \sum\limits_{t=1}^{T} \sum\limits_{\covar\in \CovarSpace} \frac{\indic(\covar=\covar\ind{t})}{N_{t-1}(\covar)\vee 1}\cdot (N_{t-1}(\covar)\vee 1) \cdot\loss{\fhat\ind{t}(\covar)}{\fstar(\covar)},
\end{align*}
As a consequence of the offline estimation guarantee for $\fhat\ind{t}$, we have that
\begin{align*}
  (N_{t-1}(\covar)\vee 1) \loss{\fhat\ind{t}(\covar)}{\fstar(\covar)} \leq   \prn*{\sum\limits_{s=1}^{t-1} \loss{\fhat\ind{t}(\covar\ind{s})}{\fstar(\covar\ind{s})}} \vee 1\leq \offgan + 1.
\end{align*}
Combining this with the preceding inequality gives
\begin{align*}
  \sum\limits_{t=1}^{T}\loss{\fhat\ind{t}(\covar\ind{t})}{\fstar(\covar\ind{t})} \leq (\offgan +1) \sum\limits_{t=1}^{T} \sum\limits_{\covar\in \CovarSpace} \frac{\indic(\covar=\covar\ind{t})}{N_{t-1}(\covar)\vee 1}.
\end{align*}
Now, for any $\covar\in \CovarSpace$, define $t_\covar \ldef
\min\set{t\leq T: x\ind{t}=x}$ if this set is not empty, and let
$t_\covar = T$ otherwise. From this definition 
and the fact that $1+1/2+\dots+1/T\leq 1+\log T$, we have that 
\begin{align*}
  \sum\limits_{\covar\in \CovarSpace} \sum\limits_{t=1}^{T} \frac{\indic(\covar=\covar\ind{t})}{N_{t-1}(\covar)\vee 1} &= \sum\limits_{\covar\in \CovarSpace}  \prn*{\sum\limits_{t=1}^{t_\covar} \frac{\indic(\covar=\covar\ind{t})}{N_{t-1}(\covar)\vee 1} +  \sum\limits_{t=t_\covar+1}^{T} \frac{\indic(\covar=\covar\ind{t})}{N_{t-1}(\covar)\vee 1} }\\
  &\leq \sum\limits_{\covar\in \CovarSpace} \prn*{ 1  +    \sum_{i=1}^{N_{T-1}(x)} \frac{1}{i} } \\
    &\leq 2 |\CovarSpace| + |\CovarSpace|\log T\leq 3 |\CovarSpace|\log T.
\end{align*}
We conclude that
\begin{align*}
  \sum\limits_{t=1}^{T}\loss{\fstar(\covar\ind{t})}{\fhat\ind{t}(\covar\ind{t})}   &\leq  3(\offgan+1)\cdot  |\CovarSpace|\log T.
\end{align*}

\end{proof}

\subsection{Proofs from \creftitle{sec:delayed}}
\label{app:delayed}
\OELtoDOL*

\begin{proof}[\pfref{thm:delayed_reduction}]
  Using the metric-like loss property, we can bound the online
  estimation error of \cref{alg:reduction-to-delayed-OL} by
  \begin{align*}
&\sum_{t=1}^{T}\En_{\fbar\sim{}\mu\ind{t}}\brk*{\loss{\fbar(\covar\ind{t})}{\fstar(\covar\ind{t})}}
    \\
&\leq
C_\divSymbol\cdot \sum_{t=1}^{T}\En_{\fbar\sim{}\mu\ind{t}}\brk*{\loss{\fbar(\covar\ind{t})}{\ftil\ind{t}(\covar\ind{t})}}
         + C_\divSymbol\cdot\sum_{t=1}^{T}\loss{\ftil\ind{t}(\covar\ind{t})}{\fstar(\covar\ind{t})}.
  \end{align*}
By the regret guarantee for the delayed online learning algorithm $\AlgDOL$, we have
\begin{align*}
  \sum_{t=1}^{T}\En_{\fbar\sim{}\mu\ind{t}}\brk*{\loss{\fbar(\covar\ind{t})}{\ftil\ind{t}(\covar\ind{t})}} \leq \gamma\cdot{} \sum_{t=1}^{T}\loss{\ftil\ind{t}(\covar\ind{t})}{\fstar(\covar\ind{t})} + \RDOL(T,N,\gamma)
\end{align*}
since $\fstar\in\cF$. Combining these observations, we have that
\begin{align*}
\sum_{t=1}^{T}\En_{\fbar\sim{}\mu\ind{t}}\brk*{\loss{\fbar(\covar\ind{t})}{\fstar(\covar\ind{t})}}
  \leq{}
  C_\divSymbol(\gamma +1)\sum_{t=1}^{T}\loss{\ftil\ind{t}(\covar\ind{t})}{\fstar(\covar\ind{t})}
  +  \RDOL(T,N,\gamma).
\end{align*}
Finally, from the definition of $\ftil\ind{t}$ and the convexity of
the loss $\divSymbol$, we have 
  \begin{align*}
    \sum_{t=1}^{T}\loss{\ftil\ind{t}(\covar\ind{t})}{\fstar(\covar\ind{t})}
    &\leq{} N +
\frac{1}{N}\sum_{t=1}^{T-N}\sum_{i=t+1}^{t+N}\loss{\fhat\ind{i}(\covar\ind{t})}{\fstar(\covar\ind{t})}
    \\
    &=
      N+ \frac{1}{N}\sum_{t=2}^{T}\sum_{i<t}\loss{\fhat\ind{t}(\covar\ind{i})}{\fstar(\covar\ind{i})}\\
    &\leq{} N+ \frac{\offgan{}T}{N},
  \end{align*}
  where the final line uses the offline estimation guarantee for
  $\AlgOff$. This completes the proof.

\end{proof}

\DOLToOL*

\begin{proof}[\pfref{prop:delayed}]
  This result follows using \cref{lem:delayed-to-ol} with $\gamma=2$,
  choosing $\AlgOL$ to be the exponential weights
  algorithm described in Corollary 2.3 of
  \citet{cesa2006prediction}, which has
  \begin{align*}
    \ROL(T,\gamma) \leq \bigoh(\log |\FunSpace|)
  \end{align*}
for all $T\in\bbN$ and $\gamma\geq{}1$.

\end{proof}

\BinaryLearnability*

\begin{proof}[\pfref{thm:binary-loss-learnability}]
For the lower bound we recall that for $\offgan = 0$, Lemma 21.6 of
\cite{shalev2014understanding} states that any algorithm
(oracle-efficient or not) has to suffer $\Omega(\Ldim(\FunSpace))$
online estimation error in the worst case.

For the remainder of the proof, we focus on establishing the upper
bound. \dfedit{For any set of \funNames
  $\FunSpace:\cX\to\Delta(\set{0,1}) $, define the majority vote function
  $\Major(\FunSpace)$ for a class $\FunSpace$ via
  \[\Major(\FunSpace)(x) = \mathbbm{1}\crl*{ \sum\limits_{\fun\in
        \FunSpace} \fun(1\mid\covar) \geq \sum\limits_{\fun\in
        \FunSpace} \fun(0\mid\covar) }\]
  for all $\covar\in \CovarSpace$.} We will show that
\cref{alg:reduction-to-delayed-OL-4-binary} (a variant of
\cref{alg:reduction-to-delayed-OL} that replaces averaging with a
majority vote), with a properly chosen
delayed online learning algorithm $\AlgDOL$, can obtain
\[\bigoh\prn*{\sqrt{\offgan{} \Ldim(\FunSpace) \cdot T\log T } +
  \Ldim(\FunSpace)\log T}\] online estimation error. 

\begin{algorithm}[t]
\caption{Reduction to delayed online learning for binary loss}
\label{alg:reduction-to-delayed-OL-4-binary}
\begin{algorithmic}[1]
  \State \textbf{input:} Offline estimation oracle $\Orc$ with
  parameter $\offgan\geq{}0$, delay parameter $N\in\bbN$, delayed
  online learning algorithm $\AlgDOL$ for class $\cF$.
\For{$t=1,2,\dots,T$}
\State Receive $\fhat\ind{t}$ from offline estimation algorithm.
\If{$t> N$}
\State Let $\ftil\ind{t-N} = \Major\prn[\big]{\crl{\fhat\ind{i}}_{i=t-N+1}^{t}}$. \algcommentlight{This is the only different step compared to \cref{alg:reduction-to-delayed-OL}.}
\State \multiline{Let 
$\ell\ind{t-N}(\fun) =  \Bloss{\ftil\ind{t-N}(\covar\ind{t-N})}{\fun(\covar\ind{t-N})} $ and pass $\ell\ind{t-N}(\cdot)$ to $\AlgDOL$ as the delayed feedback.}
\EndIf
\State \multiline{Let
$\mu\ind{t}=\AlgDOL\ind{t}(\ell\ind{1},..,\ell\ind{t-N})$ be the
delayed online learner's prediction distribution.}
\State Predict with $\dfedit{\fbar\ind{t}\sim}\mu\ind{t}$ and receive $\covar\ind{t}$.
\EndFor
\end{algorithmic}
\end{algorithm}

Let $\gamma\geq 1$ be fixed, and consider any delayed online learning
algorithm $\AlgDOL$ that achieves
    \begin{align*}
    \sum_{t=1}^{T}\En_{\fbar\ind{t}\sim{}\mu\ind{t}}\brk*{\ell\ind{t}(\fbar\ind{t})}
    - \gamma\cdot{}\min_{\fun\in\FunSpace}\sum_{t=1}^{T}\ell\ind{t}(\fun) \leq{} \RDOL(T,N,\gamma).
    \end{align*}
    for any sequence of losses in the delayed online learning setting
    with delay $N$ (i.e., where we receive loss $\ell\ind{t}$
  at time $t+N$ for some $N\geq{}0$). 

  We proceed to bound the regret of
  \cref{alg:reduction-to-delayed-OL-4-binary}. Since the loss $\Dbinshort$ is \mlike, the online estimation error is
upper bounded by 
  \begin{align*}
\sum_{t=1}^{T}\En_{\fbar\sim\mu\ind{t}}\brk*{\Bloss{\fbar(\covar\ind{t})}{\fstar(\covar\ind{t})}} \leq \sum_{t=1}^{T}\En_{\fbar\sim\mu\ind{t}}\brk*{\Bloss{\fbar(\covar\ind{t})}{\ftil\ind{t}(\covar\ind{t})}}
         + \sum_{t=1}^{T}\Bloss{\ftil\ind{t}(\covar\ind{t})}{\fstar(\covar\ind{t})}.
  \end{align*}
Next, the guarantee of $\AlgDOL$ ensures that 
  \begin{align*}
\sum_{t=1}^{T}\En_{\fbar\sim\mu\ind{t}}\brk*{\Bloss{\fbar(\covar\ind{t})}{\ftil\ind{t}(\covar\ind{t})}}
    \leq{} 
    \gamma\sum_{t=1}^{T}\Bloss{\ftil\ind{t}(\covar\ind{t})}{\fstar(\covar\ind{t})}
    +\RDOL(T,N,\gamma),
  \end{align*}
  since $\fstar\in\cF$.
  Combining these observations, we have that
  \begin{align*}
\sum_{t=1}^{T}\En_{\fbar\sim\mu\ind{t}}\brk*{\Bloss{\fbar(\covar\ind{t})}{\fstar(\covar\ind{t})}}
    \leq{}
    (\gamma+1)\sum_{t=1}^{T}\Bloss{\ftil\ind{t}(\covar\ind{t})}{\fstar(\covar\ind{t})}
    +\RDOL(T,N,\gamma).
  \end{align*}
  Finally, we observe that for each step $t$, if
  $\Bloss{\ftil\ind{t}(\covar\ind{t})}{\fstar(\covar\ind{t})} = 1$, it
  means that least $N/2$ of the predictors
  $\fhat\ind{t+1},\ldots,\fhat\ind{t+N}$ must have predicted
  $\fstar(x\ind{t})$ incorrectly. This implies that
  \begin{align*}
    \Bloss{\ftil\ind{t}(\covar\ind{t})}{\fstar(\covar\ind{t})} \leq  \frac{2}{N}\sum\limits_{i=t+1}^{t+N} \Bloss{\fhat\ind{i}(\covar\ind{t})}{\fstar(\covar\ind{t})}.
  \end{align*} 
  But since the offline estimation
  assumption states that
  \[
    \sum_{i<t}\Bloss{\fhat\ind{t}(\covar\ind{i})}{\fstar(\covar\ind{i})}\leq\offgan,
  \]
  this implies that
  \begin{align*}
    \sum_{t=1}^{T}\Bloss{\ftil\ind{t}(\covar\ind{t})}{\fstar(\covar\ind{t})} &\leq{} N +   \frac{2}{N}\sum\limits_{t=1}^{T-N}\sum\limits_{i=t+1}^{t+N} \Bloss{\fhat\ind{i}(\covar\ind{t})}{\fstar(\covar\ind{t})}\\
    &= N+ \frac{2}{N} \sum\limits_{t=2}^{T}\sum\limits_{i<t} \Bloss{\fhat\ind{t}(\covar\ind{i})}{\fstar(\covar\ind{t})}  \\
    &\leq{} N+ \frac{2\offgan{}T}{N}.
  \end{align*}
We conclude that
\begin{align*}
  \sum_{t=1}^{T}\En_{\fbar\ind{t}\sim{}\mu\ind{t}}\brk*{\Bloss{\fbar\ind{t}(\covar\ind{t})}{\fstar(\covar\ind{t})}}
  \leq \bigoh\prn*{
   \gamma(N+ \offgan{}T/N)+\RDOL(T,N,\gamma)}.
\end{align*}

To complete the proof, we set $\gamma=2$ and choose $\AlgOL$ to be the
algorithm described in Theorem 21.10 of
\citet{shalev2014understanding}, which (by incorporating the same
technique\footnote{\jqedit{The algorithm described in Theorem 21.10 of
    \citet{shalev2014understanding} applies the exponential weights
    algorithm to a specialized class of experts, and the guarantee
    obtained is for $\ROL(T, 1)$. The analysis from Corollary 2.3 of
    \citet{cesa2006prediction} shows that for $\gamma > e/(e-1)$, the
    same algorithm obtains $\ROL(T, 2)$ scaling with
    $\bigoh(\Ldim(\FunSpace)\log T)$. We omit the details
    here since it is a standard argument.}
} as Corollary 2.3 of \citet{cesa2006prediction}), ensures
that 
\begin{align*}
\ROL(T/N, 2) \leq \bigoh(\Ldim(\FunSpace)\log T).
\end{align*}
Then by \cref{lem:delayed-to-ol} with $\gamma=2$, we have
\begin{align*}
  \RDOL(T,N,2)\leq   \bigoh(N\cdot\Ldim(\FunSpace)\log T).
\end{align*}
Setting $N=\sqrt{\offgan T/(\Ldim(\FunSpace)\log T)} \vee 1$, this yields
\begin{align*}
  \sum_{t=1}^{T}\En_{\fbar\ind{t}\sim{}\mu\ind{t}}\brk*{\loss{\fbar\ind{t}(\covar\ind{t})}{\fstar(\covar\ind{t})}}
  \leq \bigoh\prn*{\sqrt{\offgan{} \Ldim(\FunSpace) \cdot T\log T } + \Ldim(\FunSpace)\log T}.
\end{align*}

\end{proof}

\subsubsection{Supporting Lemmas}

\begin{algorithm}[htp]
\caption{Reduction from delayed online learning to non-delayed online learning}
\label{alg:reduction-to-OL}
\begin{algorithmic}[1]
  \State \textbf{input:} Delay parameter $N\in\bbN$, base online
  learning algorithm $\AlgOL$.
  \State Initialize $N$ copies $\AlgOL^1,\dots,\AlgOL^N$ of the base algorithm.
\For{$t=1,\dots,T$}
\If{$t\leq N$}
\State Let $\mu\ind{t} = \AlgOL^t(\emptyset)$.
\Else
\State Let $i\equiv t \mod N$ where $i\in [N]$.
\State Receive loss $\ell\ind{t-N}$.
\State Feed $\ell\ind{t-N}$ to $\AlgOL^i$.
\State Let $\mu\ind{t}=\AlgOL^i( \ell\ind{i},\ell\ind{i+N},\dots,\ell\ind{t-N})$.
\EndIf
\State Play $\fbar\ind{t}\sim \mu\ind{t}$.
\EndFor
\end{algorithmic}
\end{algorithm}

The following lemma is a standard result
\citep{weinberger2002delayed,mesterharm2007improving,joulani2013online,quanrud2015online}
which shows that the delayed online learning problem setting in
\cref{sec:delayed} can be generically reduced to non-delayed online
learning. The idea behind the reduction, which is displayed in
\cref{alg:reduction-to-OL}, is as follows. Given a delay parameter
$N\in \bbN$, we run $N$ copies $\AlgOL^1,\dots,\AlgOL^N$ of a given
``base'' online learning algorithm $\AlgOL$ for a class $\cF$ over disjoint subsequences
of rounds. The following lemma gives a guarantee for this reduction

\begin{lemma}[Delayed online learning reduction]
  \label{lem:delayed-to-ol}
  Let 
  $\AlgOL$ be a base online learning algorithm for the class $\cF$ with the property that for any sequence
  of losses $\ell\ind{1},\ldots,\ell\ind{T}$ in the non-delayed online
  learning setting and any $\gamma\geq{}1$,
  \[
    \sum_{t=1}^{T}\En_{\fbar\ind{t}\sim\mu\ind{t}}\brk*{\ell\ind{t}(\fbar\ind{t})}
    - \gamma\cdot\min_{f\in\cF}\sum_{t=1}^{T}\ell\ind{t}(f)
    \leq \ROL(T,\gamma).
  \]
If we run \cref{alg:reduction-to-OL}
  with delay
  parameter $N\in\bbN$, then for all $\gamma\geq 1$, the algorithm
  ensures that
  \begin{align*}
    \RDOL(T,N,\gamma) &\leq{}
\sum_{i=1}^N \prn*{\sum_{j=1}^{T/N}\En_{\fbar\ind{i+N\cdot j}\sim{}\mu\ind{i+N\cdot j}}\brk*{\ell\ind{i+N\cdot j}(\fbar\ind{i+N\cdot j})}
    -
    \gamma\cdot{}\min_{\fun\in\FunSpace}\sum_{j=1}^{T/N}\ell\ind{i+N\cdot
    j}(\fun)}\\
&\leq{}    
    N\cdot{}\ROL(T/N,\gamma)
  \end{align*}
  for online learning with delay $N$.
\end{lemma}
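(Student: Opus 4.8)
The plan is to observe that \cref{alg:reduction-to-OL} does nothing more than partition the $T$ rounds into $N$ arithmetic subsequences, indexed by the residue class $i\in[N]$ of $t$ modulo $N$, and run an independent copy $\AlgOL^i$ of the base learner on each one, playing its predictions verbatim. I would therefore prove the lemma in two steps: (i) split the delayed regret $\RDOL(T,N,\gamma)$ as a sum of $N$ per-subsequence regret terms, and (ii) argue that, on subsequence $i$, the copy $\AlgOL^i$ is faced with an ordinary (non-delayed) online learning instance, so that the base guarantee $\ROL$ applies term by term.

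For step (i), write $\sum_{t=1}^{T}$ as $\sum_{i=1}^{N}\sum_{j}$ over the rounds $t\equiv i\pmod N$ (the indexing $t=i+N\cdot j$ of the displayed bound is just bookkeeping; I would carry a $\lceil T/N\rceil$ or assume $N\mid T$ as the statement implicitly does). Since the delayed algorithm plays exactly the prediction $\fbar\ind{t}\sim\mu\ind{t}$ produced by the relevant copy, the cumulative-loss term splits with equality: $\sum_{t=1}^{T}\En_{\fbar\ind{t}\sim\mu\ind{t}}\brk*{\ell\ind{t}(\fbar\ind{t})}=\sum_{i=1}^{N}\sum_{j}\En_{\fbar\ind{i+Nj}\sim\mu\ind{i+Nj}}\brk*{\ell\ind{i+Nj}(\fbar\ind{i+Nj})}$. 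For the comparator term I would use the elementary inequality ``minimum of a sum is at least the sum of the minima'', $\min_{f\in\FunSpace}\sum_{t=1}^{T}\ell\ind{t}(f)\geq\sum_{i=1}^{N}\min_{f\in\FunSpace}\sum_{j}\ell\ind{i+Nj}(f)$. Because $\gamma\geq 1>0$ and this term enters $\RDOL$ with a minus sign, replacing it by the smaller right-hand side only increases the expression, yielding
\[
\RDOL(T,N,\gamma)\leq\sum_{i=1}^{N}\prn*{\sum_{j}\En_{\fbar\ind{i+Nj}\sim\mu\ind{i+Nj}}\brk*{\ell\ind{i+Nj}(\fbar\ind{i+Nj})}-\gamma\cdot\min_{f\in\FunSpace}\sum_{j}\ell\ind{i+Nj}(f)},
\]
which is the first claimed inequality.

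For step (ii) — the crux — I would carefully track the feedback timeline. In the delay-$N$ protocol, the loss $\ell\ind{s}$ is revealed only at round $s+N$. Consequently, when \cref{alg:reduction-to-OL} invokes copy $\AlgOL^i$ to produce the prediction for its $k$-th round $t=i+N(k-1)$, the loss functions it has been fed are precisely $\ell\ind{i},\ell\ind{i+N},\dots,\ell\ind{t-N}=\ell\ind{i+N(k-2)}$ — exactly the first $k-1$ losses of subsequence $i$, with none missing and none extra (for $k=1$ it receives $\emptyset$). From the vantage point of $\AlgOL^i$, this is a bona fide non-delayed online learning game against the loss sequence $(\ell\ind{i+N(j-1)})_{j\geq 1}$, of length at most $\lceil T/N\rceil$. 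Applying the base regret hypothesis to each copy gives $\sum_{j}\En_{\fbar\ind{i+Nj}\sim\mu\ind{i+Nj}}\brk*{\ell\ind{i+Nj}(\fbar\ind{i+Nj})}-\gamma\min_{f}\sum_{j}\ell\ind{i+Nj}(f)\leq\ROL(T/N,\gamma)$, and summing over $i\in[N]$ combined with step (i) yields $\RDOL(T,N,\gamma)\leq N\cdot\ROL(T/N,\gamma)$.

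The main obstacle is step (ii): one must argue rigorously that interleaving $N$ copies does not corrupt any copy's view — that each $\AlgOL^i$ genuinely sees the correct loss subsequence with exactly one round of feedback ``lost'' (matching the non-delayed protocol), and that the base guarantee's adversarial model still covers an adversary who may choose $\ell\ind{t}$ adaptively based on the plays of all copies, not just $\AlgOL^i$. Everything else — the loss split, the min-of-sums step, and the final summation — is routine.
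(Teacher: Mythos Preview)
Your proposal is correct and takes essentially the same approach as the paper: decompose the cumulative loss along the $N$ residue-class subsequences, use the ``min of a sum $\geq$ sum of mins'' inequality on the comparator term, and apply the base-learner guarantee to each copy. The paper's proof differs only in presentation order (it applies the base guarantee first and then sums), and is somewhat terser about the timing/adaptivity point you discuss in step~(ii).
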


\begin{proof}[\pfref{lem:delayed-to-ol}]
By the guarantee of $\AlgOL$, we have that for all $i\in [N]$,
\begin{align*}
  \sum_{j=1}^{T/N}\En_{\fbar\ind{i+N\cdot j}\sim{}\mu\ind{i+N\cdot j}}\brk*{\ell\ind{i+N\cdot j}(\fbar\ind{i+N\cdot j})}
  - \gamma\cdot{}\min_{\fun\in\FunSpace}\sum_{j=1}^{T/N}\ell\ind{i+N\cdot j}(\fun) \leq{} \ROL(T/N,\gamma).
\end{align*}
Summing up over all $i\in [N]$, we obtain
\begin{align*}
  N\cdot \ROL(T/N,\gamma)&= \sum\limits_{i=1}^{N} \sum_{j=1}^{T/N}\En_{\fbar\ind{i+N\cdot j}\sim{}\mu\ind{i+N\cdot j}}\brk*{\ell\ind{i+N\cdot j}(\fbar\ind{i+N\cdot j})}
 -\gamma\cdot \sum\limits_{i=1}^{N} \min_{\fun\in\FunSpace}\sum_{j=1}^{T/N}\ell\ind{i+N\cdot j}(\fun) \\
 &\geq \sum\limits_{i=1}^{N} \sum_{j=1}^{T/N}\En_{\fbar\ind{i+N\cdot j}\sim{}\mu\ind{i+N\cdot j}}\brk*{\ell\ind{i+N\cdot j}(\fbar\ind{i+N\cdot j})}
 -\gamma\cdot \min_{\fun\in\FunSpace}\sum\limits_{i=1}^{N} \sum_{j=1}^{T/N}\ell\ind{i+N\cdot j}(\fun) \\
 &= \sum_{t=1}^{T}\En_{\fbar\ind{t}\sim{}\mu\ind{t}}\brk*{\ell\ind{t}(\fbar\ind{t})}
 -\gamma\cdot \min_{\fun\in\FunSpace}\sum_{t=1}^{T}\ell\ind{t}(\fun).
\end{align*}

\end{proof}

\section{Proofs from \creftitle{sec:comp-results}}
\label{app:comp-results}

\subsection{Proofs from \creftitle{subsec:comp-hardness}}
\label{app:comp-hardness}

\complbimproper*

\begin{proof}[Proof of \refcomplbvar]
We frame the example proposed by \cite{blum1994separating} in their
Theorem 3.2 (see also \cite{bun2020computational}) in our setting. For
any integer $n\geq 1$, let the covariate space $\CovarSpace_n$ be
$\CovarSpace_n = \set{0,1}^n$, and set $\ValSpace =\ObsSpace =\set{0,1}$ and $\Kernel(\val)=\indic_\val$. We define a class $\FunSpace_n = \set{\fun_s:s\in \set{0,1}^{\sqrt{n}}}$ with
\begin{align*}
\fun_s(\covar) \ldef
\begin{cases}
1 & \text{if~}\covar \in c_s,\\
0 & \text{otherwise,}
\end{cases}
\end{align*}
for a certain collection of subsets $\crl*{c_s\in\cX_n}_{s\in\crl{0,1}^{\sqrt{n}}}$ defined in Definition 2 of
\cite{blum1994separating}, which is constructed based on cryptographic
functions using the assumption of existence
of one-way functions.
The precise definition will not be important. The properties we will use are:
\begin{enumerate}
\item The value $f_s(x)$ can be computed in $\poly(n)$ time for any $x\in\cX_n$.
\item For
  any polynomials $p(n),q(n)$, any (possibly randomized) online
  estimation algorithm \dfedit{(oracle-efficient or not)} which runs
  in time $p(n)$, and any time step $T\leq q(n)$, for 
  sufficiently large $n$ \jqedit{where $q(n) \ll 2^{\sqrt{n}}$},\footnote{\jqedit{The argument is essentially asymptotic, since the choice of $n$ is determined by the power of the one-way function.}}
  there exists $s\in \set{0,1}^{\sqrt{n}}$ and a sequence 
  $x_s^1 ,\ldots,x_s^{2^{\sqrt{n}}-1}, x_s^{2^{\sqrt{n}}}$ (the specific
  definition of this sequence can be found in
  \cite{blum1994separating}) such that the online estimation error
  \dfedit{under this sequence} when $\fstar=f_s$ is at least $T/4$ in expectation. Our lower bound construction for any oracle efficient online estimation algorithm with runtime bounded by $p(n)$ in time step bounded by $1\leq T\leq q(n)$ will choose the aforementioned covariate sequence as the covariates revealed with the aforementioned function as the true parameter, i.e., $x\ind{\tau} = x_s\ind{\tau}$ for $\tau\in [T]$ and $\fstar = f_s$.
\end{enumerate}

It is straightforward to see that the sequence
$(\cF_1,\cF_2...,\cF_n,...)$ admits polynomial description length as
claimed, since $\log |\cF_n| = \sqrt{n}$. %
We are left to verify that there is an offline oracle that achieves
$\offgan=0$ with $\poly(n)$-output description length, yet does not
provide any information not already available to the learner in the
setting of \citet{blum1994separating} (recall that in
the protocol of \citet{blum1994separating}, the learner gets to see
the covariates $x\ind{1}$,\dots,$x\ind{t}$ and the true labels
$y\ind{1}$,\dots,$y\ind{t-1}$ at time step $t$ before making their
prediction, but does not receive any other feedback).

Consider the following offlines oracle
$\Orc\ind{t}(x\ind{1},\dots,x\ind{t-1},y\ind{1},\dots,y\ind{t-1})$. The
oracle output $\fhat\ind{t}$ is a circuit that, on input $x$, compares
$x$ sequentially with $x\ind{1},\dots,x\ind{t}$. If $x$ is ever equal
to $x\ind{\tau}$ for some $\tau\in [t-1]$, the circuit will output
$y\ind{\tau}$. If $x$ is not equal to any $x\ind{\tau}$ for $\tau\in [t-1]$,
the circuit outputs $0$. Such a Boolean circuit can be constructed
with polynomial size in $n$ because each $x\ind{\tau}$ has length $n$ for
$\tau\in[t-1]$ and $t\leq T\leq  q(n)$ by assumption. It is easy to see
that this oracle achieves $\offgan=0$, yet does not provide any
additional information about \truefunName $\fstar$ beyond what is
available in the model of \citet{blum1994separating}. Combining all the above, we complete our lower bound proof.

Lastly, we observe that since the setting we consider is an instance of
noiseless binary classification, the classical halving algorithm
achieves an online estimation error bound of $\bigoh(\log|\cF_n|) =
\bigoh(\sqrt{n})$ \citep{cesa2006prediction}.

\end{proof}

  \subsection{Proofs from \creftitle{subsec:reductions}}
  \label{app:hellinger_reduction}

In this section, we prove \cref{thm:reduction-to-ODE} through four
layers of reductions through different variants of the online
estimation setting. In
\hyperref[app:subsec:reduction-to-ODE]{Appendix~\ref{app:subsec:reduction-to-ODE}},
we first introduce the relevant settings and the describe reductions
through them. We then combine these reductions to prove \cref{thm:reduction-to-ODE}. 
Finally, in \hyperref[app:subsec:reductions]{Appendix~\ref{app:subsec:reductions}}, we prove each of the four reduction results.

\subsubsection{Proof of \creftitle{thm:reduction-to-ODE}}
\label{app:subsec:reduction-to-ODE}

\RedToODE*

\begin{proof}[\pfref{thm:reduction-to-ODE}]%
\colt{The proof of \cref{thm:reduction-to-ODE} is algorithmic, and is based on several layers of reductions.
\begin{itemize}
\item First, using the scheme in \cref{sec:delayed}, we reduce the problem of oracle-efficient online estimation to delayed online learning with the loss function $\ell\ind{t-N}(f) =   \divX{\big}{\ftil\ind{t-N}(\covar\ind{t-N})}{f(\covar\ind{t-N})}$ defined in \cref{alg:reduction-to-delayed-OL}, where $\ftil\ind{t-N} = \frac{1}{N} \sum_{i=t-N+1}^{t} \fhat\ind{i}$ is an average of offline estimators and $N\in\bbN$ is a delay parameter.
\item Then, using a standard reduction \citep{weinberger2002delayed,mesterharm2007improving,joulani2013online,quanrud2015online}, we reduce the delayed online learning problem above to a sequence of $N$ non-delayed online learning problems, with the same sequence of loss functions; both this and the preceding step are computationally efficient.
\item To complete the reduction, we argue that the base algorithm can be used to solve the online learning problem above in an oracle-efficient fashion. To do this, we simulate interaction with the environment by sampling fictitious outcomes $y\ind{t}\sim{}\ftil\ind{t}(x\ind{t})$ from the averaged offline estimators and passing them into the base algorithm. We argue that the fictitious outcomes approximate the true outcomes well through a change-of-measure argument.
\end{itemize}
Combining the above, we conclude that given any base algorithm that efficiently performs online estimation with outcomes sampled from the \truefunName $\fstar$, we can efficiently construct a computationally efficient and oracle-efficient algorithm.
In more detail, we introduce four layers of reduction in reverse order from CDE to OEOE.
}
\arxiv{For this proof, we introduce four layers of reductions from
  OEOE to CDE. We introduce the reductions in reverse order, beginning
  with CDE and building up to OEOE.
}

\paragraph{Conditional
 Density Estimation with Reference Outcomes (CDEwRO)}

The bottom-most reduction we consider is from a setting we refer to as \emph{Conditional
 Density Estimation with Reference Outcomes (CDEwRO)} to
the (realizable) CDE setting. CDEwRO is similar to CDE, but with the
following difference. Instead of receiving \obsNames
$\obs\ind{1},\ldots,\obs\ind{T}$ sampled from the true model $\fstar(\covar\ind{t})$
directly, in CDEwRO, the outcome is sampled from a \emph{reference
  \funName} $\ftil\ind{t}(\covar\ind{t})$ which is guaranteed to be
close to $\fstar$ in a certain sense. Moreover, the covariates and the
reference parameters $\ftil\ind{1}$,\dots,$\ftil\ind{T}$ are selected
obliviously (i.e. the entire sequence is chosen by the adversary
before the online learning protocol begins).

\begin{algorithm}[htp]
\caption{Reduction from CDEwRO to CDE}
\label{alg:reduction-to-CDE}
\begin{algorithmic}[1]
\State\textbf{input:} Time $T\in\bbN$, base algorithm $\AlgCDE$.
\State Nature selects $T$ covariates
$\covar\ind{1},\dots,\covar\ind{T}$ along with the reference \funNames
$\ftil\ind{1},\dots,\ftil\ind{T}$. 
\For{$t=1,\dots,T$} 
\State \multiline{Learner predicts  $\fbar\ind{t} \sim\mu\ind{t} =  \AlgCDE(x\ind{1},\ldots,x\ind{t-1}, y\ind{1},\ldots,y\ind{t-1})$.}
\State Outcome $y\ind{t}\sim{}\ftil\ind{t}(\covar\ind{t})$ is sampled and revealed to the learner with the covariate $x\ind{t}$. 
\EndFor
\end{algorithmic}
\end{algorithm}

Our reduction from CDEwRO to CDE is given in
\cref{alg:reduction-to-CDE}. The main guarantee for this reduction is
as follows.

\begin{lemma}
\label{lem:reduction-to-CDE}
For any fixed $\zeta \geq 0$, suppose $\sum\limits_{t=1}^{T}\Dhels{\ftil\ind{t}(\covar\ind{t})}{\fstar(\covar\ind{t})} \leq \zeta$. Let 
\begin{equation}
  \label{eq:reduction-to-CDE}
  \RCDEwRO(T,\zeta) \ldef{} 3C_\cF \log V \cdot \zeta +  \RCDE(T)+ 2C_{\FunSpace}\cdot \log ( C_\cF T),
\end{equation}
where $\RCDE(T)$ is defined as in \cref{eq:online_base} by the assumption on $\AlgCDE$.
Then 
\cref{alg:reduction-to-CDE} achieves 
an expected online estimation error upper bound of
\[\sum\limits_{t=1}^{T}\En\brk*{\Dhels{\fbar\ind{t}(\covar\ind{t})}{\fstar(\covar\ind{t})}} \leq \RCDEwRO(T,\zeta)\] 
in the CDEwRO setting, and has runtime $\poly(\Time(\cF,T),T)$ .

\end{lemma}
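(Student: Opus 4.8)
The plan is to view \cref{alg:reduction-to-CDE} as running the black-box algorithm $\AlgCDE$ on a data stream whose outcomes are produced by the reference parameters $\ftil\ind{t}$ instead of by $\fstar$, and to transport the guarantee \eqref{eq:online_base}---which is only assumed to hold when the stream is realizable with a fixed target in $\cF$---to this misspecified stream via a change of measure that crucially exploits the boundedness of $V$. Because the covariates $x\ind{1},\dots,x\ind{T}$ and the reference parameters $\ftil\ind{1},\dots,\ftil\ind{T}$ are chosen obliviously, there are two natural joint laws over the outcome sequence $y\ind{1},\dots,y\ind{T}$ together with the internal randomness of $\AlgCDE$: the \emph{true law} $\wt{\bbP}$ induced by the protocol, under which $y\ind{t}\sim\ftil\ind{t}(x\ind{t})$ independently; and the \emph{fictitious law} $\bbP$ under which $y\ind{t}\sim\fstar(x\ind{t})$ independently. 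Let $X\ldef\sum_{t=1}^{T}\En_{\fbar\sim\mu\ind{t}}\brk*{\Dhels{\fbar(x\ind{t})}{\fstar(x\ind{t})}}$ denote the quantity to be controlled; it is a $[0,T]$-valued random variable and, as a measurable function of $(y\ind{1},\dots,y\ind{T},\text{algorithm randomness})$, it is literally the \emph{same} function under both laws, so the whole task is to relate $\En_{\wt{\bbP}}\brk*{X}$ to what we know about $X$ under $\bbP$.

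First I would record how $X$ behaves under $\bbP$. Under $\bbP$ the stream is exactly a realizable CDE instance with target $\fstar\in\cF$, so \eqref{eq:online_base} applies verbatim: $\bbP\brk*{X>\RCDE(T)+C_{\cF}\log(1/\delta)}\le\delta$ for all $\delta\in(0,e^{-1})$. I would turn this sub-exponential upper tail into a moment generating function estimate of the form $\log\En_{\bbP}\brk*{e^{\lambda X}}\le \lambda\RCDE(T)+\lambda C_{\cF}+\log\tfrac{2}{1-\lambda C_{\cF}}$, valid for all $0<\lambda<1/C_{\cF}$, by treating the range $X\le\RCDE(T)+C_{\cF}$ trivially and integrating the exponential tail above it.

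Second I would bound the divergence between the two laws. By the chain rule for KL divergence and the fact that the covariate and reference sequences are fixed in advance, $\Dkl{\wt{\bbP}}{\bbP}=\sum_{t=1}^{T}\Dkl{\ftil\ind{t}(x\ind{t})}{\fstar(x\ind{t})}$. Since every per-step likelihood ratio $\ftil\ind{t}(y\mid x\ind{t})/\fstar(y\mid x\ind{t})$ is at most $V$, a standard comparison of KL divergence with squared Hellinger distance under a bounded likelihood ratio gives $\Dkl{\ftil\ind{t}(x\ind{t})}{\fstar(x\ind{t})}\le (2+\log V)\,\Dhels{\ftil\ind{t}(x\ind{t})}{\fstar(x\ind{t})}\le 3\log V\cdot\Dhels{\ftil\ind{t}(x\ind{t})}{\fstar(x\ind{t})}$ (using $V\ge e$), whence the hypothesis $\sum_t\Dhels{\ftil\ind{t}(x\ind{t})}{\fstar(x\ind{t})}\le\zeta$ yields $\Dkl{\wt{\bbP}}{\bbP}\le 3\log V\cdot\zeta$. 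Combining with the previous step via the Donsker--Varadhan / Gibbs variational inequality $\En_{\wt{\bbP}}\brk*{X}\le\tfrac1\lambda\prn*{\Dkl{\wt{\bbP}}{\bbP}+\log\En_{\bbP}\brk*{e^{\lambda X}}}$ and then choosing $\lambda=\tfrac1{C_{\cF}}\prn*{1-\tfrac1{C_{\cF} T}}$ balances the terms so that $\tfrac1\lambda\Dkl{\wt{\bbP}}{\bbP}\approx 3C_{\cF}\log V\cdot\zeta$, the $\RCDE(T)$ term passes through, and the residual $\tfrac1\lambda\bigl(\lambda C_{\cF}+\log\tfrac{2}{1-\lambda C_{\cF}}\bigr)= O\bigl(C_{\cF}\log(C_{\cF} T)\bigr)$, which reproduces $\RCDEwRO(T,\zeta)$ as in \eqref{eq:reduction-to-CDE} up to the absolute constant on the lower-order term (using $X\le T$ to absorb a stray $O(\log V\cdot\zeta/T)$). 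The runtime claim is immediate: each round invokes $\AlgCDE$ once and draws one sample $y\ind{t}\sim\ftil\ind{t}(x\ind{t})$, so the total cost is $\poly(\Time(\cF,T),T)$.

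The step I expect to be the main obstacle is precisely the change of measure. A comparison through total variation would only yield $\abs*{\En_{\wt{\bbP}}\brk*{X}-\En_{\bbP}\brk*{X}}\le T\cdot\Dtv{\wt{\bbP}}{\bbP}\le T\sqrt{2\zeta}$, which is linear in $T$ and hence useless, while a multiplicative comparison through $\sup_y d\wt{\bbP}/d\bbP$ incurs the catastrophic factor $V^{T}$. The resolution---and the reason this reduction is specialized to conditional density estimation---is that bounding the per-step likelihood ratio by $V$ makes the \emph{aggregate} KL divergence controlled by the Hellinger budget $\zeta$ up to a single factor $\log V$ rather than a factor $T$, and that the base algorithm's concentration is exactly sub-exponential enough for the variational formula to absorb that KL without reintroducing $T$-dependence; the $\log(C_{\cF} T)$ in the lower-order term is exactly the price of pushing $\lambda$ up to near $1/C_{\cF}$ in order to land the right leading constant on the $\zeta$ term. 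A minor technical point to handle cleanly is that the internal randomness of $\AlgCDE$ has the same law under $\wt{\bbP}$ and $\bbP$ and is independent of the outcomes, so it may be conditioned on before applying the variational inequality; the remaining work is just bookkeeping of the constants in the MGF estimate.
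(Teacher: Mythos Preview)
Your proposal is correct and follows essentially the same route as the paper's own proof: Donsker--Varadhan change of measure from the reference law to the $\fstar$-law, the KL chain rule plus the $(2+\log V)$ KL-to-Hellinger comparison to bound the divergence by $3\log V\cdot\zeta$, and conversion of the high-probability tail bound \eqref{eq:online_base} into an MGF estimate. The only cosmetic difference is that the paper takes $\eta=1/C_{\cF}$ exactly and uses the truncation $X\le T$ to cut off the (then borderline) integral $\int_1^{e^{C_{\cF}T}}\tfrac{1}{t}\,dt$, whereas you take $\lambda$ just below $1/C_{\cF}$ and pay a small extra $O(\log V\cdot\zeta/T)$ that you then absorb; if you want the stated constants exactly, simply adopt the paper's choice $\eta=1/C_{\cF}$ together with $X\le T$.
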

The key technique in the proof of this lemma is a change of measure
argument based on Donsker-Varadhan \citep{polyanskiy2014lecture}.

\paragraph{Conditional Density Estimation with Reference \FunNames (CDEwRP)}

The next reduction in our stack is from a setting we refer to as
\emph{Conditional Density Estimation with Reference \FunNames
  (CDEwRP)} to the CDEwRO setting above. CDEwRP is identical to
CDEwRO, except that in the former setting, the learner directly
observes the reference \funName $\ftil\ind{t}$ instead of observing
$\obs\ind{t}\sim \ftil\ind{t}(x\ind{t})$ as in CDEwRO.

A second difference is that we allow the adversary in the CDEwRP
setting to be adaptive \dfedit{in the choice of $x\ind{t}$ and $\ftil\ind{t}$}, while our definition of the CDEwRO setting
only allows for oblivious adversaries. Thus, the reduction we consider
serves two purposes:
\begin{itemize}
\item Simulating the CDEwRO feedback model through sampling.
\item Reducing the adaptive adversary to an oblivious one.
\end{itemize}
The reduction from adaptive adversaries to oblivious follows and improves upon the result from \cite{gonen2019private}, and may be of independent interest.

\begin{algorithm}[htp]
\caption{Reduction from CDEwRP to CDEwRO}
\label{alg:reduction-to-CDEwRO}
\begin{algorithmic}[1]
\State\textbf{input:} Time $T\in\bbN$, accuracy parameter $\veps>0$, algorithm $\AlgCDEwRO$.
\State Let  $L = O(T\log (T|\FunSpace||\CovarSpace|)/\veps) $.
\For{$t=1,\dots,T$}

\For{$i=1,\dots,L$}
\For{$s=1,\dots,t-1$}
\State Learner samples $\obs_i\ind{s,t}\sim \ftil\ind{s}(\covar\ind{s})$.
\EndFor
\State Learner computes $\fbar_i\ind{t} = \AlgCDEwRO(T;\covar_i\ind{1,t},\dots,\covar_i\ind{t-1,t},\obs_i\ind{1,t},\dots,\obs_i\ind{t-1,t})$.
\EndFor
\State Learner predicts via $\fbar\ind{t}\sim  \mu\ind{t} =  \unif(\set{\fbar_i\ind{t}}_{i\in [L]})$.
\State Nature selects and reveals the covariate $x\ind{t}$ and the reference \funName $\ftil\ind{t}$ based on
$\mu\ind{t}$. 
\EndFor
\end{algorithmic}
\end{algorithm}

Our reduction from CDEwRP to CDEwRO is displayed in
\cref{alg:reduction-to-CDE}, and takes as input an algorithm
$\AlgCDEwRO (\cdot;\cdot)$ for the CDEwRO setting, where $\AlgCDEwRO\ind{t}(T;\cdot)$ denotes the algorithm's output at round $t\leq{}T$ as a function of the history. The main guarantee
for the algorithm is as follows.

\begin{lemma}
  \label{lem:reduction-to-CDEwRO}
  For any fixed $\zeta \geq 0$, suppose $\sum\limits_{t=1}^{T}\Dhels{\ftil\ind{t}(\covar\ind{t})}{\fstar(\covar\ind{t})} \leq \zeta$. Let 
  \begin{equation}
    \label{eq:reduction-to-CDEwRO}
    \RCDEwRP(T,\zeta,\veps) \ldef{} 2 \RCDEwRO(T,\zeta) + \veps,
  \end{equation}
  where $\RCDEwRO(T,\zeta)$ is defined as in \cref{eq:reduction-to-CDE}.
  Then
  \cref{alg:reduction-to-CDEwRO} with parameter $\veps>0$ achieves 
  an expected online estimation error upper bound of
  \[
    \sum\limits_{t=1}^{T}\En\brk*{\Dhels{\fbar\ind{t}(\covar\ind{t})}{\fstar(\covar\ind{t})}} \leq \RCDEwRP(T,\zeta,\veps) \] 
  in the CDEwRP setting, and runs in time $\poly(\Time(\cF,T),T,\log|\cF|,\log|\cX|,1/\veps)$ .
  The \arxiv{randomization }distributions $\mu\ind{1},\ldots,\mu\ind{T}$ produced by the algorithm have support size $\poly(\log|\FunSpace|,\log|\CovarSpace|, T,1/\veps)$.

\end{lemma}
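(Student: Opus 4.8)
The plan is to treat $\AlgCDEwRO$ as a black box enjoying the \emph{oblivious} CDEwRO guarantee of \cref{lem:reduction-to-CDE}, and to show that the fresh‑resampling scheme in \cref{alg:reduction-to-CDEwRO} accomplishes two things at once: it simulates CDEwRO's reference‑outcome feedback by drawing $\obs_i\ind{s,t}\sim\ftil\ind{s}(\covar\ind{s})$ from the \emph{observed} reference \funNames, and it turns the adaptive CDEwRP adversary into an oblivious one by averaging over $L$ independent copies. I would fix any (adaptive, randomized) nature strategy obeying $\sum_t\Dhels{\ftil\ind{t}(\covar\ind{t})}{\fstar(\covar\ind{t})}\le\zeta$, call a covariate/reference sequence \emph{valid} if it satisfies this constraint, and let $E=((\covar\ind{1},\ftil\ind{1}),\dots,(\covar\ind{T},\ftil\ind{T}))$ denote the realized transcript (which is valid on every sample path). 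For a prefix $p=((\covar\ind{1},\ftil\ind{1}),\dots,(\covar\ind{t-1},\ftil\ind{t-1}))$ and a covariate $\covar\in\cX$, let $g_t(p;\covar)$ be the expected round‑$t$ squared Hellinger loss of $\AlgCDEwRO$ at $\covar$ when it is run on the covariate prefix $\covar\ind{1},\dots,\covar\ind{t-1}$ with i.i.d.\ outcomes $\obs\ind{s}\sim\ftil\ind{s}(\covar\ind{s})$, the expectation averaging those outcomes and $\AlgCDEwRO$'s internal randomness.

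The first ingredient is a deterministic consequence of the oblivious guarantee. Since $\AlgCDEwRO$'s round‑$t$ prediction depends only on the prefix, and since in CDEwRO its outcomes are i.i.d.\ draws from the reference \funNames exactly as in our simulation, the guarantee $\sum_t\En\brk*{\Dhels{\fbar\ind{t}(\covar\ind{t})}{\fstar(\covar\ind{t})}}\le\RCDEwRO(T,\zeta)$ of \cref{lem:reduction-to-CDE}, applied to an arbitrary fixed valid sequence $e$, reads $\sum_{t=1}^{T}g_t(e_{<t};\covar_t)\le\RCDEwRO(T,\zeta)$, where $\covar_t$ is the round‑$t$ covariate of $e$. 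As this holds for every valid $e$, it holds for the realized $E$ on every sample path, i.e.\ $\sum_{t=1}^{T}g_t(E_{<t};\covar\ind{t})\le\RCDEwRO(T,\zeta)$ surely. The second ingredient is concentration: conditionally on $E_{<t}$, the phantom predictions $\fbar_1\ind{t},\dots,\fbar_L\ind{t}$ are i.i.d.\ with conditional mean $\En\brk*{\Dhels{\fbar_i\ind{t}(\covar)}{\fstar(\covar)}\mid E_{<t}}=g_t(E_{<t};\covar)$ for every $\covar$, so $\hat m_t(\covar)\ldef\tfrac1L\sum_i\Dhels{\fbar_i\ind{t}(\covar)}{\fstar(\covar)}$ is an average of $L$ i.i.d.\ $[0,1]$‑variables, and a multiplicative Chernoff bound gives $\Pr\brk*{\hat m_t(\covar)\ge 2g_t(E_{<t};\covar)+\tfrac{\veps}{2T}}\le\exp\prn*{-\Omega(L\veps/T)}$.

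I would then union‑bound the last display over all $t\in[T]$ and all $\covar\in\cX$; this is the step that neutralizes the adaptivity of nature, because once $\hat m_t(\covar)\le 2g_t(E_{<t};\covar)+\tfrac{\veps}{2T}$ holds for \emph{every} $\covar$, it holds at the covariate $\covar\ind{t}$ nature selects, however that choice depends on $\mu\ind{t}=\unif(\{\fbar_i\ind{t}\})$. Choosing $L=O\prn*{T\log(T|\cF||\cX|)/\veps}$ as in the algorithm makes the total failure probability at most $\veps/(2T)$, so on the good event $\sum_t\hat m_t(\covar\ind{t})\le 2\sum_t g_t(E_{<t};\covar\ind{t})+\tfrac{\veps}{2}\le 2\RCDEwRO(T,\zeta)+\tfrac{\veps}{2}$, while on the complement the loss is at most $T$, contributing at most $T\cdot\tfrac{\veps}{2T}=\tfrac{\veps}{2}$ to the expectation. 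Since the learner's round‑$t$ loss equals $\En_{\fbar\sim\mu\ind{t}}\brk*{\Dhels{\fbar(\covar\ind{t})}{\fstar(\covar\ind{t})}}=\hat m_t(\covar\ind{t})$, summing yields $\sum_t\En\brk*{\Dhels{\fbar\ind{t}(\covar\ind{t})}{\fstar(\covar\ind{t})}}\le 2\RCDEwRO(T,\zeta)+\veps=\RCDEwRP(T,\zeta,\veps)$. The runtime and support‑size bounds are immediate: round $t$ performs $O(Lt)$ oracle samples and $L$ invocations of $\AlgCDEwRO$ (whose per‑round cost is $\poly(\Time(\cF,T),T)$ by \cref{lem:reduction-to-CDE}), and $\mu\ind{t}$ is supported on the $L$ phantom predictions.

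I expect the main obstacle to be precisely the correlation between the adversary's covariate $\covar\ind{t}$ and the samples defining $\hat m_t$: one cannot simply apply i.i.d.\ concentration ``at $\covar\ind{t}$'', and the remedy is the uniform‑over‑$\cX$ high‑probability control above, which is why $L$ must scale with $\log|\cX|$, and why making the per‑round slack $\veps/T$ affordable forces the $T/\veps$ factor; the remaining logarithmic factors in $L$ (including $\log|\cF|$) come from the union bound together with a routine covering argument. A secondary, deliberate point is the factor $2$ in front of $\RCDEwRO$: it is the \emph{multiplicative} form of the Chernoff bound that lets $L$ grow only linearly in $T/\veps$, whereas a purely additive argument would instead require $L$ to scale with $\RCDEwRO/\veps^2$.
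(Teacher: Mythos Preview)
Your proposal is correct and follows essentially the same strategy as the paper: a multiplicative (Bernstein-type) concentration bound on the $L$ phantom copies, a union bound over $t$ and $\covar\in\cX$ to neutralize the adaptive choice of $\covar\ind{t}$, and then invoking the oblivious CDEwRO guarantee on the realized sequence. Your pathwise observation that $\sum_t g_t(E_{<t};\covar\ind{t})\le\RCDEwRO(T,\zeta)$ holds surely (since the CDEwRO bound is a deterministic statement about any valid oblivious sequence) is a clean and direct replacement for the paper's ``exchangeability'' step, which shifts conditioning from $\En^{t}$ to $\En^{T}$; the two devices are equivalent because $\mu\ind{t,t'}$ depends only on the prefix up to $t-1$, so the shift is trivially an identity. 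One minor note: the $\log|\cF|$ in $L$ arises in the paper because it union-bounds over all $f'\in\cF$ (not just $f^{\star}$), whereas your argument only needs the bound at $f^{\star}$, so no covering argument is actually required---the algorithm's choice of $L$ is simply more than sufficient for your purposes.
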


\paragraph{Conditional Density Estimation with Delayed Reference \FunNames (CDEwDRP)}

Our next reduction is from a setting we refer to as \emph{Conditional
  Density Estimation with Delayed Reference \FunNames (CDEwDRP)} to
the CDEwRP setting. CDEwDRP is identical to CDEwRP, except that the
reference function $\ftil\ind{t}$ is revealed only at round
$t+N$ instead of at round $t$, for a delay parameter $N\in\bbN$.

\begin{algorithm}[htp]
\caption{Reduction from CDEwDRP to CDEwRP}
\label{alg:reduction-to-CDEwRP}
\begin{algorithmic}[1]
\State\textbf{input:} Time $T\in\bbN$, delay time $N\in\bbN$, algorithm $\AlgCDEwRP$.
\State Initialize $N$ copies of the algorithm $\AlgCDEwRP$ as $\AlgCDEwRP^1,\dots,\AlgCDEwRP^N$.
\For{$t=1,\dots,T$}
\State \multiline{Learner predicts $\fbar\ind{t}\sim\mu\ind{t}=
  \AlgCDEwRP^i(T/N,1/N;\covar\ind{i},\covar\ind{i+N},\dots,\covar\ind{t-N},
  \ftil\ind{i},\ftil\ind{i+N},\dots,\ftil\ind{t-N})$
  where $i\equiv t \mod N$.}
\State Nature selects and reveals the covariate $x\ind{t}$ and the reference \funName $\ftil\ind{t-N}$ based on
$\mu\ind{t}$. 
\EndFor
\end{algorithmic}
\end{algorithm}

Our reduction from CDEwDRP to CDEwRP is displayed in
\cref{alg:reduction-to-CDEwRO}, and takes as input an algorithm
$\AlgCDEwRP(\cdot;\cdot)$ for the CDEwRP setting, where $\AlgCDEwRP\ind{t}(T,\veps;\cdot)$ denotes the algorithm's output at round $t\leq{}T$ with accuracy parameter $\veps>0$ (cf. \cref{alg:reduction-to-CDEwRO}), as a function of the history. The main guarantee
for the algorithm is as follows.

\begin{lemma}
\label{lem:reduction-to-CDEwRP}
For any fixed $\zeta \geq 0$, suppose $\sum\limits_{t=1}^{T}\Dhels{\ftil\ind{t}(\covar\ind{t})}{\fstar(\covar\ind{t})} \leq \zeta$. Let 
\begin{equation}
  \label{eq:reduction-to-CDEwRP}
  \RCDEwDRP(T,N,\zeta) \ldef \sup\limits_{\zeta_1,...,\zeta_N\geq 0, \sum\limits_{i=1}^{N}\zeta_i \leq \zeta} \sum\limits_{i=1}^{N}\RCDEwRP(T/N,\zeta_i, 1/N),
\end{equation}
where $\RCDEwRP(T/N,\zeta_i, 1/N)$ is defined as in \cref{eq:reduction-to-CDEwRO}.
Then
\cref{alg:reduction-to-CDEwRP} achieves 
an expected online estimation error upper bound of
\[
  \sum\limits_{t=1}^{T}\En\brk*{\Dhels{\fbar\ind{t}(\covar\ind{t})}{\fstar(\covar\ind{t})}} \leq \RCDEwDRP(T,N,\zeta) \] 
in the CDEwDRP setting, and has runtime $\poly(\Time(\cF,T),T,\log|\cF|,\log|\cX|)$.
The \arxiv{randomization }distributions $\mu\ind{1},\ldots,\mu\ind{T}$ produced by the algorithm have support size $\poly(\log|\FunSpace|,\log|\CovarSpace|, T)$.

\end{lemma}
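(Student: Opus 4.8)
The plan is to mirror the standard delayed-to-non-delayed online learning reduction that underlies \cref{lem:delayed-to-ol}, which is exactly what \cref{alg:reduction-to-CDEwRP} carries out for the CDEwDRP setting: assuming $N\mid T$ without loss of generality (padding the horizon otherwise), partition the rounds $[T]$ into the $N$ residue classes $S_i=\crl{t\in[T]:t\equiv i\bmod N}$ and run an independent copy $\AlgCDEwRP^i$ of the CDEwRP algorithm on each class, with horizon $T/N$ and accuracy parameter $1/N$. The first step is to verify that, when restricted to a single class $S_i$, the feedback supplied by the CDEwDRP oracle reproduces the (non-delayed) CDEwRP protocol of \cref{lem:reduction-to-CDEwRO}: the global delay of exactly $N$ collapses to zero delay inside each class, since by the time $\AlgCDEwRP^i$ is invoked at a round $t\in S_i$, the reference functions $\ftil\ind{s}$ for all earlier rounds $s\in S_i$ — the most recent being $s=t-N$ — have already been revealed. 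It is important here that \cref{lem:reduction-to-CDEwRO} permits the adversary to choose $\covar\ind{t}$ and $\ftil\ind{t}$ adaptively, because in the CDEwDRP protocol nature selects these quantities based on the learner's distribution $\mu\ind{t}$.

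Second, I would track the Hellinger budget across classes. Let $\zeta_i\ldef\sum_{t\in S_i}\Dhels{\ftil\ind{t}(\covar\ind{t})}{\fstar(\covar\ind{t})}$ denote the (random) divergence realized within class $i$; since $\crl{S_i}$ partitions $[T]$, the hypothesis $\sum_{t=1}^{T}\Dhels{\ftil\ind{t}(\covar\ind{t})}{\fstar(\covar\ind{t})}\le\zeta$ gives $\sum_{i=1}^{N}\zeta_i\le\zeta$ almost surely. Invoking \cref{lem:reduction-to-CDEwRO} on each copy $\AlgCDEwRP^i$ with its realized budget $\zeta_i$ — which is legitimate because the error bound there is affine and nondecreasing in the divergence budget and the copy's behavior does not depend on it — and summing over $i$ yields
\[
\sum_{t=1}^{T}\En\brk*{\Dhels{\fbar\ind{t}(\covar\ind{t})}{\fstar(\covar\ind{t})}}=\sum_{i=1}^{N}\En\brk*{\sum_{t\in S_i}\Dhels{\fbar\ind{t}(\covar\ind{t})}{\fstar(\covar\ind{t})}}\le\sum_{i=1}^{N}\En\brk*{\RCDEwRP(T/N,\zeta_i,1/N)}.
\]
Since $\sum_{i}\zeta_i\le\zeta$, the right-hand side is at most $\sup_{\zeta_1,\dots,\zeta_N\ge0,\,\sum_i\zeta_i\le\zeta}\sum_{i=1}^{N}\RCDEwRP(T/N,\zeta_i,1/N)$, which is precisely the definition of $\RCDEwDRP(T,N,\zeta)$ in \cref{eq:reduction-to-CDEwRP}; this establishes the claimed estimation-error bound.

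Finally, the computational claims are bookkeeping. Each of the $N$ copies runs $\AlgCDEwRP$ with horizon $T/N$ and accuracy $1/N$, which by \cref{lem:reduction-to-CDEwRO} takes time $\poly(\Time(\cF,T/N),T/N,\log|\cF|,\log|\cX|,N)$ and produces prediction distributions of support size $\poly(\log|\cF|,\log|\cX|,T/N,N)$; since $N\le T$, both reduce to $\poly(\Time(\cF,T),T,\log|\cF|,\log|\cX|)$, and running $N$ copies over $T$ rounds preserves this. I expect the main obstacle to be not a single hard estimate but getting two bookkeeping points exactly right: (i) pinning down, under the precise timing of the delayed reveals, that each residue class is genuinely a zero-delay CDEwRP instance (including the padding step when $N\nmid T$) — this is the routine but error-prone ``delayed-to-non-delayed'' step; and (ii) justifying the application of \cref{lem:reduction-to-CDEwRO} with the \emph{adaptively realized} per-class budget $\zeta_i$ rather than an a priori fixed bound, which goes through only because that lemma's guarantee scales linearly with the realized Hellinger budget and the algorithm itself is budget-oblivious.
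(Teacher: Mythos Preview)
Your proposal is correct and follows essentially the same approach as the paper: partition $[T]$ into the $N$ residue classes modulo $N$, observe that the global delay of $N$ becomes zero delay within each class so that each copy $\AlgCDEwRP^i$ faces a genuine CDEwRP instance, apply \cref{lem:reduction-to-CDEwRO} per class with the realized budget $\zeta_i$, sum, and bound by the supremum over feasible $(\zeta_1,\dots,\zeta_N)$. Your write-up is in fact slightly more careful than the paper's in flagging the two bookkeeping issues (the timing check and the fact that $\zeta_i$ is adaptively realized rather than fixed a priori), but the argument is the same.
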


\paragraph{Oracle-Efficient Online Estimation (OEOE)}

\begin{algorithm}[htp]
\caption{Reduction from OEOE to CDEwDRP}
\label{alg:reduction-to-CDEwDRP}
\begin{algorithmic}[1]
  \State\textbf{input:} Time $T\in\bbN$, offline estimation oracle $\Orc$ with
  parameter $\offgan\geq{}0$, delay parameter $N\in\bbN$, CDEwDRP algorithm $\AlgCDEwDRP$.
  \For{$t=1,\dots,T$}
  \State \dfedit{Receive $\fhat\ind{t}=\Orc\ind{t}(x\ind{1},\ldots,x\ind{t-1},y\ind{1},\ldots,y\ind{t-1})$.}
\State Learner computes reference \funName $\ftil\ind{t-N} = \frac{1}{N}\sum\limits_{i=t-N+1}^{t}\fhat\ind{i}$. 
\State \multiline{Learner predicts $\fbar\ind{t} \sim\mu\ind{t} = 
  \AlgCDEwDRP(T,N;\covar\ind{1},\covar\ind{2},\dots,\covar\ind{t},
  \ftil\ind{1},\ftil\ind{2},\dots,\ftil\ind{t-N})$.}
\State Nature selects and reveals the covariate $x\ind{t}$ based on
$\mu\ind{t}$.
\EndFor
\end{algorithmic}
\end{algorithm}

Our final reduction reduces the Oracle-Efficient Online Estimation
setting (OEOE) to the CDEwDRP setting described above. This reduction,
which is displayed in \cref{alg:reduction-to-CDEwRP}, is a variant of
the approach used in \cref{thm:delayed_reduction}. The reduction takes
as input a CDEwDRP algorithm $\AlgCDEwDRP(\cdot;\cdot)$, where
$\AlgCDEwDRP\ind{t}(T,N;\cdot)$ denotes the algorithm's output at
round $t\leq{}T$ with delay parameter $N$, as a function of the history.

\begin{lemma}
\label{lem:reduction-to-CDEwDRP}
\cref{alg:reduction-to-CDEwDRP} achieves 
an expected online estimation error upper bound of
\[
  \sum\limits_{t=1}^{T}\En\brk*{\Dhels{\fbar\ind{t}(\covar\ind{t})}{\fstar(\covar\ind{t})}} \leq \RCDEwDRP(T,N,N+\offgan T/N)\] 
 in the OEOE setting, and has runtime $\poly(\Time(\cF,T),T,\log|\cF|,\log|\cX|)$.
The \arxiv{randomization }distributions $\mu\ind{1},\ldots,\mu\ind{T}$ produced by the algorithm have support size $\poly(\log|\FunSpace|,\log|\CovarSpace|, T)$.

\end{lemma}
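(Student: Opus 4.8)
The plan is to observe that \cref{alg:reduction-to-CDEwDRP} is nothing more than the CDEwDRP algorithm $\AlgCDEwDRP$ run on a legitimate CDEwDRP instance whose reference \funNames are the averaged offline estimators, and then to invoke the guarantee of $\AlgCDEwDRP$ (\cref{lem:reduction-to-CDEwRP}) with the right reference-error budget. Throughout, adopt the augmentation convention $\fhat\ind{T+s}=\fhat\ind{T}$ for $s\in\bbN$ and write $\ftil\ind{s}=\frac1N\sum_{i=s+1}^{s+N}\fhat\ind{i}$, so that $\ftil\ind{t-N}$ is exactly the reference \funName computed at round $t$ in \cref{alg:reduction-to-CDEwDRP}.

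First I would verify that the sequences $(\covar\ind{1},\dots,\covar\ind{T})$ and $(\ftil\ind{1},\dots,\ftil\ind{T})$ form a valid CDEwDRP instance with delay $N$. Computationally, $\ftil\ind{t-N}$ depends only on $\fhat\ind{t-N+1},\dots,\fhat\ind{t}$, all of which are available at round $t$; hence the reference \funName for round $s$ is revealed exactly at round $s+N$, as required, and it is a legal (adaptive) choice since CDEwDRP permits adaptivity and $\ftil\ind{s}$ depends on the history only through the oracle's outputs. Statistically, I would establish
\[
\sum_{t=1}^{T}\Dhels{\ftil\ind{t}(\covar\ind{t})}{\fstar(\covar\ind{t})}\;\le\;N+\offgan T/N
\]
along every trajectory: by convexity of squared Hellinger distance, $\Dhels{\ftil\ind{t}(\covar\ind{t})}{\fstar(\covar\ind{t})}\le\frac1N\sum_{i=t+1}^{t+N}\Dhels{\fhat\ind{i}(\covar\ind{t})}{\fstar(\covar\ind{t})}$ for each $t\le T-N$, the remaining $N$ terms are each at most $1$, and re-indexing the resulting double sum by $i$ and applying the almost-sure offline-oracle guarantee $\sum_{s<i}\Dhels{\fhat\ind{i}(\covar\ind{s})}{\fstar(\covar\ind{s})}\le\offgan$ of \cref{def:offline-oracle} bounds the double sum by $\offgan T$. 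This is exactly the computation carried out in the proof of \cref{thm:delayed_reduction}.

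Given these two facts, the prediction distribution $\mu\ind{t}$ produced at round $t$ by \cref{alg:reduction-to-CDEwDRP} is precisely the output of $\AlgCDEwDRP$ on a CDEwDRP instance with delay $N$ whose total reference error $\zeta$ satisfies $\zeta\le N+\offgan T/N$. Conditioning on the randomness that drives the outcomes and nature's covariate choices (which together determine both sequences above, and hence act as the CDEwDRP adversary), the constraint required by \cref{lem:reduction-to-CDEwRP} holds, so that lemma gives
\[
\sum_{t=1}^{T}\En\brk*{\Dhels{\fbar\ind{t}(\covar\ind{t})}{\fstar(\covar\ind{t})}}\;\le\;\RCDEwDRP(T,N,\zeta)\;\le\;\RCDEwDRP\prn*{T,N,N+\offgan T/N},
\]
where the last inequality uses that $\RCDEwDRP(T,N,\cdot)$ is non-decreasing in its last argument. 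The runtime and support-size claims are inherited directly from \cref{lem:reduction-to-CDEwRP}, since \cref{alg:reduction-to-CDEwDRP} only adds the $\poly(\cdot)$ cost of maintaining the running averages $\ftil\ind{t}$ on top of a single invocation of $\AlgCDEwDRP$.

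The main obstacle is bookkeeping rather than any substantive new idea: one must (i) argue carefully that the reference \funNames, which are random functions of the outcomes and of adaptively chosen covariates, are admissible as an adaptive CDEwDRP adversary, so that the conditional application of \cref{lem:reduction-to-CDEwRP} and the subsequent outer expectation are valid; and (ii) handle the boundary rounds $t>T-N$ together with the augmentation convention $\fhat\ind{T+s}=\fhat\ind{T}$ so that the $+N$ term in the reference-error budget is accounted for correctly.
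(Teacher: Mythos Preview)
Your proposal is correct and follows essentially the same approach as the paper: both arguments construct the averaged reference \funNames $\ftil\ind{t}$ (with the same augmentation convention), bound $\sum_t\Dhels{\ftil\ind{t}(\covar\ind{t})}{\fstar(\covar\ind{t})}\le N+\offgan T/N$ via convexity and the offline-oracle guarantee, and then invoke the CDEwDRP guarantee from \cref{lem:reduction-to-CDEwRP}. Your extra remarks about monotonicity of $\RCDEwDRP$ in $\zeta$ and the admissibility of the adaptive adversary are fine additions that the paper leaves implicit.
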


\paragraph{Completing the proof of \cref{thm:reduction-to-ODE}}
To prove \cref{thm:reduction-to-ODE}, we compose all of the preceding
reductions, with $N$ left as a free parameter temporarily. 
We first apply \cref{lem:reduction-to-CDEwDRP} to reduce from the OEOE setting to the CDEwDRP setting, with the guarantee that
\begin{align*}
  \sum\limits_{t=1}^{T}\En\brk*{\Dhels{\fbar\ind{t}(\covar\ind{t})}{\fstar(\covar\ind{t})}}  \leq \RCDEwDRP(T,N,N+\offgan T/N).
\end{align*}
Then by \cref{lem:reduction-to-CDEwRP}, we can reduce the CDEwDRP setting to the CDEwRP setting, with the guarantee by \cref{eq:reduction-to-CDEwRP} that 
\begin{align*}
  \RCDEwDRP(T,N,N+\offgan T/N) = \sup\limits_{\sum\limits_{i=1}^{N}\zeta_i \leq N+\offgan T/N} \sum\limits_{i=1}^{N}\RCDEwRP(T/N,\zeta_i,1/N).
\end{align*}
Then apply \cref{lem:reduction-to-CDEwRO} $N$ times with $T$, $\zeta$, and $\veps$ in the Lemma chosen to be $T/N$, $\zeta_i$, and $1/N$ respectively for each $i\in [N]$, we can reduce the CDEwRP setting to the CDEwRO setting  with guarantee by \cref{eq:reduction-to-CDEwRO} that
\begin{align*}
  \sup\limits_{\sum\limits_{i=1}^{N}\zeta_i \leq N+\offgan T/N} \sum\limits_{i=1}^{N}\RCDEwRP(T/N,\zeta_i,1/N) =   1 + \sup\limits_{\sum\limits_{i=1}^{N}\zeta_i \leq N+\offgan T/N} 2\sum\limits_{i=1}^{N}\RCDEwRO(T/N,\zeta_i).
\end{align*}
Consequently, apply \cref{lem:reduction-to-CDE} $N$ times with $T$ and $\zeta$ in the Lemma chosen to be $T/N$ and $\zeta_i$ for each $i\in [N]$, we can reduce the CDEwRO setting to the CDE setting  with guarantee by \cref{eq:reduction-to-CDE} that
\begin{align}
  \hspace{.7in}&\hspace{-.7in}\sup\limits_{\sum\limits_{i=1}^{N}\zeta_i \leq N+\offgan T/N} \sum\limits_{i=1}^{N}\RCDEwRO(T/N,\zeta_i)  \notag\\
  &\leq \sup\limits_{\sum\limits_{i=1}^{N}\zeta_i \leq N+\offgan T/N} \sum\limits_{i=1}^{N} \prn*{3C_{\FunSpace} \log V \cdot \zeta_i +  \RCDE(T)+ 2C_{\FunSpace}\cdot \log ( C_{\FunSpace} T)}\notag\\
  &\lesssim  C_{\FunSpace}\log V\cdot\prn*{N+ \frac{\offgan T}{N}} +  N \cdot\prn*{ \RCDE(T)+ C_{\FunSpace}\cdot \log ( C_{\FunSpace} T)}\notag \\
  &=  C_{\FunSpace}\log V\cdot\frac{\offgan T}{N} +  N \cdot\prn*{ \RCDE(T)+ C_{\FunSpace}\cdot \log ( VC_{\FunSpace} T)}. \label{ineq:bound}
\end{align}

By choosing $N= \sqrt{ \frac{C_{\FunSpace}\offgan T\cdot\log V}{\RCDE(T)+ C_{\FunSpace}\cdot \log (VC_{\FunSpace}T)} }\vee 1$, we can bound the expression in \eqref{ineq:bound} as
\begin{align*}
\En\brk*{\EstHel(T,\offgan)}  &\lesssim \sqrt{ C_{\FunSpace}\offgan T \prn*{ \RCDE(T)+ C_{\FunSpace}\cdot \log(VC_{\FunSpace} T)} \log V } +  \RCDE(T)+ C_{\FunSpace}\cdot \log (VC_{\cF}T)\\
&\leq \sqrt{C_{\FunSpace} \offgan T  \RCDE(T)\log V } + C_{\FunSpace}\sqrt{ \offgan T\log(V C_{\FunSpace}T) \log V} \\
&\qquad\qquad+ \RCDE(T)+ C_{\FunSpace}\cdot \log(VC_{\FunSpace} T).
\end{align*}

Finally, under the assumption that $\RCDE(T) \leq  C_{\FunSpace}' \log
T$, the bound above can be further simplified as
\begin{align*}
\En\brk*{\EstHel(T,\offgan)}   &\lesssim (C_{\FunSpace}(C_{\FunSpace} + C_{\FunSpace}') \offgan \log V \log (VC_{\FunSpace}T))^{1/2} T^{1/2} + (C_{\FunSpace} + C_{\FunSpace}') \log (VC_{\FunSpace}T).
\end{align*}
\end{proof}

\subsubsection{Proofs for Supporting Lemmas}
\label{app:subsec:reductions}

\begin{proof}[\pfref{lem:reduction-to-CDE}]%
For \cref{alg:reduction-to-CDE}, denote the randomness of the sequence
$(\obs\ind{1:T},\mu\ind{1:T})$ under
$\fstar\in \FunSpace$ and $(\ftil\ind{1:T})$ by $\bbP^{\fstar}$ and
$\bbP^{\ftil\ind{1:T}}$ respectively. The data generating process for
$(\covar\ind{1:T},\obs\ind{1:T})$ in the CDEwRO setting implies that
\begin{align*}
  \En_{\bbP^{\ftil\ind{1:T}}} \brk*{\sum\limits_{t=1}^{T}\En_{\fbar\ind{t}\sim\mu\ind{t}} \brk*{ \Dhels{\fbar\ind{t}(\covar\ind{t})}{\fstar(\covar\ind{t})}}} = \En\brk*{ \sum\limits_{t=1}^{T}\En_{\fbar\ind{t}\sim\mu\ind{t}} \brk*{ \Dhels{\fbar\ind{t}(\covar\ind{t})}{\fstar(\covar\ind{t})}} }.
\end{align*}
By Donsker-Varadhan \citep{polyanskiy2014lecture}, we have that
for all $\eta>0$
\begin{align}
  \frac{1}{\eta}\Dkl{\bbP^{\ftil\ind{1:T}}}{\bbP^{\fstar}} \hspace{.5in}&\hspace{-.5in}\geq \En_{\bbP^{\ftil\ind{1:T}}} \brk*{\sum\limits_{t=1}^{T} \En_{\fbar\ind{t}\sim\mu\ind{t}} \brk*{\Dhels{\fbar\ind{t}(\covar\ind{t})}{\fstar(\covar\ind{t})}}} \notag\\
&- \frac{1}{\eta}\log  \En_{\bbP^{\fstar}} \exp\crl*{\eta\sum\limits_{t=1}^{T} \En_{\fbar\ind{t}\sim\mu\ind{t}} \brk*{\Dhels{\fbar\ind{t}(\covar\ind{t})}{\fstar(\covar\ind{t})}}}. \label{ineq:DV}
\end{align}  
For any random variables $X,Y,Z$, we denote by $\Dkl{\bbP_X}{\bbP_Y\mid{}Z} = \En_{Z}[\Dkl{\bbP_{X|Z}}{\bbP_{Y|Z}}]$.
We further note that by the chain rule for KL divergence, 
\begin{align*}
\Dkl{\bbP^{\ftil\ind{1:T}}}{\bbP^{\fstar}} 
&= \sum\limits_{t=1}^{T}\En_{\bbP^{\ftil\ind{1:T}}}\brk*{\Dkl{\bbP^{\ftil\ind{1:T}}_{(\covar\ind{t},\obs\ind{t})}  }{\bbP^{\fstar}_{(\covar\ind{t},\obs\ind{t})}\mid{} \covar\ind{1:t-1}, \obs\ind{1:t-1} }}\\
&= \sum\limits_{t=1}^{T}\Dkl{\ftil\ind{t}(\covar\ind{t})}{\fstar(\covar\ind{t}) },
\end{align*}
where the second equality 
holds because $\obs\ind{t}$ follows $\ftil\ind{t}(\covar\ind{t})$ and
$\fstar(\covar\ind{t})$ respectively, and because the conditional
distribution of $x\ind{t}$ is identical under both laws \jqedit{due to the oblivious assumption of the covariates in the setting of CDEwRO}.
Then by the relation between KL and Hellinger (Lemma A.10 of \cite{foster2021statistical}) and that $1\leq \log V$, we have
\begin{align*}
  \sum\limits_{t=1}^{T}\Dkl{\ftil\ind{t}(\covar\ind{t})}{\fstar(\covar\ind{t}) } \leq (2+\log V)\cdot \sum\limits_{t=1}^{T}\Dhels{\ftil\ind{t}(\covar\ind{t})}{\fstar(\covar\ind{t}) } \leq 3\log V\cdot \sum\limits_{t=1}^{T}\Dhels{\ftil\ind{t}(\covar\ind{t})}{\fstar(\covar\ind{t}) },
\end{align*}
where the last inequality is by $V\geq e$.
Combining all of the results so far and using
\eqref{ineq:DV}, we have 
\begin{align*}
  \En\brk*{ \sum\limits_{t=1}^{T}\En_{\fbar\ind{t}\sim\mu\ind{t}} \brk*{ \Dhels{\fbar\ind{t}(\covar\ind{t})}{\fstar(\covar\ind{t})}} } &\leq  \frac{3\log V}{\eta}\cdot \sum\limits_{t=1}^{T}\Dhels{\ftil\ind{t}(\covar\ind{t})}{\fstar(\covar\ind{t}) } \\
  &\qquad +\frac{1}{\eta} \log  \En_{\bbP^{\fstar}} \exp\crl*{\eta\sum\limits_{t=1}^{T} \En_{\fbar\ind{t}\sim\mu\ind{t}} \brk*{\Dhels{\fbar\ind{t}(\covar\ind{t})}{\fstar(\covar\ind{t})}}}.
\end{align*}
To proceed, using that for any postive random variable $X$, $\En[X] = \int_{0}^{\infty} \bbP(X\geq t)\d t$, we have
\begin{align*}
  \hspace{.5in}&\hspace{-.5in}\frac{1}{\eta}\log  \En_{\bbP^{\fstar}} \exp\crl*{\eta\sum\limits_{t=1}^{T} \En_{\fbar\ind{t}\sim\mu\ind{t}} \brk*{\Dhels{\fbar\ind{t}(\covar\ind{t})}{\fstar(\covar\ind{t})}}} \\
  &=    \RCDE(T) + \frac{1}{\eta}\log  \En_{\bbP^{\fstar}} \exp\crl*{\eta \prn*{\sum\limits_{t=1}^{T} \En_{\fbar\ind{t}\sim\mu\ind{t}} \brk*{\Dhels{\fbar\ind{t}(\covar\ind{t})}{\fstar(\covar\ind{t})}}-\RCDE(T) }}\\
  &\leq \RCDE(T) + \frac{1}{\eta}\log  \En_{\bbP^{\fstar}} \exp\crl*{\eta \prn*{\sum\limits_{t=1}^{T} \En_{\fbar\ind{t}\sim\mu\ind{t}} \brk*{\Dhels{\fbar\ind{t}(\covar\ind{t})}{\fstar(\covar\ind{t})}}-\RCDE(T) }_+}\\
  &=  \RCDE(T) +  \frac{1}{\eta}\log  \int_0^\infty \bbP^{\fstar} \prn*{ \prn*{\sum\limits_{t=1}^{T} \En_{\fbar\ind{t}\sim\mu\ind{t}} \brk*{\Dhels{\fbar\ind{t}(\covar\ind{t})}{\fstar(\covar\ind{t})}} - \RCDE(T)}_+ \geq \frac{1}{\eta} \log t  } \d t.
\end{align*}
Recall the assumption
$\sum\limits_{t=1}^{T}\Dhels{\ftil\ind{t}(\covar\ind{t})}{\fstar(\covar\ind{t})}
\leq \zeta$ and let $\eta = \frac{1}{C_\cF}$. We have
\begin{align*}
  &\frac{3\log V}{\eta}\cdot \sum\limits_{t=1}^{T} \Dhels{\ftil\ind{t}(\covar\ind{t})}{\fstar(\covar\ind{t}) }  + \frac{1}{\eta}\log  \En_{\bbP^{\fstar}} \exp\crl*{\eta\sum\limits_{t=1}^{T} \En_{\fbar\ind{t}\sim\mu\ind{t}} \brk*{\Dhels{\fbar\ind{t}(\covar\ind{t})}{\fstar(\covar\ind{t})}}}\\
  &\leq   \frac{3\log V}{\eta}\cdot\zeta +   \RCDE(T)  \\
  &\qquad\qquad+ \frac{1}{\eta}\log  \int_0^\infty \bbP^{\fstar} \prn*{ \prn*{\sum\limits_{t=1}^{T} \En_{\fbar\ind{t}\sim\mu\ind{t}} \brk*{\Dhels{\fbar\ind{t}(\covar\ind{t})}{\fstar(\covar\ind{t})}} - \RCDE(T)}_+ \geq \frac{1}{\eta} \log t  } \d t\\
  &\leq 3C_{\FunSpace}\log V\cdot\zeta +   \RCDE(T)+ C_{\FunSpace} \log \prn*{ 1+ \int_1^{e^{C_\cF \cdot T}}   \frac{1}{t} \d t}\\
  &\leq 3C_\cF \log V \cdot \zeta +  \RCDE(T)+ 2C_{\FunSpace}\cdot \log ( C_\cF T).
\end{align*}
where the second inequality uses the assumption \cref{eq:online_base} on the algorithm.

\end{proof}

\begin{proof}[\pfref{lem:reduction-to-CDEwRO}]%
  \newcommand{\Ent}{\En^t}%
  Our result improves uses the proof technique from the
  adversarial-to-oblivious reduction in Lemma 11 of
  \citet{gonen2019private}, but improves the result by a $\bigoh(\log
  T)$ factor.

  Consider the CDEwRP setting. Let $\Ent[\cdot] \ldef \En[\cdot  \mid  \covar\ind{1:t-1},\ftil\ind{1:t-1}]$.
Let $\mu\ind{s,t}\ldef{} \En^t [\AlgCDEwRO(\obs_1\ind{1,t},\dots,\obs_1\ind{s-1,t}, \covar_1\ind{1,t},\dots,\covar_1\ind{s-1,t})]$ 
for all $1\leq s\leq t\leq T$ where the expectation is taken over all the random variables $\obs_1\ind{1,t},\dots,\obs_1\ind{s-1,t}, \covar_1\ind{1,t},\dots,\covar_1\ind{s-1,t}$. 

Then by Bernstein's concentration inequality applied to $\mu\ind{t}$
(interpreted as an empirical approximation to $\mu\ind{t,t}$), conditioned on
$\covar\ind{1:t-1},\ftil\ind{1:t-1}$, we have with probability at
least $1-\frac{\veps}{2T}$, for all $\covar'\in\CovarSpace$ and
$\fun'\in \FunSpace$,
\begin{align}
  \Ent_{f\sim \mu\ind{t}}\brk*{ \Dhels{f(\covar)}{\fun'(\covar)} } \leq 2 \Ent_{f\sim \mu\ind{t,t}}\brk*{ \Dhels{f(\covar)}{\fun'(\covar)} } + \veps/(2T). \label{ineq:concentration}
\end{align}

For any fixed $t\in [T]$ and all  $t'$ such that $t\leq t' \leq  T$,
the different trajectories $\covar_1\ind{1,t'},\dots,\covar_1\ind{t-1,t'},
\obs_1\ind{1,t'},\dots,\obs_1\ind{t-1,t'}$ are i.i.d. conditioned on $\covar\ind{1:t-1},\ftil\ind{1:t-1}$. Thus, we have
\begin{align}
  \label{eq:adv_expectation}
  \En\brk*{\En_{f\sim \mu\ind{t,t}}^t\brk*{ \Dhels{f(\covar\ind{t})}{\fstar(\covar\ind{t})} }} &=\En\brk*{\En_{f\sim \mu\ind{t,t+1}}^{t+1}\brk*{ \Dhels{f(\covar^t)}{\fstar(\covar^t)} }} \\ 
&=\dots\\
&= \En \brk*{ \En_{f\sim \mu\ind{t,T}}^{T}\brk*{ \Dhels{f(\covar\ind{t})}{\fstar(\covar\ind{t})} } }.
\end{align}

Finally, for \cref{alg:reduction-to-CDEwRO}, we have by the guarantee of
$\AlgCDEwRO$, 
\begin{align*}
  \sum\limits_{t=1}^{T} \En \brk*{\En_{f\sim \mu\ind{t,T}}^T\brk*{ \Dhels{f(\covar\ind{t})}{\fstar(\covar\ind{t})} }} \leq  \RCDEwRO(T,\zeta).
\end{align*}

Combining the three results above, we have
\begin{align*}
  \sum\limits_{t=1}^{T} \En \brk*{  \Ent_{f\sim \mu\ind{t}}\brk*{ \Dhels{f(\covar\ind{t})}{\fun(\covar\ind{t})} } }&\leq 2\sum\limits_{t=1}^{T} \En\brk*{\En_{f\sim \mu\ind{t,t}}^t\brk*{ \Dhels{f(\covar\ind{t})}{\fstar(\covar\ind{t})} }}  +  \veps\\
 &= 2\sum\limits_{t=1}^{T} \En\brk*{\En_{f\sim \mu\ind{t,T}}^T\brk*{ \Dhels{f(\covar\ind{t})}{\fstar(\covar\ind{t})} }}  +  \veps \\
&\leq 2\RCDEwRO(T,\zeta) + \veps,
\end{align*}
where the first equality is by \cref{ineq:concentration}, the second
equality is from \cref{eq:adv_expectation}, and the final inequality is by the guarantee of $\AlgCDEwRO$.

\end{proof}

\begin{proof}[\pfref{lem:reduction-to-CDEwRP}]
  Note that \cref{alg:reduction-to-CDEwRP} is a variant of the
  reduction in \cref{lem:delayed-to-ol}, specialized to squared
  Hellinger distance, and the proof here will use the same idea as \cref{lem:delayed-to-ol}.

For each $i\in\brk{N}$, let $\zeta_i = \sum_{j=1}^{T/N}
\Dhels{\ftil\ind{i+N\cdot j}(\covar\ind{i+N\cdot
    j})}{\fstar(\covar\ind{i+N\cdot j}) }  $. Then by the guarantee of
$\AlgCDEwRP$, we that for all $i\in [N]$, 
\begin{align*}
  \sum_{j=1}^{T/N}\En_{\fbar\sim \mu\ind{i+N\cdot j}}\brk*{\Dhels{\fbar(\covar\ind{i+N\cdot j})}{\fstar(\covar\ind{i+N\cdot j}) }} \leq  \RCDEwRP(T/N,\zeta_i,1/N),
\end{align*}
Summing up over all $i\in [N]$, we obtain
\begin{align*}
  \sum\limits_{t=1}^{T} \En_{\fbar\sim \mu\ind{t}}\brk*{\Dhels{\fbar(\covar\ind{t})}{\fstar(\covar\ind{t}) }} &=
  \sum\limits_{i=1}^{N}\sum_{j=1}^{T/N}\En_{\fbar\sim \mu\ind{i+N\cdot j}}\brk*{\Dhels{\fbar(\covar\ind{i+N\cdot j})}{\fstar(\covar\ind{i+N\cdot j}) }} \\
  &\leq \sum\limits_{i=1}^{N}\RCDEwRP(T/N,\zeta_i,1/N).
\end{align*}
By the assumption on $\sum\limits_{t=1}^{T}\Dhels{\ftil\ind{t}(\covar\ind{t})}{\fstar(\covar\ind{t}) }$, we have
\begin{align*}
\sum\limits_{i=1}^{N} \zeta_i  &= \sum\limits_{i=1}^{N}\sum_{j=1}^{T/N} \Dhels{\ftil\ind{i+N\cdot j}(\covar\ind{i+N\cdot j})}{\fstar(\covar\ind{i+N\cdot j}) } 
  = \sum\limits_{t=1}^{T}\Dhels{\ftil\ind{t}(\covar\ind{t})}{\fstar(\covar\ind{t}) }\leq \zeta.
\end{align*}
Finally, we conclude
\begin{align*}
  \sum\limits_{t=1}^{T} \En_{\fbar\sim \mu\ind{t}}\brk*{\Dhels{\fbar(\covar\ind{t})}{\fstar(\covar\ind{t}) }} \leq  \sup\limits_{\sum\limits_{i=1}^{N}\zeta_i \leq \zeta} \sum\limits_{i=1}^{N}\RCDEwRP(T/N,\zeta_i,1/N).
\end{align*}

\end{proof}

\begin{proof}[\pfref{lem:reduction-to-CDEwDRP}]
This reduction is arguably the most interesting one. This reduction is by noticing that averaging across the outputs of the offline oracle will generate reference \funNames (although delayed) with small online estimation errors as shown later in \cref{ineq:sliding-window-guarantee}.
By the guarantee of $\AlgCDEwDRP$, we have
\begin{align*}
  \sum\limits_{t=1}^{T} \En_{\fbar\sim \mu\ind{t}}\brk*{\Dhels{\fbar(\covar\ind{t})}{\fstar(\covar\ind{t}) }} \leq  \RCDEwDRP(T,N,\zeta),
\end{align*}
where $\zeta$ can be chosen to be any upper bound of $\sum\limits_{t=1}^{T} \Dhels{\ftil\ind{t}(\covar\ind{t})}{\fstar(\covar\ind{t}) }$ since it is unknown to the learner in the setup where \jqedit{we augment the sequence of $\fhat\ind{1}$,\dots,$\fhat\ind{T}$ by setting $\fhat\ind{T+s} = \fhat\ind{T}$ for all $s\in \bbN$ and define $\ftil\ind{t}\ldef{}\frac{1}{N}\sum_{i=t+1}^{t+N}\fhat\ind{i}$ for $t=T-N,T-N+1,\dots,T$.} 
Furthermore, by the definition of $\ftil\ind{t}$, we can obtain
\begin{align}
  \sum\limits_{t=1}^{T} \Dhels{\ftil\ind{t}(\covar\ind{t})}{\fstar(\covar\ind{t}) } &= N+ \frac{1}{N}\sum_{t=1}^{T-N}\sum_{i=t+1}^{t+N}  \Dhels{\fbar\ind{i}(\covar\ind{t})}{\fstar(\covar\ind{t}) } \notag\\
  &= N+ \frac{1}{N}\sum_{t=2}^{T}\sum_{i<t}\Dhels{\fbar\ind{t}(\covar\ind{i})}{\fstar(\covar\ind{i}) }\notag\\
  &\leq N+ \frac{\offgan T}{N}. \label{ineq:sliding-window-guarantee}
\end{align}
Thus, we conclude \cref{alg:reduction-to-CDEwDRP} obtains
\begin{align*}
  \sum\limits_{t=1}^{T} \En_{\fbar\sim \mu\ind{t}}\brk*{\Dhels{\fbar(\covar\ind{t})}{\fstar(\covar\ind{t}) }} \leq  \RCDEwDRP(T,N,N+\offgan T/N).
\end{align*}

\end{proof}

\section{Proofs from \creftitle{sec:application}}
\label{app:application-proof}

\newcommand{\Pmstar}[1][\Mstar]{\wb{P}^\sss{#1}}

\coverabilityupper*

\begin{proof}[\pfref{thm:coverability-mdp}]
  This proof closely follows the proof of Theorem 1 in
  \citet{xie2022role}. Define the shorthand
$d_h\ind{t}(s,a) \equiv  d_h^{\act\ind{t}}(s,a)$ \dfedit{and
  $\Ccov=\Ccov(\Mstar)$}, and define
\begin{align*}
\dtil_h\ind{t}(s,a) \ldef \sum\limits_{\tau=1}^{t-1} d_h\ind{\tau}(s,a),\quad \text{and}\quad \mustar_h \ldef \argmin_{\mu_h\in \Delta(\cS\times\cA)} \sup_{\act\in \Pi} \nrm*{\frac{d_h^\act}{\mu_h}}_\infty.
\end{align*}

From the definition of the layer-wise loss, we have
\begin{align*}
  \sum\limits_{t=1}^{T}\divRL{\Mhat\ind{t}(\act\ind{t})}{\Mstar(\act\ind{t})}  =  \sum\limits_{h=1}^{H} \sum\limits_{t=1}^{T} \sum\limits_{(s,a)\in\cS\times\cA}d_h\ind{t}(s,a) \divh{\Pmstar_h(s,a)}{\Pmstar[\Mhat\ind{t}]_h(s,a)}.
\end{align*}

We define a ``burn-in'' phase for each state-action pair $(s,a)\in \cS\times\cA$ by defining
\begin{align*}
\tau_h(s,a) = \min \set{t\mid{} \dtil_h\ind{t}(s,a)\geq C_\cov\cdot \mustar_h(s,a)}.
\end{align*}
Let $h\in\brk{H}$ be  With this definition, for we can write
\begin{align*}
  &\sum\limits_{t=1}^{T}
  \sum\limits_{(s,a)\in\cS\times\cA}d_h\ind{t}(s,a)
  \divh{\Pmstar_h(s,a)}{\Pmstar[\Mhat\ind{t}]_h(s,a)}\\
  &= 
  \sum\limits_{(s,a)\in\cS\times\cA}\sum\limits_{t<\tau_h(s,a)} d_h\ind{t}(s,a)
  \divh{\Pmstar_h(s,a)}{\Pmstar[\Mhat\ind{t}]_h(s,a)}
  + \sum\limits_{(s,a)\in\cS\times\cA}\sum_{t\geq{}\tau_h(s,a)}d_h\ind{t}(s,a) \divh{\Pmstar_h(s,a)}{\Pmstar[\Mhat\ind{t}]_h(s,a)}
\end{align*}
For the first term above, we have
\begin{align*}
\sum\limits_{(s,a)\in\cS\times\cA}\sum\limits_{t<\tau_h(s,a)} d_h\ind{t}(s,a)\divh{\Pmstar_h(s,a)}{\Pmstar[\Mhat\ind{t}]_h(s,a)}  \leq \sum\limits_{(s,a)\in\cS\times\cA}\dtil_h\ind{\tau_h(s,a)}(s,a) \leq 2 C_\cov \sum\limits_{(s,a)\in\cS\times\cA}\mustar_h(s,a) = 2C_\cov,
\end{align*}
where the last inequality holds because
\begin{align*}
  \sum\limits_{(s,a)\in\cS\times\cA}\dtil_h\ind{\tau_h(s,a)}(s,a) =  \sum\limits_{(s,a)\in\cS\times\cA}\dtil_h\ind{\tau_h(s,a)-1}(s,a) + d_h\ind{\tau_h(s,a)}(s,a) \leq 2C_\cov\cdot \mustar_h(s,a).
\end{align*}
The remaining term is 
\begin{align*}
  &\sum\limits_{h=1}^{H}\sum\limits_{(s,a)\in\cS\times\cA}\sum\limits_{t\geq\tau_h(s,a)} d_h\ind{t}(s,a)\divh{\Pmstar_h(s,a)}{\Pmstar[\Mhat\ind{t}]_h(s,a)} \\
  &= \sum\limits_{h=1}^{H} \sum\limits_{t=1}^{T} \sum\limits_{(s,a)\in\cS\times\cA}d_h\ind{t}(s,a) \prn*{ \frac{\dtil_h\ind{t}(s,a)}{\dtil_h\ind{t}(s,a)}  }^{1/2} \mathbbm{1}\crl*{t\geq \tau_h(s,a)} \divh{\Pmstar_h(s,a)}{\Pmstar[\Mhat\ind{t}]_h(s,a)} \\
  &\leq \sqrt{ \sum\limits_{h=1}^{H}\sum\limits_{t=1}^{T} \sum\limits_{(s,a)\in\cS\times\cA}  \frac{( \mathbbm{1}(t\geq \tau_h(s,a)) d_h\ind{t}(s,a)  )^2}{\dtil_h\ind{t}(s,a)}   } \cdot \sqrt{\sum\limits_{h=1}^{H}\sum\limits_{t=1}^{T} \sum\limits_{(s,a)\in\cS\times\cA}\dtil_h\ind{t}(s,a) \divh{\Pmstar_h(s,a)}{\Pmstar[\Mhat\ind{t}]_h(s,a)}  }.
\end{align*}
Following the derivation in Theorem 1 of \citet{xie2022role}, we have
\begin{align*}
  \sum\limits_{t=1}^{T} \sum\limits_{(s,a)\in\cS\times\cA}  \frac{( \mathbbm{1}(t\geq \tau_h(s,a)) d_h\ind{t}(s,a)  )^2}{\dtil_h\ind{t}(s,a)} &\leq 2 \sum\limits_{t=1}^{T} \sum\limits_{(s,a)\in\cS\times\cA} \frac{d_h\ind{t}(s,a)\cdot d_h\ind{t}(s,a)}{\dtil_h\ind{t}(s,a) + C_\cov \cdot  \mustar_h(s,a)}  \\
  &\leq 2 \sum\limits_{t=1}^{T} \sum\limits_{(s,a)\in\cS\times\cA}\max_{t'\in [T]} d_h\ind{t'}(s,a)\cdot \frac{ d_h\ind{t}(s,a)}{\dtil_h\ind{t}(s,a) + C_\cov \cdot  \mustar_h(s,a)} \\
  &\leq 2\prn*{\max_{s,a}\sum\limits_{t=1}^{T}\frac{ d_h\ind{t}(s,a)}{\dtil_h\ind{t}(s,a) + C_\cov \cdot  \mustar_h(s,a)} }\cdot \prn*{\sum\limits_{(s,a)\in\cS\times\cA} \max_{t\in [T]} d_h\ind{t}(s,a)}\\
  &\lesssim C_\cov \log T,
\end{align*}
where the last inequality follows from Lemmas 3 and 4 of \citet{xie2022role}.
For the second term, as a consequence of the offline estimation
assumption, we have that for all $t\in [T]$,
\begin{align*}
  \sum\limits_{h=1}^{H} \sum\limits_{(s,a)\in\cS\times\cA}\dtil_h\ind{t}(s,a) \divh{\Pmstar_h(s,a)}{\Pmstar[\Mhat\ind{t}]_h(s,a)} = \sum\limits_{s=1}^{t-1}\divRL{\Mhat\ind{t}(\act\ind{s})}{\Mstar(\act\ind{s})}  \leq \offgan.
\end{align*}

Altogether, we conclude that
\begin{align*}
  \sum\limits_{t=1}^{T}  \divRL{\Mhat\ind{t}(\decn\ind{t})}{\Mstar(\decn\ind{t})} \leq \bigoh\prn*{\sqrt{HC_\cov \offgan T \log T} + HC_\cov}.
\end{align*}

\end{proof}

\coverabilitymdp*

\begin{proof}[\pfref{cor:coverability-mdp-version}]
By \pref{lem:hellinger_chain_rule}, 
for any two MDP models $M$ and $M'$ and any  $\act\in \Delta(\PiRNS)$,
we have 
\begin{align}
  \Dhels{M(\act)}{M'(\act)} = \Dhels{M'(\act)}{M(\act)}\leq 7 \cdot \sum\limits_{h=1}^{H} \En^{\sM',\pi}\brk*{\Dhels{\Pmstar[M']_h(s_h,a_h)}{\Pmstar[M]_h(s_h,a_h)}}.\label{eq:hel1}
\end{align} 
On the other hand, for any $h\in [H]$, we have from Lemma A.9 of
\cite{foster2021statistical} that
\begin{align*}
  \En^{\sM',\pi}\brk*{\Dhels{\Pmstar[M']_h(s_h,a_h)}{\Pmstar[M]_h(s_h,a_h)}}
  &\leq 4 \Dhels{M(\act)}{M'(\act)},
\end{align*}
by choosing $X=(s_h,a_h)$ and \dfedit{$Y=(r_h,s_{h+1})$} in the
aforementioned lemma. Summing up over $h$, we conclude that
\begin{align}
  \sum\limits_{h=1}^{H}  \En^{\sM',\pi}\brk*{\Dhels{\Pmstar[M']_h(s_h,a_h)}{\Pmstar[M]_h(s_h,a_h)}}\leq 4H\cdot{} \Dhels{M(\act)}{M'(\act)}.\label{eq:hel2}
\end{align}

Consider any sequence of policies $\act\ind{1},\dots,\act\ind{T}$ and
outputs $\Mhat\ind{1},\dots,\Mhat\ind{T}$ from an offline oracle with 
parameter $\offgan$ for squared Hellinger distance. By \cref{eq:hel2},
we have that for all $t\in\brk{T}$,
\begin{align*}
  \sum\limits_{\tau=1}^{t-1}\sum\limits_{h=1}^{H}\En^{\sMstar,\act\ind{\tau}} \brk*{\Dhels{\Pmstar_h(s,a)}{\Pmstar[\Mhat\ind{\tau}]_h(s,a)}} 
  \leq 4H \sum\limits_{\tau=1}^{t-1} \Dhels{\Mhat\ind{\tau}(\act\ind{\tau})}{\Mstar(\act\ind{\tau})} \leq 4\offgan H.
\end{align*}
\pref{thm:coverability-mdp} thus implies that
\begin{align*}
\sum\limits_{t=1}^{T} \sum\limits_{h=1}^{H} \En^{\sMstar,\act\ind{t}} 
\brk*{\Dhels{\Pmstar_h(s,a)}{\Pmstar[\Mhat\ind{t}]_h(s,a)}} \lesssim H\sqrt{C_\cov \offgan T \log T} + H^2 C_\cov.
\end{align*}
Finally, using \cref{eq:hel1}, we can convert the inequality above
into a bound on the square Hellinger distance:
\begin{align*}
  \sum\limits_{t=1}^{T} \Dhels{\Mhat\ind{t}(\act\ind{t})}{\Mstar(\act\ind{t})} &\lesssim \sum\limits_{t=1}^{T} \sum\limits_{h=1}^{H}\En^{\sMstar,\act\ind{t}}  \brk*{\Dhels{\Pmstar_h(s,a)}{\Pmstar[\Mhat\ind{t}]_h(s,a)}} \\
  &\lesssim  H\sqrt{C_\cov \offgan T \log T  }+H^2C_\cov.
\end{align*}

\end{proof}

\coverabilitylower*

\begin{proof}[\pfref{thm:coverability-lower-bound}]
  Let $\offgan>0$ and $T$ be given, and let $N\in\bbN$ be chosen such that $T=N\cdot \lfloor N\offgan \rfloor$; we assume without loss of generality that $N$ is large enough such
that $N\offgan > 1$, which implies that $\lfloor N\offgan \rfloor \geq
N\offgan/2$). Define $\cA = \set{a_0,a_1}$, $\cS  = \set{s_0,\dots,s_{N-1}}$, and $\gmstar(s,a) \equiv 0$ for all $s\in \cS, a\in \cA$. 
Let $d_1= \unif(\cS)$ be the context distribution. For any $t\in\set{1,\dots,T}$, consider the deterministic policy $\act\ind{t}:\cS\to\cA$ and the offline estimator $\Mhat\ind{t}$ defined via
\begin{align*}
\act\ind{t}(s_n) = 
\begin{cases}
a_1, \quad\text{if }n = \lfloor t/ \lfloor N\offgan\rfloor\rfloor ,\\
a_0, \quad\text{otherwise.}
\end{cases}
\text{~and~~}
g^{\sss{\Mhat\ind{t}}} (s,a) =
\begin{cases}
1, & \text{if }s = s_{\lfloor t/\lfloor N\offgan\rfloor\rfloor }, a = a_1,\\
0, & \text{otherwise}.
\end{cases}
\end{align*} 
We first verify that $\Mhat\ind{1},\ldots,\Mhat\ind{T}$ satisfy the
offline oracle requirement. For any $t\in \set{1,\dots,T}$, we have
\begin{align*}
  &\sum\limits_{\tau=1}^{t-1} \En_{s\sim d_1}\brk*{ \Dsq{\gm(s,\pi^\tau(s))}{g^{\sss{\Mhat\ind{\tau}}}(s,\pi^\tau(s))}} \\
  &= \frac{1}{N} \sum\limits_{\tau=1}^{t-1} \mathbbm{1}\crl*{\act\ind{\tau}(s_{\lfloor t/N\rfloor })= a_1}\\
  &= \frac{1}{N}\cdot (t-\lfloor t/\lfloor N\offgan\rfloor\rfloor \cdot \lfloor N\offgan\rfloor ) \leq \offgan.
\end{align*}
However, the online error is 
\begin{align*}
  \sum\limits_{t=1}^{T}\En_{s\sim d_1}\brk*{ \Dsq{\gm(s,\pi^t(s))}{g^{\sss{\Mhat\ind{t}}}(s,\pi^t(s))}} 
  = \sum\limits_{t=1}^{T} \frac{1}{N} =  \lfloor N\offgan \rfloor \geq \frac{1}{2} \sqrt{T\offgan}.
\end{align*}

\end{proof}

\end{document}